\documentclass[11pt]{article} 

\usepackage[letterpaper,margin=1in]{geometry}

\usepackage[letterpaper,margin=1in]{geometry}

\newif\ifred
\redtrue 

\usepackage[T1]{fontenc}
\usepackage{microtype}

\usepackage{titlesec}
\titleformat*{\paragraph}{\bfseries}

\usepackage{amsthm,amsmath,amssymb,amsfonts,amssymb,mathtools}
\usepackage{amsmath,amssymb,amsfonts,amssymb,mathtools}
\usepackage{xcolor}
\usepackage{empheq}
\usepackage{dsfont}
\usepackage{mathrsfs}
\usepackage{graphicx}
\usepackage{enumitem}
\usepackage{aliascnt} 
\usepackage{xspace}
\usepackage{bbm}

\usepackage{caption}
\usepackage{tikz}
\usetikzlibrary{patterns}
\usetikzlibrary{patterns}
\usetikzlibrary{arrows,shapes,automata,backgrounds,petri,positioning}
\usetikzlibrary{shadows}
\usetikzlibrary{calc}
\usetikzlibrary{spy}
\usetikzlibrary{angles, quotes}
\usetikzlibrary{matrix}
\usepackage{pgf,pgfplots}
\usepackage{pgfmath,pgffor}
\pgfplotsset{compat=1.17}

\usepackage{framed}
 \usepackage{algorithm}
\usepackage[noend]{algpseudocode}

\definecolor[named]{ACMBlue}{cmyk}{1,0.1,0,0.1}
\definecolor[named]{ACMYellow}{cmyk}{0,0.16,1,0}
\definecolor[named]{ACMOrange}{cmyk}{0,0.42,1,0.01}
\definecolor[named]{ACMRed}{cmyk}{0,0.90,0.86,0}
\definecolor[named]{ACMLightBlue}{cmyk}{0.49,0.01,0,0}
\definecolor[named]{ACMGreen}{cmyk}{0.20,0,1,0.19}
\definecolor[named]{ACMPurple}{cmyk}{0.55,1,0,0.15}
\definecolor[named]{ACMDarkBlue}{cmyk}{1,0.58,0,0.21}

\usepackage[colorlinks,citecolor=blue,linkcolor=magenta,bookmarks=true]{hyperref}
 \usepackage[nameinlink]{cleveref}
\creflabelformat{ineq}{#2{\upshape(#1)}#3}
\crefname{sub}{Subsection}{Subsection}
\creflabelformat{Subsection}{#2{\upshape(#1)}#3}
\crefname{sdp}{SDP}{SDP}
\creflabelformat{sdp}{#2{\upshape(#1)}#3}
\crefname{lp}{LP}{LP}
\creflabelformat{lp}{#2{\upshape(#1)}#3}
\usepackage{aliascnt}
\usepackage{cleveref}
\crefname{ineq}{Inequality}{Inequality}
\creflabelformat{ineq}{#2{\upshape(#1)}#3}
\crefname{sub}{Subsection}{Subsection}
\creflabelformat{Subsection}{#2{\upshape(#1)}#3}
\crefname{sdp}{SDP}{SDP}
\creflabelformat{sdp}{#2{\upshape(#1)}#3}
\crefname{lp}{LP}{LP}
\creflabelformat{lp}{#2{\upshape(#1)}#3}



\newtheorem{theorem}{Theorem}[section]

\newtheorem{lemma}{Lemma}[section]
\newtheorem{informal theorem}[theorem]{Theorem (informal statement)}

\newtheorem{proposition}[theorem]{Proposition}

\newtheorem{fact}[theorem]{Fact}

\newtheorem{definition}[theorem]{Definition}

\newcommand{\lp}{\left}
\newcommand{\rp}{\right}
\newcommand\norm[1]{\left\| #1 \right\|}

\DeclareMathOperator*{\pr}{\mathbf{Pr}}
\DeclareMathOperator*{\E}{\mathbf{E}}
\newcommand{\proj}{\mathrm{proj}}

\newcommand{\err}{\mathrm{err}}

\newcommand{\R}{\mathbb{R}}
\newcommand{\s}{\mathbb{S}}

\newcommand{\eps}{\epsilon}

\newcommand{\poly}{\mathrm{poly}}

\newcommand{\polylog}{\mathrm{polylog}}

\newcommand{\sgn}{\mathrm{sign}}
\newcommand{\sign}{\mathrm{sign}}

\newcommand{\OPT}{\mathrm{opt}}

\newcommand{\Ind}{\mathds{1}}

\newcommand{\matr[1]}{\boldsymbol{#1}}
\newcommand{\littleint}{\mathop{\textstyle \int}}

\newcommand{\opt}{\mathrm{opt}}

\newcommand{\iid}{{i.i.d.}\ }
\newcommand{\abs}[1]{\lp| #1 \rp|}
\newcommand{\A}{\mathcal{A}}

\newcommand{\relu}{\mathrm{Relu}}

\DeclareMathOperator{\Var}{Var}
\renewcommand\Pr{\pr}

\begin{document}
\title{Robust Regression of General ReLUs with Queries}
\footnotetext[1]{Authors are listed in alphabetical order.}
\author{
Ilias Diakonikolas\thanks{Supported by NSF Medium Award CCF-2107079, 
ONR Award number N00014-25-1-2268, and an H.I. Romnes Faculty Fellowship.}\\
University of Wisconsin-Madison\\
\texttt{ilias@cs.wisc.edu}
\and
Daniel M. Kane\thanks{Supported by NSF Medium Award CCF-2107547.}\\
University of California, San Diego\\
\texttt{dakane@cs.ucsd.edu}
\and
Mingchen Ma\thanks{Supported by NSF Award  CCF-2144298 (CAREER).}\\
University of Wisconsin-Madison\\
\texttt{mingchen@cs.wisc.edu}
}

\maketitle

\begin{abstract}
We study the task of 
agnostically learning general 
(as opposed to homogeneous) ReLUs 
under the Gaussian distribution with respect 
to the squared loss. In the passive learning setting, 
recent work gave a computationally efficient algorithm 
that uses $\poly(d,1/\eps)$ labeled examples 
and outputs a hypothesis with error $O(\opt)+\eps$, 
where $\opt$ is the squared loss of the best fit ReLU.
Here we focus on 
the interactive setting, where the learner 
has some form of query access to the labels of unlabeled 
examples. 
Our main result is the first computationally 
efficient learner  
that uses 
$d \polylog(1/\eps)+\tilde{O}(\min\{1/p, 1/\eps\})$ 
black-box label queries, 
where $p$ is the bias of the target function, and achieves error $O(\opt)+\eps$.
We complement our algorithmic result by showing 
that its query complexity 
bound is qualitatively near-optimal, 
even ignoring computational constraints. 
Finally, we establish that query access 
is essentially necessary 
to improve on the label complexity of passive learning. Specifically, for pool-based active learning, 
any active learner 
requires $\tilde{\Omega}(d/\eps)$ labels, 
unless it draws a super-polynomial number of unlabeled examples. 
\end{abstract}



\setcounter{page}{0}

\thispagestyle{empty}

\newpage

\section{Introduction}

The ReLU activation plays a central role in the design of modern neural 
networks. A ReLU function $\sigma(W\cdot x - t) = \max\{W\cdot x - t,0\}$ 
is specified by a pair of parameters $(W,t)$, 
where $W\in \R^d$ and $t \in \R$. 
ReLU regression is the following basic problem: 
given some form of access to a distribution $D$ 
over $\R^{d} \times \R$, output a ReLU 
with loss that can compete with $\opt$---the loss 
of the best-fit ReLU with respect to $D$. 
This fundamental task has been extensively studied 
in the past decades; see, e.g.,~\cite{kakade2011efficient,frei2020agnostic,
diakonikolas2020approximation,vardi2021learning, 
diakonikolas2022learning,wang2023robustly,
awasthi2022agnostic,guo2024agnostic} and references therein. 
Prior algorithmic work focused on learning from random examples 
(passive learning) and has obtained  
efficient learners with error $O(\opt)+\eps$, 
under the assumption that the marginal distribution $D_x$ is 
well-behaved; in most cases, a standard Gaussian or a structured  
distribution with similar properties. 
Such a benchmark is motivated by known computational hardness 
results for this problem. On the one hand, 
without any assumption on $D_x$, it is computationally intractable 
to achieve error $C\cdot\opt$ for any constant $C>1$ \cite{diakonikolas2022hardness}. On the other hand, 
even under the standard Gaussian, it is computationally hard 
to achieve loss $\opt+\eps$~\cite{diakonikolas2020near, DKPZ21, DKR23}.

Despite the aforementioned long line of work on this problem, 
the number of {\em labeled} examples needed 
to achieve the desired error guarantee remains poorly understood. 
For the special case that the target ReLU has negative threshold (i.e., $t\le 0$)
\cite{diakonikolas2022learningr,wang2023robustly} 
design efficient PAC learning algorithms achieving 
error $O(\opt)+\eps$ 
with a nearly optimal sample complexity of $d\polylog(1/\eps)$. 
The problem becomes much more challenging for {\em general} ReLUs 
(with arbitrary bias $t$) and in particular for $t>0$. 
Recently, \cite{guo2024agnostic,zarifis2025robustly} gave the first 
efficient PAC learning algorithms that achieve $O(\opt)+\eps$ error 
for general $t$, using $\poly(d,1/\eps)$ labeled examples.
In fact, to learn an arbitrary ReLU to error $\eps$, 
even with clean labels, one needs $\tilde{\Theta}(d/\eps)$ 
labeled examples in the passive PAC setting. 
Hence, to obtain improved label 
complexity, one needs to consider stronger data access models that allow some form of adaptive (query) access to the labels.

Learning with queries and (pool-based) active learning 
are powerful models that can be used to reduce the number 
of labeled examples needed for various learning tasks. 
Such models capture the ability 
to perform experiments, 
or the availability of expert advice, 
and are well-motivated by real-worlds 
applications where unlabeled examples are cheap. 
A long line of work has shown that in such interactive 
settings one can significantly reduce the number of labeled 
examples needed for learning in a variety of settings \cite{balcan2007margin,dasgupta2005analysis,balcan2013active,awasthi2017power,yan2017revisiting,shen2021power,diakonikolas2024fast,diakonikolas2024active,kontonis2024active,gajjar2024agnostic,li2025nearoptimal}.

The focus of this paper is on agnostic learning 
with black-box query access to the labels.
Specifically, for $x \in \R^d$, 
we can make a black-box query for its label $y(x)$,
where $y(x)$ is generated adversarially. 

In the context of ReLU regression, in the noiseless 
setting, the examples we are actually using are those with 
non-zero labels. This implies that if we have one example 
with non-zero label, then we only need to make 
$\tilde{O}(d)$ queries in its neighborhood 
to collect $d$ examples with non-zero labels.  
This leads to an algorithm with label complexity of $O(1/p+d)$---beating 
the label complexity of passive learning. 
If $p = \Omega(1)$, existing agnostic learning algorithms robustify the above ideas and 
nearly match such a label complexity, even in the passive setting. 
However, if $p$ is small (corresponding to large $t>0$)
our understanding of the label complexity is limited. 
Specifically, there is a large gap in the label complexity 
of known algorithms for $t>0$ versus $t<0$. 
This discussion motivates the following question:
\begin{quote}
{\em What is the label complexity of the 
general agnostic ReLU regression problem?}
\end{quote}

\paragraph{Our Results} 
Our main result is the first efficient learning algorithm 
that solves the general agnostic ReLU regression problem 
with near-optimal label complexity (see \Cref{th query learning} for the formal statement).

\begin{theorem}[Main Algorithmic Result, Informal] \label{thm:main-alg-inf}
Consider the problem of general agnostic ReLU regression with Gaussian 
marginals. There is an algorithm that makes 
$M=d\polylog(R^2/\eps)+\tilde{O}(\min\{1/p, R^2/\eps\})$ queries, 
runs in $\poly(M)$ time, and outputs a ReLU function 
$h=\sigma(W\cdot x - t)$ such that with high probability, 
$h$ has error $O(\opt)+\eps$. Here, $p$ is the fraction 
of examples that are labeled non-zero by the optimal ReLU 
and $R$ is the upper bound for $\norm{W^*}$.
\end{theorem}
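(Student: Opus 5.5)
The plan is a three-phase algorithm: find one informative point, localize around it, then refine. Write $f^*(x)=\sigma(W^*\cdot x-t^*)$ for the optimal ReLU and $p=\Pr_{x\sim N(0,I)}[W^*\cdot x>t^*]$.

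\textbf{Reduction for small bias.} If $p\lesssim \eps/R^2$, then $\E[f^*(x)^2]\le O(R^2 p\cdot \polylog(1/p))$, because the ReLU is supported on a Gaussian tail and its square is bounded there by $\norm{W^*}^2$ times the tail second moment. In this regime the zero function is already a valid ReLU output: $\err(0)\le 2\opt+2\E[f^{*2}]\le O(\opt)+\eps$ after rescaling $\eps$ by polylog factors. This case accounts for the $\min\{1/p,R^2/\eps\}$ term. Either we stop after $\tilde O(R^2/\eps)$ queries, or we test whether labels are nontrivial by estimating $\E[y^2]$ on $\tilde O(R^2/\eps)$ random queries.

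\textbf{Finding a seed.} When $p\gtrsim \eps/R^2$, draw $\tilde O(1/p)$ Gaussian points and query them (with $p$ unknown, use doubling). Keep those with large labels, $|y|\ge \tau$ for a threshold $\tau$ of order $\sqrt{\eps}$ or relative to the label energy. A single large label could be pure noise, since the adversary controls labels. The key claim is that noise with mass $\opt$ cannot corrupt most of the region $\{W^*\cdot x>t^*\}$ unless $\opt\gtrsim p\cdot(\text{typical } f^{*2})$, and in that case outputting $0$ is again fine. Concretely, among the points with large labels, a constant fraction must lie in the positive region with $f^*$ comparable to $y$; otherwise the noise energy would exceed $\opt$. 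I will certify a seed $x_0$ by local consistency checks: query $\polylog$ random points in a small ball around $x_0$ and verify that the labels fit an affine function.

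\textbf{Local regression and refinement.} Around a good seed $x_0$, which lies in the positive region at Gaussian depth about $\sqrt{\log(1/p)}$, query points $x_0+r z$ with $z\sim N(0,I)$ and $r$ chosen so that most queries stay in the positive region. There $f^*$ is affine, $W^*\cdot x-t^*$. Robust linear regression on $d\polylog(R^2/\eps)$ such queries recovers $(W,t)$ to good accuracy. The one subtlety is that queries must be made at non-Gaussian locations while noise is measured under the Gaussian. I will handle this by bounding the likelihood ratio between $N(x_0,r^2 I)$ and the conditional Gaussian on the positive region. This bound costs only polylog factors when $r\approx 1/\sqrt{\log(1/p)}$, so the adversary's $\opt$ budget, restricted to the region, transfers with $\polylog$ blowup. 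That transfer alone would give only $\polylog\cdot\opt$.

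To get $O(\opt)$, finish with the passive localized refinement from the negative-threshold analyses cited earlier \cite{diakonikolas2022learningr,wang2023robustly}, run on the conditional distribution. Starting from the warm start $(W,t)$, run projected gradient descent on the squared loss using queried Gaussian samples, rejection-sampled into a slab $\{W\cdot x>t-O(1/\|W\|)\}$ so that they are informative. Each sample costs $\tilde O(1)$ amortized queries, via reweighted queries from the shifted Gaussian. The total is $d\polylog(R^2/\eps)$ additional queries, and a final test among $O(\log)$ candidate hypotheses on fresh queries selects the output.

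\textbf{Main obstacle.} The hard step is the refinement achieving $O(\opt)$ rather than $\polylog\cdot\opt$ for large $t$. Errors of the loss are concentrated near the boundary hyperplane in the tail, and the adversary can concentrate all of $\opt$ there. I would first try to show a sharpness or gradient-alignment property of the surrogate loss restricted to the tail slab under the conditional Gaussian. Such a property would make $\|W-W^*\|^2$ controlled by $\opt/p$ within the slab. The final error would then follow by integrating against $p$.
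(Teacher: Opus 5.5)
\textbf{There is a genuine gap at the central step, the local regression around a seed.} You query $x_0+rz$ with $z\sim N(0,I)$ and claim that the likelihood ratio to the (conditional) Gaussian costs only polylogarithmic factors when $r\approx 1/\sqrt{\log(1/p)}$. In high dimension this is false. For a fixed seed, $dN(x_0,r^2I)/dN(0,I)$ is $e^{\Omega(d)}$ at typical query points. Even averaged over seeds, the query law is $N(0,(1+r^2)I)$ in the $d-1$ directions orthogonal to $w^*$, so queries land on the shell $\norm{x}^2\approx(1+r^2)d$, which has Gaussian mass $e^{-\Omega(dr^4)}$.

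Concretely, take the adversary from the proof of \Cref{th lb information}: it sets $y=0$ on $\{\norm{x}^2>d+d^{5/8}\}$ at cost $\tilde O(R^2p\,e^{-\Omega(d^{1/4})})$. Once $r^2d\gg d^{5/8}$, essentially every local query returns $0$. Neither your consistency certification nor your regression then learns anything about $W^*$ beyond the seed's own label.

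The paper never moves query mass off the Gaussian like this. It only queries $N(0,I)$ conditioned on a halfspace $\{w\cdot x>\bar t\}$. That law has density ratio exactly $1/\Phi(\bar t)$ to $N(0,I)$, which is the source of the noise bound $\sqrt{\eps}\norm{\sigma'}_2/\Phi(\bar t)$ in \Cref{lm accurate update}, and it keeps unit variance in every direction orthogonal to $w$.

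That conditioning in turn needs an angular warm start, $\sin(\theta/2)\le 1/\bar t^*$, to guarantee $\norm{T_{\sqrt{\cos\theta}}\sigma'}_2^2\ge\norm{\sigma'}_2^2/50$ (\Cref{lm initialization}). Nothing in your pipeline supplies one once the local regression fails; a seed's own direction is nearly orthogonal to $w^*$ when $d\gg(\bar t^*)^2$. The paper obtains the warm start as follows:
\begin{enumerate}
\item guess $(r,\bar t)$ on a polylog-size grid;
\item truncate labels at $cr/t$, so that they become a halfspace with noise rate $\Phi(\bar t^*)/C$ (\Cref{lm reduction}, \Cref{lm Ini});
\item run the query-based halfspace initializer of \cite{diakonikolas2024active}.
\end{enumerate}
This also shows what is wrong with your seed threshold. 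A threshold $\tau\sim\sqrt{\eps}$ cannot work: the adversary can raise the labels of a constant fraction of all points by $\Theta(\sqrt{\eps})$ at cost $O(\eps)$, so bogus seeds outnumber genuine ones by a factor $1/p$. The threshold has to sit at the scale $r^*/\bar t^*$ of $f^*$ on its support, which is why that scale must be guessed.

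\textbf{The step that turns $\polylog\cdot\opt$ into $O(\opt)$ is missing.} This is the heart of the theorem, and you leave it as something you would try to show. Running the $t\le 0$ analyses of \cite{diakonikolas2022learningr,wang2023robustly} on a truncated, strongly anisotropic conditional distribution would itself need the missing angular warm start and a lossless transfer back to $N(0,I)$. Even a repaired local design has a problem. For instance, an Ornstein--Uhlenbeck move $\rho x_0+\sqrt{1-\rho^2}z$ has seed-averaged density ratio at most $1/p$, but its orthogonal design variance is only about $1/(\bar t^*)^2$. That is exactly where a $\log(1/p)$ factor is lost.

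The paper's mechanism is different. It uses the OU-semigroup identity $\ell(w)=\int_0^\theta\sin s\norm{T_{\sqrt{\cos s}}\sigma'}_2^2\,ds\le(\pi/2)\sin^2\theta\norm{\sigma'}_2^2$, together with the fact that the boosted gradient has mean of length $b\norm{T_{\sqrt a}\sigma'}_2^2/\Phi(\bar t)$ along $u$. These show that whenever the noise term dominates the signal, the current hypothesis already has error $O(\eps)$ (\Cref{lm stopping condition}). The scalar parameters $(r,t)$ are updated separately, by gradient descent on the ill-conditioned quadratic $Z(r,t)$. A shrinking potential $B_i$ keeps the variance of the angle gradient at $\tilde O(B_i^2)$, and $(r,t)$ are finished by a random pick from a small two-dimensional grid.

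\textbf{Your final selection step does not fit the budget.} Choosing among candidates by empirical error on fresh random queries costs $\Omega(1/\eps^2)$ labels by the paper's estimate. This is why the paper introduces \Cref{lm test}: each pairwise comparison queries points from the density proportional to $(h_i-h_j)^2$ with reweighted labels, so it costs only $\poly(k)$ queries.
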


Notably, our algorithmic approach 
solves not only the ReLU regression problem,  
but also develops new techniques that can be used 
to analyze a variety of generalized linear models. 

Unlike pool-based active learning, 
where an algorithm first selects a pool of unlabeled examples 
$S$ from the marginal distribution $D_x$ 
and is allowed to query $y(x)$ for $x \in S$, 
our algorithm makes use of a stronger oracle 
that is allowed to query {\em any} desired point in $\R^d$.

To complement our upper bound, we show that 
the query complexity of our algorithm 
is information-theoretically nearly optimal 
(see \Cref{th lb information} for the formal statement).

\begin{theorem}[Query Lower Bound, Informal] \label{thm:main-lb-inf}
Consider the problem of general agnostic ReLU regression with Gaussian 
marginals. Suppose that the optimal ReLU has bias $p$ and $\norm{W^*} = 1$. 
Any learning algorithm that learns a hypothesis $\hat{h}$ 
with error $\tilde{O}(p)$ and succeeds with probability $1/3$ 
must make $\Tilde{\Omega}(1/p^{1-o_d(1)}+d)$ queries, 
even if $\opt\ll p$.
\end{theorem}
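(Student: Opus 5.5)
Two separate lower bounds will be proved and combined: an $\Omega(d)$ term and an $\tilde{\Omega}(1/p^{1-o_d(1)})$ term. Both use the same scheme. Fix a family of instances, meaning target ReLUs $\sigma(W\cdot x - t)$ with $\norm{W}=1$ and a common threshold $t$ chosen so that $\Pr_{x\sim N(0,I)}[W\cdot x>t]=p$. Argue that any hypothesis $\hat h$ with error $\tilde O(p)$ on one instance has large error on most of the others, so a successful learner must in effect identify the instance. Then bound how much information each black-box query (any point of $\R^d$) can reveal.

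\textbf{The $\Omega(d)$ term.} Take $W$ from a packing of the unit sphere with $2^{\Omega(d)}$ directions and pairwise correlation at most $1/2$, and use clean labels. A single query returns one real number, so counting bits alone does not bound the query count. To make counting work, discretize the labels: allow the adversary to round each label to a grid of width $\mathrm{poly}(\eps)$. This changes the loss by at most $\opt\ll p$ and leaves $O(\log(1/p))$ bits of information per query. Fano's inequality then gives $\Omega(d/\log(1/p))$ queries, which is $\tilde\Omega(d)$. For the identification step, I will show that $\E[(\sigma(W\cdot x-t)-\sigma(W'\cdot x-t))^2]=\Omega(p/\polylog)$ whenever $W\cdot W'\le 1/2$, using Gaussian tail estimates in the two-dimensional span of $W,W'$.

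\textbf{The $1/p^{1-o_d(1)}$ term (the main obstacle).} A query at a point $x$ is informative only if $x$ lies in the active region of some candidate. Since the learner may query far-out points, where every halfspace $\{W\cdot x>t\}$ is active, I must use the agnostic noise to neutralize such queries. The plan is as follows.
\begin{enumerate}
\item Choose $N\approx 1/p^{1-o(1)}$ candidate directions $W_1,\dots,W_N$ that are nearly orthogonal, for example random unit vectors with pairwise inner products $O(\sqrt{\log N/d})$.
\item Define the adversarial label as $y(x)=\sigma(W_i\cdot x-t)$ when $x$ lies in the cap $C_i=\{W_i\cdot x>t\}$ and in no other cap, and $y(x)=0$ otherwise.
\item Bound the extra loss from this rule by $\sum_{j\ne i}\Pr[C_i\cap C_j]$ times a label-size factor. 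With near-orthogonal directions, $\Pr[C_i\cap C_j]\approx p^{2-o(1)}$, so the total is $N p^{2-o(1)}\ll p$. Points of large norm, which fall in many caps, receive label $0$ and so reveal nothing.
\item Observe that every query then lies in at most one cap. Its answer therefore depends only on whether that cap belongs to the true index $i$, and so it rules out at most one candidate. This is a needle-in-a-haystack argument: $\Omega(N)$ queries are needed to succeed with probability $1/3$.
\end{enumerate}

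The delicate part is tuning $N$ against the overlap bound so that the noise stays $\opt\ll p$ while $N$ is as large as $p^{-(1-o_d(1))}$. This is exactly where the $o_d(1)$ loss in the exponent comes from: the inner products are $\sqrt{\log N/d}$ and $t\approx\sqrt{2\log(1/p)}$, and together these determine the exponent. I also need to check that $\hat h$ cannot achieve error $\tilde O(p)$ without essentially knowing $i$, using the pairwise separation from the first part. Finally, combine the two terms via $\max\{a,b\}\ge (a+b)/2$.
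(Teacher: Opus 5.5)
\textbf{Gap in the $\Omega(d)$ part.} As written, this step fails. Rounding labels to a grid of width $\poly(\eps)$ does not leave $O(\log(1/p))$ bits per query, because the labels are unbounded and the learner may query any point of $\R^d$. Take a generic $v$ and query $\pm Mv$ with $M$ huge. The rounded labels $\mathrm{round}(\sigma(M\,W_i\cdot v-t))$ are then pairwise distinct across the whole packing, so two queries identify the index and Fano gives nothing.

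You must also clip the labels, e.g.\ to $[0,M_0]$ with $M_0=O(\sqrt{\log(1/p)})$. This costs only about $V(t+M_0)$ in loss, which is negligible against the separation $\approx 2V(t)$. Each answer then lies in a set of size $\poly(1/p)$, and the counting/Fano bound $\Omega(d/\log(1/p))$ goes through. The paper avoids discretization altogether: it takes $t^*=0$ (bias $1/2\ge p$) and hands the learner $w^*\cdot x$ exactly. It then uses that after $r\le d/\log d$ queries, the component of a uniform $w^*$ orthogonal to the span of the queries is still uniformly random with norm close to $1$, which forces $\Omega(1)$ error.

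\textbf{The $1/p^{1-o_d(1)}$ part.} Your argument here is sound and takes a different route from the paper. The paper compares a uniformly random $w^*$ against $h\equiv 0$ and zeroes every label with $\|x\|^2>d+d^{5/8}$, so the noise is $p\,e^{-\Omega(d^{1/4})}$ by $\chi^2$ tails. Any query inside that ball lands in $\{w^*\cdot x>t^*\}$ with probability at most $p^{1-o(1)}$, and a union bound finishes. Your finite near-orthogonal family with exclusive-cap labels makes identification cleaner: the labels of distinct instances have disjoint supports, so the separation $\approx 2V(t)$ is immediate. The cost is that noise and $N$ are coupled, since shrinking $\opt$ by a factor $\eta$ shrinks $N$ by the same factor. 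You get $\opt\ll p$, which is all the informal statement asks. Driving $\opt$ down to the paper's $2^{-\Omega(d^{1/4})}p$, however, would cost a factor $2^{-\Omega(d^{1/4})}$ in $N$.

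One wording fix in step 4: a query can lie in many caps. What is true is that its answer is nonzero under at most one instance, and that alone gives the needle-in-a-haystack bound. The remark about large-norm points is unnecessary.
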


Our final lower bound shows that unless the unlabeled dataset is extremely large, 
no pool-based active learning algorithm is able 
to achieve the label complexity of our algorithm, 
even in the realizable setting. 
This establishes a sharp separation between 
pool-based active regression and query learning, 
and resolves the label complexity of ReLU regression 
(see \Cref{th lb compute} for the formal statement).

\begin{theorem} [Active Learning Lower Bound, Informal]
\label{thm:lb-active-inf}
Consider the problem of general (realizable) ReLU regression with Gaussian marginal. 
Suppose the optimal ReLU has bias $p$. 
Any pool-based active learning algorithm that learns 
a hypothesis $\hat{h}$ with error $\tilde{O}(p)$ from a pool of
$m$ unlabled examples drawn from $N(0,I)$
and succeeds with probability $1/3$ must 
make $\Tilde{\Omega}(d/(p\log(m)))$ queries.
\end{theorem}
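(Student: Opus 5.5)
The plan is to reduce the active-learning task to a hard recognition problem about the pool, and then count labels with an information-theoretic argument. Take the hard family to be realizable ReLUs $\sigma(w\cdot x - t)$ with $\norm{w}=1$. Fix $t$ so that $\pr_{x\sim N(0,I)}[w\cdot x > t] = p$, i.e. $t\approx \sqrt{2\log(1/p)}$. Draw $w$ from a packing of the sphere, of size $2^{\Omega(d)}$, in which distinct directions are at constant angle. I will first show that for two such directions $w,w'$ the two ReLUs are far apart: $\E[(\sigma(w\cdot x-t)-\sigma(w'\cdot x-t))^2]\gtrsim p/t^2$. The reason is that the caps $\{w\cdot x>t\}$ and $\{w'\cdot x>t\}$ overlap in only an $o(p)$ fraction once the angle is constant and $t$ is large, and on each cap the ReLU value is of order $1/t$. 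Consequently, a learner achieving error $\tilde O(p)$ with a suitably small hidden constant must essentially identify $w$. Fano's inequality, or a bit-counting argument, then says the learner must extract $\Omega(d)$ bits about $w$ from its labels.

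Next I bound how much information each label reveals. The learner sees the pool of $m$ Gaussian points in advance. For any pool point $x$, the label $\sigma(w\cdot x - t)$ is $0$ unless $w\cdot x>t$. Under the random choice of $w$, this event has probability about $p$ for each fixed $x$. The subtlety is that the learner chooses queries adaptively, and the pool may contain atypical points. To handle this, I will condition on a typical pool: with high probability every pool point has $\norm{x}\approx\sqrt d$, so its normalized direction is a near-uniform point on the sphere. For a fixed such $x$ and uniformly random $w$, the projection $w\cdot x$ is approximately $N(0,1)$. Taking a union bound over the $m$ pool points, I expect that with high probability over $w$, every pool point satisfies $w\cdot x\le \sqrt{2\log m}+O(1)$, and that the total number of pool points with nonzero label is about $pm$.

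The main obstacle is this adaptivity. A zero label is not worthless: it carries roughly $H(p)\approx p\log(1/p)$ bits. The informative events are the nonzero labels, and each of these reveals a real number $w\cdot x - t$ that can be at most $O(\sqrt{\log m})$. My first approach is a potential argument. Before any nonzero label is found, the posterior on $w$ is the prior restricted to $\bigcap\{w\cdot x\le t\}$, so each query has at most about a $p$ chance, roughly $p\cdot O(1)$ after the conditioning, of returning a nonzero label. I would make this rigorous by showing the conditioning only mildly distorts the law of $w\cdot x$. Each nonzero label gives at most $O(\log(m)\cdot\mathrm{polylog}(1/p))$ bits once its value is discretized to precision $\poly(\eps)$; the dependence on $\log m$ comes from the range of $w\cdot x$. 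Learning $w$ therefore requires $\tilde\Omega(d/\log m)$ nonzero labels. Since each query succeeds with probability at most $O(p)$, the learner needs $\tilde\Omega(d/(p\log m))$ queries in expectation, and Markov's inequality turns this into the statement for success probability $1/3$.

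The step I expect to be hardest is the second one: showing that adaptively chosen pool points cannot raise the per-query hit probability much above $p$. The worry is that the learner might exploit clusters of pool points, for instance picking $x$ close to a previously found positive point. My intended fix is to use the $\Omega(d)$-dimensional randomness of $w$ orthogonal to the span of the revealed positive points. Once I have collected $k\ll d/\log m$ positives, the remaining uncertainty in $w$ is still high-dimensional. A new pool point has, with high probability, a component orthogonal to the revealed span of norm $\approx\sqrt d$. The residual part of $w$, which is nearly uniform on a large sphere, therefore still yields an approximately Gaussian projection. As a result, the overall hit probability is bounded by $p^{1-o(1)}$ times $\mathrm{polylog}$ factors, which is absorbed into the $\tilde\Omega$.
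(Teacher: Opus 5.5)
There is a genuine gap, and it is in the information-counting step. You draw $w$ from a finite packing for Fano, but the labels are exact real numbers. For a pool point $x$ in general position, the values $w_j\cdot x$ over the packing are almost surely pairwise distinct, so a single nonzero label $w\cdot x-t$ already identifies $w$ inside the packing. The bound of $O(\log(m)\cdot\polylog(1/p))$ bits per nonzero label ``after discretizing to precision $\poly(\eps)$'' is therefore not available, because you do not get to discretize what the learner sees. In fact, under a packing prior the learner can query random pool points until the first nonzero label (about $1/p$ queries) and then read off $w$. So no argument using that prior can produce the factor $d/\log m$. Your hit-probability step also silently switches to a uniform prior on the sphere, which is inconsistent with the packing.

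The source of $\log m$ is misattributed too. A range of $O(\sqrt{\log m})$ at precision $\poly(\eps)$ gives only $O(\log(1/\eps)+\log\log m)$ bits. The $\log m$ actually comes from the learner searching the pool for points correlated with the already-revealed part $w_L$ of $w$:
\begin{itemize}
\item after $k$ near-orthogonal positives, $\norm{w_L}^2\gtrsim kt^2/d$;
\item $\max_{x\in S} w_L\cdot x\approx \norm{w_L}\sqrt{2\log m}$, which lowers the effective threshold;
\item this multiplies the hit probability by about $\exp(t^2\sqrt{2k\log(m)/d})$.
\end{itemize}
Your residual-Gaussian argument ignores this mean shift. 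It also needs the norm bounds uniformly over all pool points and all spans of pool points, not just for a typical point. The per-query bound stays $O(p)$ only while $k\lesssim d/(t^4\log m)$.

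The missing idea is to count positives instead of bits, which is what the paper does. Keep $w$ uniform on $\s^{d-1}$ throughout.
\begin{enumerate}
\item Reduce learning to finding positives. A learner with error $\tilde O(p)$, followed by $O(d)$ queries on a fresh half of the pool at points where the learned $h$ exceeds $1/(2t)$, finds $d$ nonzero labels.
\item For a deterministic algorithm, union bound over the $\binom{r}{k}$ position tuples at which the positives could occur.
\item For a fixed tuple, drop the zero-label constraints (passing to a superset). Then each query at those positions depends only on the earlier positive values.
\item Gram--Schmidt along the path plus rotational invariance shows this probability equals the probability that the first $k$ coordinates of a uniform unit vector land in a set whose elements all have squared norm at least $kt^2/\norm{AA^\top}$.
\item The pool-wide lemma says every $k$-tuple with $k=O(d/(t^4\log m))$ satisfies $\norm{AA^\top-dI}_2\le d/t^2$; this is where $\log m$ enters. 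Combined with a Beta tail bound, it gives $(O(p\log(1/p)))^k$ per tuple.
\end{enumerate}
Hence $\binom{r}{k}(O(p\log(1/p)))^k<2/3$ unless $r=\tilde\Omega(k/p)=\tilde\Omega(d/(p\log m))$.

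This route also avoids your other unproven step, namely controlling the conditional hit probability after many zero labels. For $r\gg 1/p$ the all-zero history is a rare event, so ``mild distortion'' of the posterior is not automatic.
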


\paragraph{Preliminaries and Notation} 
Here we record the problem definition and basic notation.

\begin{definition}[Agnostic ReLU Regression with Queries]
 Let $\sigma(z)= \max\{z,0\}$ be the ReLU function.
 A labeling function $y(x): \R^d \to \R$ is a random function that maps each $x \in \R^d$ to an unknown real-valued random variable. 
    For each $h: \R^d \to \R$, denote by $\err(h) = \E_{x \sim N(0,I)} \left(h(x)-y(x)\right)^2/2$, 
    $\opt:= \min_{W: \norm{W} \le R, t\ge 0} \err(\sigma(W\cdot x-t))$ and $\sigma^*(x) = \sigma(W^*\cdot x-t^*)$ be any ReLU with error $\opt$. 
    A query takes $x \in \R^d$ as an input and returns a label $y \sim y(x)$. We say that a learning algorithm $\A$ is a constant-factor approximate learner if for every labeling function $y(x)$, and for every $\epsilon,\delta \in (0,1)$, it outputs some hypothesis $\hat{h}: \R^d \to \R$ by adaptively making queries, such that with probability at least $1-\delta$, $\err(\hat{h}) \le  O(\opt) + \epsilon$. The query complexity of $\A$ is the total number of queries it uses during the learning process.
\end{definition}
Furthermore, we will without loss of generality assume $\opt \le \eps$, since the final error guarantee is $O(\OPT+\eps)$.
We remark that in some parts of the paper, we will also consider the case where $\sigma$ is not a ReLU but a general function and $W^* \in \s^{d-1}$ (where $\s^{d-1}$ is the unit sphere). 
In that case, we call the problem agnostic learning (spherical) generalized linear model (GLM) and for a hypothesis $\sigma(w\cdot x)$, we use $\err(w)$ to denote its error accordingly.
For a vector $W$ in $\R^d$, we use $\norm{W}$ to denote it $\ell_2$ norm and use the lower case $w$ to denote its direction $W/\norm{W}$. For a one-dimensional standard normal $z\sim N(0,1)$ and $t\ge 0$, we denote by $\Phi(t):= \Pr_{z}(z>t)$ and $\psi(t)$ 
the value of the density function of $z$ at $t$. For a ReLU activation $\sigma(z-t)$, we denote by $V(t):=\E_{z\sim N(0,1)}\sigma^2(z-t)$ its second moment and call $p=\Phi(t)$ its bias.
For a real-valued function $f:\R \to \R$, we denote by $\norm{f}_2^2:=\E_{z \sim N(0,1)} f^2(z)$ its squared $L_2$ norm in Gaussian space and for $a \in [0,1]$, denote by $T_a f(z):= \E_{s\sim N(0,1)} f(az+\sqrt{1-a^2}s)$.

\paragraph{Organization} 
In \Cref{sec sphere}, we consider a special case 
of the problem where $t^*$ is given and 
the optimal vector $W^*$ is restricted over $\s^{d-1}$. 
We propose a framework showing that a simple 
projected gradient-descent method 
converges to a desired solution for learning spherical
GLMs, provided a warm-start. 
As an application,  
we give an efficient algorithm with optimal query complexity 
for ReLU regression when $\norm{W^*},t^*$ are given. 
In \Cref{sec regression}, we tackle the general ReLU 
regression problem, by discussing the technical difficulties 
and presenting key components of the problem, 
combining with the framework of \Cref{sec sphere}. 
In \Cref{sec lower}, we establish our query complexity lower bounds. 

\section{Warm-up: Robustly Learning Spherical GLM}\label{sec sphere}

As a warm-up, we consider the special setting 
where $\norm{W^*}=1$ and $t^*>0$ is known. 
Prior works in the passive setting, 
such as \cite{guo2024agnostic, zarifis2025robustly}, 
reduce the problem to this case 
by guessing $(\norm{W^*},t^*)$ in a brute-force way. 
There are two motivations for studying such a setting. 
First, as we will discuss in \Cref{sec regression}, 
although such a special case does not capture 
the main difficulty of obtaining query-optimal algorithms, 
it provides important technical components to achieve this goal. 
Second, when $(\norm{W^*},t^*)$ are known, 
the problem becomes a special case of agnostic learning 
of spherical GLMs, where $\sigma$ is a general activation 
function and is known \emph{precisely} to the learner. 

In this section, we start by analyzing the task of 
agnostic learning spherical GLMs in the passive learning setting. 
We present several technical tools that we develop to 
solve the ReLU regression problem in this special case 
with an optimal query complexity.
Unlike prior works, such as \cite{guo2024agnostic}, 
which designed a complicated update rule for $w$ 
(the current direction), we instead focus on the 
following simple projected gradient descent method: 
\begin{align}\label{eq update}
    w \gets \proj_{\s^{d-1}} \left( w - \mu \proj_{w^\perp} \nabla_w \err(w) \right).
\end{align}
We summarize our main technical contribution in this section informally as follows.
\begin{theorem}\label{th glm}
    Let $\sigma: \R \to \R$ be any activation function such that $\norm{\sigma'}_2^2 = L$, $L>0$. Let $D$ be any distribution over $\R^d \times \R$ such that $D_x = N(0,I)$ and $\err(w^*) \le \eps$, where $w^* \in \s^{d-1}$ is a direction that achieves the optimal loss. For $\alpha>1$, suppose we are given any unit vector $w^{0} \in \s^{d-1}$ such that $\norm{T_{\sqrt{\cos \theta_0}}\sigma'}_2^2 \ge \norm{\sigma'}_2^2 /\alpha$, where $\theta_0=\theta(w^{0},w^*)$. Starting from $w^{0}$, the update rule \eqref{eq update} gives a direction $\hat{w}$ with error $O(\alpha^2\eps)$. 
\end{theorem}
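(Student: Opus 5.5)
The approach is to track the angle $\theta=\theta(w,w^*)$ under \eqref{eq update} and show it decreases geometrically until $\err(w)=O(\alpha^2\eps)$. Write $y(x)=\sigma(w^*\cdot x)+\xi(x)$ with $\E\xi^2\le 2\eps$. A direct computation gives
\[
\nabla_w\err(w)=\E\big[(\sigma(w\cdot x)-y)\sigma'(w\cdot x)x\big].
\]
Project this onto $w^\perp$ and take the inner product with the unit vector $v\in w^\perp$ in the plane of $w,w^*$ that points toward $w^*$, so that $w^*=\cos\theta\, w+\sin\theta\, v$. In the $(w\cdot x, v\cdot x)$ coordinates, the noiseless part is a 2-dimensional Gaussian integral. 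Gaussian integration by parts in the $v\cdot x$ coordinate turns it into
\[
-\sin\theta\,\E\big[\sigma'(w\cdot x)\sigma'(w^*\cdot x)\big] = -\sin\theta\,\langle \sigma', T_{\cos\theta}\sigma'\rangle .
\]
By the semigroup property this equals $-\sin\theta\,\norm{T_{\sqrt{\cos\theta}}\sigma'}_2^2$. The descent direction therefore has component $\gtrsim \sin\theta\,\norm{T_{\sqrt{\cos\theta}}\sigma'}_2^2$ toward $w^*$. The noise part is bounded by Cauchy--Schwarz: $|\E[\xi\sigma'(w\cdot x)(u\cdot x)]|\le \sqrt{2\eps}\,\norm{\sigma'}_2$ for any unit $u\perp w$, because $w\cdot x$ and $u\cdot x$ are independent. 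The same bound holds for the full noise term in $\proj_{w^\perp}\nabla_w \err(w)$, since its norm is at most $\sqrt{2\eps L}$.

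Next comes monotonicity. The function $a\mapsto\norm{T_a\sigma'}_2^2=\sum_k a^{2k}\hat c_k^2$ (Hermite expansion) is nondecreasing in $a$. So as long as $\theta\le\theta_0$ we keep $\norm{T_{\sqrt{\cos\theta}}\sigma'}_2^2\ge L/\alpha$. With step size $\mu\approx \Theta(1/L)$, times a small constant, a standard spherical-geometry computation gives
\[
\sin\theta_{\rm new}\le \sin\theta\,(1-c/\alpha)+O\big(\sqrt{\eps/L}\big),
\]
valid while the signal term $\sin\theta\, L/\alpha$ dominates the noise $\sqrt{\eps L}$. This is also where the quadratic part of the step must be controlled: $\norm{\proj_{w^\perp}\nabla}\le \sin\theta\, L+\sqrt{\eps L}$, using $\norm{\langle\sigma',T\sigma'\rangle}\le L$ and the fact that the noiseless gradient is parallel to $v$. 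Induction then keeps $\theta\le\theta_0$, and after $O(\alpha\log(1/\eps))$ steps we reach $\sin\theta\lesssim \alpha\sqrt{\eps/L}$.

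Finally, convert angle to error: $\err(w)\le 2\err(w^*)+\E(\sigma(w\cdot x)-\sigma(w^*\cdot x))^2$. The last term equals $2(\norm{\sigma}_2^2-\langle\sigma,T_{\cos\theta}\sigma\rangle)=2\sum_k(1-\cos^k\theta)c_k^2$, using that Hermite coefficients of $\sigma$ relate to those of $\sigma'$ by $c_k^2 k=\hat c_{k-1}^2$. Since $1-\cos^k\theta\le k(1-\cos\theta)\le k\sin^2\theta$, this is at most $\sin^2\theta\cdot L=O(\alpha^2\eps)$, giving the claim.

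I expect the main obstacle to be the contraction step: carefully handling the normalization $\proj_{\s^{d-1}}$, the choice of $\mu$ relative to $L$ and $\alpha$, and showing the noise cannot push $\theta$ past $\theta_0$ during the early iterations. If the noise term is too large initially (i.e.\ $\sin\theta_0 L/\alpha<\sqrt{\eps L}$), the error is already $O(\alpha^2\eps)$ by the final bound, so one may stop immediately.
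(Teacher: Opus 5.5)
Your proposal is correct and follows essentially the same route as the paper. The signal computation by Gaussian integration by parts is \Cref{lm signal spherical}, the Cauchy--Schwarz noise bound together with monotonicity of $\norm{T_\rho\sigma'}_2$ gives \Cref{lm noise control} and \Cref{lm stopping condition}, and the chordal-distance contraction with stopping once $\sin^2\theta\lesssim\alpha^2\eps/L$ is \Cref{lm gradient descent}. The only difference is cosmetic: you bound the noiseless error by Hermite coefficients rather than by the exact Ornstein--Uhlenbeck formula $\ell(w)=\int_0^\theta\sin s\,\norm{T_{\sqrt{\cos s}}\sigma'}_2^2\,ds$ of \Cref{lm noiseless}, and that bound is $2\sin^2\theta\, L$ rather than $\sin^2\theta\, L$, which changes only constants.
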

Roughly speaking, we show that the update rule \eqref{eq update} has an \emph{initialization-dependent} error guarantee, which holds for even very complicated activation functions that are non-monotone. The quality of the initialization is measured by the ratio $\alpha:=\norm{\sigma'}_2^2/\norm{T_{\sqrt{\cos \theta_0}}\sigma'}_2^2$. As we will show later, for general ReLU activations, we are able to efficiently get a $w^{0}$ with $\alpha=O(1)$, which implies a solution with error $O(\eps)$ if we can implement $\eqref{eq update}$ with high accuracy. 

We defer the 
technical details and proofs of this section to \Cref{app sphere}.
To analyze the error guarantee, we start with the following simple observation: 
\begin{align*}
    \err(w) \le \E_{x} \left(\sigma(w\cdot x)-\sigma(w^*\cdot x) \right)^2 + \E_{x} \left(\sigma(w^*\cdot x)-y \right)^2 \le 2\opt + \E_{x} \left(\sigma(w\cdot x)-\sigma(w^*\cdot x) \right)^2.
\end{align*}
So,
the central part of the analysis relies on characterizing 
the noiseless error of a $\sigma(w\cdot x)$, $\ell(w):=\E_{x\sim N(0,I)} \left(\sigma(w\cdot x)-\sigma(w^*\cdot x)\right)^2/2$. 
Prior works usually analyzed $\ell(w)$ via the angle $\theta(w,w^*)$. For example, for the problem of learning homogeneous halfspaces, $\ell(w) = \theta /\pi$. However, in general, $\ell(w)$ does not have such a simple closed form. 
In the following lemma, we give an integral expression for $\ell(w)$ over the unit sphere by drawing a 
connection with the Ornstein–Uhlenbeck semi-group. 
\begin{lemma}[Noiseless Error Estimation over the Sphere]\label{lm noiseless}
 Let $\sigma: \R \to \R$ be any activation function such that $\sigma' \in L_2(N(0,I))$ and let $w \in \s^{d-1}$ be any unit vector such that $\theta:=\theta(w,w^*) < \pi/2$. Then 
 $
     \ell(w) = \littleint_0^\theta \sin s \norm{T_{\sqrt{\cos s}} \sigma'}_2^2 ds \le (\pi/2) \sin^2 \theta \norm{\sigma'}_2^2.
 $
\end{lemma}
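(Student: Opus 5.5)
The plan is to reduce $\ell(w)$ to a one-dimensional quantity in the correlation $\rho=\cos\theta$, expand it in the Hermite basis, and recognize its derivative in $\rho$ as $\norm{T_{\sqrt{\rho}}\sigma'}_2^2$.

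\textbf{Reduction to one dimension.} By rotational invariance of $N(0,I)$, the pair $(u,v)=(w\cdot x, w^*\cdot x)$ is a pair of standard normals with correlation $\rho=\cos\theta$. We can write $v=\rho u+\sqrt{1-\rho^2}\,s$ with $s$ independent, which gives $\E[\sigma(u)\sigma(v)]=\langle \sigma, T_\rho\sigma\rangle$. Hence
\[
\ell(w)=\norm{\sigma}_2^2-\langle\sigma,T_\rho\sigma\rangle =: g(1)-g(\rho).
\]
This needs $\sigma\in L_2$. If we only know $\sigma'\in L_2$, the Gaussian Poincaré inequality shows $\sigma-\E\sigma\in L_2$. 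Subtracting a constant from $\sigma$ does not change $\ell$, so we may assume $\sigma\in L_2$.

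\textbf{Hermite expansion.} Write $\sigma=\sum_k c_k h_k$ in the normalized Hermite basis. Then $T_\rho h_k=\rho^k h_k$, so $g(\rho)=\sum_k c_k^2\rho^k$. The derivative identity $h_k'=\sqrt{k}\,h_{k-1}$, valid in the weak sense for $\sigma'\in L_2$ (justified via Gaussian integration by parts), gives $\sigma'=\sum_k \sqrt{k}\,c_k h_{k-1}$ and $\norm{\sigma'}_2^2=\sum_k k c_k^2<\infty$. Applying $T_a$ then yields
\[
\norm{T_a\sigma'}_2^2=\sum_k k c_k^2 a^{2(k-1)}.
\]
At $a=\sqrt{\rho}$ this equals $\sum_k k c_k^2\rho^{k-1}=g'(\rho)$. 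Termwise differentiation is legitimate on $[0,1]$ because $\sum_k k c_k^2$ converges, so the derivative series converges uniformly on $[0,1]$ (Weierstrass M-test).

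\textbf{Integration and change of variables.} Since $g$ is $C^1$ on $[0,1]$,
\[
\ell(w)=\int_{\cos\theta}^1 g'(\rho)\,d\rho .
\]
The substitution $\rho=\cos s$, $d\rho=-\sin s\,ds$, gives exactly $\int_0^\theta \sin s\,\norm{T_{\sqrt{\cos s}}\sigma'}_2^2\,ds$. This is where $\theta<\pi/2$ is used: it keeps $\rho\in[0,1]$, so that $\sqrt{\rho}$ and $T_{\sqrt\rho}$ make sense.

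\textbf{The inequality.} $T_a$ is a contraction on $L_2$ (Jensen), so $\norm{T_{\sqrt{\cos s}}\sigma'}_2^2\le\norm{\sigma'}_2^2$. Therefore
\[
\ell(w)\le(1-\cos\theta)\norm{\sigma'}_2^2=\frac{\sin^2\theta}{1+\cos\theta}\,\norm{\sigma'}_2^2\le \sin^2\theta\,\norm{\sigma'}_2^2,
\]
which is stronger than the stated bound with constant $\pi/2$.

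\textbf{Main obstacle.} The only delicate point is regularity. For general, possibly non-smooth $\sigma$ such as a ReLU, one must check that $\sigma'\in L_2$ really yields the Hermite relation above, and justify interchanging sum and derivative up to $\rho=1$. I would handle both via Gaussian integration by parts and the summability of $\sum_k k c_k^2$ noted above.
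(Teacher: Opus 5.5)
Your proof is correct and is essentially the paper's argument written in the Hermite eigenbasis. The paper writes $\ell(w)=\E[\sigma(z)(\sigma(z)-T_a\sigma(z))]=\int_a^1\E[\sigma(z)\,\tfrac{d}{ds}T_s\sigma(z)]\,ds$ and uses the Ornstein--Uhlenbeck identities $\tfrac{d}{ds}T_s=\tfrac1s LT_s$, $\E[fLg]=\E[f'g']$ and $(T_sf)'=sT_sf'$ to turn the integrand into $\norm{T_{\sqrt{s}}\sigma'}_2^2$; this is exactly your identity $g'(\rho)=\sum_k kc_k^2\rho^{k-1}$ before the substitution $\rho=\cos s$. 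Beyond that, your regularity remarks (weak derivative, summability) fill in details the paper leaves implicit, and your bound is sharper: the paper uses $\sin s\le\sin\theta$ and $\theta\le(\pi/2)\sin\theta$, whereas your exact evaluation $\int_0^\theta\sin s\,ds=1-\cos\theta\le\sin^2\theta$ gives constant $1$ in place of $\pi/2$.
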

To relate \Cref{lm noiseless} with the update rule \eqref{eq update}, we use the following two structural 
lemmas that characterize the progress 
as well as the noise level during the update in each round. 

\begin{lemma}\label{lm signal spherical}
    Let $\sigma: \R \to \R$ be any activation function such that $\sigma' \in L_2(N(0,I))$ and let $w \in \s^{d-1}$ be any unit vector such that $\theta=\theta(w,w^*)<\pi/2$, where $\err(\sigma(w^*)) = \opt$. Write $w^* = aw+bu$, where $u \in \s^{d-1}, u \perp w, a,b \ge 0, a^2+b^2 = 1$,
    then, $\proj_{w^\perp} \nabla_w \ell(w) = -b \norm{T_{\sqrt{a}}\sigma'}_2^2 u$.
\end{lemma}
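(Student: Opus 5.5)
We compute $\nabla_w \ell(w)$ by differentiating under the expectation, project onto $w^\perp$, and then use Stein's lemma and the semigroup property of $T$. Differentiating $\ell(w)=\E_x(\sigma(w\cdot x)-\sigma(w^*\cdot x))^2/2$ gives
\[
\nabla_w \ell(w) = \E_x\left[(\sigma(w\cdot x)-\sigma(w^*\cdot x))\,\sigma'(w\cdot x)\, x\right].
\]
Interchanging derivative and expectation needs justification when $\sigma$ is only weakly differentiable, as for ReLU. I would argue that $\sigma$ is absolutely continuous with $\sigma'\in L_2(N(0,I))$ and that the kink set has measure zero under the Gaussian. Then a difference-quotient / dominated convergence argument along each coordinate gives the formula.

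Next, apply $\proj_{w^\perp}$. Write $x = (w\cdot x)w + x_\perp$, where $x_\perp$ is a standard Gaussian on $w^\perp$ independent of $z_1:=w\cdot x$.
\begin{itemize}
\item The term $\sigma(w\cdot x)\sigma'(w\cdot x)\,x_\perp$ has zero mean, since it is a function of $z_1$ times an independent mean-zero vector.
\item In the remaining term, $\sigma(w^*\cdot x)$ depends only on $z_1$ and $z_2:=u\cdot x$, because $w^*\cdot x = a z_1 + b z_2$. Hence the component of $x_\perp$ orthogonal to $\mathrm{span}(w,u)$ is independent of everything else and contributes zero.
\end{itemize}
This leaves
\[
\proj_{w^\perp}\nabla_w\ell(w) = -\E\left[\sigma(az_1+bz_2)\,\sigma'(z_1)\, z_2\right] u,
\]
with $z_1,z_2$ i.i.d.\ $N(0,1)$.

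Now condition on $z_1$ and apply Gaussian integration by parts (Stein's lemma) in $z_2$:
\[
\E_{z_2}\left[\sigma(az_1+bz_2) z_2\right] = b\,\E_{z_2}\left[\sigma'(az_1+bz_2)\right] = b\,(T_a\sigma')(z_1),
\]
using $a^2+b^2=1$ and the definition of $T_a$. Stein's lemma is valid because $\sigma$ is absolutely continuous with $\sigma'$ square-integrable under the Gaussian, which gives the needed integrability. This yields
\[
\proj_{w^\perp}\nabla_w\ell(w) = -b\,\E_{z\sim N(0,1)}\left[\sigma'(z)\,(T_a\sigma')(z)\right] u.
\]

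Finally, use the Ornstein–Uhlenbeck semigroup structure. The operators $T_\rho$ are self-adjoint on $L_2(N(0,1))$, being the Mehler kernel of a reversible Markov operator, and they satisfy $T_{\rho_1}T_{\rho_2}=T_{\rho_1\rho_2}$; both facts are quickly checked via Hermite expansions, where $T_\rho$ multiplies the degree-$k$ coefficient by $\rho^k$. Since $a\in[0,1]$ (as $\theta<\pi/2$), we have $T_a = T_{\sqrt a}T_{\sqrt a}$. Therefore
\[
\langle \sigma', T_a\sigma'\rangle = \langle T_{\sqrt a}\sigma', T_{\sqrt a}\sigma'\rangle = \norm{T_{\sqrt a}\sigma'}_2^2 ,
\]
which gives $\proj_{w^\perp}\nabla_w\ell(w) = -b\norm{T_{\sqrt{a}}\sigma'}_2^2 u$.

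The algebra is routine; the main obstacle is the rigor of the analytic steps for a general $\sigma$ with only $\sigma'\in L_2$. Specifically, one must justify differentiating under the integral and applying Stein's lemma with only this regularity. I would handle both by approximating $\sigma'$ in $L_2$ by smooth compactly supported functions, proving the identity for those, and passing to the limit. The limit passage works because both sides are continuous in $\sigma'$ with respect to the $L_2(N(0,1))$ norm, with $T_{\sqrt a}$ a contraction.
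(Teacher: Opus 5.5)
Your proposal is correct and follows essentially the same route as the paper: chain rule, the self-term vanishing under $\proj_{w^\perp}$, Gaussian integration by parts on the cross term, and finally $\E[\sigma'\,T_a\sigma'] = \norm{T_{\sqrt a}\sigma'}_2^2$, which is item 2 of \Cref{fact ou} and which you derive from self-adjointness and the semigroup property. The only difference is that you apply Stein's lemma in the single coordinate $u\cdot x$ after conditioning on $w\cdot x$. This never produces the $\sigma''(w\cdot x)\,w$ term (a Dirac mass for ReLU) that the paper's $d$-dimensional application creates and then discards by projection, so your version handles non-smooth $\sigma$ slightly more cleanly.
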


\begin{lemma}\label{lm noise control}
    Let $\sigma: \R \to \R$ be any activation function such that $\sigma' \in L_2(N(0,I))$ and let $w \in \s^{d-1}$ be any unit vector such that $\theta=\theta(w,w^*)<\pi/2$, where $\err(\sigma(w^*)) = \opt \le \eps$. Then for any $v \in \s^{d-1}$ and $v \perp w$,
    $ \abs{\proj_{w^\perp} \left( \nabla_w\err(w) - \nabla_w \ell(w) \right) \cdot v} \le \sqrt{\eps}\norm{\sigma'}_2$.
\end{lemma}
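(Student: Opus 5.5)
We compute both gradients explicitly and reduce the claim to a single application of Cauchy--Schwarz. Since $\err(w)=\E_x(\sigma(w\cdot x)-y)^2/2$ and $\ell(w)=\E_x(\sigma(w\cdot x)-\sigma(w^*\cdot x))^2/2$, differentiating under the expectation (justified since $\sigma'\in L_2$ and $x$ is Gaussian) gives
\begin{align*}
\nabla_w\err(w)-\nabla_w\ell(w) = \E_x\left[\left(\sigma(w^*\cdot x)-y(x)\right)\sigma'(w\cdot x)\,x\right].
\end{align*}
The $\sigma(w\cdot x)\sigma'(w\cdot x)x$ terms cancel exactly. For any unit $v\perp w$, projecting onto $w^\perp$ and then dotting with $v$ is the same as dotting with $v$. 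So the quantity to bound is
\begin{align*}
\E_x\left[\left(\sigma(w^*\cdot x)-y(x)\right)\sigma'(w\cdot x)\,(v\cdot x)\right].
\end{align*}
If $y(x)$ is a random label, we take the expectation over $y$ as well; the argument is unchanged.

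Applying Cauchy--Schwarz bounds its absolute value by
\begin{align*}
\sqrt{\E_x\left(\sigma(w^*\cdot x)-y\right)^2}\;\sqrt{\E_x\,\sigma'(w\cdot x)^2(v\cdot x)^2}.
\end{align*}
The first factor equals $\sqrt{2\opt}$ by the definition of $\err(w^*)=\opt$. For the second factor, note that $v\perp w$ with both unit vectors, so under $x\sim N(0,I)$ the variables $w\cdot x$ and $v\cdot x$ are independent standard normals. Hence
\begin{align*}
\E_x\,\sigma'(w\cdot x)^2(v\cdot x)^2=\norm{\sigma'}_2^2\cdot 1.
\end{align*}
This gives the bound $\sqrt{2\opt}\,\norm{\sigma'}_2\le\sqrt{2\eps}\,\norm{\sigma'}_2$.

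This is off from the stated bound by a factor of $\sqrt2$. It disappears if $\opt$ is normalized with the $1/2$ absorbed, or if we note that the statement is only used up to constants. Alternatively, if $\opt$ is read as $\E(\sigma(w^*\cdot x)-y)^2$ without the factor $1/2$, the bound is exactly $\sqrt{\eps}\norm{\sigma'}_2$. I would state this normalization explicitly. The only genuine subtlety is justifying the differentiation under the integral when $\sigma$ is merely weakly differentiable with $\sigma'\in L_2$; one can handle it by working with $\E[\cdot\,\sigma'(w\cdot x)x]$ as the definition of the gradient, as is standard for ReLU, or by a density argument. The hypothesis $\theta<\pi/2$ is not needed for this bound.
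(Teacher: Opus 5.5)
Your proposal is correct and follows essentially the same route as the paper: cancel the $\sigma(w\cdot x)\sigma'(w\cdot x)x$ terms, dot with $v\perp w$, apply Cauchy--Schwarz, and use the independence of $w\cdot x$ and $v\cdot x$. Your $\sqrt{2}$ remark is accurate: the paper's proof writes the first Cauchy--Schwarz factor as $\sqrt{\opt}$ and silently drops the $1/2$ in the definition of $\err$, so what it actually proves is $\sqrt{2\opt}\,\norm{\sigma'}_2$, which is immaterial for how the lemma is used later (only up to constants).
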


The intuition here is that in each update round, 
if the length of the gradient $b \norm{T_{\sqrt{a}}\sigma'}_2^2$ 
is larger than the noise level $\sqrt{\eps}\norm{\sigma'}_2$, 
then the projected gradient descent approach 
is able to improve the angle between $w$ and $w^*$. 
We summarize this as \Cref{lm gradient descent}.
\begin{lemma}[Angle Contraction]\label{lm gradient descent}
    Let $w^*,w^{(i)} \in \s^{d-1}$ such that $w^* = a w^{(i)}+b u$, where $u \in \s^{d-1}, u \perp w^{(i)}, a,b \ge 0, a^2+b^2 = 1$. Let $\theta_i=\theta(w^{(i)},w^*)$. 
    Let $G \in \R^d$ be a random vector such that with probability $1$, $G \perp w^{(i)}$. Let $g$ be the mean of $G$ and $\hat{g} \in \R^d$.
    Suppose there is some $c>0$ such that $g \cdot u \ge cb/10, \norm{g} \le cb, \norm{g-\hat{g}} \le bc/40$, then by setting $\mu =c/20 $, the update rule $w^{(i+1)} = \proj_{\s^{d-1}} (w^{(i)} + \mu\hat{g})$ satisfies
    $\sin (\theta_{i+1}/2)  \le \sqrt{1-\left(\frac{c}{20}\right)^2}\sin (\theta_i/2)$.
\end{lemma}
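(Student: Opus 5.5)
The plan is to work with chordal distance instead of angles. The reason is that $\norm{w-w^*}^2 = 2-2\cos\theta = 4\sin^2(\theta/2)$ for unit vectors, so it suffices to show
\[
\norm{w^{(i+1)}-w^*}^2 \le \bigl(1-(c/20)^2\bigr)\norm{w^{(i)}-w^*}^2 .
\]
I would split this into two stages. First I bound the unnormalized step $v := w^{(i)}+\mu\hat g$. Then I argue that the projection $v\mapsto v/\norm{v}$ onto the sphere does not increase the distance to $w^*$, at least up to a controlled loss. Throughout I use that $g\perp w^{(i)}$, because $G\perp w^{(i)}$ almost surely, and I write $\hat g = g+e$ with $\norm{e}\le bc/40$.

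\textbf{Step 1 (unnormalized step).} I expand
\[
\norm{v-w^*}^2=\norm{w^{(i)}-w^*}^2+2\mu\,\hat g\cdot(w^{(i)}-w^*)+\mu^2\norm{\hat g}^2 .
\]
\emph{Signal term.} Since $g\perp w^{(i)}$ and $w^*=aw^{(i)}+bu$, we get $g\cdot(w^{(i)}-w^*)=-b\,g\cdot u\le -cb^2/10$.

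\emph{Error term.} I bound it by $\norm{e}\,\norm{w^{(i)}-w^*}$. Because $a\ge 0$ we have $\theta_i\le\pi/2$, hence $\cos(\theta_i/2)\ge 1/\sqrt2$. Then $b=\sin\theta_i=2\sin(\theta_i/2)\cos(\theta_i/2)\ge \sqrt2\sin(\theta_i/2)$, which gives
\[
\norm{w^{(i)}-w^*}=2\sin(\theta_i/2)\le \sqrt2\, b .
\]
So the error term is at most $\sqrt2\,cb^2/40\approx 0.035\,cb^2$, which is dominated by the signal term $0.1\,cb^2$.

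\emph{Quadratic term.} This is at most $\mu^2(41cb/40)^2$.

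\emph{Target.} Using $4\sin^2(\theta_i/2)\le 2b^2$, the required decrease $(c/20)^2\cdot 4\sin^2(\theta_i/2)$ is at most $c^2b^2/200$. It therefore suffices to show
\[
2\mu\,(0.1-0.035)\,cb^2-\mu^2(1.03)^2c^2b^2\ \ge\ c^2b^2/200 .
\]
With $\mu=c/20$ the linear part gives roughly $0.0065\,c^2b^2$, which beats $0.005\,c^2b^2$. The quadratic part scales like $c^4b^2/400$.

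\textbf{Step 2 (normalization).} I would use the fact that the Euclidean projection onto the unit ball is nonexpansive and fixes $w^*$. Hence $\norm{v/\norm{v}-w^*}\le\norm{v-w^*}$ whenever $\norm{v}\ge1$. Here
\[
\norm{v}^2=1+2\mu\, e\cdot w^{(i)}+\mu^2\norm{\hat g}^2 ,
\]
so $\norm{v}$ can fall below $1$ only by an amount of order $\mu bc/40$. In that case I would instead compute $\cos\theta_{i+1}=v\cdot w^*/\norm{v}$ directly. The numerator is $a+\mu\hat g\cdot w^*\ge a+\mu(cb^2/10-bc/40\cdot 1)$, and the denominator is at most $1+O(\mu bc)$. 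I would then convert to $\sin^2(\theta_{i+1}/2)=(1-\cos\theta_{i+1})/2$ and use the same comparison of order-$c^2b^2$ terms as in Step 1.

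\textbf{Main obstacle.} The hard part is the constant bookkeeping. The signal is only $cb/10$ against an error of $cb/40$, so there is little room. Moreover, the second-order terms scale like $\mu^2c^2b^2\sim c^4b^2$, so the argument only closes when $c$ is at most a small absolute constant. I would first check whether the intended regime (where $c$ is of order $\norm{T_{\sqrt a}\sigma'}_2^2$, normalized) guarantees $c\le 1$. If it does, I would state that reduction explicitly. If it does not, I would try to absorb the $c^4$ term by tightening the estimate $\norm{w^{(i)}-w^*}\le\sqrt2 b$ to $\norm{w^{(i)}-w^*}\approx b$ for small angles.
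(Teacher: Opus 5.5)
Your Step 1 and the $\norm{v}\ge 1$ branch of Step 2 are correct, and they are the paper's proof: chordal distance $2\sin(\theta_i/2)$, the projection not increasing the distance to $w^*$, and expansion of $\norm{w^{(i)}+\mu\hat g-w^*}^2$. The paper, however, uses $\hat g\perp w^{(i)}$ in its second equality even though the statement does not assume it. That assumption forces $\norm{v}\ge 1$ and gives the sharper error term $-b\,(g-\hat g)\cdot u$. Your bound $\norm{e}\norm{w^{(i)}-w^*}\le\sqrt2\,cb^2/40$ correctly avoids the assumption. But then the branch $\norm{v}<1$ is genuinely needed, and your treatment of it does not work.

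\textbf{The gap in the $\norm{v}<1$ branch.} Bounding $e\cdot w^*\ge-\norm{e}$ leaves the numerator bound $a+\mu cb\,(b/10-1/40)$, whose correction is negative for $b<1/4$. Even using that the denominator is at most $1$, you cannot conclude $\theta_{i+1}\le\theta_i$ for small angles. There is also no order-$c^2b^2$ comparison to make, because the loss $\mu bc/40=c^2b/800$ is first order in $b$. The loss is spurious: it comes from $a\,(e\cdot w^{(i)})$, and the same $e\cdot w^{(i)}$ appears in $\norm{v}^2=1+2\mu\,e\cdot w^{(i)}+\mu^2\norm{\hat g}^2$. It cancels after normalization, but only if you do not bound numerator and denominator separately.

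\textbf{A clean repair.} Let $\hat g_\perp$ be the projection of $\hat g$ onto $(w^{(i)})^\perp$. Then $v=(1+\mu\,e\cdot w^{(i)})\,w^{(i)}+\mu\hat g_\perp$. Hence $v/\norm{v}$ is the normalization of $w^{(i)}+\mu'\hat g_\perp$, where $\mu'=\mu/(1+\mu\,e\cdot w^{(i)})=\mu(1+O(c^2b))$. This point has norm at least $1$, $\hat g_\perp\perp w^{(i)}$, and $\norm{\hat g_\perp-g}\le bc/40$. So your Step 1 (now with the paper's sharper error bound) plus nonexpansiveness of the projection onto the ball closes the argument. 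The $O(c^2b)$ change in step size is absorbed by the slack.

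\textbf{On your ``main obstacle''.} The restriction to small $c$ cannot be removed by tightening estimates, because the statement is false for large $c$. For $c=10$, take $G\equiv\hat g=g=bu+\sqrt{99}\,b\,u'$, with $u'$ a unit vector orthogonal to $w^{(i)}$ and $u$. All hypotheses hold, yet $\cos\theta_{i+1}=(a+b^2/2)/\sqrt{1+25b^2}$, so $\sin^2(\theta_{i+1}/2)\approx 25\sin^2(\theta_i/2)$ as $b\to 0$. You should add the hypothesis $c\le c_0$ for a small absolute constant $c_0$. The paper's own proof needs the same restriction, to absorb the $\mu^2c^2b^2$ term, without stating it.
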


However, the length of the signal 
$\sin \theta \norm{T_{\sqrt{\cos \theta}} \sigma'}_2^2$ 
in general is not an increasing function of $\theta$. 
If $\theta_0$ is not small enough, the noise level 
could be too high to make the gradient point in the correct direction, 
making it impossible to reach a desirable error guarantee. 
On the other hand, by \Cref{lm noiseless}, 
if $\ell(w) \ge \Omega(\alpha^2\eps)$, then the noise level $\abs{\proj_{w^\perp} \left( \nabla_w\err(w) - \nabla_w \ell(w) \right) \cdot v}$ 
is at most $\sin\theta\alpha\norm{\sigma'}_2^2$. 
Since $\norm{T_{\sqrt{\cos\theta}}\sigma'}$ is decreasing in $\theta$, this implies that, as long as we have a good initialization, 
we can make progress and reach a good solution. 
We summarize this property in the following lemma.

\begin{lemma}\label{lm stopping condition}
     Let $\sigma: \R \to \R$ be any activation function such that $\sigma' \in L_2(N(0,I))$. Let $\alpha>1$ and $0<\theta_0<\pi/2$ such that $\norm{T_{\sqrt{\cos \theta_0}}\sigma'}_2^2 \ge \norm{\sigma'}_2^2 /\alpha$. Let 
     $w \in \s^{d-1}$ be any unit vector such that $\theta=\theta(w,w^*)<\theta_0$, where $\err(\sigma(w^*)) = \opt \le \eps$. 
    If $\sin^2 \theta \norm{\sigma'}_2^2 \ge 20 \alpha^2 \eps/ \pi$, then for any $v \in \s^{d-1}$ and $v \perp w$,
    $ \norm{\proj_{w^\perp} \left( \nabla_w\err(w) -\nabla_w \ell(w) \right)} \le \norm{\proj_{w^\perp} \nabla_w \ell(w)}/20 $. Furthermore, if $ \norm{\proj_{w^\perp} \left( \nabla_w\err(w) -\nabla_w \ell(w) \right)} > \norm{\proj_{w^\perp} \nabla_w \ell(w)}/20 $, then $\err(w) \le O(\alpha^2\eps).$
\end{lemma}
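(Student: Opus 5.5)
The plan is to combine \Cref{lm signal spherical}, \Cref{lm noise control} and \Cref{lm noiseless}, using monotonicity of $s\mapsto\norm{T_{\sqrt{\cos s}}\sigma'}_2^2$ on $[0,\pi/2)$.

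First, we bound the noise norm. Since $\proj_{w^\perp}(\nabla_w\err(w)-\nabla_w\ell(w))$ lies in $w^\perp$, its norm equals $\sup_{v\perp w,\,\norm{v}=1}$ of its inner product with $v$. \Cref{lm noise control} then gives a norm of at most $\sqrt{\eps}\norm{\sigma'}_2$.

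Second, we lower bound the signal. Write $w^*=aw+bu$ with $a=\cos\theta$ and $b=\sin\theta$. \Cref{lm signal spherical} gives
\[
\norm{\proj_{w^\perp}\nabla_w\ell(w)}=\sin\theta\,\norm{T_{\sqrt{\cos\theta}}\sigma'}_2^2 .
\]
In Hermite expansion, $\norm{T_\rho f}_2^2=\sum_k \rho^{2k}\hat f_k^2$, which is nondecreasing in $\rho\in[0,1]$. Since $\theta<\theta_0$, we have $\sqrt{\cos\theta}\ge\sqrt{\cos\theta_0}$. Therefore
\[
\norm{T_{\sqrt{\cos\theta}}\sigma'}_2^2\ge\norm{T_{\sqrt{\cos\theta_0}}\sigma'}_2^2\ge \norm{\sigma'}_2^2/\alpha .
\]
So the signal is at least $\sin\theta\norm{\sigma'}_2^2/\alpha$.

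Third, we compare the two bounds. We need $\sqrt{\eps}\norm{\sigma'}_2\le \sin\theta\norm{\sigma'}_2^2/(20\alpha)$, which is equivalent to
\[
\sin^2\theta\norm{\sigma'}_2^2\ge 400\alpha^2\eps .
\]
The stated hypothesis has constant $20/\pi$, not $400$. The constant in the hypothesis is what has to be reconciled, and this is the main obstacle. Two fixes are available. One is to replace the crude bound from \Cref{lm noise control} with the factor $\tfrac1{20}$ by a sharper version: tracing that lemma's Cauchy--Schwarz step, the noise term is $\E[(\sigma(w\cdot x)-y)\sigma'(w\cdot x)\,v\cdot x]$ minus its noiseless analogue, i.e.\ $\E[(\sigma(w^*\cdot x)-y)\sigma'(w\cdot x)v\cdot x]$, which is at most $\sqrt{2\opt}\norm{\sigma'}_2$. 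The other is to accept a larger absolute constant in the threshold. Either way the first claim reduces to an inequality of the form $\sin^2\theta\norm{\sigma'}_2^2\ge C\alpha^2\eps$. If the constants cannot be matched exactly, I would take the threshold constant large enough, since only $O(\alpha^2\eps)$ matters downstream.

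Finally, the "furthermore" part is the contrapositive. If the noise exceeds $1/20$ of the signal, the first part fails, so $\sin^2\theta\norm{\sigma'}_2^2< C\alpha^2\eps$. \Cref{lm noiseless} then gives
\[
\ell(w)\le(\pi/2)\sin^2\theta\norm{\sigma'}_2^2=O(\alpha^2\eps).
\]
The decomposition $\err(w)\le 2\opt+2\ell(w)$ then gives $\err(w)\le O(\alpha^2\eps)$, using $\opt\le\eps$ and $\alpha>1$.
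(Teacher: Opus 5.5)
Your proof is essentially the paper's. It bounds the noise by $\sqrt{\eps}\norm{\sigma'}_2$ via \Cref{lm noise control}, lower-bounds the signal $\sin\theta\norm{T_{\sqrt{\cos\theta}}\sigma'}_2^2\ge\sin\theta\norm{\sigma'}_2^2/\alpha$ using monotonicity and $\theta<\theta_0$, and obtains the second claim by contraposition together with \Cref{lm noiseless} and $\err(w)\le 2\opt+2\ell(w)$.

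The constant mismatch you flag is real, and the paper's own proof has it too. The paper uses the threshold $20\alpha^2\eps$, silently dropping the $/\pi$, and only reaches the ratio $1/\sqrt{20}$ rather than $1/20$. So enlarging the threshold constant is the right repair, and it is harmless downstream.

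Your first ``fix'', however, is not one. The bound $\sqrt{2\opt}\,\norm{\sigma'}_2\le\sqrt{2\eps}\,\norm{\sigma'}_2$ is weaker than $\sqrt{\eps}\,\norm{\sigma'}_2$, not sharper. It actually exposes a factor-$\sqrt{2}$ slip in \Cref{lm noise control}, coming from the $1/2$ in $\err$, and it pushes the needed constant from $400$ to $800$.
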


\subsection{Application: Query Complexity of Agnostic Learning Spherical ReLU with Known Bias}
As a direct application of the above framework, we show when $w^* \in \s^{d-1}$ and $t^*$ is given, how to solve the general ReLU regression problem with an optimal query complexity. As discussed above, to ensure we get a solution with error $O(\eps)$, we need some $w^{0}$ with $\norm{\sigma'}_2^2/\norm{T_{\sqrt{\cos \theta_0}}\sigma'}_2^2 = O(1)$. To characterize such a $\theta_0$, we present the following structural 
lemma.
\begin{lemma}\label{lm initialization}
    Let $\sigma$ be an activation function of the form $\sigma(z)=\relu(z -t^*)$, where $t^*>0$. If $\sin \theta/2 \le 1/t^*$, then $\norm{T_{\sqrt{\cos \theta}} \sigma'}_2^2 \ge \norm{\sigma'}_2^2 / 50$.
\end{lemma}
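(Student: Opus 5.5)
The plan is to rewrite $\norm{T_{\sqrt{\cos\theta}}\sigma'}_2^2$ as a bivariate Gaussian orthant probability and then lower bound a conditional probability. Here $\sigma'(z)=\Ind\{z>t^*\}$, so $\norm{\sigma'}_2^2=\Phi(t^*)$. Throughout I work in the regime of the section, $0\le\theta<\pi/2$, so that $\sqrt{\cos\theta}$ is defined; I set $\rho:=\cos\theta\in(0,1]$.

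\textbf{Step 1: orthant representation.} Write $a=\sqrt{\rho}$ and take $z,s,s'$ to be i.i.d.\ $N(0,1)$. Then
\[
T_a\sigma'(z)=\Pr_s\big(az+\sqrt{1-a^2}\,s>t^*\big),
\]
so $(T_a\sigma'(z))^2$ is the conditional probability, given $z$, that both $X:=az+\sqrt{1-a^2}\,s$ and $Y:=az+\sqrt{1-a^2}\,s'$ exceed $t^*$. Taking the expectation over $z$ gives
\[
\norm{T_a\sigma'}_2^2=\Pr(X>t^*,\,Y>t^*).
\]
Here $(X,Y)$ are standard Gaussians with correlation $a^2=\rho$. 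The claim therefore reduces to showing $\Pr(Y>t^*\mid X>t^*)\ge 1/50$.

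\textbf{Step 2: conditional bound.} If $\rho=1$ the claim is trivial, so assume $\rho<1$. Decompose $Y=\rho X+\sqrt{1-\rho^2}\,G$ with $G\sim N(0,1)$ independent of $X$. On the event $X>t^*$ we have $\rho X\ge \rho t^*$ because $\rho\ge 0$. Hence
\[
\Pr(Y>t^*\mid X)\ \ge\ \Pr\!\Big(G>\tfrac{t^*(1-\rho)}{\sqrt{1-\rho^2}}\Big)\ \ge\ \Pr\big(G>t^*\sqrt{1-\rho}\big),
\]
where the last inequality uses $\sqrt{1-\rho^2}=\sqrt{(1-\rho)(1+\rho)}\ge\sqrt{1-\rho}$.

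Now use the hypothesis. Since $1-\rho=1-\cos\theta=2\sin^2(\theta/2)\le 2/(t^*)^2$, we get $t^*\sqrt{1-\rho}\le\sqrt2$. Therefore
\[
\Pr(Y>t^*\mid X>t^*)\ \ge\ \Phi(\sqrt2)\approx 0.079>1/50.
\]
Multiplying by $\Pr(X>t^*)=\Phi(t^*)=\norm{\sigma'}_2^2$ gives the lemma.

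\textbf{Main obstacle.} I expect the only delicate point to be Step 1, namely checking that the Ornstein--Uhlenbeck smoothing of an indicator squares to the two-point correlated orthant probability with correlation $a^2=\cos\theta$ rather than $a$. This holds because the two copies share the component $az$, so their covariance is $a^2$. Once that identification is made, the conditional-Gaussian argument is routine. Its key feature is that the threshold gap $t^*(1-\rho)$ is uniformly controlled for every $X>t^*$, so no tail estimate on the overshoot $X-t^*$ is needed.
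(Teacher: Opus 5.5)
Your proof is correct and follows essentially the same route as the paper. Both arguments write $\norm{T_{\sqrt{\cos\theta}}\sigma'}_2^2$ as the probability that two standard Gaussians with correlation $\cos\theta$ both exceed $t^*$, condition on the first one exceeding $t^*$, lower it to $t^*$ by monotonicity, and bound the resulting threshold $t^*\tan(\theta/2)$ by a constant using $\sin(\theta/2)\le 1/t^*$. The differences are cosmetic: you get the orthant form directly from the definition of $T_{\sqrt{\rho}}$ rather than through the Stein/gradient identity of \Cref{lm signal spherical}, and you bound the threshold by $\sqrt{2}$ (constant $\Phi(\sqrt{2})\approx 0.079$) where the paper uses $2$ (constant $\Phi(2)\approx 0.023$).
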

This implies that $\theta_0 = O(1/t^*)$ is sufficient for us to reach some $w$ with $O(\eps)$ error. But to obtain such a $w^{(0)}$ is also a challenging problem. Our initialization 
is motivated by the following lemma.
\begin{lemma}\label{lm reduction}
Let $\sigma$ be an activation f the form $\sigma(z)=\relu(z -t^*)$, where $t^*>0$. Let $y(x)$ be any labeling function such that $\opt \le \eps$.
Let $c>0$ be a suitably small constant.
If $V(t^*)>C \eps$, for some large constant $C$, then $\Pr_{x \sim N(0,I)} \left(\bar{y}(x) \neq \sign(w^*\cdot x - t^*) \right) \le \Phi(t^*)/C'$ for some large constant $C'>0$, where  $\Bar{y}(x): = \Ind\{y(x)>c/(t^*)\}$.   
\end{lemma}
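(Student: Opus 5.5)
Throughout, write $t=t^*$, $z=w^*\cdot x\sim N(0,1)$, and $\tau=c/t$. The disagreement event $\{\bar y(x)\neq \sign(z-t)\}$ splits into two parts.
\begin{itemize}
\item $A$: $z\le t$ (so $\sigma^*(x)=0$) but $y(x)>\tau$.
\item $B$: $z>t$ but $y(x)\le \tau$.
\end{itemize}
Each will be bounded by Markov/Chebyshev against $\opt=\E(\sigma^*(x)-y(x))^2/2\le\eps$. The lever is the hypothesis $V(t)>C\eps$, which makes $\eps$ tiny compared with the natural scale of the ReLU above threshold. I will use the standard Gaussian tail estimates for $t\ge 1$: $\Phi(t)\asymp \psi(t)/t$ and $V(t)\asymp \psi(t)/t^3\asymp \Phi(t)/t^2$. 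Small $t$ is handled by the same argument with $\max(t,1)$ replacing $t$.

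\textbf{Event $A$.} On $A$ we have $|y-\sigma^*|>\tau$. Markov gives
$$\Pr(A)\le 2\eps/\tau^2=2\eps t^2/c^2.$$
Since $\eps<V(t)/C\lesssim \Phi(t)/(Ct^2)$, this is $\lesssim \Phi(t)/(c^2C)$. This is at most $\Phi(t)/(2C')$ once $C$ is large relative to $1/c^2$ and $C'$.

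\textbf{Event $B$.} Split $B$ according to whether $z-t\le 2\tau$ or $z-t>2\tau$.
\begin{itemize}
\item If $z-t>2\tau$, then $\sigma^*-y\ge \tau$. Markov gives probability $\le 2\eps t^2/c^2$, which is handled exactly as for $A$.
\item If $z\in(t,t+2c/t]$, no noise is needed; I bound the Gaussian mass directly. It is at most $\psi(t)\cdot 2c/t\asymp 2c\,\Phi(t)$. This is $\le \Phi(t)/(4C')$ provided $c$ is small relative to $1/C'$.
\end{itemize}
The order of constants is: fix $C'$, then choose $c$ small, then choose $C$ large. Summing the three pieces gives $\Pr(\bar y\neq \sign(z-t))\le \Phi(t)/C'$.

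\textbf{Main obstacle.} The delicate point is the estimate $V(t)\asymp\Phi(t)/t^2$ (equivalently, $\eps\ll\Phi(t)/t^2$), which is what makes the threshold scale $1/t$ the right one. I would prove it from the identity
$$V(t)=(1+t^2)\Phi(t)-t\psi(t)$$
together with Mills-ratio bounds $\frac{t}{1+t^2}\psi(t)\le\Phi(t)\le\psi(t)/t$. The resulting expression is of order $\psi(t)/t^3$, and I would keep careful track of constants there.

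The second needed estimate is the band mass bound: $\Pr(t<z<t+2c/t)\le (2c/t)\psi(t)\le 4c\,\Phi(t)$ for $t\ge1$, using $\psi(t)\le 2t\Phi(t)$.

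For $t<1$, everything is constant order. We have $\Phi(t)\ge \Phi(1)$ and $V(t)=\Theta(1)$, and the same three bounds go through with thresholds of order $c$.
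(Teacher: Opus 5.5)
Your decomposition is the paper's own. You apply Markov against $\opt$ on $\{z<t\}$ and on $\{z>t+2c/t\}$, where the margin is $c/t$ in both. You then use $\eps<V(t)/C$ together with $V(t)\lesssim \Phi(t)/t^2$ to turn $2\eps t^2/c^2$ into $O(\Phi(t)/(c^2C))$. Finally, you bound the Gaussian mass of the band $\{t<z\le t+2c/t\}$ by $\lesssim c\,\Phi(t)$. Your band width $2c/t$ actually gives a cleaner far-side margin than the paper's $2c/(t^*+1)$, since $2c/(t^*+1)-c/t^* = c(t^*-1)/(t^*(t^*+1))$. Your explicit ordering of constants ($C'$, then $c$, then $C$) is also more careful than the paper's.

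Two sub-steps do not work as you wrote them.

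\textbf{The estimate for $V(t)$.} Your route to the ``main obstacle'' fails. Plugging the first-order Mills bounds $\frac{t}{1+t^2}\psi(t)\le\Phi(t)\le\psi(t)/t$ into $V(t)=(1+t^2)\Phi(t)-t\psi(t)$ gives only $0\le V(t)\le \psi(t)/t\asymp\Phi(t)$, because the leading terms cancel. That loses exactly the factor $t^2$ you need: event $A$ would then only be bounded by $t^2\Phi(t)/(c^2C)$. Use instead
$V(t)=\int_0^\infty u^2\psi(t+u)\,du=\psi(t)\int_0^\infty u^2 e^{-tu-u^2/2}\,du\le 2\psi(t)/t^3\le 4\Phi(t)/t^2$ for $t\ge 1$. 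Alternatives are the third-order bound $\Phi(t)\le\psi(t)(1/t-1/t^3+3/t^5)$, or simply citing $V(t)=(2+o_t(1))\Phi(t)/t^2$ from the preliminaries, as the paper implicitly does.

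\textbf{The small-$t$ case.} Your small-$t$ remark silently changes the statement. The threshold in $\bar y$ is $c/t^*$, not $c/\max(t^*,1)$. As $t^*\to 0$ with $c$ fixed, the region $\{t^*<z\le t^*+c/t^*\}$ has mass tending to $1/2$. On that region $0<\sigma^*\le c/t^*$, so $\bar y=0$ with no noise at all. Meanwhile $\Phi(t^*)/C'\to 1/(2C')$. So the claimed inequality is false there, even for noiseless labels with $V(t^*)>C\eps$.

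The lemma needs $t^*$ bounded below by a constant, e.g.\ $t^*\ge 1$. The paper implicitly assumes this too: its $(t^*+1)$ normalization does not match the threshold $c/t^*$ when $t^*\le 1$, and its initialization lemma explicitly reduces to $\bar{t}^*\ge 1$. With that restriction, or with the threshold redefined as $c/\max(t^*,1)$, and with the corrected bound on $V(t)$, your argument is complete.
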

That is to say, as long as $t^*$ is not too large 
to make $h \equiv 0$ have error $O(\eps)$, 
the truncated label $\Bar{y}(x): = \Ind\{y(x)>c/t^*\}$ 
can be seen as a labeling generated by the 
halfspace $h(x) = \sign(w^* \cdot x -t^*)$ 
corrupted with $\eta$-level adversarial label noise, 
such that $\eta/\Phi(t^*) < 1/C$ 
for some sufficiently large constant $C$. 
Such an observation is useful, as it allows us to make 
use of the recent technique developed in \cite{diakonikolas2024active} to obtain the warm-start using only $\Tilde{O}(1/p+d)$ queries.
\begin{lemma}\label{lm halfspace}[Halfspace Initialization via Queries (Theorem 3.8, Theorem F.1 \cite{diakonikolas2024active})]
    Let $h^*(x) = \sgn(w^*\cdot x-t^*)$, where $t^* \le O(\sqrt{\log(1/\eps)})$ and $y(x): \R^d \to {\pm 1}$ be any labeling function such that $\Pr_{x \sim N(0,I)} \left(h^*(x) \neq y(x) \right) \le \Phi(t^*)/C'$ for some large enough constant $C'$. There is an algorithm such that given some $t \in \R$ with $\abs{t-t^*} \le 1/\log(1/\eps)$, it makes $M=\Tilde{O}(1/\Phi(t)+d\log(1/\eps))$ queries, runs in $\poly(d,M)$ time, and with probability at least $1/\log(1/\Phi(t))$,
outputs some $w^{(0)}$ such that $\sin (\theta(w^{(0)},w^*)/2) \le \min\{1/t,1/2\}$.
\end{lemma}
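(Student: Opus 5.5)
The statement is imported essentially verbatim from \cite{diakonikolas2024active}. My plan is therefore to check that our hypotheses match theirs and to recall the structure of their argument, rather than to reprove it from scratch.

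\textbf{Step 1: the guessed threshold is as good as the true one.} Since $t^* \le O(\sqrt{\log(1/\eps)})$ and $\abs{t-t^*}\le 1/\log(1/\eps)$, we have $\abs{t-t^*}\cdot \max\{t,t^*,1\} = O(1)$. The standard Gaussian tail estimate $\Phi(t)\asymp \psi(t)/t$ then gives $\Phi(t)=\Theta(\Phi(t^*))$. Two consequences follow:
\begin{itemize}
\item the query budget $\tilde O(1/\Phi(t))$ is equivalent to $\tilde O(1/\Phi(t^*))$;
\item the noise condition $\Pr(h^*(x)\neq y(x))\le \Phi(t^*)/C'$ becomes $\le \Phi(t)/C''$ for a still-large constant $C''$.
\end{itemize}
The conclusion is unaffected by this change, since $\sin(\theta/2)\le\min\{1/t,1/2\}$ differs from the bound with $t^*$ in place of $t$ by at most a $1+o(1)$ factor.

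\textbf{Step 2: the structure of their algorithm.}
\begin{enumerate}
\item Draw $\tilde O(1/\Phi(t))$ Gaussian points and query their labels. With constant probability this finds some $x_0$ with $y(x_0)=1$. The noise mass is only a $1/C'$ fraction of the positive region, so with constant probability $x_0$ is a genuine positive point and typically lies near the boundary, with $w^*\cdot x_0\approx t^*$.
\item Localize. Query points drawn from a Gaussian centered near $x_0$, with covariance shrunk by a scale chosen from a $\log(1/\Phi(t))$-size grid. In this localized distribution the halfspace becomes nearly balanced, i.e., it has constant bias.
\item After localization, the adversarial noise is still a small constant fraction, because the likelihood ratio between the local distribution and $N(0,I)$ on the relevant region is $O(1/\Phi(t))$. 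The standard robust Chow-vector / averaging estimator for near-balanced halfspaces then uses $O(d\log(1/\eps))$ queries to return a direction $w^{(0)}$ whose angle to $w^*$ is $O(1/t)$. Boosting by a constant factor gives $\sin(\theta/2)\le \min\{1/t,1/2\}$.
\end{enumerate}
Since the correct scale is guessed, the overall success probability is $1/\log(1/\Phi(t))$, as stated. The running time is polynomial because each step is sampling, averaging and normalization.

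\textbf{Main obstacle.} The hard part is step 3: controlling the adversarial noise after localization. One must show that noise of mass $\Phi(t^*)/C'$ under $N(0,I)$ cannot concentrate enough inside the localized region to bias the estimated direction by more than $1/t$. This is exactly the content of Theorem 3.8 and Theorem F.1 in \cite{diakonikolas2024active}, and I would invoke it there, after verifying the parameter correspondence from step 1.
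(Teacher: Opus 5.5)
Your proposal is correct and matches the paper, which also gives no independent proof and simply imports this lemma from Theorems~3.8 and~F.1 of \cite{diakonikolas2024active}. Your Step~1 check that $\Phi(t)=\Theta(\Phi(t^*))$, which holds because $\abs{t-t^*}\,t^* = O(1/\sqrt{\log(1/\eps)})$, is the bookkeeping needed to match the hypotheses. However, do not present your Step~2 reconstruction as part of the argument, since as stated it is not sound. For a fixed $x_0$, a shrunk Gaussian has density ratio exponential in $d$ relative to $N(0,I)$, so the $O(1/\Phi(t))$ bound can at best hold on average over a random positive $x_0$. Moreover, a Chow-vector estimate under a local noise rate that is only a small constant guarantees only a small constant angle, not $O(1/t)$.
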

However, obtaining such a $w^{(0)}$ is not enough for us to solve the problem with a small label complexity for the following reason: 
\Cref{lm stopping condition} does not bound 
the number of iterations and the number of labeled examples 
needed to reach a good solution. Specifically, 
by \Cref{lm gradient descent}, the progress made in each round is characterized by the length of the signal $g\cdot u$. 
If $g \cdot u \approx c \sin \theta$, then the angle decreases 
by a factor of $(1-c)$ in each round. If $t^* = 0$, 
prior works have shown that $c = \Omega(1)$ and $\Tilde{O}(d)$ 
labeled examples are enough to get a good estimation of $g$. 
The landscape changes dramatically when $t^*$ becomes large. 
As indicated by \Cref{lm signal spherical}, for large $t^*$, 
the length of the gradient is a most $\norm{\sigma'}_2^2$, 
which can be as small as $\poly(\eps)$. 
This implies that, unless we estimate the gradient 
to very high accuracy and rescale the gradient to a constant length, 
the progress made in each round is too small. 
Unfortunately, as the variance of the gradient 
is much larger than the accuracy we need, 
this blows up the total query complexity. 

To overcome this difficulty, we show that 
for the problem of ReLU regression, 
the initialization guarantee $\theta_0\le O(1/t^*)$ 
not only lets us make progress in each round, 
but it also allows us to use queries to boost 
the length of the gradient while maintain a small variance. 
Intuitively, every time we estimate the gradient, 
examples with $w^*\cdot x>t^*$ contribute most of the gradient. 
When our current $w^{(i)}$ is close to $w^*$, 
the regions $\{x\mid w^{(i)}\cdot x>t^* \}$ 
and $\{x\mid w^*\cdot x>t^* \}$ have significant intersection. 
Thus, we can boost the length of the gradient 
by querying examples in $\{x\mid w^{(i)}\cdot x>t^* \}$. 
Furthermore, due to the Lipschitz continuity of the ReLU, 
we can maintain small variance 
for the gradient; and thus $\Tilde{O}(d)$ examples 
suffice for us to accurately estimate the gradient. 
\begin{lemma}\label{lm initialization variance}
Let $\sigma(z)=\relu(z -t^*)$, with $t^*>0$. 
Let $y(x)$ be any labeling function 
such that $\opt \le \eps$. Let $w \in \s^{d-1}$ be any vector 
such that $\sin (\theta/2) \le 1/t^*$. 
Denote by $G^* \in \R^d$ the random vector 
$\left(\sigma(w \cdot x-t^*) - \sigma(w^* \cdot x-t^*)\right) \proj_{w^\perp} x$ and $G$ 
the random vector $\left(\sigma(w \cdot x-t^*) - y(x)\right)\proj_{w^\perp} x$, 
where $x \sim N(0,I) \mid_{\{x \mid w \cdot x> t^*\}}$. 
Then the following holds: 
$(1)$ $\E G^* = b \norm{T_{\sqrt{a}}\sigma'}_2^2 u/\Phi(t^*)$; 
$(2)$ $\abs{\left( \E G^* - \E G \right) \cdot v} \le \sqrt{\eps}\norm{\sigma'}_2/\Phi(t^*)$' 
$(3)$ If $\sin^2\theta \Phi(t^*)>\eps$, 
then $\E (G \cdot v)^2 \le \tilde{O}(b^2), \forall v \in \s^{d-1}$.
\end{lemma}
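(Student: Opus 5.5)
The plan is to write both conditional expectations as rescaled unconditional expectations restricted to $\{w\cdot x>t^*\}$, and then reuse \Cref{lm signal spherical} and \Cref{lm noise control}. The key identity is $\sigma'(w\cdot x-t^*)=\Ind\{w\cdot x>t^*\}$, together with $\Pr(w\cdot x>t^*)=\Phi(t^*)=\norm{\sigma'}_2^2$.

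\textbf{Parts (1) and (2).} By the chain rule,
\[
\proj_{w^\perp}\nabla_w\ell(w)=\E_x\bigl[(\sigma(w\cdot x-t^*)-\sigma(w^*\cdot x-t^*))\,\Ind\{w\cdot x>t^*\}\,\proj_{w^\perp}x\bigr].
\]
This equals $\Phi(t^*)\,\E G^*$, because conditioning on $\{w\cdot x>t^*\}$ divides by its probability $\Phi(t^*)$. Part (1) then follows directly from \Cref{lm signal spherical}, up to the sign convention of the gradient: $G^*$ is the descent-direction version, so its mean points along $+u$.

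For part (2), note that
\[
\Phi(t^*)(\E G^*-\E G)\cdot v=\E\bigl[(y(x)-\sigma(w^*\cdot x-t^*))\Ind\{w\cdot x>t^*\}(v\cdot x)\bigr].
\]
I will bound this by Cauchy--Schwarz. The first factor is at most $\sqrt{2\opt}\le\sqrt{2\eps}$. For the second factor, $v\perp w$ makes $v\cdot x$ independent of $w\cdot x$, so $\E[\Ind\{w\cdot x>t^*\}(v\cdot x)^2]=\Phi(t^*)=\norm{\sigma'}_2^2$. This is exactly the computation behind \Cref{lm noise control}, and dividing by $\Phi(t^*)$ gives (2).

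\textbf{Part (3).} I split $(G\cdot v)^2\le 2(G^*\cdot v)^2+2\bigl((\sigma(w^*\cdot x-t^*)-y)(v\cdot x)\bigr)^2$ and bound the two pieces separately.

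\emph{Clean term.} Since the ReLU is $1$-Lipschitz, $|\sigma(w\cdot x-t^*)-\sigma(w^*\cdot x-t^*)|\le|(w-w^*)\cdot x|$. Decompose
\[
(w-w^*)\cdot x=(1-a)(w\cdot x)-b\,(u\cdot x).
\]
Here $u\cdot x$ and $v\cdot x$ are standard Gaussians independent of $w\cdot x$, so conditioning on $w\cdot x>t^*$ does not affect them. Hence the $b(u\cdot x)$ piece contributes $O(b^2)$ after Cauchy--Schwarz with the fourth moment of $v\cdot x$.

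For the $(1-a)(w\cdot x)$ piece, conditioned on $w\cdot x>t^*$ the variable $w\cdot x$ concentrates at $t^*+O(1/t^*)$, so its conditional second and fourth moments are $O((t^*)^2)$ and $O((t^*)^4)$ (with $O(1)$ replacing $t^*$ when $t^*\lesssim1$). Moreover $1-a=2\sin^2(\theta/2)=O(\theta^2)$, and the hypothesis $\sin(\theta/2)\le1/t^*$ gives $\theta t^*=O(1)$. Therefore $(1-a)t^*=O(\theta)=O(b)$, and this piece is also $O(b^2)$. This is where the initialization condition is essential.

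\emph{Noise term.} This is the main obstacle, because $y$ is adversarial and could concentrate its error where $|v\cdot x|$ is large. I would first argue, as is standard, that labels may be clipped to a range $[0,\polylog(1/\eps)]$ without increasing $\opt$ by more than a constant. I would also truncate $|v\cdot x|\le O(\sqrt{\log(1/\eps)})$, whose complement has negligible mass and contributes only a negligible amount.

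With both truncations, the noise term is at most
\[
\polylog(1/\eps)\cdot\frac{\E\bigl[(\sigma^*-y)^2\,\Ind\{w\cdot x>t^*\}\bigr]}{\Phi(t^*)}\le\tilde O\!\left(\frac{\eps}{\Phi(t^*)}\right).
\]
The hypothesis $\sin^2\theta\,\Phi(t^*)>\eps$ gives $\eps/\Phi(t^*)<\sin^2\theta=b^2$, so this term is $\tilde O(b^2)$. Combining the two terms yields (3).
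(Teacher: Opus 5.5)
Your plan follows the paper's proof essentially step for step. For (1) and (2) you rewrite the conditional expectations as $\sigma'(w\cdot x-t^*)$-weighted unconditional ones divided by $\Phi(t^*)$, then invoke \Cref{lm signal spherical} and the Cauchy--Schwarz computation of \Cref{lm noise control}. For (3) you use the same split $G=G^*+(G-G^*)$, the $1$-Lipschitz bound with $w-w^*=(1-a)w-bu$, clipping of the labels, truncation of $\abs{v\cdot x}$ at $M=O(\sqrt{\log(1/\eps)})$, and the hypothesis $\eps/\Phi(t^*)<\sin^2\theta=b^2$.

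The one thing you must fix is the sign in (1), which you explain away incorrectly. Your own identity $\Phi(t^*)\,\E G^*=\proj_{w^\perp}\nabla_w\ell(w)$, combined with \Cref{lm signal spherical}, gives $\E G^*=-b\norm{T_{\sqrt a}\sigma'}_2^2\,u/\Phi(t^*)$. The vector $G^*=(\sigma(w\cdot x-t^*)-\sigma(w^*\cdot x-t^*))\proj_{w^\perp}x$ is the conditional gradient, not a descent direction. You can also check this directly: Stein's lemma in $s=u\cdot x$ gives $\E[G^*\cdot u]=-b\Pr(w\cdot x>t^*,\,w^*\cdot x>t^*)/\Phi(t^*)<0$. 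The paper's proof also ends with the minus sign, so the $+$ in the statement is a typo, not a sign convention.

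Two minor points:
\begin{itemize}
\item In (2), your Cauchy--Schwarz bound is actually $\sqrt{2\eps}\norm{\sigma'}_2/\Phi(t^*)$, because of the factor $1/2$ in $\err$. The paper silently drops this constant too.
\item In (3), the claim that $\{\abs{v\cdot x}>M\}$ contributes negligibly needs a line, since $\sigma^*$ is unbounded there. One way: apply Cauchy--Schwarz against the conditional fourth moment of $\sigma^*-y$. This moment is $\polylog(1/\eps)$ after clipping, because $\Phi(t^*)>\eps$ forces $t^*=O(\sqrt{\log(1/\eps)})$. Then use $\E[s^4\Ind\{\abs{s}>M\}]\le\eps^{10}$ for $s=v\cdot x$, which is independent of $w\cdot x$, with a suitable constant in $M$.
\end{itemize}
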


The structural results we obtained so far are almost all 
we need to get a query-optimal algorithm for the spherical case. 
We refer the reader to \Cref{app sphere} for the detailed proof. 
A remaining caveat is that the initialization algorithm 
for halfspace learning used in \cite{diakonikolas2024active} only 
succeeds with $1/\log(1/\eps)$ probability in the worst case. This 
implies that to succeed with a good probability, we need to run the same 
algorithm $\poly\log(1/\eps)$ times, which will give us a list of 
$\poly\log(1/\eps)$ candidate hypotheses. 
The usual way to find a desired hypothesis 
from the list is to check their empirical errors via labeled examples. 
However, this approach will take $\Omega(1/\eps^2)$ labeled examples, 
blowing up the query complexity we have achieved so far. 

To avoid this, 
we design a new active testing procedure that only takes 
$\poly\log(1/\eps)$ queries and selects a hypothesis with error 
$O(\eps)$. Our procedure not only works for selecting ReLU 
activations, but also applies for much general settings.
\begin{lemma}[Hypothesis Selection with Queries]\label{lm test}
Let $D$ be a distribution over $\R^d \times \R$ and let $D_x$ be the marginal distribution of $D_x$.
    There is an algorithm that, on input a list of hypotheses $h_1,\dots,h_k$ such that for $i \in [k]$, $h_i: \R^d \to \R, \E_{x\sim D} h^2_i(x)$ exists, it makes $\poly(k)$ queries and returns a hypothesis $\hat{h}$ such that 
    $\err(\hat{h}) \le O(\min_{i \in [k]} \E_{(x,y)\sim D}(y-h_i(x))^2)$.
\end{lemma}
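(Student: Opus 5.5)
The plan is a pairwise tournament in which each comparison uses only $O(\log k)$ queries, drawn by importance sampling where the two hypotheses disagree. Write $\err_i := \E_{(x,y)\sim D}(y-h_i(x))^2$ and $d_{ij} := \E_{x\sim D_x}(h_i(x)-h_j(x))^2$. The quantity $d_{ij}$ does not involve labels, so I will treat it as computable (or estimable to constant factor) from unlabeled samples of $D_x$ at no query cost. Let $Q_{ij}$ be the reweighted distribution with density proportional to $(h_i(x)-h_j(x))^2$ with respect to $D_x$. It is well defined when $d_{ij}>0$, since $\E h_i^2$ and $\E h_j^2$ exist. I will sample from it by rejection or importance sampling on unlabeled points, and can then query $y$ at the sampled points.

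\emph{Pairwise test.} For a pair $(i,j)$, draw $m=O(\log k)$ points $x$ from $Q_{ij}$ and query $y(x)$ at each. Declare $h_i$ the winner if, for a majority of the sampled points, $|y-h_i(x)| < |h_i(x)-h_j(x)|/2$, and declare $h_j$ the winner otherwise. The key computation is the identity
\[
\E_{Q_{ij}}\frac{(y-h_i(x))^2}{(h_i(x)-h_j(x))^2} = \frac{\err_i}{d_{ij}}.
\]
So if $\err_i \le d_{ij}/100$, Markov gives $\Pr_{Q_{ij}}\bigl(|y-h_i|\ge |h_i-h_j|/2\bigr)\le 4/100$. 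Whenever $|y-h_i|<|h_i-h_j|/2$ we also have $|y-h_j|>|h_i-h_j|/2$, so the test criterion is consistent with $h_i$ being closer. A Chernoff bound then shows that $h_i$ wins with probability $1-1/\poly(k)$, and the symmetric statement holds for $h_j$. The conclusion is: whenever $h_j$ beats $h_i$, with high probability $d_{ij} < 100\,\err_i$.

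\emph{Selection rule.} Run the test on all $\binom{k}{2}$ pairs, for $O(k^2\log k)=\poly(k)$ queries in total, and union bound over the failure events. For each $i$ define $r_i := \max\{d_{ij} : h_j \text{ beat } h_i\}$, with $r_i=0$ if $h_i$ never lost, and output $\hat h = h_{\hat i}$ with $\hat i = \arg\min_i r_i$. Let $i^*$ attain $\min_i \err_i$. By the conclusion of the pairwise test, $r_{i^*}\le 100\,\err_{i^*}$. Now compare $\hat i$ with $i^*$; if they coincide there is nothing to show.
\begin{itemize}
\item If $h_{i^*}$ beat $h_{\hat i}$, then $d_{\hat i i^*}\le r_{\hat i}\le r_{i^*}\le 100\,\err_{i^*}$.
\item If $h_{\hat i}$ beat $h_{i^*}$, the pairwise conclusion gives directly $d_{\hat i i^*}<100\,\err_{i^*}$.
\end{itemize}
In either case, $\err_{\hat i}\le 2\err_{i^*}+2d_{\hat i i^*} = O(\err_{i^*})$, and hence $\err(\hat h)=O(\min_i \E(y-h_i)^2)$.

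\emph{Main obstacle.} The expected difficulty is not the tournament logic but implementing the comparison distribution $Q_{ij}$ with few queries and no boundedness assumptions. Direct empirical estimates of $\err_i-\err_j$ have unbounded variance, which is exactly why the test uses a thresholded (median/majority) statistic under $Q_{ij}$: it needs only a Markov bound, not variance control. I would first try plain rejection sampling from $D_x$. If the acceptance rate of $(h_i-h_j)^2$ relative to a known envelope is problematic, I would fall back to drawing many unlabeled points, computing the weights $(h_i-h_j)^2$ exactly, and resampling $O(\log k)$ of them in proportion to these weights. This costs only unlabeled samples, and a constant-factor approximation of $Q_{ij}$ suffices for the Markov step.
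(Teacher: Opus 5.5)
Your proof is correct, and it takes a genuinely different route from the paper's; the only shared ingredient is reweighting by $(h_i-h_j)^2$. The paper's per-pair statistic is signed. With $g_{ij}=(h_i-h_j)/\norm{h_i-h_j}$, it estimates $\E_{D_x}(y-\frac{h_i+h_j}{2})g_{ij}$ by median of means, using that $(y-\frac{h_i+h_j}{2})/g_{ij}$ has second moment $\norm{y-\frac{h_i+h_j}{2}}_2^2$ under the reweighted distribution. This yields an approximate comparison of $\err(h_i)$ and $\err(h_j)$. That comparison is only reliable up to a $1\pm 1/(2k^2)$ factor when $\norm{h_i-h_j}$ is small relative to $\norm{y-\frac{h_i+h_j}{2}}$. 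This forces $\poly(k)$ samples per pair and a cycle-chaining argument to show a source strongly connected component of the comparison graph contains only near-optimal hypotheses. You instead use a thresholded $0/1$ statistic that needs only the first-moment identity $\E_{Q_{ij}}(y-h_i)^2/(h_i-h_j)^2=\err_i/d_{ij}$ plus Markov. This gives the one-sided certificate ``if $h_j$ beats $h_i$ then $d_{ij}\le 100\,\err_i$.'' The minimum-distance rule $\hat i=\arg\min_i \max\{d_{ij}: h_j \text{ beat } h_i\}$ then turns that certificate into a constant-factor guarantee; both cases of your final comparison check out. What your route buys is $O(k^2\log(k/\delta))$ queries in total, an explicit constant ($\err_{\hat i}\le 202\,\err_{i^*}$, or a somewhat worse one with constant-factor estimates of $d_{ij}$), and no precision or chaining issues. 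What the paper's route buys is a bona fide approximate comparison oracle between errors, a more reusable primitive. Both approaches equally assume knowledge of $D_x$ and $\norm{h_i-h_j}$ and the ability to sample from the reweighted law, so your flagged ``main obstacle'' is computational, not a query cost. Note that resampling from a finite unlabeled pool gives a total-variation approximation of $Q_{ij}$ rather than a constant-factor density approximation, but TV error $0.01$ is all your Markov step needs. Finally, state the pairwise conclusion with ``$\le$'' and let the winner be arbitrary when $d_{ij}=0$, so that the degenerate cases (including $\err_{i^*}=0$) go through.
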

The intuition of our hypothesis selection algorithm is as follows. 
For each pair $(i,j)$, we would like to check whether 
$\norm{y-h_i}_2\le \norm{y-h_j}_2$, 
which is equivalent to checking the sign of 
the correlation $\E[y(h_i(x)-h_j(x))]$, if $\|h_1\|=\|h_2\|$. Naively, 
this can be done by randomly querying $x\sim D_x$; 
but since the variance of $y(h_i(x)-h_j(x))$ can be very large, 
this can blow up the query complexity of the algorithm. 
Instead, we reweight $D_x$ according to $(h_i(x)-h_j(x))^2$ 
and use $\tilde{y}(x) \propto y/(h_i(x)-h_j(x))^2$ as our query value. 
Such a modification keeps the mean 
we are interested in, but reduces its variance, 
allowing us to use $\tilde{O}(1)$ queries to solve the task. We defer details of the hypothesis selection algorithm to \Cref{app test}.

\section{Agnostically Learning an Arbitrary ReLU with Near-Optimal Query Complexity}\label{sec regression}

To handle the general case, where $\norm{W^*}$ and $t^*$ are unknown,
we need to overcome several conceptual and technical difficulties. 
Prior works in the passive setting solve the general version of the  
problem by reducing it to the spherical case, 
by guessing $\norm{W^*}$ and $t^*$ up to $\poly(\eps/R)$ additive error, 
running the same algorithm $\poly(R/\eps)$ times 
and doing a hypothesis testing. Unfortunately, 
this simple approach 
is prohibitive in our context because it 
completely ruins the query complexity (even though 
we are able to solve the spherical setting with optimal query 
complexity). To achieve an optimal query complexity for the general problem, there are two main obstacles to overcome. 

The first one is to find the correct way to update parameters.
In the case where $t^* \le 0$, prior works 
avoid guessing $\norm{W^*}$ 
by replacing the update rule \eqref{eq update} 
by a standard gradient descent $ W \gets   W - \mu \nabla_W \err(W),$
and showing that when $\norm{W-W^*}$ is large enough to make the noiseless error larger than $\Omega(\eps)$, 
$\nabla_W \err(W)\cdot (W-W^*) \ge \Omega (\norm{W-W^*}^2)$; 
this leads to a constant factor of decay in $\norm{W-W^*}$. 
However, when there is a large threshold, 
$\norm{W-W^*}$ is not the correct quantity 
that characterizes the noiseless error. 
Consider the optimal hypothesis 
$h^* = \sigma(w^*\cdot x - t^*)$ and two other hypotheses
$h_1 = \sigma(v \cdot x - t^*)$, 
$h_2 = (t^* \xi +1)\sigma(w^* \cdot x - t^*)$, 
where $\xi>0$ and $\sin \theta (v,w^*) \approx \xi$.
When $t^*$ is large, the parameter distance of $h_2$ 
is much larger than that of $h_1$; 
however, it can be checked that the two functions 
have the same noiseless error, 
and thus adding the same level of noise 
can make the two functions indistinguishable. 
The implication of this phenomenon is that 
even if the parameter distance $\norm{W-W^*}$ is large, 
we are not able to guarantee the noise rate 
is small enough to make the fast decay happen. 

The second obstacle is how to implement 
the correct update with a small query complexity.
In the spherical setting, we make use of the fact that 
$\sigma(w^{(i)} \cdot x-t^*) - y(x)$ is small 
for most queries $x$ to make the variance 
of the gradient as small as $\sin^2 \theta_i$ 
(which matches the length of the gradient). 
The small variance is the key that makes $\Tilde{O}(d)$ queries 
sufficient to improve $w^{(i)}$. 
However, since $r^*,t^*$ are not part of the input, 
an inaccurate learned parameter $(r^{(i)},t^{(i)})$ 
could make $h(x)-y(x)$ very large, 
making it impossible to estimate the gradient 
accurately with few queries. So, when we do the parameter update, 
the statistics we rely on must have small variance 
throughout the learning process. 

We will require the following notation. 
For $r>0,w \in \s^{d-1}, t>0$, 
we define hypothesis $h(r,w,t) =  \sigma(rw\cdot x - t).$
In particular, we write the optimal hypothesis 
as $h^* = \sigma(r^*w^*\cdot x - t^*)$. 
We denote by $\bar{t}:=t/r$ the normalized threshold 
of a ReLU and define the noiseless error 
of $h(r,w,t)$ as 
$\ell(r,\bar{w},t) =    \frac{1}{2} \E_{x\sim N(0,I)} \left( \sigma(rw\cdot x-t)- \sigma(r^*w^*\cdot x-t^*)\right)^2.$ 
Our main algorithmic result is an efficient learning algorithm with an optimal query complexity. 


\begin{theorem}\label{th query learning}
     Consider the problem of agnostic general ReLU regression with queries under the Gaussian distribution.  
    There is an algorithm such that for every labeling function $y(x)$ and for every $\eps,\delta \in (0,1)$, it makes $M=\Tilde{O}_\delta(\min\{1/p, R^2/\eps\} + d\cdot\polylog(R^2/\eps))$ queries, runs in $\poly(d,M)$ time,
    where $p = \Phi(\bar{t}^*)$ is the bias of the optimal activation function, 
    and outputs an $\hat{h}$ such that with high probability 
    at least $1-\delta$, $\err(\hat{h}) \le O(\opt)+\eps$.
\end{theorem}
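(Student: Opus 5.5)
We reduce the general problem to a sequence of structured steps, each with query cost within the target budget, and combine them with the hypothesis selection procedure of \Cref{lm test}.

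\textbf{Trivial regime.} First, if $V(\bar t^*)(r^*)^2 \lesssim \eps$, then $h\equiv 0$ already has error $O(\opt)+\eps$. We include $h\equiv 0$ as a candidate, so we may assume $(r^*)^2V(\bar t^*)\gtrsim \eps$. With $\norm{W^*}\le R$, this forces $p=\Phi(\bar t^*)\gtrsim \eps/R^2$ up to polylog factors, which explains the $\min\{1/p,R^2/\eps\}$ term.

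\textbf{Warm start.} We take a coarse grid of $\polylog(R^2/\eps)$ guesses for the scale $r$ (geometric) and for the normalized threshold $\bar t$ (additive accuracy $1/\log(1/\eps)$, with $\bar t\le O(\sqrt{\log(R^2/\eps)})$). For each guess we form the truncated labels $\bar y(x)=\Ind\{y(x)>c\,r/\bar t\}$. By a rescaled version of \Cref{lm reduction}, at the correct guess these labels are a $\Phi(\bar t^*)/C'$-noisy version of $\sign(w^*\cdot x-\bar t^*)$. We then run the initialization of \Cref{lm halfspace}, repeated $\polylog$ times to boost its $1/\log$ success probability. This yields $w^{(0)}$ with $\sin(\theta_0/2)\le 1/\bar t^*$ using $\tilde O(1/p+d\log(1/\eps))$ queries per guess. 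To avoid paying $1/p$ for every guess, we run the guesses for $\bar t$ in decreasing order of $\Phi(\bar t)$ and charge each run to $1/\Phi(\bar t)$, so the total is dominated by $\tilde O(1/p)$ up to polylog factors.

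\textbf{Refinement.} Starting from $w^{(0)}$, we jointly refine $(r,w,t)$ rather than guessing them to $\poly(\eps/R)$ accuracy. The direction is updated by the projected rule \eqref{eq update}, with gradients estimated from queries restricted to $\{x: w\cdot x>\bar t\}$ and reweighted by $1/\Phi(\bar t)$, as in \Cref{lm initialization variance}. This makes the signal of order $\sin\theta$ and the variance $\tilde O(\sin^2\theta)$, so $\tilde O(d)$ queries per round suffice for \Cref{lm gradient descent}. For the one-dimensional parameters $(r,t)$, the parameter distance is the wrong potential, as the $h_1,h_2$ example shows. We therefore use the noiseless error $\ell(r,w,t)$ itself as the potential. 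Conditioned on the slab $\{w\cdot x>\bar t-O(1/\bar t)\}$, we update $(r,t)$ by a two-dimensional regression on the feature $w\cdot x$. Since the conditional distribution of $w\cdot x$ given $w\cdot x>\bar t$ has scale $1/\bar t$, this is a well-conditioned two-parameter problem after rescaling, and $\polylog$ queries give a constant-factor contraction of the excess noiseless error. This holds as long as that error exceeds $C\eps$, by an analogue of \Cref{lm stopping condition} in which \Cref{lm noise control} bounds the noise contribution. Alternating the direction and scale/threshold steps for $O(\log(R^2/\eps))$ rounds costs $d\,\polylog(R^2/\eps)$ queries. The candidate from each guess is finally filtered by \Cref{lm test}, which costs $\poly(\polylog)$ queries.

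\textbf{Main obstacle.} The hardest step is the refinement. It requires a joint potential for $(r,w,t)$ that decreases geometrically under updates whose estimators keep variance comparable to the squared signal even when $(r,t)$ are inaccurate, since $h-y$ can then be large. To control this, we first try clipping the residual $\sigma(rw\cdot x-t)-y$ at level $O(r/\bar t)$ on the queried slab, and we show that the bias introduced by clipping is absorbed into the $O(\opt)$ error using \Cref{lm reduction}-type arguments.
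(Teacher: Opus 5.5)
Your outer structure matches the paper: the $h\equiv 0$ candidate, a $\polylog$-size grid over $(r,\bar t)$, truncated labels $\Ind\{y>cr/\bar t\}$ fed to \Cref{lm halfspace}, slab-restricted gradients for $w$, cheap $(r,t)$ updates, and \Cref{lm test} at the end. However, the refinement has a genuine gap, and it is the one you flag yourself.

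\textbf{The variance claim and why clipping does not fix it.} The bound ``variance $\tilde O(\sin^2\theta)$ as in \Cref{lm initialization variance}'' holds only when $(r,t)=(r^*,t^*)$. In general (\Cref{lm angle variance}) the second moment of $G\cdot v$ has an extra term $W(r,t)/\Phi(\bar t)$, where $W(r,t)=\E_z(\sigma(rz-t)-\sigma(r^*z-t^*))^2$.
\begin{itemize}
\item Clipping the residual at $O(r/\bar t)$ does not remove this term. On the slab, the $(r,t)$-induced residual is $(r-r^*)(w\cdot x-\bar t)+r^*(\bar t^*-\bar t)$. Its typical size is already $O(r/\bar t)$, because $w\cdot x-\bar t$ lives at scale $1/\bar t$. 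So it passes through the clip essentially unchanged, and clipping only gives the $\theta$-independent bound $O(r^2/\bar t^2)$.
\item Against a signal $\E G\cdot u=\Theta(r^*\sin\theta)$, that bound means about $d/(\bar t\sin\theta)^2$ queries per round. Near termination, where $(r^*)^2\sin^2\theta\,\Phi(\bar t^*)\approx\eps$, this is of order $d(r^*)^2V(\bar t^*)/\eps$, which is the passive rate.
\item Clipping lower, near $r^*\sin\theta$, fails differently. The clipped residual becomes constant in $u\cdot x$ wherever the $(r,t)$ residual exceeds that level, which erases the signal when $W/\Phi(\bar t)\gg(r^*)^2\sin^2\theta$.
\item Appealing to \Cref{lm reduction} does not help. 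Clipping biases the update direction; it is not an additive error on the output hypothesis.
\end{itemize}

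\textbf{The missing idea: a coupled invariant between the two error sources.}
\begin{itemize}
\item By \Cref{prop mean uf}, the angle mismatch shifts $(\E U,\E F)$ by $O(r^*\sin\theta)$. So your $(r,t)$ step is only guaranteed to contract while $W$ exceeds, up to $\polylog$ factors, $\max\{\eps,(r^*)^2\sin^2\theta\,\Phi(\bar t^*)\}$, not merely $C\eps$.
\item Conversely, the angle step costs $\tilde O(d)$ queries only once $W\lesssim\polylog\cdot(r^*)^2\sin^2\theta\,\Phi(\bar t^*)$.
\end{itemize}
The paper breaks this circularity with a single geometrically shrinking budget $B_i$ and the invariants $(r^*)^2\sin^2\theta_i\le B_{i+1}^2$ and $W(r_i,t_i)\le B_i^2\Phi(\bar t^*)\polylog(R^2/\eps)$. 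Each round first runs $(r,t)$ descent, triggered by the norm of the estimated $(\E U,\E F)$, until $W\le B_{i+1}^2\Phi(\bar t^*)\polylog$ (\Cref{lm tr progress}). This is feasible because the angle error is already below $B_{i+1}$. Only then does it take the angle step (\Cref{lm angle progress}), whose variance is now $\tilde O(B_i^2)$ by \Cref{lm angle variance}. Your alternating scheme can be repaired along these lines.

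\textbf{A genuine advantage of your route.} Your remark that the $(r,t)$ problem is well conditioned is correct if you parametrize by the slope and the offset at the slab threshold. The feature $\bar t(w\cdot x-\bar t)$ has an $O(1)$, non-degenerate conditional law. This would plausibly let you skip the paper's final random pick from a grid around $(r_T,t_T)$, which the paper needs because it descends in the ill-conditioned $(r,t)$ coordinates.

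\textbf{A smaller point.} Ordering the $\bar t$-guesses by decreasing $\Phi(\bar t)$ keeps the total at $\tilde O(1/p)$ only with a stopping rule, since runs with $\bar t>\bar t^*$ cost $1/\Phi(\bar t)\gg 1/p$. The paper is equally terse here.
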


We remark that the dependence on $R^2$ is due to the natural of rescaling of the error parameter.
Similar to the spherical setting, we still need a warm-start in order to 
converge to a good solution. To maintain a truncated label 
to implement \Cref{lm halfspace}, 
some information about $(r^*,t^*)$ is needed. 
Here, we will grid $r \in [0,R]$ and 
$\bar{t} \in [0,O(\sqrt{\log(R^2/\eps)})]$, the normalized threshold, 
to get the initial information. 
However, instead of using a grid of size $\poly(R/\eps)$, 
we maintain a grid of size $\poly\log(R/\eps)$, 
exponentially smaller than the grid-size used in all prior works. 
In particular, for parameter $\bar{t}$, we will set 
$t^{(i)} = (i-1)/\polylog(R/\eps), i = 1,\dots,O(\polylog(R/\eps))$, 
while for parameter $r$, we build a two-level non-uniform grid 
as follows. The first level of the grid is defined 
as $r_i = 2^{i-1}\eps, i = 1,\dots, \log(R^2/\eps)$. 
For each interval $[r_i,r_{i+1}]$, we grid it uniformly 
into $r_{ij} = r_i+ (j-1)r_i/\polylog(R/\eps), j=1,\dots,\polylog(R/\eps)$. 
Such a grid can ensure that one of the grid points $(r,t)$ 
satisfies $r \le r^* \le 2r$ and 
$\abs{\bar{t}-\bar{t}^*} \le \polylog(R/\eps)$. 
We show in \Cref{lm Ini} that such a pair $(r,r\bar{t})$ 
suffices for us to get an initial direction $w^{(0)}$ as a warm start.
\begin{lemma}[Initialization with Raw Knowledge]\label{lm Ini}
     Let $h^* = r^*\sigma(\bar{w^*}\cdot x - \bar{t}^*)$ be the optimal hypothesis. Assuming that $(r^*)^2V(\bar{t}^*) \ge \Omega(\eps)$, there is an algorithm such that
    given parameters $r,t>0$ with 
    $r \le r^* \le 2r$ and $\abs{t-\bar{t}^*} \le 1/\log(R^2/\eps)$, 
    it makes $M=\Tilde{O}(1/p+d\log(R^2/\eps))$ queries, 
    runs in $\poly(d,M)$ time, 
    and with probability at least $1/\log(1/p)$,
outputs some $w^{(0)} \in \s^{d-1}$ such that 
$\sin (\theta( w^{(0)}, w^*)/2) \le \min\{1/\bar{t}^*,1/2\}$.
\end{lemma}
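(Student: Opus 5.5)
The plan is to reduce to the halfspace initialization of \Cref{lm halfspace} by truncating the labels, as in \Cref{lm reduction}. The only difference from the spherical case is that $r^*$ is now known only up to a factor of two. Write $h^*(x)=r^*\sigma(w^*\cdot x-\bar t^*)$.

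\textbf{Step 1: truncation.} Given the grid pair $(r,t)$, define the truncated label
$\bar y(x)=\Ind\{y(x)>c\,r/t\}$ for a small constant $c$. When $t$ is small, say $t\le 1$, replace $c\,r/t$ by $c\,r$. Since $r\le r^*\le 2r$ and $|t-\bar t^*|\le 1/\log(R^2/\eps)$, the threshold $c r/t$ lies within a constant factor of $c r^*/\bar t^*$. The threshold only has to satisfy two conditions.
\begin{itemize}
\item It must be small compared to the typical value $r^*\cdot\Theta(1/\bar t^*)$ of $h^*$ on most of the region $\{w^*\cdot x>\bar t^*\}$.
\item It must be positive, so that points with $h^*(x)=0$ and $\bar y(x)=1$ require $|y-h^*|\gtrsim r^*/\bar t^*$.
\end{itemize}
A factor-$2$ uncertainty in $r$ and a $1/\log$ uncertainty in $t$ only change constants, so both conditions survive.

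\textbf{Step 2: bounding the disagreement (main obstacle).} We must show $\Pr(\bar y(x)\ne\sign(w^*\cdot x-\bar t^*))\le\Phi(\bar t^*)/C'$. This is the rescaled analogue of \Cref{lm reduction}, and I would reprove it in the scaled setting by splitting the disagreement into two parts.

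\emph{Part (a): points with $h^*(x)=0$ and $y(x)>cr/t$.} By Markov on $\opt\le\eps$, their mass is at most $O(\eps\,\bar t^{*2}/r^{*2})$.

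\emph{Part (b): points with $h^*(x)>0$ and $y(x)\le cr/t$.} These points split further.
\begin{itemize}
\item Points with $h^*(x)\le 2cr/t$: these lie in the thin slab $0<w^*\cdot x-\bar t^*\lesssim c/\bar t^*$. Gaussian tail estimates bound their mass by $O(c)\Phi(\bar t^*)$.
\item The remaining points: these have $|y-h^*|\gtrsim r^*/\bar t^*$, so Markov again gives mass $O(\eps\bar t^{*2}/r^{*2})$.
\end{itemize}

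\emph{Using the hypothesis.} The assumption $(r^*)^2V(\bar t^*)\ge C\eps$ is used here. It gives $\eps/r^{*2}\lesssim V(\bar t^*)/C\approx \Phi(\bar t^*)/(C\bar t^{*2})$ for large $\bar t^*$ (since $V(t)=\Theta(\Phi(t)/t^2)$). Hence $\eps\bar t^{*2}/r^{*2}\le\Phi(\bar t^*)/C'$. For $\bar t^*=O(1)$, everything is constant-order and the same computation works with threshold $cr$. The delicate points are the constants and the replacement of $\bar t^*$ by $t$; the Gaussian slab estimate has to absorb the $1/\log$ shift, which is harmless since $\bar t^*\le O(\sqrt{\log(R^2/\eps)})$.

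\textbf{Step 3: invoking the halfspace initialization.} We now have a halfspace $\sign(w^*\cdot x-\bar t^*)$ whose labels are corrupted at rate at most $\Phi(\bar t^*)/C'$. The algorithm can query $\bar y(x)$ at any point with one query to $y$. Apply \Cref{lm halfspace} with $t$ as the threshold guess, which is allowed because $|t-\bar t^*|\le 1/\log(R^2/\eps)$ and $\bar t^*\le O(\sqrt{\log(R^2/\eps)})$. Here $\log(1/\eps)$ is replaced by $\log(R^2/\eps)$ after rescaling. This uses $\tilde O(1/\Phi(t)+d\log(R^2/\eps))$ queries. Since $\Phi(t)=\Theta(\Phi(\bar t^*))=\Theta(p)$ under this additive shift of the threshold, the count is $\tilde O(1/p+d\log(R^2/\eps))$. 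With probability at least $1/\log(1/p)$ the output $w^{(0)}$ satisfies $\sin(\theta/2)\le\min\{1/t,1/2\}$. Finally, $1/t$ and $1/\bar t^*$ differ by a $1+o(1)$ factor, and the constant slack in \Cref{lm initialization} absorbs this; alternatively, run \Cref{lm halfspace} with a slightly smaller target. Either way we obtain the stated guarantee.
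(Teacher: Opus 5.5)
Your proposal is correct and follows the paper's proof essentially step for step. It uses the same truncated label $\bar y(x)=\Ind\{y(x)>cr/t\}$ and the same three-way split: your part (a), thin slab, and remainder correspond to the paper's $I_2$, $J$, $I_1$, bounded by Markov on $\opt\le\eps$ converted through $(r^*)^2V(\bar t^*)\ge C\eps$ plus a Gaussian slab estimate, and then \Cref{lm halfspace} is invoked with $\Phi(t)=\Theta(p)$. The deviations are minor. For $\bar t^*\lesssim 1$ the paper instead gets $w^{(0)}$ by estimating $\E\, y(x)x$ to constant accuracy rather than reusing the truncation with threshold $cr$, and both work. You also explicitly flag the $1/t$ versus $1/\bar t^*$ slack in the final angle guarantee, which the paper's proof passes over silently.
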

Although the first-level grid $r_i$ is enough for us to get 
a warm start $w^{(0)}$, for technical reasons, 
to obtain an algorithm with an optimal query complexity, 
it turns out that we need better knowledge about $r^*$. 
This is the reason why we use a two-level partition. 
In the rest of the proof, we assume 
that we have $(r_0,w^{(0)},t_0)$ such that 
$\abs{r_0 - r^*} \le r^*/\polylog(R^2/\eps), \sin(\theta_0/2) \le 1/\bar{t}^*$ and $\abs{\bar{t}_0-\bar{t}^*} \le \polylog(R^2/\eps)$. 

We next provide an overview of our algorithm based 
on this warm start $(r_0,w^{(0)},t_0)$. 
As we mentioned earlier, since $\norm{W-W^*}$ is not 
the correct measure for the noiseless error, 
the noise rate could be large even if $\norm{W-W^*}$ is large. 
To overcome this issue, in each round of the algorithm, 
we modify the standard gradient descent 
by decomposing the update into two directions; 
along direction $w^{(i)}$ and orthogonal to $w^{(i)}$. 
In other words, we use different statistics 
to update $(r_i,t_i)$ and $w^{(i)}$ separately in a careful way. 
The motivation for using such a strategy is due 
to the following noiseless error decomposition, \Cref{lm realizable error}, which implies that the noiseless error 
can be decomposed into two terms that are independent of each other. Importantly, as long as one of the two terms is suboptimal, 
we are able to improve it despite the noise from the other direction. 

\begin{lemma}\label{lm realizable error}
Let $r>0,w \in \s^{d-1}, t>0$. Then 
$\ell(r,w,t) \le (r^*)^2\int_0^\theta \sin s \norm{T_{\sqrt{\cos s}} \sigma'(z-\bar{t}^*)}_2^2 ds + \E_{z\sim N(0,1)}\left(\sigma(rz - t) - \sigma(r^*z - t^*)\right)^2.$
\end{lemma}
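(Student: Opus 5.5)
The plan is to split the hypothesis change into a one-dimensional change in $(r,t)$, with the direction held fixed, and a pure rotation at fixed scale. Writing $f(z)=\sigma(rz-t)$ and $g(z)=\sigma(r^*z-t^*)=r^*\sigma(z-\bar t^*)$, for every $x$ we have
\[
\sigma(rw\cdot x-t)-\sigma(r^*w^*\cdot x-t^*) = \bigl(f(w\cdot x)-g(w\cdot x)\bigr)+\bigl(g(w\cdot x)-g(w^*\cdot x)\bigr).
\]
Since $w\cdot x\sim N(0,1)$, the first bracket has second moment exactly $\E_{z}\bigl(\sigma(rz-t)-\sigma(r^*z-t^*)\bigr)^2$. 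The second bracket is $r^*$ times the noiseless spherical error of the activation $\sigma(\cdot-\bar t^*)$. By \Cref{lm noiseless}, $\tfrac12\E\bigl(g(w\cdot x)-g(w^*\cdot x)\bigr)^2 = J$, where
\[
J:=(r^*)^2\int_0^\theta \sin s\,\norm{T_{\sqrt{\cos s}}\sigma'(z-\bar t^*)}_2^2\,ds .
\]
Applying $(a+b)^2\le 2a^2+2b^2$ together with the factor $\tfrac12$ in $\ell$ then gives
\[
\ell(r,w,t)\le \E_z(f-g)^2+2J .
\]
This is the statement up to a constant $2$ on the first term. It already suffices for every later use in the paper, so it is my fallback.

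To get the constant as stated, I would instead expand exactly in Gaussian space using the Ornstein–Uhlenbeck semigroup. With $\rho=\cos\theta$, the pair $(w\cdot x,w^*\cdot x)$ is $\rho$-correlated, so $\E f(w\cdot x)g(w^*\cdot x)=\langle f,T_\rho g\rangle$ (here $T_\rho$ in the semigroup sense, i.e.\ $T_{\sqrt{\cdot}}$ composed twice as in \Cref{lm noiseless}). Hence
\[
\ell=\tfrac12\norm{f}_2^2+\tfrac12\norm{g}_2^2-\langle f,T_\rho g\rangle
=\tfrac12\norm{f-g}_2^2+\langle g,(I-T_\rho)g\rangle+\langle f-g,(I-T_\rho)g\rangle .
\]
The middle term is exactly $J$. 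Indeed, $\frac{d}{ds}\langle g,T_s g\rangle=\langle g',T_s g'\rangle=\norm{T_{\sqrt s}g'}_2^2$ by the Hermite expansion, and substituting $s=\cos u$ reproduces the integral of \Cref{lm noiseless}. So the target inequality $\ell\le J+\norm{f-g}_2^2$ reduces to showing
\[
\langle f-g,(I-T_\rho)g\rangle\le \tfrac12\norm{f-g}_2^2 .
\]

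This cross term is the main obstacle. My first attempt is Cauchy–Schwarz together with the spectral bound $0\preceq I-T_\rho\preceq I$. This gives $\norm{(I-T_\rho)g}_2^2\le\langle g,(I-T_\rho)g\rangle=J$, hence cross term $\le\norm{f-g}_2\sqrt J\le \tfrac12\norm{f-g}_2^2+\tfrac12 J$. That yields $\ell\le \norm{f-g}_2^2+\tfrac32 J$, with constant $3/2$ instead of $1$ on the angular term.

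To remove the extra $\tfrac12 J$, I would exploit the ReLU structure through the Hermite coefficients. Both $f$ and $g$ have derivatives that are nonnegative multiples of indicators $\Ind\{z>\cdot\}$. So for $k\ge1$ the coefficients are $f_k\propto r\,\psi(\bar t)He_{k-1}(\bar t)/\sqrt{k!}$, and similarly for $g$. I would try to show $\sum_k(1-\rho^k)(f_k-g_k)g_k\le\tfrac12\sum_k(f_k-g_k)^2$ termwise or in aggregate, using that $1-\rho^k\le 1$ and that the two coefficient sequences are ``aligned'' because they come from ReLUs.

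If this sharper bound does not go through cleanly, I would state the lemma with absolute constants, $\ell\le O(J)+O(\E_z(f-g)^2)$, via the triangle-inequality argument above. This is all that is needed downstream, since the decomposition is only used to say that each of the two independent terms can be driven down to $O(\eps)$ separately.
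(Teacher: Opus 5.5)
\paragraph{The failing step.} The step that cannot be completed is your last one. The inequality $\langle f-g,(I-T_\rho)g\rangle\le\tfrac12\norm{f-g}_2^2$ is false for ReLUs, and so is the lemma with constant $1$ on both terms.

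Take equal normalized thresholds, $t/r=t^*/r^*=\bar t^*$, and $r=r^*+\delta$ with $\delta>0$. Then $f-g=(\delta/r^*)g$, so your exact identity (which is correct) gives $\ell=\tfrac12\delta^2V(\bar t^*)+J+\delta J/r^*$. The claimed right-hand side is $\delta^2V(\bar t^*)+J$. So the claim fails for every $0<\delta<2J/(r^*V(\bar t^*))$, at any angle $0<\theta<\pi/2$. For instance, $r^*=1$, $t^*\downarrow0$, $\theta=\pi/3$, $\delta=0.4$ gives $\ell\approx0.314$ versus $\norm{f-g}_2^2+J\approx0.276$.

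The Hermite ``alignment'' you hoped to use is exactly what breaks the claim. Here $f_k-g_k=(\delta/r^*)g_k$ for all $k$, so the cross term $\sum_k(1-\rho^k)(f_k-g_k)g_k=\delta J/r^*$ is positive and linear in $\delta$, whereas $\tfrac12\norm{f-g}_2^2$ is quadratic. Maximizing over $\delta$ gives $\sup_\delta(\ell-\norm{f-g}_2^2)/J=1+J/(2(r^*)^2V(\bar t^*))$. Letting $\theta\to\pi/2$ and $\bar t^*\to\infty$ drives this to $\tfrac32$. So your $\tfrac32$ is the best possible constant on $J$ when the coefficient of $\norm{f-g}_2^2$ is $1$.

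\paragraph{Comparison with the paper.} Your fallback is precisely the paper's proof: insert $h'=h(r^*,w,t^*)$, apply $(a+b)^2\le2a^2+2b^2$, identify $\E(h-h')^2$ with $\E_z(\sigma(rz-t)-\sigma(r^*z-t^*))^2$, and evaluate $\E(h'-h^*)^2$ with \Cref{lm noiseless}. Because $\ell(w)$ in \Cref{lm noiseless} already carries the factor $\tfrac12$, that last quantity equals $2J$, not $J$. The paper's argument therefore proves exactly your bound $\ell\le\E_z(f-g)^2+2J$; its stated constant $1$ comes from dropping this factor.

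Your Ornstein--Uhlenbeck route is correct and sharpens this to $\ell\le\norm{f-g}_2^2+\tfrac32J$. It combines the identity, the bound $0\preceq I-T_\rho\preceq I$ (valid since $\rho=\cos\theta\ge0$), and Cauchy--Schwarz. The right conclusion is the one you name at the end: state the lemma with absolute constants, which is all the later arguments need.
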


Furthermore, the errors from $(r_i,t_i)$, $w^{(i)}$ 
can be entangled with each other, which makes the analysis subtle. 
In particular, the error from one term can make the statistics 
we use for updating the other terms have a large variance, 
which could blow-up the query complexity we need. 
The second motivation of such a strategy is that 
it provides a clean way to analyze the error 
entanglement between the two terms. 
We list the algorithm below and describe the update methods we use. 

We defer the details of the analysis to \Cref{app overview}.

\begin{algorithm}
		\caption{\textsc{QueryLearning}(Learn optimal ReLU with a warm start)}\label{alg query learning 1}
		\begin{algorithmic} [1]
\State\textbf{Input:} $w^{(0)} \in \s^{d-1}:$ unit vector such that $\theta_0 \le 1/\polylog(R^2/\eps)$. $r_0>0:$ such that $\abs{r^*-r_0}\le r^*/\polylog(R^2/\eps)$, $t_0:\abs{t_0-t^*}\le \polylog(R^2/\eps)$.
\State\textbf{Output:} $\hat{h}: \R^d \to \R$, such that $\err(\hat{h}) \le O(\eps)$ with non-trivial probability.
\State $B_0: = r_0^2/\polylog(R^2/\eps)$
\For{$i=0,\dots,T-1$}
\State Generate $\polylog(R^2/\eps)$ samples $x^{(j)} \sim N(0,I) \mid \{ \bar{w^{(i)}} \cdot x > t_i\}$. Query $y(x^{(j)})$ and use them to get an estimate $\hat{g}_i$ for $(\E U_i,\E F_i)$.
\If{$\norm{\hat{g}_i} \ge B_i\polylog(R^2/\eps)$}
\State Set $(r_{i0},t_{i0}) = (r_i,t_i)$ and update $(r_{ij},t_{ij}) = (r_{i(j-1)},t_{i(j-1)})- (\hat{g_{ij}})/\polylog(R^2/\eps)$ until $\norm{\hat{g}_{ij}} \le B_i\polylog(R^2/\eps)$
\EndIf
\State $(r_{i+1},t_{i+1}) \gets (r_{ij},t_{ij})$

\For{$j=1,\dots,\Tilde{O}(d)$}
\State Generate $x^{(j)} \sim N(0,I) \mid \{ w^{(i+1)} \cdot x > t_{i+1}\}$ and query $y(x^{(j)})$
\EndFor
\State Estimate $\E G_i$ via median of mean and get $\hat{G}_{i}$
\State $w^{(i+1)} = \proj_{\s^{d-1}} \left( w^{(i)} - \mu \hat{G}_{i} \right)$
\State $B_{i+1} = (1-\rho) B_i$
\EndFor
\State $\hat{w} = w^{(T)}$
\State Build a unit grid of size $1/\polylog(R^2/\eps)$ over the ball centered at $(r_T,t_T)$ and randomly select a pair $(\hat{r},\hat{t})$ from the grid.
\State\Return $h(\hat{r},\hat{w},\hat{t})$
\bigskip 
\end{algorithmic}
\end{algorithm}

\paragraph{Angle Update} Similar to the spherical setting, we 
construct the gradient by boosting 
$\proj_{w^\perp} \nabla_w \err(w)$ via a rejection sampling approach. 
Given a ReLU activation, $h(r_i,w^{(i)},t_i)$, 
define random vectors 
$G^*_i:= \left( h(r_i,w^{(i)},t_i)(x) - h^*(x) \right)\proj_{(w^{(i)})^\perp}(x)$ 
and its noisy version 
$$G_i:= \left( h(r_i,w^{(i)},t_i)(x) - y(x) \right)\proj_{(w^{(i)})^\perp}(x) \;,$$  
where $x \sim N(0,I) \mid_{w^{(i)} \cdot x > \bar{t}_i}$.

To begin with, we present the following lemma that quantifies $G_i$.

\begin{lemma}\label{lm accurate update}
Let $h(r_i,\bar{w}^{(i)},t_i)$.
Write $w^* = a w^{(i)} + bu$, where $a,b>0, a^2+b^2 = 1$, 
$u \in \s^{d-1}, u \perp \bar{w}^{(i)}$. 
Then the following holds: 
\begin{enumerate}
\item[(1)] If $\abs{\bar{t}_i-\bar{t}^*} \le 1/\log(R^2/\eps)$ 
and $b \le 1/\bar{t_i}$, 
then $\E G^*_i = -\alpha br^* \norm{T_a \sigma'(z-\bar{t}_i)}^2 u / \Phi(\bar{t}_i),$
where $1/2< \alpha < 2$. 
\item[(2)] $\abs{\left( \E G^* - \E G \right) \cdot v} \le \sqrt{\eps}\norm{\sigma'(z-t_i)}_2/\Phi(\bar{t}_i)$, 
$\forall v \in \s^{d-1}, v \perp w^{(i)}$.
\end{enumerate}
\end{lemma}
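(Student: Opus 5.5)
We reduce both parts to one-dimensional Gaussian computations in the plane spanned by $w^{(i)}$ and $u$. Write $w=w^{(i)}$ and $x = z_1 w + z_2 u + x_\perp$, with $z_1,z_2\sim N(0,1)$ independent and $x_\perp$ orthogonal to $\mathrm{span}(w,u)$. Conditioning on $w\cdot x>\bar t_i$ affects only $z_1$. Neither $h(r_i,w,t_i)(x)=r_i\sigma(z_1-\bar t_i)$ nor $h^*(x)=r^*\sigma(a z_1+bz_2-\bar t^*)$ depends on $x_\perp$. Hence $\E G_i^*$ has no component along $x_\perp$ (odd symmetry) and no component along $w$ (the projection removes it). So $\E G_i^* = \gamma u$ with
\[
\gamma=\frac{1}{\Phi(\bar t_i)}\E\bigl[(r_i\sigma(z_1-\bar t_i)-r^*\sigma(az_1+bz_2-\bar t^*))z_2\Ind\{z_1>\bar t_i\}\bigr].
\]
The term with $r_i\sigma(z_1-\bar t_i)$ is independent of $z_2$ and vanishes. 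For the remaining term we apply Gaussian integration by parts (Stein's lemma) in $z_2$:
\[
\E[\sigma(az_1+bz_2-\bar t^*)z_2\mid z_1]=b\,\E[\sigma'(az_1+bz_2-\bar t^*)\mid z_1]=b\,T\text{-type smoothing of }\sigma' .
\]
This gives $\gamma=-\frac{br^*}{\Phi(\bar t_i)}\E_{z_1}\bigl[\Ind\{z_1>\bar t_i\}\Pr_{z_2}(az_1+bz_2>\bar t^*)\bigr]$.

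The next step is to compare this with $\norm{T_a\sigma'(\cdot-\bar t_i)}_2^2$. Recall $\norm{T_a f}_2^2=\E[f(z)f(z')]$ for $a$-correlated standard Gaussians $z,z'$ (the semigroup property used for \Cref{lm signal spherical}). With $f=\Ind\{\cdot>\bar t_i\}$, this equals $\Pr(z_1>\bar t_i,\ az_1+bz_2>\bar t_i)$. Our expression has the same form, with $\bar t^*$ in place of $\bar t_i$ in the second indicator. So it suffices to show that changing the threshold by $\Delta=\abs{\bar t_i-\bar t^*}\le 1/\log(R^2/\eps)$ changes this bivariate tail probability by at most a factor of $2$. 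This is the main obstacle, and we handle it as follows. Condition on $z_1>\bar t_i$; then $z_1-\bar t_i$ is roughly exponential with scale $1/\bar t_i$. The event $az_1+bz_2>s$ is a halfplane whose normal is rotated by angle $\theta$, with $b=\sin\theta\le 1/\bar t_i$. Shifting $s$ by $\Delta$ translates the boundary by $\Delta$. Using the Gaussian tail ratio $\Phi(s+\Delta)/\Phi(s)\ge e^{-s\Delta-\Delta^2/2}$, we get $s\Delta\le O(\sqrt{\log(R^2/\eps)})/\log(R^2/\eps)=o(1)$, since $\bar t\le O(\sqrt{\log(R^2/\eps)})$. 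This yields the two-sided bound $1/2<\alpha<2$. The condition $b\le1/\bar t_i$ ensures the intersection carries a constant fraction of the mass of $\{z_1>\bar t_i\}$, by \Cref{lm initialization}. That keeps the relative perturbation controlled rather than merely additive. Concretely, we integrate over $z_1=\bar t_i+\tau/\bar t_i$ and bound the ratio of the inner probabilities $\Phi((s-az_1)/b)$ pointwise where it matters. Where the pointwise bound fails, we show the contribution is negligible relative to the total.

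For (2), note that $\E G^*-\E G=\frac{1}{\Phi(\bar t_i)}\E_{x\sim N(0,I)}[(y(x)-h^*(x))(v\cdot x)\Ind\{w\cdot x>\bar t_i\}]$. By Cauchy--Schwarz this is at most $\frac{1}{\Phi(\bar t_i)}\sqrt{\E(y-h^*)^2}\,\sqrt{\E[(v\cdot x)^2\Ind\{w\cdot x>\bar t_i\}]}$. The first factor is $\sqrt{2\opt}\le O(\sqrt\eps)$. Since $v\perp w$, the second factor is $\sqrt{\Phi(\bar t_i)}$. Finally, $\Phi(\bar t_i)=\Pr(z>\bar t_i)=\norm{\sigma'(z-\bar t_i)}_2^2$. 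This gives the stated bound $\sqrt\eps\norm{\sigma'(z-t_i)}_2/\Phi(\bar t_i)$, up to the constant absorbed from the factor $1/2$ in $\err$ (as in \Cref{lm noise control}).
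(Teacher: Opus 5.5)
Your route is the paper's: pass to the $(w^{(i)},u)$-plane, drop the $r_i\sigma(z_1-\bar t_i)$ term, and apply Stein's lemma in $z_2$ to get $\E G_i^*=-\frac{br^*}{\Phi(\bar t_i)}\Pr(z_1>\bar t_i,\ az_1+bz_2>\bar t^*)\,u$. You then swap $\bar t^*$ for $\bar t_i$ at relative cost $o(1)$, using the $\Omega(\Phi(\bar t_i))$ lower bound from \Cref{lm initialization}, and prove (2) by the same Cauchy--Schwarz bound.

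The step you single out as the main obstacle is not correctly executed as written. First, the tail inequality you invoke points the wrong way. Since $\psi(v+\Delta)=\psi(v)e^{-v\Delta-\Delta^2/2}$ and $v\ge s$ on the range of integration, one gets $\Phi(s+\Delta)\le e^{-s\Delta-\Delta^2/2}\Phi(s)$. That is an upper bound, whereas you need a lower bound on $\Phi(s+\Delta)/\Phi(s)$; for instance, $s=3$, $\Delta=0.1$ violates your version. The correct bound comes from the Mills-ratio estimate $\psi(u)/\Phi(u)\le (u+\sqrt{u^2+4})/2\le u+1$ implied by \Cref{fact bias}. It gives $\Phi(s+\Delta)\ge e^{-\Delta(s+\Delta+1)}\Phi(s)=(1-o(1))\Phi(s)$ in your regime.

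Second, the pointwise comparison over $z_1$ that you announce as the concrete argument is unnecessary, and as stated it is unproved. The inner arguments $(\bar t^*-az_1)/b$ and $(\bar t_i-az_1)/b$ differ by $\Delta/b$, which is unbounded when $b\ll\Delta$. Your claim that the bad region is negligible is exactly what needs proof. The paper's one-line argument avoids all of this. The halfplanes $\{az_1+bz_2>\bar t_i\}$ and $\{az_1+bz_2>\bar t^*\}$ are nested and $az_1+bz_2\sim N(0,1)$. So the two bivariate probabilities differ by at most $\abs{\Phi(\bar t_i)-\Phi(\bar t^*)}=o(\Phi(\bar t_i))$. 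Dividing by the main term $\Pr(z_1>\bar t_i,\ az_1+bz_2>\bar t_i)\ge\Phi(\bar t_i)/50$ gives $\alpha=1+o(1)$.

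Separately, the identity you recall is off by a square root. By the semigroup identity $\E[T_\rho f\,T_\lambda f]=\norm{T_{\sqrt{\rho\lambda}}f}_2^2$ with $\lambda=1$ (as used in \Cref{lm signal spherical}), for $a$-correlated $z,z'$ one has $\E[f(z)f(z')]=\E[f\,T_af]=\norm{T_{\sqrt a}f}_2^2$, not $\norm{T_af}_2^2$. The paper's proof makes the same identification. So both arguments actually establish the statement with $\norm{T_{\sqrt a}\sigma'(z-\bar t_i)}_2^2$. Reading the statement literally with $T_a$ would need an extra comparison of the two norms. Both are $\Theta(\Phi(\bar t_i))$ here, but the factor-$2$ window is not automatic.
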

\Cref{lm accurate update} implies that the error 
from $(r_i,t_i)$ does not affect the direction of the update $\E G$. 
Combining \Cref{lm noiseless} and \Cref{lm initialization}, 
it follows that as long as $\theta_i$ is large 
and contributes $\Omega(\eps)$ to the noiseless error, 
the gradient $G_i$ we construct can improve the angle. 
However, this does not imply that $\E G_i$ can be estimated 
with only a few queries. To that end, we use 
the following lemma that quantifies the variance of $G_i$. 

\begin{lemma}\label{lm angle variance}
Let $h(r_i,\bar{w}^{(i)},t_i)$ be a ReLU activation.
Write $w^* = a w^{(i)} + bu$, where $a,b>0, a^2+b^2 = 1$, $u \in \s^{d-1}, u \perp \bar{w}^{(i)}$.
     If $\abs{\bar{t}_i-\bar{t}^*} \le 1/\log(R^2/\eps)$, $C\eps/((r^*)^2\Phi(\bar{t}^*)) \le b^2 \le 1/t^2_i$,
     then $\E  (G_i \cdot v)^2 \le \Tilde{O}\left( \E_{z\sim N(0,I)}\left( \sigma(r_iz-t_i)- \sigma(r^*z-t^*)\right)^2/\Phi(\bar{t}_i) +(r^*)^2b^2\right).$
\end{lemma}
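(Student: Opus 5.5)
We bound $\E(G_i\cdot v)^2$ for a fixed unit $v\perp w^{(i)}$. Throughout, $x\sim N(0,I)$ is conditioned on $\{w^{(i)}\cdot x>\bar t_i\}$. Write $h_i=h(r_i,w^{(i)},t_i)$ and split the residual as
\[
h_i(x)-y(x)=\bigl(h_i(x)-\tilde h(x)\bigr)+\bigl(\tilde h(x)-h^*(x)\bigr)+\bigl(h^*(x)-y(x)\bigr),
\]
where $\tilde h(x)=\sigma(r^*w^{(i)}\cdot x-t^*)$ has the optimal scale and threshold but the current direction. By $(A+B+C)^2\le 3(A^2+B^2+C^2)$, we bound $\E[(\cdot)^2 (v\cdot x)^2]$ for each piece separately. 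The key structural fact is that, under the conditioning, $v\cdot x$ is a standard Gaussian independent of $w^{(i)}\cdot x$, because $v\perp w^{(i)}$.

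\textbf{First term (scale/threshold error).} The quantity $h_i-\tilde h=\sigma(r_iz-t_i)-\sigma(r^*z-t^*)$ depends only on $z=w^{(i)}\cdot x$. By independence,
\[
\E\bigl[(h_i-\tilde h)^2(v\cdot x)^2\bigr]=\E\bigl[(\sigma(r_iz-t_i)-\sigma(r^*z-t^*))^2\mid z>\bar t_i\bigr].
\]
This is at most $\E_z(\sigma(r_iz-t_i)-\sigma(r^*z-t^*))^2/\Phi(\bar t_i)$, which is exactly the first term in the claim.

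\textbf{Second term (angle error).} Write $w^*\cdot x=az+b\,u\cdot x$, where $u\cdot x=g\sim N(0,1)$ is independent of $z$. Since $\sigma$ is $1$-Lipschitz,
\[
|\tilde h-h^*|\le r^*\bigl|(1-a)z-bg\bigr|\le r^*\bigl(b^2|z|+b|g|\bigr).
\]
Conditioned on $z>\bar t_i$, with $\bar t_i=O(\sqrt{\log(R^2/\eps)})$, the variable $z$ has conditional moments of order $\bar t_i^k$ up to polylog factors. Hence $b^2|z|\lesssim b^2\bar t_i\le b$ by the hypothesis $b\le 1/\bar t_i$. We then get
\[
\E\bigl[(\tilde h-h^*)^2(v\cdot x)^2\bigr]\le \tilde O\bigl((r^*)^2b^2\bigr),
\]
using Cauchy–Schwarz (or Gaussian hypercontractivity) to handle the product of $g^2$ and $(v\cdot x)^2$, each of which has bounded fourth moment. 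This is the second term in the claim.

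\textbf{Third term (label noise).} This is the main obstacle, since $y$ is adversarial and only $\E_x(h^*-y)^2\le 2\eps$ is known unconditionally. Removing the conditioning costs a factor $1/\Phi(\bar t_i)$, so the noise contributes $\E[(h^*-y)^2(v\cdot x)^2]/\Phi(\bar t_i)$. The adversary can place its noise where $(v\cdot x)^2$ is large, so we first truncate $(v\cdot x)^2$ at level $O(\log(R^2/\eps))$; the tail beyond this level contributes negligibly after a standard truncation argument, which assumes the labels are bounded by $\poly(R/\eps)$. This gives a noise contribution of $\tilde O(\eps/\Phi(\bar t_i))$.

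Since $|\bar t_i-\bar t^*|\le 1/\log(R^2/\eps)$, we have $\Phi(\bar t_i)=\Theta(\Phi(\bar t^*))$. The lower bound $b^2\ge C\eps/((r^*)^2\Phi(\bar t^*))$ then gives $\eps/\Phi(\bar t_i)\lesssim (r^*)^2b^2$, so the noise is absorbed into the second term. The delicate part is this final step: the truncation of $v\cdot x$ and the control of $\Phi(\bar t_i)/\Phi(\bar t^*)$ are what produce the polylogarithmic factors hidden in $\tilde O$. The rest is bookkeeping.
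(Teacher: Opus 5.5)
Your proof is correct and follows essentially the same route as the paper. You insert the same intermediate hypothesis $h(r^*,w^{(i)},t^*)$, bound the scale/threshold term using independence of $v\cdot x$ from $w^{(i)}\cdot x$, bound the angle term via the $1$-Lipschitz estimate with $1-a=O(b^2)$ and $b\le 1/\bar t_i$, and handle the noise term by truncating labels and $v\cdot x$ and then absorbing it through the lower bound on $b^2$. The only differences are cosmetic: you use a single three-way split instead of the paper's nested two-way splits, and you track the $(r^*)^2$ factor in the angle term explicitly, which the paper's write-up leaves implicit.
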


Unlike the spherical setting, 
the variance of $G_i$ depends on the accuracy of $(r_i,t_i)$. 
In particular, by \Cref{lm realizable error}, 
the contribution to the variance from $(r_i,t_i)$ 
is proportional to its contribution to the noiseless error. 
When $(r_i,t_i)$ contributes more noiseless error than $w^{(i)}$, 
$w^{(i)}$ might be updated incorrectly due to the estimation error. 

To overcome this difficulty, we make use of a potential analysis by maintaining a \emph{suitably small} upper bound for $(r^*)^2\sin^2\theta_i$ and reduce the upper bound in each round of our algorithm by considering the update $w^{(i+1)} = \proj_{\s^{d-1}}(w^{(i+1)}-\rho'\hat{G}_i)$ for some suitably small $\rho'$, where $\hat{G}_i$ is an estimation for $\E G_i$ with $\tilde{O}(d)$ queries. We will show that as long as the error from $(r_i,t_i)$ is within 
a $\polylog(R^2/\eps)$ factor of $B_i^2$, $d\polylog(R^2/\eps)$ queries are enough to estimate $\E G_i$ and decrease the angle $\theta_i$. So, the key to obtain the correct query complexity is to update $(r_i,t_i)$ correctly, so that it introduces small error throughout the implementation of \Cref{alg query learning 1}.

\paragraph{$(r,t)$ Update} 
We will next describe 
the way we update $(r,t)$. 
To begin with, we briefly explain the technical difficulty 
that needs to be handled. To simplify the intuition, 
we consider a simple one-dimensional setting 
where the optimal hypothesis is $r^*\sigma(z-\bar{t^*})$, 
and we want to learn $(r^*,t^*)$. An immediate observation is 
that if we know $\bar{t}^*$ to high accuracy, 
then applying a binary search over $r$ via querying examples with $z>\bar{t}^*$, we can learn $r^*$ efficiently. 
However, learning $t^*$ with few queries 
with an inaccurate learned parameter $r$ is challenging, 
because even estimating the bias of the target ReLU needs many samples. 
In halfspace learning, \cite{diakonikolas2024active} 
overcomes the difficulty by querying examples 
for which $\abs{z-\bar{t}}<B\ll 1$. Such a method can 
zoom the error from $(\bar{t}-\bar{t}^*)$ and $(\bar{t}-\bar{t}^*)/B$, 
which allows us to make a binary search to find $\bar{t}^*$ 
with very few queries. Unfortunately, this method is not robust 
to label noise in $L_2$ loss. Intuitively, queries 
are more sensitive when the labels are real-valued instead of binary. 
For the $L_2$ loss, the noise rate could be very high 
within any small region, making this approach fail.

To bypass this difficulty, for a given ReLU activation 
$h(r_i,w^{(i)},t_i)$, we consider the following two quantities: 
$U_i^*:= (h(r_i,w^{(i)},t_i) - h^*)(w^{(i)}\cdot x), F_i^*:= -(h(r_i,w^{(i)},t_i) - h^*)$,
and their noisy version $U,F$. 
Here we consider $x\sim N(0,I) \mid \{w^{(i)} \cdot x > \bar{t}_i\}$. 
Intuitively, if $(r_i,t_i)$ are close 
to $(r^*,t^*)$, the task for optimizing them can be 
approximately seen as optimizing the following quadratic function 
$Z(r,t):= \E_{x \sim N(0,I)}\left((rz-t)-(r^*z-t^*)\right)^2\Ind(z>\bar{t}^*)$. 
Such a quantity nearly characterizes the contribution of $(r_i,t_i)$ 
to the noiseless error as well as the contribution of $(r_i,t_i)$ 
to the variance of the gradient we use for updating $w^{(i)}$. Furthermore, 
if $\theta_i$ has already been updated in a reasonable range, 
then $(\E U_i, \E F_i)$ is very close to the gradient of $Z(r_i,t_i)$. This gives the intuition to estimate $(\E U_i, \E F_i)$ and run a standard gradient descent update to improve $(r_i,t_i)$. 

To formalize this intuition, we need to overcome two technical challenges. First, 
unlike the angle update, where the error from $(r_i,t_i)$ 
does not affect the update direction $\E G_i$, 
the mismatch of $w^{(i)}, w^*$ does affect $\E U^*_i, \E F^*_i$, 
even if we ignore the noise and estimation error. 
This is because the mismatch between $w^{(i)}$ and $w^*$ 
also contributes to $U_i,F_i$, 
which could vanish or even reverse the gradient for updating $(r_i,t_i)$.
We defer the quantitative evaluation for the noise level 
and variance for $U_i,F_i$ to \Cref{app regression}. 
Fortunately, such forms of error are only $O(r^*\sin\theta_i)$, 
which means that as long as $w^{(i)}$ is updated 
to a reasonable range, we are safe to update $(r_i,t_i)$. So, we will simultaneously
maintain $w^{(i)}$ within a reasonable region and only update $(r_i,t_i)$, when
the angle is within a reasonable accuracy. The second technical  issue arises because when the true threshold is large, $Z(r,t)$ is an ill-conditioned function. Due to the presence of noise, the ill-condition of $Z(r,t)$ implies that even if $w^{(T)}$ has already been updated to a desirable accuracy, the gradient descent update can only guarantee that 
$(r_T,t_T)$ is $O(\eps\polylog(R^2/\eps)/\Phi(\bar{t^*}))$ 
close to $(r^*,t^*)$ in terms of squared norm. Fortunately, 
there are only two parameters we need to worry about 
and they are already very close to $(r^*,t^*)$; 
randomly selecting a pair of parameters from their neighborhoods 
gives us a good hypothesis with enough probability.

\section{Label Complexity Lower Bounds And the Necessity of Queries}\label{sec lower}

Here we present our label complexity lower bounds.  
We defer the detailed proofs in this section to \Cref{app lb}.
We start with an information-theoretic lower bound in the query setting. 
\begin{theorem}[Query Complexity Lower Bound]\label{th lb information}
Consider the problem of agnostic ReLU regression with queries with a restriction that the optimal ReLU satisfies $\norm{W^*}\le 1$ and has bias at least $p$. Any learning algorithm that outputs a hypothesis with error less than $O(p/\log^2(p))$ with probability $1/3$, must make at least $\Tilde{\Omega}(1/p^{1-o_d(1)} +d)$ queries. Furthermore, this holds even if $\opt \le 2^{-\Omega(d^{1/4})}p$.
\end{theorem}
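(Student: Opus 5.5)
The lower bound has two parts, $\tilde{\Omega}(d)$ and $\tilde{\Omega}(1/p^{1-o_d(1)})$, and I would prove them separately. Both use Yao's minimax principle: exhibit a distribution over labeling functions and show that any deterministic adaptive query algorithm with too few queries fails with probability more than $2/3$.

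\textbf{The $\Omega(d)$ term.} Take $t^*$ with $\Phi(t^*)=p$, draw $w^*$ uniformly from a packing of $\s^{d-1}$ of size $2^{\Omega(d)}$ with pairwise angles at least a constant, and use the realizable labels $y(x)=\sigma(w^*\cdot x-t^*)$.
\begin{itemize}
\item By \Cref{lm noiseless}, two ReLUs whose directions are a constant angle apart have mutual error $\Omega(V(t^*))$. Since $V(t^*)\gtrsim p/t^{*2}\approx p/\log(1/p)$, an output with error $O(p/\log^2 p)$ identifies $w^*$ within the packing.
\item Each query returns a real number. To make the information argument go through, I would discretize the labels to precision $\poly(p)$ (the error target tolerates this), so each answer carries $O(\log(1/p))$ bits.
\item Identifying $w^*$ requires $\Omega(d)$ bits, so at least $\Omega(d/\log(1/p))=\tilde{\Omega}(d)$ queries are needed.
\end{itemize}

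\textbf{The $1/p^{1-o_d(1)}$ term.} The algorithm has to locate the region where the labels are nonzero, and this is a needle-in-a-haystack problem. Set $t^*$ with $\Phi(t^*)=p$ and draw $w^*$ uniformly from a large family $\{w_1,\dots,w_N\}$ of nearly orthogonal unit vectors, say with $|w_i\cdot w_j|\le d^{-1/4}$ and $N=2^{\Omega(d^{1/2})}$. The labels are $y(x)=\sigma(w^*\cdot x-t^*)$, modified on a set of tiny Gaussian mass to defeat queries far from the origin.

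The key point is that the query oracle accepts arbitrary $x\in\R^d$. If $\|x\|$ is huge, one query at $x$ reveals whether $w^*\cdot x>t^*$, which essentially tests many candidates at once. To rule this out, I set $y(x)=0$ whenever $\|x\|\ge\sqrt{d}+d^{1/4}$. This region has Gaussian mass $2^{-\Omega(d^{1/2})}$. On it the ReLU values are bounded by $\sim\|x\|$, so its contribution to $\opt$ should be at most $2^{-\Omega(d^{1/4})}p$, which matches the stated bound on $\opt$.

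For any query with $\|x\|\le\sqrt{d}+d^{1/4}$, I would bound how many candidates it can activate:
\begin{itemize}
\item A single $x$ satisfies $w_i\cdot x>t^*$ for only a limited number of $i$. Using near-orthogonality and a second-moment (Gram-matrix) bound, $\sum_i (w_i\cdot x)^2\lesssim \|x\|^2(1+N d^{-1/4})$, so this number is controlled.
\item Each candidate's positive region $\{w_i\cdot x>t^*\}$ has Gaussian mass $p$. Combined with the norm constraint and a concentration or volume-counting argument, the fraction of candidates any single query can distinguish should be at most $p^{1-o_d(1)}$. The $o_d(1)$ loss in the exponent comes from allowing $\|x\|$ slightly above $\sqrt d$, which shifts the effective threshold from $t^*$ to $t^*(1-o_d(1))$.
\item An adaptive adversary argument then finishes it. As long as the algorithm has seen only zero labels, the posterior over $w^*$ stays close to uniform over the surviving candidates. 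So with $q$ queries, the probability of ever seeing a nonzero label is at most $q\,p^{1-o_d(1)}$.
\item If the algorithm sees no nonzero label, its output is nearly independent of $w^*$. Its error is then $\Omega(V(t^*))\gg p/\log^2 p$ for most $w^*$: the $\Omega(\cdot)$ holds for all but a tiny fraction of candidates, by \Cref{lm noiseless} and the near-orthogonality of the family.
\end{itemize}
This forces $q\ge 1/p^{1-o_d(1)}$. Combining the two parts, and taking the maximum, gives the claimed bound of order their sum.

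\textbf{Main obstacle.} The hard step is the per-query bound: showing that a point in the ball of radius $\approx\sqrt{d}$ lies in the positive region of only a $p^{1-o_d(1)}$ fraction of the candidates. I would first try to prove it by picking the candidates $w_i$ uniformly at random and using that, for fixed $x$ with $\|x\|\le\sqrt d(1+d^{-1/4})$, $w\cdot x$ for uniform $w$ is roughly $N(0,\|x\|^2/d)$. The probability of exceeding $t^*$ is then $\Phi(t^*/(1+d^{-1/4}))=p^{1-o_d(1)}$. A union bound over a net of query points, via Lipschitzness, would make this hold for all $x$ simultaneously.
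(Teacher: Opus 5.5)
\textbf{Gap in the $\Omega(d)$ part.} The step ``each answer carries $O(\log(1/p))$ bits'' is false in this model. The oracle accepts any $x\in\R^d$, and your Part 1 labels are realizable on all of $\R^d$. So a single answer ranges over $[0,\|x\|]$, and after rounding to precision $\delta$ it still carries about $\log(\|x\|/\delta)$ bits, with no bound.

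Concretely, by Yao's principle the learner knows your packing $\{w_i\}$. It can proceed as follows:
\begin{itemize}
\item pick a direction $u$ for which the numbers $w_i\cdot u$ are distinct, with minimum gap $g>0$;
\item query $\pm Mu$ with $M\gg\delta/g$;
\item read off $w^*\cdot u$ to accuracy $\delta/M$, which identifies $w^*$ after two queries.
\end{itemize}

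You could patch this by also zeroing labels outside the ball of radius $\approx\sqrt d$, which gives $O(\log(d/\delta))$ bits per query. But the rounding is itself label noise. Keeping $\opt\le 2^{-\Omega(d^{1/4})}p$ forces roughly $\log(1/\delta)\gtrsim d^{1/4}$. You then only get $\Omega(d/(d^{1/4}+\log(d/p)))$, which is not $\tilde\Omega(d)$ when $\log(1/p)\ll d^{1/4}$.

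The paper avoids bit counting altogether:
\begin{itemize}
\item it takes $t^*=0$ (bias $1/2\ge p$), realizable labels, and $w^*$ uniform on $\s^{d-1}$;
\item it even hands the learner the exact values $w^*\cdot x^{(j)}$;
\item after $r\le d/\log d$ queries spanning $L$, rotational symmetry leaves $w^*_{L^\perp}$ uniform on a sphere of radius $\sqrt{1-\|w^*_L\|^2}$ in $L^\perp$;
\item any output therefore has error at least the conditional variance of the label, which is $\Omega(1)$.
\end{itemize}
This argument is insensitive both to query norms and to real-valued answers.

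\textbf{The $1/p^{1-o_d(1)}$ part.} Your outline matches the paper's: zero out labels outside a ball of radius slightly above $\sqrt d$, bound by $p^{1-o_d(1)}$ the probability that a fixed in-ball query lands in $\{w^*\cdot x>t^*\}$, and union bound along the all-zero branch of a deterministic learner. The finite near-orthogonal family, however, makes this harder than needed.
\begin{itemize}
\item Your Gram-matrix bullet is vacuous, since $Nd^{-1/4}\gg 1$.
\item The random-family-plus-net repair needs $N\gtrsim d\log d\cdot p^{-(1-o(1))}$ for Chernoff to survive a union bound over a net of size $d^{O(d)}$.
\item Pairwise $|w_i\cdot w_j|\le d^{-1/4}$ caps $N$ at $2^{O(\sqrt d)}$, so as stated you only cover $\log(1/p)\lesssim\sqrt d$. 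Constant-angle separation is all you need, and it would allow $N=2^{\Omega(d)}$.
\end{itemize}
Taking $w^*$ uniform on $\s^{d-1}$, as the paper does, removes the net entirely. Each query on the fixed all-zero path is then handled directly by the marginal distribution of $w^*\cdot x$.

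One point in your favor: you argue directly that the fixed all-zero-branch output is far from $h_{w^*}$ for most $w^*$. The paper instead compares against $h_2\equiv 0$, which does not itself satisfy the bias-$\ge p$ restriction.
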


The proof of \Cref{th lb information} can be broken down into two parts. 
We first consider the lower bound of $\Omega(d)$. 
Such a lower bound even holds for an easier problem, 
which is the standard linear regression problem in the realizable setting.
Suppose that we have made $r$ queries $x^{(1)},\dots,x^{(r)}$ so far, 
and denote by $L$ the subspace spanned by them. 
Consider a Bayesian setting, where $w^*\sim \s^{d-1}$. 
Suppose we know $w^*_L$, then for a new example $x$, 
by symmetry, no hypothesis will have better error 
than the hypothesis $w^*_L\cdot x$. 
In particular, if $x\sim N(0,I)$, 
this implies that no hypothesis has error better than $1-\norm{w^*_L}_2^2$.
For any possible subspace $L$ with dimension $r$, 
in expectation $\norm{\proj_L w^*}_2^2 = r/d$, 
which implies that unless $\Omega(d)$ queries are made, 
an algorithm must incur $\Omega(1)$ error.
On the other hand, consider a hypothesis $\sigma(w^*\cdot x-t^*)$ 
with bias $p$. We want to tell whether the hypothesis is 0 or not. 
Notice that in the realizable setting, if we make a query $x$ 
and find $y(x)=0$, then by making another query in the opposite direction 
(but very far) we can easily verify whether $h\equiv 0$. 
However, with only a tiny fraction 
of adversarial label noise, we can corrupt all examples far from $0$ 
to have $0$ label. Now if $w^*\sim \s^{d-1}$, 
then only examples in a small cap with volume $p^{1-o(1)}$ 
have non-zero label; thus, unless $1/p^{1-o(1)}$ queries are made, 
we are not able to solve the distinguishing problem. 

Our next lower bound shows that no pool-based active learner 
can achieve the query complexity of our query learner, 
even in the realizable setting.
\begin{theorem}[Label Complexity Lower Bound, Active Learning]\label{th lb compute}
Consider the problem of realizable pool-based active ReLU regression with a restriction that the optimal ReLU has bias $p$. Any active learning algorithm $\A$ that makes less than $\Tilde{O}(d/(p\log(m)))$ label queries over $S$, 
a set of $m$ \iid points drawn from $N(0,I)$, 
will with probability at least $2/3$ output a hypothesis $\hat{h}$ with error $\Tilde{\Omega}(p)$.
\end{theorem}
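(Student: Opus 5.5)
This is a lower bound for pool-based active learning, so I would argue through a Bayesian, information-theoretic reduction. The main reduction I have in mind is to a "needle in a haystack" search problem inside the pool $S$.

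\textbf{Hard instance.} Draw $w^*$ uniformly from $\s^{d-1}$ and fix the threshold $t^*$ with $\Phi(t^*)=p$. The target is $\sigma(w^*\cdot x-t^*)$, so we are in the realizable setting and each $x\in S$ has label $\sigma(w^*\cdot x-t^*)$. A point $x\in S$ carries information about $w^*$ only if its label is nonzero, that is, only if $w^*\cdot x>t^*$. A zero label only says that $w^*$ avoids a cap, and I would first bound how little that reveals.

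\textbf{Step 1: a low-error hypothesis forces $\Omega(d)$ informative labels.} A hypothesis with error $\tilde O(p)\cdot c$, for a small constant $c$, must correlate nontrivially with the target on the region $\{w^*\cdot x>t^*\}$. Since $V(t^*)\approx p/t^{*2}$ up to logarithmic factors, I would calibrate $c$ so that the hypothesis effectively pins down $w^*$ to angle $O(1/t^*)$. Then I would run the argument sketched for \Cref{th lb information}. If the learner has seen only $k$ nonzero labels, their points span a subspace $L$ of dimension at most $k$. Conditionally on the labels, the component of $w^*$ orthogonal to $L$ is still essentially rotation-invariant, and zero labels bias it only slightly. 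Hence $\E\norm{\proj_{L^\perp}w^*}^2\gtrsim 1-k/d$. When $k\le d/2$, the loss from not knowing $w^*_{L^\perp}$ is of order $V(t^*)$, which is $\tilde\Omega(p)$. So any successful learner needs $k=\Omega(d)$ labels that come back nonzero.

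\textbf{Step 2: each query yields a nonzero label with probability only $\tilde O(p\log m)$.} This is where the pool restriction enters, and I expect it to be the main obstacle. Consider the $j$-th query, made after $j-1$ answers. Under the posterior, the probability that the chosen point $x$ satisfies $w^*\cdot x>t^*$ must be small for every choice of $x\in S$. The difficulty is that the learner picks $x$ adaptively and could choose an unusually long point, since $\max_{x\in S}\norm{x}\approx\sqrt{d+\sqrt{d\log m}}$, or a point aligned with the span of earlier positives.

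To handle this, I would condition on the posterior of $w^*$ restricted to $L^\perp$ and write $w^*\cdot x=w^*_L\cdot x_L+w^*_{L^\perp}\cdot x_{L^\perp}$. The first term is at most about $\norm{w^*_L}\norm{x_L}$. Because $x$ is a random Gaussian point independent of $L$, union bounding over the $m$ pool points gives $\norm{x_L}\lesssim\sqrt{k+\log m}$. The second term is roughly a Gaussian of variance at most $1$, by concentration of $\norm{x_{L^\perp}}$ uniformly over the pool. The posterior of $w^*_L$ given $k\ll d$ positives keeps $\norm{w^*_L}^2$ close to $k/d$ plus $O(t^{*2}/d)$ corrections, so the first term contributes only a shift of $o(1/t^*)$ in the threshold. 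The conditional probability of a nonzero label is then at most $\Phi(t^*-o(1/t^*))\cdot\mathrm{poly}\log m$, which is $\tilde O(p\log m)$. The $\log m$ factor absorbs the union bound over the pool and the Gaussian tail near the threshold.

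\textbf{Conclusion.} Summing over $q$ queries, the expected number of nonzero labels is $\tilde O(qp\log m)$. By Markov's inequality, if $q\le \tilde O(d/(p\log m))$ with a small constant, then with probability at least $2/3$ fewer than $d/2$ positives are observed. Step 1 then gives error $\tilde\Omega(p)$. The delicate part is making the Step 2 posterior-conditioning rigorous against adaptive choices. I would do this by carrying the analysis on the posterior at each step via a martingale bound on $\norm{w^*_L}$.
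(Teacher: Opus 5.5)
Your Step 2 is false in the range where you use it. In the realizable model each positive label reveals $w^*\cdot x_j=t^*+y_j$ exactly, so after $k$ positives the learner knows $w^*_L$. Because every positive satisfies $w^*\cdot x_j>t^*$, letting $A$ be the matrix whose rows are the positive queries and $v=Aw^*$, you get $\norm{w^*_L}^2=v^\top(AA^\top)^{-1}v\ge k(t^*)^2/\norm{AA^\top}_2$. In the relevant range this is about $k(t^*)^2/d$, not $k/d$. The same mistake underlies Step 1: $L$ is spanned by points selected for being positive, so $\E\norm{\proj_{L^\perp}w^*}^2\gtrsim 1-k/d$ does not follow from rotation invariance.

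The learner can then query the pool point that maximizes the known quantity $w^*_L\cdot x$, which is about $\norm{w^*_L}\sqrt{2\log m}\approx t^*\sqrt{2k\log m/d}$. The conditional probability of a positive becomes roughly $p\exp\bigl((t^*)^2\sqrt{2k\log m/d}\bigr)$. This is $\tilde O(p)$ only while $k$ is at most about $d/((t^*)^4\log m)$, and it is already of constant order when $k$ is of order $d/\log m$. Even your own estimates $\norm{w^*_L}\approx\sqrt{k/d}$ and $\norm{x_L}\lesssim\sqrt{k+\log m}$ bound the shift only by something at least $k/\sqrt d$, which is not $o(1/t^*)$ for $k$ near $d/2$. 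So no per-query bound of $\tilde O(p\log m)$ holds until $d/2$ positives appear, and the summation-plus-Markov step fails. Separately, $L$ and the next query are chosen adaptively from the same pool, so $x$ is not independent of $L$. Controlling $\norm{x_L}$ or $w^*_L\cdot x$ needs uniform control over all $\binom{m}{k}$ candidate subspaces, not a union bound over the $m$ points.

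The factor $\log m$ has to enter by capping the number of positives, not as a per-query loss. The paper fixes $k=\Theta(d/(\log m\,(t^*)^4))$ and uses a lemma that, with probability $2/3$, every $k$-tuple of pool points satisfies $\norm{AA^\top-dI}_2\le d/(t^*)^2$. Take a deterministic learner and a fixed set of $k$ query positions. The run is determined by the coordinates of $w^*$ in the Gram--Schmidt basis of the positive queries, and these are distributed like the first $k$ coordinates of a uniform unit vector. Hence the probability that all $k$ positions return positive labels is at most a Beta tail, $\Pr\bigl(\norm{w^*_{L_0}}^2\ge k(t^*)^2/(d(1+O(1/(t^*)^2)))\bigr)\le (O(p\log(1/p)))^k$. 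A union bound over $\binom{r}{k}$ position sets then forces $r\gtrsim k/(p\log(1/p))$. This global count sidesteps both the adaptive posterior and the zero-label conditioning that you left informal.

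In place of your Step 1, the paper reduces directly: a hypothesis with error $\tilde O(p)$ yields $d\ge k$ positives from the pool with $O(d)$ extra queries, by querying pool points where the hypothesis exceeds $1/(2t^*)$. You could repair your per-query route by stopping at $k_0\approx d/((t^*)^4\log m)$ positives and then using such a reduction. It would still need a uniform-over-subsets geometric lemma and control of the zero-label conditioning, which is exactly what the tuple counting avoids.
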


While a number of prior works~\cite{dasgupta2004analysis, hanneke2015minimax,VMC24,diakonikolas2024active} 
established query complexity lower bounds for 
active classification problems, 
few techniques could be directly applied to the regression setting. 
An immediate obstacle is that the behavior of an adaptive algorithm 
can be much more complicated when the label is changed 
from binary to continuous. An observation, 
inspired by prior works for proving label complexity lower bounds 
for learning halfspaces~\cite{dasgupta2004analysis,diakonikolas2024active}, 
is that given a pool of $m$ unlabeled examples, 
learning a hypothesis with queries is not harder 
than using queries to find $d$ examples with non-zero labels. 
So we will focus on the hardness of this easier problem.

By Yao's minimax principle, we consider a deterministic algorithm 
that solves the problem, while $w^*\sim \s^{d-1}$ for large enough $d$. 
Suppose that a deterministic algorithm wants to use $r$ queries 
to find $k$ examples with non-zero labels 
from a pool of $m$ unlabeled examples drawn from $N(0,I)$. 
Given a set of $m$ unlabeled examples, and any fixed $w^*$, 
the behavior of $\A$ can be uniquely described 
as a path $P=((x^{1},y^1),\dots,(x^{r},y^{r}))$ according to its responses. 
In particular, if the algorithm successfully finds $k$ examples 
with non-zero responses, then there exist $k$ indices $i_1,\dots,i_k$ 
at which the responses are $y^{i_1},\dots,y^{i_k}>0$. 
Since there are $\binom{r}{k}$ such tuples, 
to argue that the algorithm has a large probability of failure, 
it suffices to argue that for a fixed tuple 
the probability of realization is very small. 

Proving such a statement turns out to be challenging, 
due to the rich behavior of the algorithm. In the binary classification 
setting, for each fixed tuple, the corresponding unlabeled examples 
$(x^{i_1},\dots,x^{i_k})$ are unique, since $y\in\{0,1\}$. 
That is, to prove a lower bound for a classification problem, 
we only need to argue that the probability 
these $k$ examples are all positive is small. 
Such a strategy does not work for regression, 
as the next example not only depends on whether the previous example 
is positive, but also depends on the full value of $y$. 
This results in many possible realizations of $((x^{i_1},y^{i_1}),\dots,(x^{i_k},y^{i_k}))$. So we need to bound the integral 
of their density function over all possible outcomes.

Consider a realization of the event 
$((x^{i_1},y^{i_1}),\dots,(x^{i_k},y^{i_k}))$, with $y^{i_j}>0$.  
Such a realization completely characterizes 
a vector $w_L \in L$, where $L$ is the subspace spanned by these $k$ queries. One observation is that if we change the basis of $L$ 
by defining $b_1 = x^{(i_1)}/\norm{x^{(i_1)}}$ 
and $b_j = \proj_{L_{j-1}} x^{(i_j)}/\norm{\proj_{L_{i-1}} x^{(i_j)}}$, 
where $L_{i-1} = \textbf{span}\{x^{(i_1)},\dots,x^{(i_{j-1})}\}$, 
then $w_L$ can be described as a $k$-dimensional vector 
$v(w_L): = (w_L\cdot b_1,\dots,w_L\cdot b_k)$. 
Importantly, the value of the density function of the event 
is exactly the density of the event that the first $k$ coordinates 
of a random unit random vector equal to $v(w_L)$. 
On the other hand, due to the tree-structure of the algorithm, 
given any possible $v(w_L)$, we can decode it to reconstruct 
the corresponding $((x^{i_1},y^{i_1}),\dots,(x^{i_k},y^{i_k}))$. 
Denote by $S_w$ the set of all possible $v(w_L)$. 
This means the probability we are interested in 
is exactly equal to the probability that $\proj_{L_0}(w^*) \in S_w$, 
where $L_0$ is the span of the first $k$ standard basis vectors. 
To derive an upper bound for this probability, 
it is sufficient to find a superset of $S_w$. 
Our observation is that if $x^{i_1},\dots,x^{i_k}$ are orthogonal, 
then $\norm{w_L}^2\ge k(t^*)^2$, which also implies that 
$\norm{v(w_L)}^2\ge k(t^*)^2$. Furthermore, as long as $m$ 
is not as large as $2^{\Omega(d)}$, 
for $k$ chosen to be slightly smaller than $d$, 
every $k$-tuple of examples from the pool are nearly orthogonal. 
This implies that the norm of $v(w_L)$ can be lower bounded uniformly, 
which suffices to bound above the target probability by  $O(p\log(1/p))^k$. 
Since there are at most $\binom{r}{k}$ tuples we care about 
and $k$ is slightly smaller than $d$, a carefully chosen $r$ concludes the proof.

\bibliography{mydb}

\newcommand{\etalchar}[1]{$^{#1}$}
\begin{thebibliography}{DKTZ22b}

\bibitem[ABL17]{awasthi2017power}
Pranjal Awasthi, Maria~Florina Balcan, and Philip~M Long.
\newblock The power of localization for efficiently learning linear separators with noise.
\newblock {\em Journal of the ACM (JACM)}, 63(6):1--27, 2017.

\bibitem[ATV23]{awasthi2022agnostic}
Pranjal Awasthi, Alex Tang, and Aravindan Vijayaraghavan.
\newblock Agnostic learning of general relu activation using gradient descent.
\newblock In {\em The Eleventh International Conference on Learning Representations, {ICLR} 2023}. OpenReview.net, 2023.

\bibitem[BBZ07]{balcan2007margin}
Maria-Florina Balcan, Andrei Broder, and Tong Zhang.
\newblock Margin based active learning.
\newblock In {\em International Conference on Computational Learning Theory}, pages 35--50. Springer, 2007.

\bibitem[BL13]{balcan2013active}
Maria-Florina Balcan and Phil Long.
\newblock Active and passive learning of linear separators under log-concave distributions.
\newblock In {\em Conference on Learning Theory}, pages 288--316. PMLR, 2013.

\bibitem[Das04]{dasgupta2004analysis}
Sanjoy Dasgupta.
\newblock Analysis of a greedy active learning strategy.
\newblock {\em Advances in neural information processing systems}, 17, 2004.

\bibitem[DG03]{dasgupta2003elementary}
Sanjoy Dasgupta and Anupam Gupta.
\newblock An elementary proof of a theorem of johnson and lindenstrauss.
\newblock {\em Random Structures \& Algorithms}, 22(1):60--65, 2003.

\bibitem[DGK{\etalchar{+}}20]{diakonikolas2020approximation}
Ilias Diakonikolas, Surbhi Goel, Sushrut Karmalkar, Adam~R Klivans, and Mahdi Soltanolkotabi.
\newblock Approximation schemes for relu regression.
\newblock In {\em Conference on Learning Theory}, pages 1452--1485. PMLR, 2020.

\bibitem[DKM05]{dasgupta2005analysis}
Sanjoy Dasgupta, Adam~Tauman Kalai, and Claire Monteleoni.
\newblock Analysis of perceptron-based active learning.
\newblock In {\em Learning Theory: 18th Annual Conference on Learning Theory, COLT 2005, Bertinoro, Italy, June 27-30, 2005. Proceedings 18}, pages 249--263. Springer, 2005.

\bibitem[DKM24]{diakonikolas2024active}
Ilias Diakonikolas, Daniel Kane, and Mingchen Ma.
\newblock Active learning of general halfspaces: Label queries vs membership queries.
\newblock {\em Advances in Neural Information Processing Systems}, 37:49180--49213, 2024.

\bibitem[DKMR22]{diakonikolas2022hardness}
Ilias Diakonikolas, Daniel Kane, Pasin Manurangsi, and Lisheng Ren.
\newblock Hardness of learning a single neuron with adversarial label noise.
\newblock In {\em International Conference on Artificial Intelligence and Statistics}, pages 8199--8213. PMLR, 2022.

\bibitem[DKPZ21]{DKPZ21}
I.~Diakonikolas, D.~M. Kane, T.~Pittas, and N.~Zarifis.
\newblock The optimality of polynomial regression for agnostic learning under gaussian marginals in the {SQ} model.
\newblock In {\em Proceedings of The 34\textsuperscript{th} Conference on Learning Theory, {COLT}}, 2021.

\bibitem[DKR23]{DKR23}
Ilias Diakonikolas, Daniel Kane, and Lisheng Ren.
\newblock Near-optimal cryptographic hardness of agnostically learning halfspaces and relu regression under gaussian marginals.
\newblock In {\em International Conference on Machine Learning, {ICML} 2023}, volume 202 of {\em Proceedings of Machine Learning Research}, pages 7922--7938. {PMLR}, 2023.

\bibitem[DKTZ22a]{diakonikolas2022learningr}
Ilias Diakonikolas, Vasilis Kontonis, Christos Tzamos, and Nikos Zarifis.
\newblock Learning a single neuron with adversarial label noise via gradient descent.
\newblock In {\em Conference on learning theory}, pages 4313--4361. PMLR, 2022.

\bibitem[DKTZ22b]{diakonikolas2022learning}
Ilias Diakonikolas, Vasilis Kontonis, Christos Tzamos, and Nikos Zarifis.
\newblock Learning general halfspaces with adversarial label noise via online gradient descent.
\newblock In {\em International Conference on Machine Learning}, pages 5118--5141. PMLR, 2022.

\bibitem[DKZ20]{diakonikolas2020near}
Ilias Diakonikolas, Daniel Kane, and Nikos Zarifis.
\newblock Near-optimal sq lower bounds for agnostically learning halfspaces and relus under gaussian marginals.
\newblock {\em Advances in Neural Information Processing Systems}, 33:13586--13596, 2020.

\bibitem[DMRT24]{diakonikolas2024fast}
Ilias Diakonikolas, Mingchen Ma, Lisheng Ren, and Christos Tzamos.
\newblock Fast co-training under weak dependence via stream-based active learning.
\newblock In {\em Forty-first International Conference on Machine Learning}, 2024.

\bibitem[FCG20]{frei2020agnostic}
Spencer Frei, Yuan Cao, and Quanquan Gu.
\newblock Agnostic learning of a single neuron with gradient descent.
\newblock {\em Advances in neural information processing systems}, 33:5417--5428, 2020.

\bibitem[GTX{\etalchar{+}}24]{gajjar2024agnostic}
Aarshvi Gajjar, Wai~Ming Tai, Xu~Xingyu, Chinmay Hegde, Christopher Musco, and Yi~Li.
\newblock Agnostic active learning of single index models with linear sample complexity.
\newblock In {\em The Thirty Seventh Annual Conference on Learning Theory}, pages 1715--1754. PMLR, 2024.

\bibitem[GV25]{guo2024agnostic}
Anxin Guo and Aravindan Vijayaraghavan.
\newblock Agnostic learning of arbitrary relu activation under gaussian marginals.
\newblock In {\em The Thirty Eighth Annual Conference on Learning Theory}, volume 291 of {\em Proceedings of Machine Learning Research}, pages 2592--2631. {PMLR}, 2025.

\bibitem[HY15]{hanneke2015minimax}
Steve Hanneke and Liu Yang.
\newblock Minimax analysis of active learning.
\newblock {\em J. Mach. Learn. Res.}, 16(1):3487--3602, 2015.

\bibitem[KKSK11]{kakade2011efficient}
Sham Kakade, Varun Kanade, Ohad Shamir, and Adam Kalai.
\newblock Efficient learning of generalized linear and single index models with isotonic regression.
\newblock {\em Advances in Neural Information Processing Systems}, 24, 2011.

\bibitem[KMT24a]{kontonis2024active}
Vasilis Kontonis, Mingchen Ma, and Christos Tzamos.
\newblock Active classification with few queries under misspecification.
\newblock In {\em The Thirty-eighth Annual Conference on Neural Information Processing Systems}, 2024.

\bibitem[KMT24b]{VMC24}
Vasilis Kontonis, Mingchen Ma, and Christos Tzamos.
\newblock Active learning with simple questions.
\newblock In {\em The Thirty Seventh Annual Conference on Learning Theory, June 30 - July 3, 2023, Edmonton, Canada}, volume 247 of {\em Proceedings of Machine Learning Research}, pages 3064--3098. {PMLR}, 2024.

\bibitem[KMT24c]{kontonis2024gain}
Vasilis Kontonis, Mingchen Ma, and Christos Tzamos.
\newblock The gain from ordering in online learning.
\newblock {\em Advances in Neural Information Processing Systems}, 36, 2024.

\bibitem[LT25]{li2025nearoptimal}
Yi~Li and Wai~Ming Tai.
\newblock Near-optimal active regression of single-index models.
\newblock In {\em The Thirteenth International Conference on Learning Representations}, 2025.

\bibitem[O'D14]{o2014analysis}
Ryan O'Donnell.
\newblock {\em Analysis of boolean functions}.
\newblock Cambridge University Press, 2014.

\bibitem[She21]{shen2021power}
Jie Shen.
\newblock On the power of localized perceptron for label-optimal learning of halfspaces with adversarial noise.
\newblock In {\em International Conference on Machine Learning}, pages 9503--9514. PMLR, 2021.

\bibitem[VYS21]{vardi2021learning}
Gal Vardi, Gilad Yehudai, and Ohad Shamir.
\newblock Learning a single neuron with bias using gradient descent.
\newblock {\em Advances in Neural Information Processing Systems}, 34:28690--28700, 2021.

\bibitem[WZDD23]{wang2023robustly}
Puqian Wang, Nikos Zarifis, Ilias Diakonikolas, and Jelena Diakonikolas.
\newblock Robustly learning a single neuron via sharpness.
\newblock In {\em International conference on machine learning}, pages 36541--36577. PMLR, 2023.

\bibitem[YZ17]{yan2017revisiting}
Songbai Yan and Chicheng Zhang.
\newblock Revisiting perceptron: Efficient and label-optimal learning of halfspaces.
\newblock {\em Advances in Neural Information Processing Systems}, 30, 2017.

\bibitem[ZWDD25]{zarifis2025robustly}
Nikos Zarifis, Puqian Wang, Ilias Diakonikolas, and Jelena Diakonikolas.
\newblock Robustly learning monotone generalized linear models via data augmentation.
\newblock In {\em The Thirty Eighth Annual Conference on Learning Theory}, volume 291 of {\em Proceedings of Machine Learning Research}, pages 5921--5990. {PMLR}, 2025.

\end{thebibliography}
\bibliographystyle{alpha}


\newpage

\appendix

\section*{Appendix}

\paragraph{Structure of Appendix} 
The Appendix is organized as follows. 
In \Cref{app background}, we record the 
notation and mathematical background required for our technical 
sections. In \Cref{app sphere}, we present omitted details 
from \Cref{sec sphere}. 
In \Cref{app regression}, we provide a full 
version of \Cref{sec regression}, including 
a detailed discussion on the update rule for $w$ 
and $(r,t)$, missing technical lemmas 
and proofs, and  
the proof of \Cref{th query learning} and the corresponding 
learning algorithm. 
Finally, in \Cref{app lb}, we give the detailed 
proofs of our lower bound results from \Cref{sec lower}.

\section{Preliminaries and Related Background}\label{app background}

\subsection{Problem Definitions and Notation}
We first define the task of Agnostic Learning with queries 
and the corresponding task in the active learning setting.

\begin{definition}[Agnostic ReLU Regression with Queries]
 Let $\sigma(z)= \max\{z,0\}$ be the ReLU function.
 A labeling function $y(x): \R^d \to \R$ is a random function that maps each $x \in \R^d$ to an unknown real-valued random variable. 
    For each $h: \R^d \to \R$, denote by $\err(h) = \E_{x \sim N(0,I)} \left(h(x)-y(x)\right)^2/2$, 
    $\opt:= \min_{W: \norm{W} \le R, t\ge 0} \err(\sigma(W\cdot x-t))$ and $\sigma^*(x) = \sigma(W^*\cdot x-t^*)$ be any ReLU with error $\opt$. 
    A query takes $x \in \R^d$ as an input and returns a label $y \sim y(x)$. We say that a learning algorithm $\A$ is a constant-factor approximate learner if for every labeling function $y(x)$, and for every $\epsilon,\delta \in (0,1)$, it outputs some hypothesis $\hat{h}: \R^d \to \R$ by adaptively making queries, such that with probability at least $1-\delta$, $\err(\hat{h}) \le  O(\opt) + \epsilon$. The query complexity of $\A$ is the total number of queries it uses during the learning process.
\end{definition}

\begin{definition}[Pool-Based Active Learning for Agnostic ReLU Regression] 
    Let $\sigma: \R \to \R$ be a known activation function and $w^* \in \s^{d-1}$ be a unit vector. Let $D$ be a distribution over $\R^d \times \{\pm 1\}$ such that $D_x$, the marginal distribution over $x$, is the standard Gaussian distribution $N(0,I)$. 
    For each $h : \R^d \to \R$, denote by $\err(h) = \E_{(x,y) \sim D} \left(h(x)-y\right)^2/2$, $\opt:= \min_{w: \norm{w} \le R} \err(\sigma(w\cdot x))$ and $\sigma^*(x) = \sigma(W^*\cdot x-t^*)$ be any ReLU with error $\opt$.
    Let $S$ be a set of $m$ \iid labeled examples drawn from $D$. An active learning algorithm (with label query access) is given $S$ but with hidden labels and is allowed to make a label query for each $x \in S$ and observe its label $y(x)$.  We say that a learning algorithm $\A$ is a constant-factor approximate learner if for every distribution $D$ and for every $\epsilon,\delta \in (0,1)$, it outputs some $\hat{h} \in H$ by adaptively making label queries over a set of $m$ examples drawn \iid from $D$, such that with probability at least $1-\delta$, $\err(\hat{h}) \le  O(\opt) + \eps$. The label complexity of $\A$ is the total number label queries made over $S$ during the learning process. 
\end{definition}

We will without loss of generality assume $\opt \le \eps$, since the final error guarantee is $O(\OPT+\eps)$.
We remark that in some parts of the paper, we will also consider the case where $\sigma$ is not a ReLU but a general function and $W^* \in \s^{d-1}$. 
In that case, we call the problem agnostic learning (spherical) generalized linear model (GLM) and for a hypothesis $\sigma(w\cdot x)$, we use $\err(w)$ to denote its error accordingly.
For a vector $W$ in $\R^d$, we use $\norm{W}$ to denote it $\ell_2$ norm and use the lower case $w$ to denote its direction $W/\norm{W}$. For a one-dimensional standard random variable $z\sim N(0,1)$ and $t\ge 0$, we denote by $\Phi(t):= \Pr_{z}(z>t)$ and $\psi(t)$ the value of the density function of $z$ at $t$. For a ReLU activation $\sigma(z-t)$, we denote by $V(t):=\E_{z\sim N(0,1)}\sigma^2(z-t)$ its second moment and call $p$ its bias.
For a real valued function $f:\R \to \R$, we denote by $\norm{f}_2^2:=\E_{z \sim N(0,1)} f^2(z)$ its squared $L_2$ norm in Gaussian space and for $a \in [0,1]$, denote by $T_a f(z):= \E_{s\sim N(0,1)} f(az+\sqrt{1-a^2}s)$.

\subsection{Background on Ornstein–Uhlenbeck Semigroup}
Here we provide basic background on the 
Ornstein–Uhlenbeck Semigroup. We refer the readers to \cite{o2014analysis} for additional background.

\begin{definition}[Ornstein–Uhlenbeck Semigroup]
    Let $\rho \in (0,1)$. The Ornstein–Uhlenbeck semigroup $T_\rho : L_2 \to L_2$ is a linear mapping that maps a function $f$ to a function $T_\rho f$ defined as 
    \begin{align*}
        T_\rho f(z) = \E_{s \sim N(0,1)} f(\rho z + \sqrt{1-\rho^2} s).
    \end{align*}
\end{definition}

\begin{definition}[Ornstein–Uhlenbeck Operator]
    The Ornstein–Uhlenbeck operator $L : L_2 \to L_2$ is a linear mapping that maps a function $f$ to a function $L f$ defined as 
    \begin{align*}
        L f(z) = \frac{d T_\rho f}{d \rho} \mid_{\rho = 1}(z)
    \end{align*}
\end{definition}

We list several facts about the Ornstein–Uhlenbeck semigroup and Ornstein–Uhlenbeck operator.

\begin{fact}[\cite{o2014analysis}]\label{fact ou}
    Let $f,g \in L_2(N)$. The follows statements hold.
    \begin{enumerate}[leftmargin=*]
    \item For $\rho \in [0,1]$, $\norm{T_\rho f}_2^2$ is a non-decreasing function with respect to $\rho$.
    \item For $a,b \in [0,1]$, $\E_{z \in N(0,1)} T_a f(z) T_b f(z) = \norm{T_{\sqrt{ab}} f(z)}_2^2$
    \item \label{item ou1} If $f$ is a differentiable function, then for $\rho \in (0,1)$ $(T_\rho f(z))' = \rho T_\rho f'(z)$. 
    \item \label{item ou2} $\frac{d T_\rho f}{d \rho} = \frac{1}{\rho} LT_\rho f$
    \item \label{item ou3} $\E_{z \sim N(0,1)} \left(f(z)LT_\rho g(z) \right) = \E_{z \sim N(0,1)} \left(f'(z)(T_\rho g(z))' \right)$
    \end{enumerate}
\end{fact}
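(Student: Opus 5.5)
Each item follows from the Hermite expansion of $L_2(N(0,1))$. Let $\{h_k\}_{k\ge 0}$ be the orthonormal (probabilists') Hermite polynomials, and write $f=\sum_k \hat f_k h_k$ and $g=\sum_k \hat g_k h_k$. The basic identity I would establish first is
\begin{equation*}
T_\rho h_k = \rho^k h_k .
\end{equation*}
The cleanest route is the generating function $e^{\lambda z-\lambda^2/2}=\sum_k \lambda^k h_k(z)/\sqrt{k!}$. Substituting $z\mapsto \rho z+\sqrt{1-\rho^2}s$ and averaging over $s\sim N(0,1)$ gives $e^{\lambda\rho z-\lambda^2\rho^2/2}$. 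Comparing coefficients of $\lambda^k$ then yields the identity. By linearity and $L_2$-contractivity of $T_\rho$ (Jensen), it follows that $T_\rho f=\sum_k \rho^k \hat f_k h_k$.

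Items 1 and 2 then follow from Parseval. For item 1, $\norm{T_\rho f}_2^2=\sum_k \rho^{2k}\hat f_k^2$, which is termwise non-decreasing in $\rho\in[0,1]$. For item 2, $\E\, T_af\,T_bf=\sum_k (ab)^k\hat f_k^2=\sum_k(\sqrt{ab})^{2k}\hat f_k^2=\norm{T_{\sqrt{ab}}f}_2^2$.

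Item 3 does not need Hermite polynomials. Differentiating $f(\rho z+\sqrt{1-\rho^2}s)$ in $z$ gives $\rho f'(\rho z+\sqrt{1-\rho^2}s)$. Exchanging the derivative and the expectation over $s$ is justified by dominated convergence when $f'\in L_2$; alternatively, one can check it on Hermite polynomials using $h_k'=\sqrt{k}\,h_{k-1}$.

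For item 4, differentiate the expansion to get $L h_k = k h_k$, so $L\sum_k c_k h_k=\sum_k k c_k h_k$. Hence
\begin{equation*}
\frac{d}{d\rho}T_\rho f=\sum_k k\rho^{k-1}\hat f_k h_k=\frac{1}{\rho}\sum_k k\,\rho^k\hat f_k h_k=\frac1\rho L T_\rho f .
\end{equation*}
The same conclusion also follows from the semigroup law $T_{\rho'}T_\rho=T_{\rho\rho'}$.

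For item 5, I would identify $L$ as a differential operator: on smooth $g$, $Lg = z g'(z)-g''(z)$. This can be checked on $h_k$ via the Hermite ODE $h_k''-zh_k'=-kh_k$. Gaussian integration by parts, $\E[\varphi'(z)]=\E[z\varphi(z)]$, applied to $\varphi=f\,(T_\rho g)'$, gives
\begin{equation*}
\E\bigl[f'(T_\rho g)'\bigr]+\E\bigl[f\,(T_\rho g)''\bigr]=\E\bigl[z f\,(T_\rho g)'\bigr],
\end{equation*}
which rearranges to $\E[f\,LT_\rho g]=\E[f'(T_\rho g)']$. Note that $T_\rho g$ is smooth for $\rho<1$, so the derivatives on it are legitimate. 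An equivalent check in coefficients is $\sum_k k\rho^k\hat f_k\hat g_k=\sum_k k\rho^k \hat f_k \hat g_k$, using $h_k'=\sqrt{k}h_{k-1}$.

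The main obstacle is not algebraic but technical: justifying the interchange of derivatives, sums and expectations when $f,g$ are only in $L_2$ (or only have $f'\in L_2$). I would handle this by proving every identity for polynomials, where the sums are finite, and then extending by density of polynomials in $L_2(N(0,1))$. Item 4 additionally needs the geometric decay $\rho^k$ for $\rho<1$ to make the series for $\frac{d}{d\rho}T_\rho f$ converge in $L_2$.
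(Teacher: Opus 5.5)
Your proposal is correct. The generating-function proof of $T_\rho h_k=\rho^k h_k$, Parseval for items 1--2, $h_k'=\sqrt{k}\,h_{k-1}$ and $Lh_k=kh_k$ for items 3--4, and Gaussian integration by parts with $L=z\partial_z-\partial_z^2$ for item 5 all check out, and your sign for $L$ matches the paper's definition of $Lf$ as $\frac{d}{d\rho}T_\rho f$ at $\rho=1$. The paper proves none of this itself and simply cites O'Donnell's book, whose treatment of the Ornstein--Uhlenbeck semigroup uses essentially this same Hermite-expansion and integration-by-parts argument.
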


\subsection{Background on Gaussian Integral}
In this section, we provide background on Gaussian Integral. Let $z\sim N(0,1)$ be the standard normal random variable. For $t\ge 0$, we denote by $\Phi(t):= \Pr_{z}(z>t)$ and $\psi(t)$ the value of the density function of $z$ at $t$. For a ReLU activation $\sigma(z-t)$, we denote by $V(t):=\E_{z\sim N(0,1)}\sigma^2(z-t)$ its second moment. We provide detailed characterization for $\Phi(t),\psi(t),V(t)$. First, we provide the following fact that characterizes $\Phi(t)$.

\begin{fact}[Komatsu's Inequality]\label{fact bias}
    For any $t \ge 0$, $\Phi(t)$ can be bounded as  
    \begin{align*}
        \sqrt{\frac{2}{\pi}} \frac{\exp(-t^2/2)}{t+\sqrt{t^2+4}} \le \Phi(t) \le \sqrt{\frac{2}{\pi}} \frac{\exp(-t^2/2)}{t+\sqrt{t^2+2}}. 
    \end{align*}
\end{fact}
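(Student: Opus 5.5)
The plan is to compare $\Phi$ directly with the two candidate bounds through their derivatives, using that both sides vanish as $t\to\infty$. Since $\psi(t)=e^{-t^2/2}/\sqrt{2\pi}$, we have $\sqrt{2/\pi}\,e^{-t^2/2}=2\psi(t)$. For $k>0$ define
\[
h_k(t):=\frac{2\psi(t)}{t+\sqrt{t^2+k}}=\frac{2}{k}\,\psi(t)\bigl(s_k(t)-t\bigr),\qquad s_k(t):=\sqrt{t^2+k}.
\]
The claim then reads $h_4(t)\le \Phi(t)\le h_2(t)$ for all $t\ge 0$. Set $D_k(t):=\Phi(t)-h_k(t)$. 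Both $\Phi(t)$ and $h_k(t)$ tend to $0$ as $t\to\infty$, so $D_k(t)\to 0$. It therefore suffices to show two things: $D_2$ is increasing on $[0,\infty)$, which gives $D_2\le 0$, and $D_4$ is decreasing on $[0,\infty)$, which gives $D_4\ge 0$.

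To get the sign of $D_k'$, I will use $\Phi'=-\psi$, $\psi'=-t\psi$ and $s_k'=t/s_k$. Writing $s=s_k(t)$, a short computation gives
\[
h_k'(t)=\frac{2}{k}\psi(t)\Bigl[-t(s-t)+\frac{t-s}{s}\Bigr]=-\frac{2}{k}\psi(t)\,\frac{(s-t)(ts+1)}{s}.
\]
Hence
\[
D_k'(t)=\psi(t)\Bigl[\frac{2(s-t)(ts+1)}{ks}-1\Bigr].
\]
Its sign is the sign of $2(s-t)(ts+1)-ks$. Multiplying by $s+t>0$ and using $(s-t)(s+t)=k$, this sign equals the sign of
\[
2k(ts+1)-ks(s+t)=k\,(ts+2-s^2)=k\,\bigl(t(s-t)+2-k\bigr).
\]

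The two cases then finish the argument. For $k=2$ the bracket is $t(s-t)\ge 0$, so $D_2'\ge 0$ and $D_2$ increases to $0$, giving the upper bound $\Phi\le h_2$. For $k=4$ the bracket is $t(s-t)-2=\frac{4t}{s+t}-2$, which is negative because $t<s$. So $D_4'<0$ and $D_4$ decreases to $0$, giving the lower bound $\Phi\ge h_4$.

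The only delicate point is getting the derivative of $h_k$ right and seeing that rationalizing with $(s-t)(s+t)=k$ collapses everything into the single expression $t(s-t)+2-k$. Once that identity is in hand, both inequalities follow at once. The boundary behaviour at infinity is immediate, since $h_k(t)\le 2\psi(t)/\sqrt{k}\to 0$.
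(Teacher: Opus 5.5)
The paper gives no proof of this statement: it quotes Komatsu's inequality as a known background fact, so there is no argument of the paper's to compare against. Your proof is correct and self-contained. The following steps all check out:
\begin{itemize}
\item the identity $\sqrt{2/\pi}\,e^{-t^2/2}=2\psi(t)$;
\item the rationalized form $h_k=\tfrac{2}{k}\psi(t)\,(s_k-t)$;
\item the derivative $h_k'=-\tfrac{2}{k}\psi(t)(s-t)(ts+1)/s$;
\item the reduction, via $(s-t)(s+t)=k$, of the sign of $D_k'$ to the sign of $t(s-t)+2-k$.
\end{itemize}
Combined with $D_k(t)\to 0$ as $t\to\infty$, monotonicity gives both bounds.

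What your route buys over the bare citation is self-containedness, plus a clear view of where the constants come from. Since $t(s-t)=kt/(s+t)$ increases from $0$ to $k/2$, $D_k$ is monotone exactly when $k\le 2$ (nondecreasing) or $k\ge 4$ (decreasing).

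For $2<k<4$, $D_k$ first decreases and then increases back to $0$. So your sign computation, together with $D_k(0)=\tfrac12-\sqrt{2/(\pi k)}$, even gives the sharper upper bound with $\sqrt{t^2+8/\pi}$ in place of $\sqrt{t^2+2}$. It also shows that the constant $4$ in the lower bound cannot be decreased.
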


Next, we provide the following fact calculated by \cite{guo2024agnostic} that relates $V(t), \Phi(t), \psi(t)$.
\begin{fact}[see, e.g.,~Appendix A, \cite{guo2024agnostic}]
    For any $t \ge 0$, the following fact holds.
    \begin{align*}
    &\E_{z \sim N(0,1)}z \Ind(z>t) = \psi(t) \\
    &\E_{z \sim N(0,1)}z(z-t) \Ind(z>t) = \Phi(t) \\
        &\E_{z \sim N(0,1)}z^2 \Ind(z>t) = \Phi(t) + t\psi(t) = (1+o_t(1))t^2\Phi(t) \\
        &V(t) = (t^2+1)\Phi(t) - t\psi(t) = (2+o_t(1))\Phi(t)/t^2 \\
    \end{align*}
\end{fact}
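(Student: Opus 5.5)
All four identities follow from the Gaussian density relation $\psi'(z)=-z\psi(z)$ together with integration by parts. The asymptotic statements (as $t\to\infty$) then follow from a Mills-ratio expansion of $\Phi(t)$ in terms of $\psi(t)$. I would treat the exact identities first and the asymptotics second.

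\textbf{Exact identities.}
\begin{enumerate}
\item Since $z\psi(z)=-\psi'(z)$, we get $\E z\Ind(z>t)=\int_t^\infty z\psi(z)\,dz=[-\psi(z)]_t^\infty=\psi(t)$.
\item For the second moment, integrate by parts with $u=z$ and $dv=z\psi(z)\,dz=-d\psi(z)$:
\begin{align*}
\int_t^\infty z^2\psi(z)\,dz=[-z\psi(z)]_t^\infty+\int_t^\infty\psi(z)\,dz=t\psi(t)+\Phi(t).
\end{align*}
The boundary term at infinity vanishes because of Gaussian decay.
\item Subtracting $t\,\E z\Ind(z>t)=t\psi(t)$ from item 2 gives $\E z(z-t)\Ind(z>t)=\Phi(t)$.
\item Expand the square:
\begin{align*}
V(t)=\E (z-t)^2\Ind(z>t)=\E z^2\Ind(z>t)-2t\,\E z\Ind(z>t)+t^2\Phi(t).
\end{align*}
Substituting items 1 and 2 gives $\Phi+t\psi-2t\psi+t^2\Phi=(t^2+1)\Phi(t)-t\psi(t)$.
\end{enumerate}

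\textbf{Asymptotics.} The tool is a three-term Mills-ratio bound
\begin{align*}
\psi(t)\Big(\frac1t-\frac1{t^3}\Big)\le \Phi(t)\le \psi(t)\Big(\frac1t-\frac1{t^3}+\frac3{t^5}\Big),\qquad t>0.
\end{align*}
I would prove it by repeatedly integrating $\Phi(t)=\int_t^\infty \frac1z\cdot z\psi(z)\,dz$ by parts. This yields $\Phi=\psi/t-\int_t^\infty \psi/z^2$, then $\int_t^\infty\psi/z^2=\psi/t^3-3\int_t^\infty\psi/z^4$, and so on. The remainder integrals are positive, so truncating at alternating steps gives the two one-sided bounds.

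With this bound, $t\psi(t)=t^2\Phi(t)(1+O(1/t^2))$. Hence $\Phi+t\psi=(1+o_t(1))t^2\Phi$. (Komatsu's inequality, \Cref{fact bias}, would also suffice for this first-order statement.)

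\textbf{Main obstacle.} The delicate step is $V(t)=(2+o_t(1))\Phi(t)/t^2$. Here the leading terms $t^2\Phi$ and $t\psi$ cancel, so a first-order estimate is useless and the $1/t^3$ and $1/t^5$ terms of the expansion are genuinely needed. Write $\Phi=\psi(1/t-1/t^3+\theta\cdot 3/t^5)$ with $\theta\in[0,1]$. Then
\begin{align*}
(t^2+1)\Phi-t\psi=\psi\Big(t-\frac1t+\frac{3\theta}{t^3}+\frac1t-\frac1{t^3}+\frac{3\theta}{t^5}-t\Big)=\psi\Big(\frac{3\theta-1}{t^3}+\frac{3\theta}{t^5}\Big).
\end{align*}

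This is not yet sharp enough, because $\theta$ is only known to lie in $[0,1]$. I would therefore push the integration by parts one step further, to
\begin{align*}
\Phi=\psi\Big(\frac1t-\frac1{t^3}+\frac3{t^5}-\theta'\cdot\frac{15}{t^7}\Big),\qquad \theta'\in[0,1].
\end{align*}
Recomputing with this form gives $V=\psi(2/t^3+O(t^{-5}))$. Finally, $\psi/t=\Phi(1+O(t^{-2}))$, so $V=(2+o_t(1))\Phi(t)/t^2$. This extra order in the expansion is exactly what the cancellation requires.
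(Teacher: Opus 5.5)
Your proof is correct. The exact identities follow just as you write from $\psi'(z)=-z\psi(z)$ and one integration by parts. You also correctly identify the only delicate point: in $V(t)=(t^2+1)\Phi(t)-t\psi(t)$ the leading terms cancel, so $\Phi/\psi$ must be known up to $o(t^{-5})$. Your four-term expansion with remainder $-15\theta'/t^7$ supplies exactly this, giving $V=\psi(t)\bigl(2/t^3+(3-15\theta')/t^5-15\theta'/t^7\bigr)$.

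The paper does not prove this fact at all; it imports it from Appendix~A of \cite{guo2024agnostic}. Your argument is therefore a self-contained substitute for a citation, not a reconstruction of an in-paper proof.

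For the asymptotic part there is a shorter route that sidesteps the cancellation: shift $z=t+u$. Since $\psi(t+u)=\psi(t)e^{-tu-u^2/2}$, one has $\Phi(t)=\psi(t)\int_0^\infty e^{-tu-u^2/2}\,du$ and $V(t)=\psi(t)\int_0^\infty u^2e^{-tu-u^2/2}\,du$. The sandwich $1-u^2/2\le e^{-u^2/2}\le 1$ then gives at once $\psi(t)(1/t-1/t^3)\le\Phi(t)\le\psi(t)/t$ and $\psi(t)(2/t^3-12/t^5)\le V(t)\le 2\psi(t)/t^3$. Hence $V(t)=(2+O(t^{-2}))\Phi(t)/t^2$, with explicit non-asymptotic constants and no higher-order Mills terms. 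Your Mills-ratio route costs an extra integration by parts, but it extends systematically to the full asymptotic series if more precision is ever needed.
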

For large $t>0$, it is useful to mention the following asymptotic relation between $\Phi(t)$ and $\psi(t)$,
\begin{align*}
    \frac{\Phi(t)}{\psi(t)} \sim \frac{1}{t} - \frac{1}{t^3} + \frac{3}{t^5} - \dots.
\end{align*}

The following useful Stein's lemma will also be frequently used in 
our proofs.

\begin{fact}[Stein's Lemma]\label{fact stein}
    Let $x \sim N(\mu,\sigma^2I)$ be a Gaussian vector in $\R^d$. Let $g: \R^d \to \R $ such that $\E_{x\sim N(\mu,\sigma^2I)} g(x)x$ and $\E_{x\sim N(\mu,\sigma^2I)} \nabla g(x)$ exist. Then the following fact holds
    \begin{align*}
        \E_{x\sim N(\mu,\sigma^2I)} g(x)(x-\mu) = \sigma^2 \E_{x\sim N(\mu,\sigma^2I)} \nabla g(x).
    \end{align*}
\end{fact}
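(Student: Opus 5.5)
The statement to prove is Stein's lemma (\Cref{fact stein}): $\E g(x)(x-\mu)=\sigma^2\E\nabla g(x)$ for $x\sim N(\mu,\sigma^2 I)$. The plan is a coordinatewise reduction to one dimension followed by integration by parts. The key identity is that the Gaussian density $\phi_{\mu,\sigma}(x)\propto\exp(-\norm{x-\mu}^2/(2\sigma^2))$ satisfies
$$\nabla\phi_{\mu,\sigma}(x) = -\frac{x-\mu}{\sigma^2}\,\phi_{\mu,\sigma}(x).$$
Hence $\E g(x)(x_i-\mu_i) = -\sigma^2\int g\,\partial_i\phi_{\mu,\sigma}\,dx$. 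Integrating by parts in $x_i$ moves the derivative onto $g$ and gives $\sigma^2\int \partial_i g\,\phi_{\mu,\sigma}\,dx = \sigma^2\E\partial_i g(x)$. Collecting all coordinates $i$ yields the vector identity.

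The steps, in order, are as follows.
\begin{enumerate}
\item Fix a coordinate $i$ and use Fubini to condition on $x_{-i}$. The coordinates of $x$ are independent, and the two expectations appearing in the statement are assumed to exist, so the order of integration may be exchanged.
\item Prove the one-dimensional statement: for $u\sim N(m,\sigma^2)$ and an absolutely continuous $f$ with $\E|f'(u)|<\infty$, we have $\E f(u)(u-m)=\sigma^2\E f'(u)$.
\item Prove this one-dimensional identity without relying on boundary terms. Write $f(u)=f(m)+\int_m^u f'(s)\,ds$; the constant term contributes $0$ since $\E(u-m)=0$. Then swap the order of integration (Fubini/Tonelli):
$$\int_m^\infty\!\!\int_m^u f'(s)\,ds\,(u-m)\phi(u)\,du=\int_m^\infty f'(s)\int_s^\infty (u-m)\phi(u)\,du\,ds=\sigma^2\int_m^\infty f'(s)\phi(s)\,ds.$$
This uses $\int_s^\infty (u-m)\phi(u)\,du=\sigma^2\phi(s)$. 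The half-line $u<m$ is handled symmetrically. Absolute integrability, needed for Tonelli, follows from $\E|f'|<\infty$, because the inner integral is exactly $\sigma^2\phi(s)\ge 0$.
\item Apply the one-dimensional result to the section $f(u)=g(x_1,\dots,u,\dots,x_d)$ for almost every $x_{-i}$, then integrate over $x_{-i}$.
\end{enumerate}

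The main obstacle is regularity. The paper applies the lemma to ReLU-type functions (for example $g$ involving $\sigma(w\cdot x-t)$ times indicators). These functions are only piecewise differentiable, so $\nabla g$ must be read as the a.e./weak gradient, and $g$ must be locally absolutely continuous along lines. The Fubini-based proof in step 3 is chosen precisely because it avoids separately controlling the boundary term $g\phi|_{-\infty}^{\infty}$; it needs only integrability of $\nabla g$ against the Gaussian. If $g$ has a jump, as with indicators, the identity must be read with a distributional derivative, i.e., with delta-mass terms at the discontinuity. I will note this rather than handle it, since the statement assumes $\nabla g$ exists in the integrable sense.
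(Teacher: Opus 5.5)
The paper does not prove this statement; it is recorded as a standard background fact and then invoked, so there is no argument of the paper's to compare yours against. Your proof is the standard one and it is correct.

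In the one-dimensional step, the identity $\int_s^\infty (u-m)\phi(u)\,du=\sigma^2\phi(s)$ and its mirror image on $\{u<m\}$, combined with Tonelli, give $\E f(u)(u-m)=\sigma^2\E f'(u)$ with no boundary terms. The same computation shows that $f(u)(u-m)$ is automatically integrable once $\E|f'(u)|<\infty$.

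The only place you lean on the stated hypotheses is the passage from sections back to the full expectation, which needs $\E|g(x)(x_i-\mu_i)|<\infty$. For $\mu\neq 0$ this is slightly more than ``$\E g(x)x$ exists''. This is harmless: the left-hand side of the claimed identity must exist anyway, and the paper only uses $\mu=0$, $\sigma=1$.

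What your route adds over the bare citation is the regularity hypothesis that the literal statement omits, namely absolute continuity of $g$ along lines. With only an a.e.\ gradient the statement is false: $g(x)=\Ind\{x_1>0\}$ gives $\E g(x)x=(2\pi)^{-1/2}e_1$, while the a.e.\ gradient vanishes.

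This hypothesis is exactly what the paper's applications satisfy. In \Cref{lm signal spherical} and \Cref{lm accurate update}, only $\E g(x)(x\cdot v)$ for $v\perp w$ is needed. Along lines in such directions the indicator factor ($\sigma'(w\cdot x)$ or $\Ind\{w\cdot x>\bar{t}_i\}$) is constant, and the remaining ReLU factor is Lipschitz. So you can run your coordinatewise argument in an orthonormal basis containing $v$, which rotation invariance of $N(\mu,\sigma^2 I)$ allows. That yields the projected identities directly, without ever forming the distributional term $\sigma''(w\cdot x)\,w$ that the paper writes down and then removes by projecting onto $w^\perp$.
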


\section{Omitted Details from \Cref{sec sphere}}\label{app sphere}

Here we provide the full details omitted from 
\Cref{sec sphere}.

\subsection{Proof of \Cref{lm noiseless}}\label{app noiseless}
We provide the proof of \Cref{lm noiseless} and its restatement as \Cref{lm noiseless re}.

\begin{lemma}[Restatement of \Cref{lm noiseless}]\label{lm noiseless re}
 Let $\sigma: \R \to \R$ be any activation function such that $\sigma' \in L_2(N)$ and let $w \in \s^{d-1}$ be any unit vector such that $\theta:=\theta(w,w^*) < \pi/2$, then 
 \begin{align*}
     \ell(w) = \int_0^\theta \sin s \norm{T_{\sqrt{\cos s}} \sigma'}_2^2 ds \le \frac{\pi}{2} \sin^2 \theta \norm{\sigma'}_2^2.
 \end{align*}
\end{lemma}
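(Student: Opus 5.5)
We express $\ell(w)$ through a one-dimensional correlation, differentiate in the angle, and identify the derivative via the OU semigroup facts (\Cref{fact ou}). Write $\rho=\cos\theta=w\cdot w^*$. Since $(w\cdot x,w^*\cdot x)$ is a pair of standard Gaussians with correlation $\rho$, we can represent $w\cdot x=\rho z+\sqrt{1-\rho^2}s$ with $z=w^*\cdot x$ and $s$ an independent standard normal. Then
\[
\ell(w)=\tfrac12\bigl(2\norm{\sigma}_2^2-2\,\E_z \sigma(z)T_\rho\sigma(z)\bigr)=\norm{\sigma}_2^2-\E_z\sigma(z)T_\rho\sigma(z).
\]
By item 2 of \Cref{fact ou} (with $a=1,b=\rho$) this equals $\norm{\sigma}_2^2-\norm{T_{\sqrt\rho}\sigma}_2^2$. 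Define $F(\rho):=\norm{T_{\sqrt\rho}\sigma}_2^2=\E\,\sigma T_\rho\sigma$. At $\theta=0$ we have $\ell=0$, so it suffices to compute $\frac{d}{d\theta}\ell=-F'(\cos\theta)\cdot(-\sin\theta)=\sin\theta\,F'(\cos\theta)$, and then integrate from $0$ to $\theta$.

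The key step is to show $F'(\rho)=\norm{T_{\sqrt\rho}\sigma'}_2^2$. Using item 4, $\frac{d}{d\rho}T_\rho\sigma=\frac1\rho LT_\rho\sigma$. So $F'(\rho)=\frac1\rho\E[\sigma\,LT_\rho\sigma]$, which by item 5 equals $\frac1\rho\E[\sigma'(T_\rho\sigma)']$. By item 3, $(T_\rho\sigma)'=\rho T_\rho\sigma'$, so $F'(\rho)=\E[\sigma' T_\rho\sigma']=\norm{T_{\sqrt\rho}\sigma'}_2^2$, again by item 2. Plugging $\rho=\cos s$ gives the claimed integral identity.

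For the inequality, item 1 gives that $\norm{T_{\sqrt{\cos s}}\sigma'}_2^2\le\norm{\sigma'}_2^2$. Hence $\ell(w)\le\norm{\sigma'}_2^2(1-\cos\theta)$. For $\theta<\pi/2$ we have $1-\cos\theta=\sin^2\theta/(1+\cos\theta)\le\sin^2\theta\le\frac{\pi}{2}\sin^2\theta$, which gives the bound with room to spare.

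The main obstacle is regularity. One must justify differentiating under the expectation and applying items 3--5 when $\sigma$ is only assumed to have $\sigma'\in L_2$ (e.g.\ ReLU, which is not differentiable at a point), and also handle the endpoint $\rho\to1$ where $\frac1\rho$ and $T_\rho$ degenerate. I would handle this by first proving the identity for smooth $\sigma$ (or via the Hermite expansion $\sigma=\sum c_k h_k$, where $F(\rho)=\sum c_k^2\rho^k$ and $F'(\rho)=\sum kc_k^2\rho^{k-1}=\norm{T_{\sqrt\rho}\sigma'}_2^2$ since $\sigma'=\sum c_k\sqrt k\,h_{k-1}$). This Hermite argument in fact gives a clean, fully rigorous route, with convergence guaranteed by $\sum kc_k^2=\norm{\sigma'}_2^2<\infty$. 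The caveat is that $\sigma$ must lie in $L_2$, which follows from $\sigma'\in L_2$ under the Gaussian Poincaré inequality up to an additive constant (and the constant cancels in $\ell$).
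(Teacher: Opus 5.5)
Your proof is correct and takes essentially the same route as the paper. Both arguments reduce $\ell(w)$ to $\norm{\sigma}_2^2-\E\,\sigma\,T_{\cos\theta}\sigma$ and identify $\frac{d}{d\rho}\E\,\sigma T_\rho\sigma=\norm{T_{\sqrt{\rho}}\sigma'}_2^2$ using items 4, 5, 3 and 2 of \Cref{fact ou}. The paper integrates $\frac{d}{ds}T_s\sigma$ over $[\cos\theta,1]$ and then changes variables, instead of differentiating in $\theta$, but that is the same computation.

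Your final bound $1-\cos\theta\le\sin^2\theta$ is slightly sharper than the paper's $\theta\sin\theta\le\frac{\pi}{2}\sin^2\theta$. Your Hermite-expansion argument also supplies regularity justifications that the paper leaves implicit: exchanging derivative and expectation, and continuity of $F$ as $\rho\to1$, where $LT_\rho\sigma$ may fail to stay in $L_2$. One small correction: the $1/\rho$ factor degenerates as $\rho\to0$, not as $\rho\to1$, and $\theta<\pi/2$ keeps you away from that endpoint.
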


\begin{proof}[Proof of \Cref{lm noiseless}]
We write $w^* = a w + b u$, where $a,b>0, a^2 + b^2 = 1$ and $u \in \s^{d-1}, u \perp w$. Notice that if $x \sim N(0,I)$, then $z=w \cdot x$ and $s = u \cdot x$ are independent standard one dimensional normal random variables.
\begin{align}
 \ell(w) & = \frac{1}{2}\E_{x \sim N(0,I)} \left(\sigma(w\cdot x) - \sigma(w^* \cdot x)\right)^2 = \E_{z \sim N(0,1)} \sigma^2(z) -  \E_{x \sim N(0,I)}\sigma(w\cdot x)\sigma(w^* \cdot x) \notag \\
 & = \E_{z \sim N(0,1)} \sigma^2(z) -  \E_{x \sim N(0,I)}\sigma(w\cdot x)\sigma(a w \cdot x+ b u \cdot x) \notag \\
 & = \E_{z \sim N(0,1)} \sigma^2(z) -  \E_{z,s \sim N(0,1)}\sigma(z)\sigma(a z+ b s) 
  = \E_{z \sim N(0,1)} \sigma(z) \left( \sigma(z) -  T_a\sigma(z) \right) \notag \\
  & = \E_{z \sim N(0,1)} \sigma(z) \int_a^1 \frac{d T_s \sigma(z)}{ds} ds = \E_{z \sim N(0,1)} \sigma(z) \int_a^1 \frac{1}{s} L T_s \sigma(z) ds 
   = \int_a^1 \E_{z \sim N(0,1)} \sigma(z)  \frac{1}{s} L T_s \sigma(z) ds \label{eq noiseless 1}\\
  &= \int_a^1 \E_{z \sim N(0,1)} \sigma'(z)  \frac{1}{s}  (T_s \sigma(z))' ds 
   = \int_a^1 \E_{z \sim N(0,1)} \sigma'(z) T_s \sigma'(z) ds \label{eq noiseless 2} \\ 
  &= \int_a^1 \norm{T_{\sqrt{s}} \sigma'}_2^2 ds = \int_0^\theta \sin s \norm{T_{\sqrt{\cos s}} \sigma'}_2^2 ds. \notag
\end{align}
Here, \eqref{eq noiseless 1} follows \Cref{item ou2} and \eqref{eq noiseless 2} holds because of \Cref{item ou1} and \Cref{item ou3}. The last equation follows a change of variable. By the monotone property of $\norm{T_\rho \sigma'}$, and $\theta \le \pi/2 \sin \theta$ for $\theta \in (0, \pi/2)$, we obtain that 
\begin{align*}
     \ell(w) = \int_0^\theta \sin s \norm{T_{\sqrt{\cos s}} \sigma'}_2^2 ds \le \frac{\pi}{2} \sin^2 \theta \norm{\sigma'}_2^2.
 \end{align*}
\end{proof}

\subsection{Proof of \Cref{lm signal spherical}}\label{app signal spherical}

Here we provide the proof of \Cref{lm signal spherical} and its restatement \Cref{lm signal spherical re}.

\begin{lemma}\label{lm signal spherical re}
    Let $\sigma: \R \to \R$ be any activation function such that $\sigma' \in L_2(N(0,I))$ and let $w \in \s^{d-1}$ be any unit vector such that $\theta=\theta(w,w^*)<\pi/2$, where $\err(\sigma(w^*)) = \opt$. Write $w^* = aw+bu$, where $u \in \s^{d-1}, u \perp w, a,b \ge 0, a^2+b^2 = 1$,
    then, $\proj_{w^\perp} \nabla_w \ell(w) = -b \norm{T_{\sqrt{a}}\sigma'}_2^2 u$.
\end{lemma}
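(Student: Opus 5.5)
The plan is to compute the gradient of $\ell(w)=\frac12\E_x(\sigma(w\cdot x)-\sigma(w^*\cdot x))^2$ directly, treating $w$ as a free vector in $\R^d$, and then project onto $w^\perp$. Differentiating under the expectation gives
\begin{align*}
\nabla_w \ell(w) = \E_{x\sim N(0,I)} \left(\sigma(w\cdot x)-\sigma(w^*\cdot x)\right)\sigma'(w\cdot x)\, x .
\end{align*}
Projecting onto $w^\perp$ and taking an arbitrary unit $v\perp w$, I will decompose $x$ into coordinates $z=w\cdot x$, $s=u\cdot x$, and the part orthogonal to $\mathrm{span}\{w,u\}$. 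These are independent standard Gaussians. The integrand depends on $x$ only through $(z,s)$, since $w^*\cdot x=az+bs$. Hence any component of $v$ orthogonal to $u$ (and to $w$) contributes a mean-zero independent factor and vanishes. So $\proj_{w^\perp}\nabla_w\ell(w)$ is parallel to $u$, and it suffices to compute the single scalar
\begin{align*}
\E_{z,s}\left(\sigma(z)-\sigma(az+bs)\right)\sigma'(z)\, s .
\end{align*}

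The first term has $\E[s]=0$ and is independent of $z$, so it drops. For the second term I will apply Stein's lemma (\Cref{fact stein}) in the variable $s$, with $z$ held fixed:
\begin{align*}
\E_s\, \sigma(az+bs)\,s = b\,\E_s \sigma'(az+bs) = b\, T_a\sigma'(z).
\end{align*}
This uses $a^2+b^2=1$ and the definition of $T_a$. The scalar therefore equals $-b\,\E_z \sigma'(z)\,T_a\sigma'(z)$. By item 2 of \Cref{fact ou} with the pair $(1,a)$, this is $-b\norm{T_{\sqrt a}\sigma'}_2^2$, which gives the claimed $\proj_{w^\perp}\nabla_w\ell(w)=-b\norm{T_{\sqrt a}\sigma'}_2^2u$. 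As a consistency check, the same semigroup identity already appeared in the proof of \Cref{lm noiseless}.

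The main obstacle is technical rather than conceptual: justifying the interchange of derivative and expectation, and applying Stein's lemma, when $\sigma$ is only assumed to have $\sigma'\in L_2$ (for example, a ReLU with a kink). I would handle this as follows:
\begin{itemize}
\item Argue first for smooth $\sigma$ with $\sigma'\in L_2$, where both steps are standard under Gaussian integrability.
\item Pass to general $\sigma$ by mollification, using that $T_\rho$ is a contraction on $L_2$, so both sides are continuous in $\sigma'$ in $L_2$.
\item Alternatively, interpret $\sigma'$ as a weak derivative, for which Gaussian integration by parts still holds.
\end{itemize}
A minor point is the degenerate case $b=0$, where $u$ is arbitrary and both sides vanish.
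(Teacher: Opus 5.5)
Your proposal is correct and follows essentially the same route as the paper. Both write the gradient as $\E(\sigma(w\cdot x)-\sigma(w^*\cdot x))\sigma'(w\cdot x)x$, drop the self-term by independence, apply Stein's lemma to the $\sigma(w^*\cdot x)$ term, and finish with $T_a$ and item 2 of \Cref{fact ou}. The only difference is where Stein's lemma is applied: you use it one-dimensionally in $s=u\cdot x$ after reducing to the $(z,s)$ plane, whereas the paper applies it to the full vector $x$ and then projects away the resulting $\sigma''(w\cdot x)\,w$ term. Your variant has the small advantage of never needing $\sigma''$, which is a Dirac mass for the ReLU.
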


\begin{proof}[Proof of \Cref{lm signal spherical}]
By \Cref{fact ou} and \Cref{fact stein}, we have the following calculation
    \begin{align*}
        \proj_{w^\perp} \nabla_w \ell(w) & = \proj_{w^\perp} \E_{x \sim N(0,I)} \left(\sigma(w\cdot x)-\sigma(w^*\cdot x)\right) \sigma' (w\cdot x) x \\
        & =\proj_{w^\perp} \E_{x \sim N(0,I)} \sigma(w\cdot x) \sigma' (w\cdot x) x - \proj_{w^\perp} \E_{x \sim N(0,I)} \sigma(w^*\cdot x) \sigma' (w\cdot x) x \\
        & =- \proj_{w^\perp} \E_{x \sim N(0,I)} \sigma(w^*\cdot x) \sigma' (w\cdot x) x 
        = - \proj_{w^\perp} \E_{x \sim N(0,I)} \nabla_x \left( \sigma(w^*\cdot x) \sigma' (w\cdot x) \right)  \\
        &= - \proj_{w^\perp} \E_{x \sim N(0,I)}  \left( \sigma'(w^*\cdot x) \sigma' (w\cdot x) w^* + \sigma(w^* \cdot x)\sigma''(w \cdot x)w \right) \\
        &= - \E_{x \sim N(0,I)}   \sigma'(w^*\cdot x) \sigma' (w\cdot x) bu = -  \E_{z,s \sim N(0,1)}   \sigma'(az+bs) \sigma' (z) bu \\
        & = -  \E_{z \sim N(0,1)}  T_a \sigma'(z) \sigma' (z) bu = -b \norm{T_{\sqrt{a}}\sigma'}_2^2 u \;.
    \end{align*}
\end{proof}

\subsection{Proof of \Cref{lm noise control}}\label{app noise control}

We next give the proof for \Cref{lm noise control} and its restatement as \Cref{lm noise control re}.

\begin{lemma}\label{lm noise control re}
    Let $\sigma: \R \to \R$ be any activation function such that $\sigma' \in L_2(N(0,I))$ and let $w \in \s^{d-1}$ be any unit vector such that $\theta=\theta(w,w^*)<\pi/2$, where $\err(\sigma(w^*)) = \opt \le \eps$. Then for any $v \in \s^{d-1}$ and $v \perp w$,
    $ \abs{\proj_{w^\perp} \left( \nabla_w\err(w) - \nabla_w \ell(w) \right) \cdot v} \le \sqrt{\eps}\norm{\sigma'}_2$.
\end{lemma}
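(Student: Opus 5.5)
We compute the difference of gradients directly. Since $\err(w)=\frac12\E_x(\sigma(w\cdot x)-y(x))^2$ and $\ell(w)=\frac12\E_x(\sigma(w\cdot x)-\sigma(w^*\cdot x))^2$, differentiating under the expectation gives
\begin{align*}
\nabla_w\err(w)-\nabla_w\ell(w) = \E_{x\sim N(0,I)}\left(\sigma(w^*\cdot x)-y(x)\right)\sigma'(w\cdot x)\,x .
\end{align*}
Here the $\sigma(w\cdot x)\sigma'(w\cdot x)x$ terms cancel exactly, and only the residual of the optimal hypothesis remains. Projecting onto $w^\perp$ and taking the inner product with a unit $v\perp w$, the quantity to bound is
\begin{align*}
\abs{\E_{x}\left(\sigma(w^*\cdot x)-y(x)\right)\sigma'(w\cdot x)(v\cdot x)}.
\end{align*}

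We then apply Cauchy--Schwarz, separating the noise factor from the deterministic factor:
\begin{align*}
\abs{\E_{x}\left(\sigma(w^*\cdot x)-y(x)\right)\sigma'(w\cdot x)(v\cdot x)} \le \sqrt{\E_x\left(\sigma(w^*\cdot x)-y(x)\right)^2}\;\sqrt{\E_x \sigma'(w\cdot x)^2 (v\cdot x)^2}.
\end{align*}
The first factor equals $\sqrt{2\opt}\le\sqrt{2\eps}$. If $y$ is random, the expectation is also taken over the label randomness, and Jensen handles that.

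For the second factor, note that $v\perp w$ and $x\sim N(0,I)$, so $z=w\cdot x$ and $s=v\cdot x$ are independent standard normals. The factor therefore becomes $\E_z\sigma'(z)^2\,\E_s s^2=\norm{\sigma'}_2^2$. This gives the bound $\sqrt{2\eps}\norm{\sigma'}_2$.

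The only real obstacle is the stray factor $\sqrt2$ against the stated constant. It comes from the normalization: with the $1/2$ in $\err$, both gradients carry a factor of $1$ and $\opt=\frac12\E(\sigma^*-y)^2$. I would either absorb this into the convention, since the paper uses $\opt\le\eps$ loosely up to constants, or note that it is harmless downstream in \Cref{lm stopping condition}. A minor technical point is justifying differentiation under the expectation for $\sigma'\in L_2$. I would handle this by dominated convergence, or by interpreting the gradient directly as the expectation formula above, which is how the paper uses it in \Cref{lm signal spherical}.
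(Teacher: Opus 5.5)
Your proposal follows the paper's proof: the same cancellation leaving $\E_x(\sigma(w^*\cdot x)-y)\sigma'(w\cdot x)x$, the same Cauchy--Schwarz split, and the same use of the independence of $w\cdot x$ and $v\cdot x$. The $\sqrt{2}$ you flag is genuine, since the paper's proof silently writes $\sqrt{\opt}$ for $\sqrt{\E_x(\sigma(w^*\cdot x)-y)^2}$ even though $\opt$ carries the factor $1/2$; the honest bound is therefore $\sqrt{2\eps}\norm{\sigma'}_2$, which is harmless for \Cref{lm stopping condition} after adjusting constants.
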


\begin{proof}[Proof of \Cref{lm noise control}]
    We notice that 
    \begin{align*}
        \proj_{w^\perp} \left( \nabla_w\err(w) - \nabla_w \ell(w) \right) = \proj_{w^\perp} \E_{x \sim N(0,I)} \left(\sigma(w^*\cdot x)-y\right) \sigma' (w\cdot x) x.
    \end{align*}
    For every $v \in \s^{d-1}$ and $v \perp w$, we have 
    \begin{align*}
        \abs{\proj_{w^\perp} \left( \nabla_w\err(w) - \nabla_w \ell(w) \right) \cdot v} & = \abs{  \E_{x \sim N(0,I)} \left(\sigma(w^*\cdot x)-y\right) \sigma' (w\cdot x) (x \cdot v)} \\
        & \le \sqrt{\E_{x \sim N(0,I)} \left(\sigma(w^*\cdot x)-y\right)^2} \sqrt{\E_{x\sim N(0,I)}\left(\sigma' (w\cdot x) (x \cdot v)\right)^2} \\
        & = \sqrt{\opt}\sqrt{\E_{z,s\sim N(0,1)} \left(\sigma' (z) s\right)^2} \le \sqrt{\eps}\norm{\sigma'}_2.
    \end{align*}
Here, in the first inequality, we use Holder's inequality and in the last inequality, we use the fact that $z,s$ are independent.
\end{proof}

\subsection{Proof of \Cref{lm gradient descent}} \label{app gradient descent}

We next give the proof of the angle contraction lemma as follows.

\begin{lemma}[Angle Contraction]\label{lm gradient descent re}
    Let $w^*,w^{(i)} \in \s^{d-1}$ such that $w^* = a w^{(i)}+b u$, where $u \in \s^{d-1}, u \perp w^{(i)}, a,b \ge 0, a^2+b^2 = 1$. Let $\theta_i=\theta(w^{(i)},w^*)$. 
    Let $G \in \R^d$ be a random vector such that with probability $1$, $G \perp w^{(i)}$. Let $g$ be the mean of $G$ and $\hat{g} \in \R^d$.
    Suppose there is some $c>0$ such that $g \cdot u \ge cb/10, \norm{g} \le cb, \norm{g-\hat{g}} \le bc/40$, then by setting $\mu =c/20 $, the update rule $w^{(i+1)} = \proj_{\s^{d-1}} (w^{(i)} + \mu\hat{g})$ satisfies
    $\sin (\theta_{i+1}/2)  \le \sqrt{1-\left(\frac{c}{20}\right)^2}\sin (\theta_i/2)$.
\end{lemma}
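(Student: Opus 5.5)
The argument is direct planar geometry. Use $\sin(\theta_{i+1}/2) = \norm{w^{(i+1)}-w^*}/2$ for unit vectors and compare it with $\norm{w^{(i)}-w^*}^2 = 2(1-a)$. Write $v = w^{(i)}+\mu\hat{g}$, so that $w^{(i+1)} = v/\norm{v}$. Since $\hat{g}$ need not be orthogonal to $w^{(i)}$, I first replace it by $\proj_{(w^{(i)})^\perp}\hat{g}$. This leaves $g - \proj\hat{g}$ still bounded by $bc/40$, because $g\perp w^{(i)}$. Any residual component along $w^{(i)}$ is at most $\mu bc/40$. It only rescales $v$ up to a tiny tilt and can be absorbed into the constants, so I will treat $\hat{g}\perp w^{(i)}$.

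Then $\norm{v}^2 = 1+\mu^2\norm{\hat{g}}^2$ and $v\cdot w^* = a + \mu b\,\hat{g}\cdot u$. We have
\[
\hat{g}\cdot u \ge g\cdot u - \norm{g-\hat g} \ge cb/10 - cb/40 = 3cb/40,
\]
and $\norm{\hat g}\le cb + cb/40 \le 2cb$. Hence
\[
\cos\theta_{i+1} = \frac{a+\mu b\,\hat g\cdot u}{\sqrt{1+\mu^2\norm{\hat g}^2}} \ge \frac{a + 3\mu c b^2/40}{\sqrt{1+4\mu^2c^2b^2}}.
\]
With $\mu = c/20$, the numerator gain is $3c^2b^2/800$. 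Using $1/\sqrt{1+x}\ge 1-x/2$, the denominator loss is at most $a\cdot 2\mu^2c^2b^2 = a c^4 b^2/200$.

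The target is
\[
1-\cos\theta_{i+1} \le \Bigl(1-(c/20)^2\Bigr)(1-a),
\]
that is, $\cos\theta_{i+1} - a \ge (c^2/400)(1-a)$. Since $1-a = b^2/(1+a) \le b^2$, it suffices to show
\[
\frac{3c^2b^2}{800} - \frac{c^4 b^2}{200} - (\text{cross terms}) \ge \frac{c^2b^2}{400},
\]
which holds when $c$ is at most a small absolute constant. The cross terms are products of the numerator gain with the denominator loss. This smallness of $c$ is implicit: $\norm{g}\le cb$ with $g$ a gradient of size $O(b\norm{T\sigma'}^2)$ forces normalization so that $c\le 1$. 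I would check whether $c\le1$ suffices with exact constants; otherwise I would add the mild assumption $c\le 1$ and carry out the arithmetic more carefully, possibly using the exact expression instead of the Taylor bound.

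The main obstacle is purely bookkeeping of constants: making $3/800 - 1/400 = 1/800$ dominate the quadratic-in-$\mu$ normalization loss and the non-orthogonal component of $\hat g$. I would first try the clean orthogonal case with the exact inequality $\sqrt{1+x}\le 1+x/2$. Then I would verify that $c\le 1$ gives slack $c^2b^2/800 - c^4b^2/200 - O(c^4b^2/1600)$, which is nonnegative for $c\le 1/2$. If the slack is insufficient, I would rescale the claim's constants as the paper's lemma permits, since only $\sqrt{1-\Omega(c^2)}$ contraction is used downstream.
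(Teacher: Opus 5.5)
Your argument is correct, once the smallness assumption on $c$ discussed below is stated as a hypothesis, and it takes a different route from the paper in how the renormalization is handled.

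\textbf{Comparison with the paper.} The paper never computes $\cos\theta_{i+1}$. It bounds $\norm{w^{(i+1)}-w^*}^2\le\norm{w^{(i)}+\mu\hat g-w^*}^2$ by non-expansiveness of the projection. This step is legitimate only because $\proj_{\s^{d-1}}$ coincides with the $1$-Lipschitz projection onto the unit ball at points of norm at least $1$. It then expands the right-hand side as $4\sin^2(\theta_i/2)-2\mu b\,\hat g\cdot u+\mu^2\norm{\hat g}^2$ and uses $4\sin^2(\theta_i/2)\le 2b^2$. This is shorter and collapses the normalization cost into the single term $\mu^2\norm{\hat g}^2$. However, it needs $\norm{w^{(i)}+\mu\hat g}\ge 1$, and the expansion explicitly uses $\hat g\perp w^{(i)}$, which the paper assumes although it is not among the hypotheses.

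Your exact formula $\cos\theta_{i+1}=(a+\mu b\,\hat g\cdot u)/\sqrt{1+\mu^2\norm{\hat g}^2}$ avoids the projection step. It also keeps the factor $a$ in front of the normalization loss, which the Lipschitz bound discards.

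Your reduction of a non-orthogonal $\hat g$ is sound once made precise. Write $\hat g=\gamma w^{(i)}+\hat g_\perp$ with $\abs{\gamma}=\abs{(\hat g-g)\cdot w^{(i)}}\le bc/40$. Then $w^{(i)}+\mu\hat g$ points exactly along $w^{(i)}+\mu'\hat g_\perp$ with $\mu'=\mu/(1+\mu\gamma)$. So this is not a tilt but a relative step-size change of at most $c^2b/800$, which the slack absorbs.

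\textbf{On the size of $c$.} Your caveat is justified, and it is not a weakness relative to the paper: the statement is false for large $c$.
\begin{itemize}
\item For $c>20$ the factor $\sqrt{1-(c/20)^2}$ is not even real.
\item Already for $c=2$ and $d\ge 3$, it fails. Take $\hat g=g$ with $g\cdot u=cb/10$ and $\norm{g}=cb$. Then $\sin^2(\theta_{i+1}/2)/\sin^2(\theta_i/2)\to 1$ as $\theta_i\to 0$, violating the claimed factor $0.99$.
\item The paper's own step $-\mu cb^2/5+\mu cb^2/20+\mu^2c^2b^2\le -\mu c\sin^2(\theta_i/2)/5$ tacitly requires $c\le 1$. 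It also writes $\mu^2c^2b^2$ where $\mu^2(41/40)^2c^2b^2$ is the correct bound.
\end{itemize}
So state $c\le c_0$ for a small absolute constant $c_0$ as an explicit added hypothesis. Do not argue that it is forced by normalization; that does not follow from the lemma's assumptions.

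\textbf{Constants.} The range of $c$ you obtain is limited mainly by your loose bound $\norm{\hat g}\le 2cb$. Using $\norm{\hat g}\le\norm{g}+\norm{g-\hat g}\le 41cb/40$ in the same computation gives a range comparable to the paper's $c\lesssim 1$. With that, there is no need to alter the constant $(c/20)^2$ in the conclusion.
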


\begin{proof}[Proof of \Cref{lm gradient descent}]
    \begin{align*}
    \norm{w^{(i+1)}-w^*}^2 = \norm{\proj_{\s^{d-1}}(w^{(i)} + \mu\hat{g}) - \proj_{\s^{d-1}} (w^*)}^2 \le \norm{w^{(i)} + \mu\hat{g}-w^*}^2. 
\end{align*}
It remains to upper bound $\norm{w^{(i)} + \mu\hat{g}-w^*}^2$. We have 
\begin{align*}
    \norm{w^{(i)} + \mu\hat{g}-w^*}^2 & = \norm{w^{(i)}-w^*}^2 + 2\mu\hat{g}\cdot(w^{(i)}-w^*)+ \mu^2\norm{\hat{g}}^2 \\ 
    & =  \norm{w^{(i)}-w^*}^2 - 2\mu\hat{g}\cdot w^*+ \mu^2\norm{\hat{g}}^2 \\
    & =  \norm{w^{(i)}-w^*}^2 - 2\mu g \cdot w^*+ 2\mu(g-\hat{g}) \cdot w^* +\mu^2\norm{\hat{g}}^2 \\
    & = \norm{w^{(i)}-w^*}^2 - 2\mu g \cdot w^*+ 2\mu(g-\hat{g}) \cdot b u +\mu^2\norm{\hat{g}}^2 \\
    & \le \norm{w^{(i)}-w^*}^2 - 2\mu g \cdot w^*+ 2\mu b\norm {g-\hat{g}} +\mu^2\norm{\hat{g}}^2. \\
    & = \norm{w^{(i)}-w^*}^2 - 2\mu b g \cdot u+ 2\mu b\norm {g-\hat{g}} +\mu^2\norm{\hat{g}}^2.
\end{align*}
Here, in the second equality, we use the fact that $\hat{g} \perp w^{(i)}$ and in the fourth equality, we use the fact that $(g-\hat{g})\cdot w^* = (g-\hat{g})\cdot a w^{(i)} + (g-\hat{g})\cdot b u = (g-\hat{g})\cdot bu$.

Notice that $\norm{w^{(i)}-w^*} =2 \sin\frac{\theta_i}{2}$ and $b = \sin \theta_i$. By choosing $\mu = 1/C \le c/20$ We have 
    \begin{align*}
        (2 \sin\frac{\theta_{i+1}}{2})^2 & \le (2 \sin\frac{\theta_{i}}{2})^2 -\mu c b^2/5 + \mu c b^2/20 + \mu^2c^2 b^2 \\
        &  \le (2 \sin\frac{\theta_{i}}{2})^2 -\mu c \sin^2\frac{\theta_i}{2}/5 = 4(1-\frac{\mu c}{20}) \sin^2 \frac{\theta_i}{2} = 4(1-\left(\frac{c}{20}\right)^2) \sin^2 \frac{\theta_i}{2} \;. \\
    \end{align*}
    This implies that 
$\sin (\theta_{i+1}/2)  \le \sqrt{1-\left(\frac{c}{20}\right)^2}\sin (\theta_i/2)$.
\end{proof}

\subsection{Proof of \Cref{lm stopping condition}}

For convenience, we restate the lemma below. 

\begin{lemma}\label{lm stopping condition re}
     Let $\sigma: \R \to \R$ be any activation function such that $\sigma' \in L_2(N(0,I))$. Let $\alpha>1$ and $0<\theta_0<\pi/2$ such that $\norm{T_{\sqrt{\cos \theta_0}}\sigma'}_2^2 \ge \norm{\sigma'}_2^2 /\alpha$. Let 
     $w \in \s^{d-1}$ be any unit vector such that $\theta=\theta(w,w^*)<\theta_0$, where $\err(\sigma(w^*)) = \opt \le \eps$. 
    If $\sin^2 \theta \norm{\sigma'}_2^2 \ge 20 \alpha^2 \eps/ \pi$, then for any $v \in \s^{d-1}$ and $v \perp w$,
    $ \norm{\proj_{w^\perp} \left( \nabla_w\err(w) -\nabla_w \ell(w) \right)} \le \norm{\proj_{w^\perp} \nabla_w \ell(w)}/20 $. Furthermore, if $ \norm{\proj_{w^\perp} \left( \nabla_w\err(w) -\nabla_w \ell(w) \right)} > \norm{\proj_{w^\perp} \nabla_w \ell(w)}/20 $, then $\err(w) \le O(\alpha^2\eps).$
\end{lemma}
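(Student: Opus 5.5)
The plan is to compare the signal from \Cref{lm signal spherical} with the noise bound from \Cref{lm noise control}, using the initialization assumption to lower bound the signal. Write $w^* = aw+bu$ with $a=\cos\theta$ and $b=\sin\theta$. By \Cref{lm signal spherical}, $\norm{\proj_{w^\perp}\nabla_w \ell(w)} = b\norm{T_{\sqrt{a}}\sigma'}_2^2$. Since $\theta<\theta_0<\pi/2$, we have $\sqrt{\cos\theta}\ge\sqrt{\cos\theta_0}$. The monotonicity of $\rho\mapsto\norm{T_\rho\sigma'}_2^2$ (\Cref{fact ou}) then gives
\[
\norm{\proj_{w^\perp}\nabla_w \ell(w)} \ge \sin\theta\,\norm{T_{\sqrt{\cos\theta_0}}\sigma'}_2^2 \ge \sin\theta\,\norm{\sigma'}_2^2/\alpha .
\]

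Next, I bound the noise. The vector $N:=\proj_{w^\perp}(\nabla_w\err(w)-\nabla_w\ell(w))$ lies in $w^\perp$, so its norm equals $N\cdot v$ for the unit vector $v=N/\norm{N}\perp w$. By \Cref{lm noise control}, $\norm{N}\le\sqrt{\eps}\norm{\sigma'}_2$. It therefore suffices to show $\sqrt{\eps}\norm{\sigma'}_2\le \sin\theta\norm{\sigma'}_2^2/(20\alpha)$, which is equivalent to $\sin^2\theta\norm{\sigma'}_2^2\ge 400\alpha^2\eps$. The stated hypothesis $\sin^2\theta\norm{\sigma'}_2^2\ge 20\alpha^2\eps/\pi$ has a different constant. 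I expect this constant bookkeeping to be the only real friction. If needed, I would absorb it by adjusting the constant in the hypothesis. Alternatively, I would note that \Cref{lm noise control} with $\opt$ in place of $\eps$ allows the factor to be rescaled, since only $O(\cdot)$ matters for the second part. Either way, the structure is that signal dominates noise once $\sin^2\theta\norm{\sigma'}_2^2\gtrsim\alpha^2\eps$.

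For the ``furthermore'' part I take the contrapositive. If the noise exceeds $1/20$ of the signal, the first part forces $\sin^2\theta\norm{\sigma'}_2^2 < C\alpha^2\eps$ for the relevant constant $C$. Then \Cref{lm noiseless} gives $\ell(w)\le(\pi/2)\sin^2\theta\norm{\sigma'}_2^2 = O(\alpha^2\eps)$. Combining this with the observation $\err(w)\le 2\opt+2\ell(w)$ from the triangle inequality at the start of \Cref{sec sphere}, together with $\opt\le\eps$ and $\alpha>1$, yields $\err(w)\le O(\alpha^2\eps)$.
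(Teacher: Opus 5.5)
Your proposal follows the paper's proof essentially step for step. You lower bound the signal $\sin\theta\norm{T_{\sqrt{\cos\theta}}\sigma'}_2^2\ge\sin\theta\norm{\sigma'}_2^2/\alpha$ via monotonicity and the warm-start assumption, bound the noise by $\sqrt{\eps}\norm{\sigma'}_2$ using \Cref{lm noise control}, and compare the two; for the second part you combine \Cref{lm noiseless} with $\err(w)\le 2\opt+2\ell(w)$, exactly as the paper does. The constant mismatch you flagged is real and is also in the paper, whose proof silently assumes $20\alpha^2\eps$ in place of $20\alpha^2\eps/\pi$ and only reaches the ratio $1/\sqrt{20}$. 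So your first fix, strengthening the hypothesis to $\sin^2\theta\norm{\sigma'}_2^2\ge 400\alpha^2\eps$, is the right one; the ``rescaling'' alternative would not rescue the exact $1/20$ claim.
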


\begin{proof}[Proof of \Cref{lm stopping condition}]
     Since $\sin^2 \theta \norm{\sigma'}_2^2 \ge 20 \alpha^2 \eps$, we know that $\sqrt{\eps} \le  \sin\theta \norm{\sigma'}_2 /\sqrt{20\alpha^2} .$ By \Cref{lm noise control}, we know that 
     \begin{align*}
     \norm{\proj_{w^\perp} \left( \nabla_w\err(w) - \nabla_w \ell(w) \right) \cdot v} & \le \sqrt{\eps}\norm{\sigma'}_2 \le  \sin \theta \norm{\sigma'}^2/\sqrt{20\alpha^2}\\
     &\le  \sin \theta \norm{T_{\sqrt{\cos \theta}}\sigma'}_2^2/\sqrt{20} =   \norm{\proj_{w^\perp} \nabla_w \ell(w)}/\sqrt{20}. 
     \end{align*}
     Here, the last inequality holds by the monotone property of $\norm{T_\rho \sigma'}$.
On the other hand, if      $ \norm{\proj_{w^\perp} \left( \nabla_w\err(w) -\nabla_w \ell(w) \right)} > \norm{\proj_{w^\perp} \nabla_w \ell(w)}/\sqrt{20} $, by \Cref{lm noiseless}, we know that 
    \begin{align*}
    \err(w) &\le \E_{x} \left(\sigma(w\cdot x)-\sigma(w^*\cdot x) \right)^2 + \E_{x} \left(\sigma(w^*\cdot x)-y \right)^2 \le 2\opt + \E_{x} \left(\sigma(w\cdot x)-\sigma(w^*\cdot x) \right)^2 \\
    & \le 2\opt + \pi \sin^2 \theta \norm{\sigma'}_2^2 \le O(\alpha^2 \eps).
\end{align*}
\end{proof}

Combining \Cref{lm noiseless}, \Cref{lm signal spherical}, \Cref{lm noise control}, \Cref{lm stopping condition}, we know that the update rule \eqref{eq update} satisfies the following property. Suppose we are given a warm start $w^{(0)}$ such that $\norm{T_{\sqrt{\cos \theta_0}}\sigma'}_2^2 \ge \norm{\sigma'}_2^2 /\alpha$. 
As long as the current $w$ has large angle, 
and thus has large noiseless error, 
the noise rate is much smaller than 
the length of the gradient used in the update 
and the angle can be improved. On the other hand, 
once the length of the gradient is small 
and the noise level is large, it must be the case 
that the angle is small enough so that the error 
of the current hypothesis is as small as $O(\alpha^2\eps)$. 
We next use this property to show that when $\norm{W^*}=1$ and $t^*>0$ is given, we are able to solve the ReLU regression 
problem with label complexity $\tilde{O}(1/p+d\polylog(1/\eps))$.

\subsection{Proof of \Cref{lm reduction}}\label{app reduction}

We start by proving that using a method of label truncation, 
we are able to reduce the initialization for ReLU regression 
to the initialization for agnostic learning of halfspaces. 
We present the proof of \Cref{lm reduction} and its restatement \Cref{lm reduction re}.

\begin{lemma}\label{lm reduction re}
Let $\sigma$ be an activation f the form $\sigma(z)=\relu(z -t^*)$, where $t^*>0$. Let $y(x)$ be any labeling function such that $\opt \le \eps$.
Let $c>0$ be a suitably small constant.
If $V(t^*)>C \eps$, for some large constant $C$, then $\Pr_{x \sim N(0,I)} \left(\bar{y}(x) \neq \sign(w^*\cdot x - t^*) \right) \le \Phi(t^*)/C'$ for some large constant $C'>0$, where  $\Bar{y}(x): = \Ind\{y(x)>c/(t^*)\}$.   
\end{lemma}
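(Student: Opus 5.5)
Write $z = w^*\cdot x \sim N(0,1)$ and $\tau := c/t^*$. The disagreement event $\{\bar y(x)\neq \sign(w^*\cdot x - t^*)\}$ splits into three pieces:
\begin{align*}
E_1 &:= \{z \le t^*,\ y(x) > \tau\}, \\
E_2 &:= \{t^* < z \le t^* + 2\tau,\ y(x)\le \tau\}, \\
E_3 &:= \{z > t^*+2\tau,\ y(x)\le \tau\}.
\end{align*}
For $E_1$ and $E_3$ the gap $|y(x)-\sigma(z-t^*)|$ is at least $\tau$ on the event, since on $E_3$ we have $\sigma(z-t^*)\ge 2\tau$. Markov/Chebyshev on the squared residual then gives
\[
\Pr(E_1\cup E_3) \le \E\bigl(y-\sigma(w^*\cdot x-t^*)\bigr)^2/\tau^2 \le 2\eps (t^*)^2/c^2 .
\]
On $E_2$ we use no noise information at all and bound it purely geometrically: $\Pr(E_2)\le \Pr(t^*<z\le t^*+2c/t^*)$.

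Next, convert both bounds into multiples of $\Phi(t^*)$. For the band $E_2$, when $t^*\ge 1$ the Gaussian density on $[t^*,t^*+2c/t^*]$ is at most $\psi(t^*)$. Using $\psi(t^*)\approx t^*\Phi(t^*)$ from the asymptotics / Komatsu's inequality (\Cref{fact bias}), this gives
\[
\Pr(E_2)\le 2c\,\psi(t^*)/t^* \le O(c)\,\Phi(t^*).
\]
So choosing $c$ small makes $\Pr(E_2)\le \Phi(t^*)/(2C')$. For the noise term, use the hypothesis $V(t^*)>C\eps$ together with $V(t) = (2+o(1))\Phi(t)/t^2$ from the Gaussian-integral fact. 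This gives $\eps < V(t^*)/C \le O(\Phi(t^*)/(C (t^*)^2))$, hence
\[
\Pr(E_1\cup E_3)\le O\bigl(\Phi(t^*)/(Cc^2)\bigr).
\]
Taking $C$ large relative to $c^{-2}C'$ yields $\Phi(t^*)/(2C')$, and summing the two bounds finishes the proof.

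The main obstacle is making these estimates uniform over all $t^*>0$, including small $t^*$ where $c/t^*$ is large and the asymptotic relations fail. For $t^*\in(0,1]$ I would use instead that $\Phi(t^*)\ge\Phi(1)=\Omega(1)$ and $V(t^*)=\Theta(1)$. Then $\Pr(E_1\cup E_3)\le 2\eps(t^*)^2/c^2\le 2\eps/c^2 = O(\eps)$, which is small against $\Phi(t^*)$ once $C$ is large. The band is the real issue: for small $t^*$ the interval $[t^*,t^*+2c/t^*]$ is long, so $E_2$ cannot be bounded geometrically. I would handle this by redefining the split point. Take $E_2' = \{t^*<z\le t^*+\delta\}$ with $\delta$ a small constant times $\Phi(t^*)$, and charge the region $z>t^*+\delta$ with $y\le\tau$ to the residual only when $\delta\ge 2\tau$. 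If that fails, I would rely on the interpretation that the threshold $c/t^*$ is meant as $c/\max(t^*,1)$, and verify that for $t^*\le1$ a constant truncation level $c$ works by the same three-event argument. Everything else is routine constant-chasing.
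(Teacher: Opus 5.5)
Your three-event split is the paper's argument. The paper partitions into $I_2=\{w^*\cdot x<t^*\}$, a band $J$ just above $t^*$, and the region $I_1$ above the band. It charges $I_1\cup I_2$ to the squared residual via Markov, bounds $J$ by its Gaussian mass, and converts $\eps$ into a multiple of $\Phi(t^*)$ using $V(t^*)>C\eps$. For $t^*\ge 1$ your write-up is complete. Pairing a band of width $2\tau$ with the threshold $\tau=c/t^*$ is actually cleaner than the paper. There the band has width $2c/(t^*+1)$ while the threshold is $c/t^*$, so its residual margin on $I_1$ is only $2c/(t^*+1)-c/t^*$, which is nonpositive for $t^*\le 1$.

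The one thing to fix is your plan for $t^*<1$. Your first idea, a band of width a small constant times $\Phi(t^*)$, cannot work, because with threshold $c/t^*$ the statement is false for small $t^*$. Take noiseless labels $y=\sigma(w^*\cdot x-t^*)$, so $\opt=0$ and $V(t^*)\to V(0)=1/2>C\eps$. Then $\bar y$ disagrees with $\sign(w^*\cdot x-t^*)$ on all of $\{t^*<w^*\cdot x\le t^*+c/t^*\}$. That set has mass $\Phi(t^*)-\Phi(t^*+c/t^*)$, which tends to $\Phi(0)=1/2$ as $t^*\to 0$. So for every $C'>1$ it exceeds $\Phi(t^*)/C'$ once $t^*$ is small enough.

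Replacing the threshold by $c/\max(t^*,1)$ (or $c/(t^*+1)$) is therefore necessary, not a fallback. This is what the paper's proof implicitly does by working at scale $c/(t^*+1)$. At that scale your three-event argument goes through uniformly in $t^*>0$:
\begin{itemize}
\item the band uses $\psi(t)/(t+1)\le\Phi(t)$, from Komatsu's lower bound together with $t+\sqrt{t^2+4}\le 2(t+1)$;
\item the residual term uses $V(t)=O(\Phi(t)/(t+1)^2)$.
\end{itemize}
The only downstream use, in \Cref{lm Ini}, reduces to $\bar{t}^*\ge 1$ anyway, so your $t^*\ge 1$ case is what is actually needed.
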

The idea of the proof is as follows. Since the labels are 
continuous, for those examples with ground truth labels very close 
to the threshold $c/(t^*)$, we are not able to control their 
behavior. But on the other hand, for those examples with ground 
truth labels that are far from the threshold, to change their 
pseudo-label $\bar{y}(x)$, the adversary must add high-level noise 
to them. We will show that as long as $h \equiv 0$ does not have 
error $O(\eps)$, we can reduce the problem to 
halfspace learning by carefully choosing $c$.

\begin{proof}[Proof of \Cref{lm reduction}]
    We partition $\R^d$ into three regions $I_1:=\{x \mid w^* \cdot x> t^* +2c/(t^*+1)\}, I_2:=\{x \mid w^* \cdot x< t^*\}$ and $J:=\{x \mid t^*< w^* \cdot x< t^*+2c/(t^*+1)\}$. Since $\opt=\E_{x \sim N(0,I)} \left( \sigma(w^*\cdot x -t^*) - y \right)^2/2 < \eps$, we have 
    \begin{align*}
        \eps & \ge  \E_{x\sim N(0,I)} \left( \sigma(w^*\cdot x -t^*) - y \right)^2/2 \\
        & \ge \E_{x\sim N(0,I)} \Ind\{x \in I_1 \cup I_2\} \left( \sigma(w^*\cdot x -t^*) - y\right)^2/2 \\
        & \ge \E_{x\sim N(0,I)} \left(\Ind\{x \in I_1, y(x)< c/t^* \} + \Ind\{x \in I_2, y(x)>c/t^*\}\right) \left( \sigma(w^*\cdot x -t^*)-y \right)^2/2 \\
        & \ge \frac{c^2}{2(t^*+1)^2} \Pr_{x \sim N(0,I)}\left( x \in I_1 \cup I_2, y(x) \neq \bar{y}(x) \right)
    \end{align*}
Since $\E_{x \sim N(0,I)} \sigma(w^*\cdot x - t^*)^2 \ge C\eps $, we obtain that
\begin{align*}
    \Pr_{x \sim N(0,I)}\left( x \in I_1 \cup I_2, y(x) \neq \bar{y}(x) \right) \le \frac{2(t^*+1)^2}{c^2} \eps \le \frac{2(t^*+1)^2 \E_{x \sim N(0,I)} \sigma(w^*\cdot x - t^*)^2}{Cc^2} \le \Phi(t^*)/C_1,
\end{align*}
for some large enough $C_1>0$.

On the other hand, 
\begin{align*}
    \Pr_{x \sim N(0,I)}\left(x \in J, y(x) \neq \bar{y}(x)\right) \le \Pr_{x \sim N(0,I)}\left(x \in J\right) \le \frac{2c}{(t^*+1)} \frac{1}{\sqrt{2\pi}}\exp\left(-\frac{(t^*)^2}{2}\right) \le 10c \Phi(t^*).
\end{align*}
Thus, 
\begin{align*}
    \Pr_{x \sim N(0,I)} \left(\bar{y}(x) \neq \sign(w^*\cdot x - t^*) \right) & = \Pr_{x \sim N(0,I)}\left(x \in J, y(x) \neq \bar{y}(x)\right) + \Pr_{x \sim N(0,I)}\left( x \in I_1 \cup I_2, y(x) \neq \bar{y}(x) \right)\\
    & \le \Phi(t^*)/C_1 + 10c \Phi(t^*) = \Phi(t^*)/C''
\end{align*}
for some large enough $C''>0$.
\end{proof}

\subsection{Proof of \Cref{lm initialization}}\label{app initialization}

Using the initialization technique recently developed 
in \cite{diakonikolas2024active}, with label complexity 
$\tilde{O}(1/p+d\polylog(1/\eps))$, 
we are able to get a $w^{(0)}$ with $\theta_0 \le O(1/t^*)$. 
We next show that such an angle satisfies \Cref{lm stopping condition} for $\alpha=O(1)$. 
We provide the proof of \Cref{lm initialization} and its restatement as \Cref{lm initialization re}.

\begin{lemma}\label{lm initialization re}
    Let $\sigma = \sigma^*_{t^*}$ be the optimal activation function of the form $\sigma(w^*\cdot x -t^*)$, where $\sigma$ is the ReLU function, $w^* \in \s^{d-1}$ and $t^*>0$ is known. If $\sin (\theta/2) \le 1/t^*$, then $\norm{T_{\sqrt{\cos \theta}} \sigma'}_2^2 \ge \norm{\sigma'}_2^2 / 50$.
\end{lemma}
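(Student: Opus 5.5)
Here $\sigma(z)=\relu(z-t^*)$, so $\sigma'(z)=\Ind\{z>t^*\}$ and $\norm{\sigma'}_2^2=\Phi(t^*)$. By \Cref{fact ou}, with $\rho=\sqrt{\cos\theta}$,
\[
\norm{T_\rho\sigma'}_2^2=\E_z\,\sigma'(z)\,T_{\rho^2}\sigma'(z)=\Pr_{z,z'}\lp(z>t^*,\ z'>t^*\rp),
\]
where $(z,z')$ are standard Gaussians with correlation $\cos\theta$. Concretely, $z'=\cos\theta\, z+\sin\theta\, s$ with $s$ independent of $z$. So the claim becomes a lower bound on the bivariate Gaussian orthant probability: $\Pr(z>t^*, z'>t^*)\ge\Phi(t^*)/50$ whenever $\sin(\theta/2)\le 1/t^*$.

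\textbf{Small $t^*$.} If $t^*\le 2$ (say), the hypothesis $\sin(\theta/2)\le\min\{1,1/t^*\}$ is automatically compatible with $\theta\le\pi/2$ (the implicit range, since $\cos\theta\ge0$ is needed for $T_{\sqrt{\cos\theta}}$). Fix $\theta=\pi/2$, which is the worst case by monotonicity in \Cref{fact ou}. Then the probability is $\Phi(t^*)^2\ge\Phi(2)\Phi(t^*)$. Since $\Phi(2)\approx0.023<1/50$, I would take the threshold slightly lower, or use that $\sin(\theta/2)\le 1/t^*$ gives $\theta<\pi/2$ strictly when $t^*>\sqrt2$. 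The cleanest route is probably to split at $t^*\le 1$ (bound $\Phi(1)\approx0.16$) and treat $t^*\ge1$ with the main argument. This constant bookkeeping is a routine check.

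\textbf{Large $t^*$ (main case).} Condition on $z>t^*$. Conditionally, $z-t^*$ is roughly exponential with rate $t^*$, so $z\in[t^*+1/t^*,\,t^*+3/t^*]$ with conditional probability at least a constant, by Komatsu (\Cref{fact bias}). The event $z'>t^*$ is equivalent to
\[
\sin\theta\, s> t^*-\cos\theta\, z=t^*(1-\cos\theta)-\cos\theta\,(z-t^*).
\]
Using $1-\cos\theta=2\sin^2(\theta/2)\le 2/(t^*)^2$, the term $t^*(1-\cos\theta)\le 2/t^*$. On the chosen range $\cos\theta(z-t^*)\ge \cos\theta/t^*$, so the right-hand side is at most $(2-\cos\theta)/t^*$. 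Meanwhile $\sin\theta\approx 2\sin(\theta/2)$ can be as large as about $2/t^*$, or as small as $0$.

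This splits into two subcases. If $\cos\theta\ge$ a constant close to $1$, which holds once $t^*\ge2$, then $2-\cos\theta\le 1+o(1)$. I would pick the $z$-window as $z-t^*\in[c_1/t^*,c_2/t^*]$ with $c_1$ large enough, e.g. $z-t^*\ge 3/t^*$, so that the RHS is $\le 0$. Then $\Pr(s>0)=1/2$, independent of $\theta$, including $\theta\to0$. The conditional probability that $z-t^*\in[3/t^*,6/t^*]$ given $z>t^*$ is about $e^{-3}-e^{-6}\approx0.047$, which gives $\approx0.023$. That is too small, so the constants must be tuned.

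\textbf{Main obstacle.} The expected hard step is reaching the constant $1/50$. The better approach is to integrate exactly rather than use a crude window:
\[
\Pr(z>t^*,z'>t^*)\ge \E\lp[\Ind\{z>t^*\}\,\pr_s\lp(s>-\tfrac{\cos\theta(z-t^*)-2/t^*}{\sin\theta}\rp)\rp].
\]
Then use $\Pr(s>-u)\ge 1/2$ for $u\ge0$, i.e. for $z-t^*\ge 2/(t^*\cos\theta)$, to get roughly $\tfrac12\Phi(t^*+2.1/t^*)\ge\tfrac12 e^{-2.1-o(1)}\Phi(t^*)\approx0.06\,\Phi(t^*)$. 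This uses the ratio $\Phi(t+a/t)/\Phi(t)\ge e^{-a-a^2/(2t^2)}\cdot(1-o(1))$ from Komatsu. The moderate-$t^*$ gap (between $1$ and a few) I would close by the monotone-in-$\theta$ reduction to the extreme angle plus a direct numerical bound.
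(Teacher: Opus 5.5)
Your reduction to the orthant probability $\Pr(z>t^*,\ \cos\theta\, z+\sin\theta\, s>t^*)$ is correct, and it is also the paper's starting point. The proposal does not, however, establish the constant $1/50$ for all $t^*$.

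In your main case you keep only the region $z\ge t^*/\cos\theta$, where the inner probability is at least $1/2$. This yields $\tfrac12\Phi(t^*/\cos\theta)$. At the extreme admissible angle ($\cos\theta=1-2/(t^*)^2$) this is well below $\Phi(t^*)/50$ for moderate thresholds. At $t^*=2.5$ it is about $0.01\,\Phi(t^*)$, and it only clears $0.02\,\Phi(t^*)$ once $t^*\gtrsim 3$. Your figure $0.06$ is purely asymptotic, with unquantified $o(1)$ terms.

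Incidentally, $\Phi(2)\approx 0.0228>1/50$, so your first small-$t^*$ bound already covers $t^*\le 2$. The uncovered range is roughly $(2.05,3)$. Closing it by ``a direct numerical bound'' over a continuum of $t^*$ is not a proof as written.

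The missing observation is that you never need the threshold for $s$ to be nonpositive. For $z\ge t^*$, the threshold $(t^*-\cos\theta\, z)/\sin\theta$ is decreasing in $z$. It is therefore at most its value at $z=t^*$, namely $t^*(1-\cos\theta)/\sin\theta=t^*\tan(\theta/2)$.

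Your worry that $\sin\theta$ can be near $0$ is unfounded, because $1-\cos\theta=2\sin^2(\theta/2)$ vanishes quadratically. With $\sin(\theta/2)\le 1/t^*$ and $\theta\le\pi/2$ you get $t^*\tan(\theta/2)\le\sqrt2$. The paper uses the cruder bound $\sin\theta\le 2/t^*$ to get $\le 2$.

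So $\Pr_s(\cos\theta\, z+\sin\theta\, s>t^*)\ge\Phi(2)>1/50$ uniformly over $z>t^*$ and over all $t^*>0$. Integrating over $z>t^*$ gives $\norm{T_{\sqrt{\cos\theta}}\sigma'}_2^2\ge\Phi(t^*)/50$ in one line, with no case split and no asymptotics. This is exactly the paper's proof.
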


\begin{proof}[Proof of \Cref{lm initialization}]
Let $w \in \s^{d-1}$ be any direction such that $\sin (\theta/2) \le 1/t^*$. Write
    $w^* = a w+b u$, where $u \in \s^{d-1}, u \perp w, a,b \ge 0, a^2+b^2 = 1$. By \Cref{lm gradient descent}, we know that 
    \begin{align*}
        \E_{x \sim N(0,I)} \sigma(w^* \cdot x - t^*)\sigma'(w^* \cdot x - t^*)(x \cdot u) = b \norm{T_{\sqrt{\cos \theta}} \sigma'}_2^2.
    \end{align*}
To show $\norm{T_{\sqrt{\cos \theta}} \sigma'}_2^2 \ge \norm{\sigma'}_2^2 / 50$, it is sufficient to show that $\E_{x \sim N(0,I)} \sigma(w^* \cdot x - t^*)\sigma'(w^* \cdot x - t^*)(x \cdot u) \ge b \norm{\sigma'}^2_2/50$.
Write $z = w \cdot x, s = u \cdot x$. Notice that $z,s$ are independent one-dimensional normal random variables. We have
\begin{align*}
    \E_{x \sim N(0,I)} \sigma(w^* \cdot x - t^*)\sigma'(w \cdot x - t^*)(x \cdot u) & = \E_{z,s} \sigma(az+bs-t^*)\sigma'(z-t^*) s \\
    & = \E_{z} \sigma'(z-t^*) \E_{s} \sigma(az+bs-t^*)  s \\
    & = b \E_{z} \sigma'(z-t^*) \E_{s} \sigma'(az+bs-t^*).
\end{align*}
Notice that $\sigma'(z-t^*) = \Ind(z>t^*)$. We have 
\begin{align*}
    b \E_{z} \sigma'(z-t^*) \E_{s} \sigma'(az+bs-t^*) & = b \int_{t^*}^\infty \left( \sigma'(z-t^*) \right)^2 \psi(z) \E_s \frac{\sigma'(az+bs-t^*)}{\sigma'(z-t^*)} dz \\
    & = b \int_{t^*}^\infty \left( \sigma'(z-t^*) \right)^2 \psi(z) \E_s \frac{\sigma'(s - \frac{t^*-az}{b})}{\sigma'(z-t^*)} dz \\ 
    & = b \int_{t^*}^\infty \left( \sigma'(z-t^*) \right)^2 \psi(z) \E_s \sigma'(s - \frac{t^*-az}{b}) dz \\
    & \ge b \int_{t^*}^\infty \left( \sigma'(z-t^*) \right)^2 \psi(z) \E_s \sigma'(s - \frac{t^*-at^*}{b}) dz \\
    & = b \int_{t^*}^\infty \left( \sigma'(z-t^*) \right)^2 \psi(z) dz \Pr_{s \sim N(0,1)}(s> \frac{b^2t^*}{b(1+a)}).
\end{align*}
Since $\sin(\theta/2)<1/t^*$, we know that $b<2/t^*$, which implies that $\Pr_{s \sim N(0,1)}(s> \frac{b^2t^*}{b(1+a)}) \ge \Pr_{s}(s>2)\ge 0.02$. Thus, we obtain that 
\begin{align*}
    \E_{x \sim N(0,I)} \sigma(w^* \cdot x - t^*)\sigma'(w^* \cdot x - t^*)(x \cdot u) \ge b\norm{\sigma'}^2_2/50.
\end{align*}
\end{proof}

\subsection{Proof of \Cref{lm initialization variance}}\label{app initialization variance}
Here we show how to use queries to boost the gradient 
used by \eqref{eq update}. The idea behind our proof 
is that if we consider the random vector 
$G=\left(\sigma(w \cdot x-t^*) - y(x)\right)\proj_{w^\perp} x$, 
where $x \sim N(0,I) \mid_{\{x \mid w \cdot x> t^*\}}$, 
then with the warm-start we have
that the expectation of such a random vector 
plays the same role as the gradient $\nabla_w \err(w)$, 
but has larger length and small variance. 
This allows us to estimate it to a desired accuracy 
with few queries. 
We provide the proof of \Cref{lm initialization variance} and its 
restatement \Cref{lm initialization variance re}.

\begin{lemma}\label{lm initialization variance re}
Let $\sigma(z)=\relu(z -t^*)$, with $t^*>0$. 
Let $y(x)$ be any labeling function 
such that $\opt \le \eps$. Let $w \in \s^{d-1}$ be any vector 
such that $\sin (\theta/2) \le 1/t^*$. 
Denote by $G^* \in \R^d$ the random vector 
$\left(\sigma(w \cdot x-t^*) - \sigma(w^* \cdot x-t^*)\right) \proj_{w^\perp} x$ and $G$ 
the random vector $\left(\sigma(w \cdot x-t^*) - y(x)\right)\proj_{w^\perp} x$, 
where $x \sim N(0,I) \mid_{\{x \mid w \cdot x> t^*\}}$. 
Then the following holds:
\begin{enumerate}[leftmargin=*]
    \item $\E G^* = b \norm{T_{\sqrt{a}}\sigma'}_2^2 u/\Phi(t^*)$.
    \item  $\abs{\left( \E G^* - \E G \right) \cdot v} \le \sqrt{\eps}\norm{\sigma'}_2/\Phi(t^*)$.
    \item If $\sin^2\theta \Phi(t^*)>\eps$, then $\E (G \cdot v)^2 \le \tilde{O}(b^2), \forall v \in \s^{d-1}$.
\end{enumerate}
\end{lemma}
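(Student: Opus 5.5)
The plan is to reduce every quantity to an unconditional Gaussian expectation with the indicator $\Ind\{w\cdot x>t^*\}=\sigma'(w\cdot x-t^*)$. Then we reuse the computations behind \Cref{lm signal spherical} and \Cref{lm noise control}. Conditioning on $\{w\cdot x>t^*\}$, an event of probability $\Phi(t^*)$, multiplies each expectation by $1/\Phi(t^*)$. So for any integrand $f$,
\[
\E G^{(*)}\cdot v=\E_{x\sim N(0,I)}\bigl[f(x)\,\sigma'(w\cdot x-t^*)(x\cdot v)\bigr]/\Phi(t^*),
\]
where $f(x)=\sigma(w\cdot x-t^*)-\sigma(w^*\cdot x-t^*)$ or $f(x)=\sigma(w\cdot x-t^*)-y(x)$.

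\textbf{Part (1).} The unconditional expectation is exactly $\proj_{w^\perp}\nabla_w\ell(w)$ for the activation $\sigma(\cdot-t^*)$. By \Cref{lm signal spherical} this equals $-b\norm{T_{\sqrt a}\sigma'}_2^2u$. Alternatively, one can redo the computation directly. Write $z=w\cdot x$ and $s=u\cdot x$. The term $\sigma(z-t^*)\sigma'(z-t^*)s$ has mean zero by independence of $z$ and $s$. For the other term, apply Stein's lemma in $s$ to $\E_s\sigma(az+bs-t^*)s$; this gives $b\,\E\sigma'(z-t^*)T_a\sigma'(z-t^*)=b\norm{T_{\sqrt a}\sigma'}_2^2$. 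Dividing by $\Phi(t^*)$ gives the claim. Matching the sign in the statement is just a matter of the orientation convention for $G$ versus the gradient.

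\textbf{Part (2).} The difference $\E G^*-\E G$ has integrand $(y-\sigma(w^*\cdot x-t^*))\,\sigma'(w\cdot x-t^*)(x\cdot v)$. Apply Cauchy--Schwarz in the unconditional measure exactly as in \Cref{lm noise control}. The first factor is at most $\sqrt{2\opt}$. The second factor is $\sqrt{\E\sigma'(z-t^*)^2s^2}=\norm{\sigma'}_2$ for $v\perp w$. Then divide by $\Phi(t^*)$. Constants are absorbed, using $\opt\le\eps$ with the $1/2$ normalization.

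\textbf{Part (3)} is the main step. First split $(G\cdot v)^2\le 2(G^*\cdot v)^2+2\bigl((y-\sigma(w^*\cdot x-t^*))(x\cdot v)\bigr)^2$.

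For the clean term, use the $1$-Lipschitzness of the ReLU: $|\sigma(w\cdot x-t^*)-\sigma(w^*\cdot x-t^*)|\le|(w-w^*)\cdot x|$. Here $(w-w^*)\cdot x=(1-a)z-bs$, with $1-a\le b^2$. Conditioned on $z>t^*$, the variable $z$ concentrates in $[t^*,t^*+O(1/t^*)]$, and $s$ and $x\cdot v$ (with $v\perp w$) remain standard Gaussian. Hence $\E[((1-a)z-bs)^2(x\cdot v)^2\mid z>t^*]\le O(b^4(t^*)^2+b^2)=O(b^2)$, using $b\lesssim 1/t^*$. A fourth-moment/Cauchy--Schwarz bound handles the product with $x\cdot v$.

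The noise term is the obstacle. The noise $y-\sigma^*$ has unconditional squared mean at most $2\eps$, so conditioned on the cap its squared mean is at most $2\eps/\Phi(t^*)<2b^2$ by the hypothesis $\sin^2\theta\,\Phi(t^*)>\eps$. However, it is multiplied by $(x\cdot v)^2$, and the adversary can concentrate the noise where $|x\cdot v|$ is large. I would handle this by truncation: on $|x\cdot v|\le O(\sqrt{\log(1/\eps)})$ the bound is $\tilde O(b^2)$. The tail region has Gaussian mass $\poly(\eps)$, and it can be discarded (or assumed away, as the algorithm's median-of-means estimator tolerates it) because with $\tilde O(d)$ queries it is never hit with high probability. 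This truncation is what produces the $\tilde O$ in the bound. If a genuine second-moment bound is required, the alternative is to assume the labels are clipped to $O(\sqrt{\log(1/\eps)})$ on the cap, which costs only $O(\eps)$ in $\opt$.
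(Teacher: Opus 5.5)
Your treatment of parts (1), (2), and the clean half of (3) follows the paper. Each conditional expectation is rewritten as an unconditional one weighted by $\sigma'(w\cdot x-t^*)$ and divided by $\Phi(t^*)$. From there you use \Cref{lm signal spherical}, Cauchy--Schwarz as in \Cref{lm noise control}, and $1$-Lipschitzness of the ReLU together with $1-a\le b^2$ and $b\lesssim 1/t^*$. One correction in (1): there is no orientation convention to appeal to, because $G^*$ is defined explicitly. Its mean is $-b\norm{T_{\sqrt a}\sigma'}_2^2u/\Phi(t^*)$, which is also what the paper's proof obtains, so the $+$ sign in the statement is simply a typo.

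The gap is in the noise half of (3). Your split at $|x\cdot v|\le M$ is the same as the paper's. But discarding the tail because $\tilde O(d)$ samples never land in it is a statement about a truncated estimator, not a bound on $\E(G\cdot v)^2$. For unclipped labels no such bound exists. The adversary can put its entire budget $\E(y-h^*)^2\le 2\eps$ on $\{w\cdot x>t^*,\ |x\cdot v|\ge K\}$, which forces $\E[(y-h^*)^2(x\cdot v)^2\mid w\cdot x>t^*]\ge 2\eps K^2/\Phi(t^*)$ for arbitrary $K$. The \emph{never hit} event also cannot hold uniformly in $v$, since every sample has $|x\cdot v|\approx\sqrt d$ for $v\propto\proj_{w^\perp}x$.

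So the clipping you mention as a fallback is not optional: it is exactly the paper's route and has to be the main argument.
\begin{itemize}
\item Replace $y$ by $\sign(y)\min\{|y|,M\}$ with $\E_{z\sim N(0,1)}z^2\Ind\{|z|>M\}\le\eps$. Then $M=O(\sqrt{\log(1/\eps)})$ and $\opt$ increases by at most $O(\eps)$.
\item The part on $|x\cdot v|\le M$ is $O(\eps M^2/\Phi(t^*))$, as you said.
\item For the tail, reduce to $v\perp w$, which is without loss of generality since $G\perp w$. Then $x\cdot v$ is independent of $w\cdot x$ and keeps its $N(0,1)$ law under the conditioning. Bounded labels give a contribution of order $M^2\,\E_{r\sim N(0,1)}[r^2\Ind\{|r|>M\}]\le M^2\eps$. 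On the set where $h^*>M$, bound $h^*\le\sigma(w\cdot x-t^*)+|(w-w^*)\cdot x|$ and reuse your clean-term estimate.
\end{itemize}
Both pieces are $\tilde O(b^2)$, since $\eps\le\eps/\Phi(t^*)<\sin^2\theta=b^2$.
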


\begin{proof}[Proof of \Cref{lm initialization variance}]
    We first prove the first item. 
    \begin{align*}
        \E G^* & = \E_{x \sim N(0,I) \mid_{\{x \mid w \cdot x> t^*\}}}\left(\sigma(w \cdot x-t^*) - \sigma(w^* \cdot x-t^*)\right) \proj_{w^\perp} x \\
        & = \E_{x \sim N(0,I)} \Ind\{w\cdot x > t^*\} \left(\sigma(w \cdot x-t^*) - \sigma(w^* \cdot x-t^*)\right) \proj_{w^\perp} x / \Phi(t^*) \\
        & = \E_{x \sim N(0,I)} \sigma'(w \cdot x-t^*) \left(\sigma(w \cdot x-t^*) - \sigma(w^* \cdot x-t^*)\right) \proj_{w^\perp} x / \Phi(t^*) \\
        & = -b \norm{T_{\sqrt{a}}\sigma'}_2^2 / \Phi(t^*) u
    \end{align*}
Here, the last equation follows from \Cref{lm signal spherical}.

We next prove the second item.
\begin{align*}
      \abs{\E (G^* -G) \cdot v}   & = \abs{ \E_{x \sim N(0,I) \mid_{\{x \mid w \cdot x> t^*\}}}\left(y(x) - \sigma(w^* \cdot x-t^*)\right) (x \cdot v))} \\
        & = \abs{ \E_{x \sim N(0,I)} \Ind\{w\cdot x > t^*\} \left(y(x) - \sigma(w^* \cdot x-t^*)\right) (x \cdot v))}/\Phi(t^*) \\
        & = \abs{ \E_{x \sim N(0,I)} \sigma'(w\cdot x -t^*) \left(y(x) - \sigma(w^* \cdot x-t^*)\right) (x \cdot v))}/\Phi(t^*) \\
        & \le \sqrt{\eps}\norm{\sigma'}_2/\Phi(t^*)
    \end{align*}
 Here the last inequality follows from \Cref{lm noise control}.
 
 Finally, we control the variance of $G$. We have 
 \begin{align*}
     \E  (G \cdot v)^2  \le  2\E  ((G-G^*) \cdot v)^2 +2\E  (G^* \cdot v)^2. 
 \end{align*}
We bound the two terms separately.
\begin{align*}
    \E  (G^* \cdot v)^2 = & \E_{x \sim N(0,I) \mid_{\{x \mid w \cdot x> t^*\}}}\left(\sigma(w \cdot x-t^*) - \sigma(w^* \cdot x-t^*)\right)^2 (x \cdot v)^2 \\
    & = \E_{x \sim N(0,I)} \Ind\{w\cdot x > t^*\}\left(\sigma(w \cdot x-t^*) - \sigma(w^* \cdot x-t^*)\right)^2 (x \cdot v)^2/\Phi(t^*) \\
    & \le \E_{x \sim N(0,I)} \Ind\{w \cdot x > t^*\}\left((w-w^*)\cdot x\right)^2 (x \cdot v)^2/\Phi(t^*)
\end{align*}
 Recall that $w^* = a w + b u$, where $u \in \s^{d-1}, u \perp w, a,b \ge 0, a^2+b^2 = 1$. We have 
 \begin{align*}
     \E  (G^* \cdot v)^2 & \le \E_{x \sim N(0,I)} \Ind\{w \cdot x > t^*\}\left( (1-a)^2(w\cdot x)^2 + b^2(u\cdot x)^2 \right) (x \cdot v)^2/\Phi(t^*) \\
     & = \E_{x \sim N(0,I)} \Ind\{w \cdot x > t^*\} (1-a)^2(w\cdot x)^2  (x \cdot v)^2/\Phi(t^*) + \Ind\{w \cdot x > t^*\}\left( b^2(u\cdot x)^2 \right) (x \cdot v)^2/\Phi(t^*) \\
     & \le O(b^4) \E_{z,s}\Ind\{z > t^*\} z^2  s^2/\Phi(t^*) + b^2 \E_{z,s,r}\Ind\{z > t^*\} s^2  r^2/\Phi(t^*)
      \le O(b^2)
 \end{align*}
 In the second last inequality above, we use the fact that $1-\cos \theta \le O(\sin^2 \theta)$ and $\E_{z} \Ind\{z>t^*\}z^2 \le O((t^*)^2\phi(t))$ and $b \le O( 1/t^*)$.

 We next bound the first term. We have 
 \begin{align*}
     \E \left((G^* -G) \cdot v\right)^2   & = \abs{ \E_{x \sim N(0,I) \mid_{\{x \mid w \cdot x> t^*\}}}\left(y(x) - \sigma(w^* \cdot x-t^*)\right)^2 (x \cdot v)^2)} \\
     & = \E_{x \sim N(0,I)}\left(y(x) - \sigma(w^* \cdot x-t^*)\right)^2 (x \cdot v)^2 \Ind\{w\cdot x>t^*\}/\Phi(t^*) \\
 \end{align*}
 Let $M>0$ be a threshold such that $\E_{z\sim N(0,1)} z^2 \Ind\{\abs{z}>M\} \le \eps$. Notice that if we set $y' = \sign(y) \min\{\abs{y},M\}$, then we will only introduce at most $\eps$ error. So, we can without loss of generality, assume $\abs{y} \le M$. In particular, for ReLU activation, $M \le O(\sqrt{\log(1/\eps)}).$ Based on this, we obtain that 
 \begin{align*}
     \E \left((G^* -G) \cdot v\right)^2 & = \E_{x \sim N(0,I)}\left(y(x) - \sigma(w^* \cdot x-t^*)\right)^2 (x \cdot v)^2 \Ind\{w\cdot x>t^*\}\Ind\{(x\cdot v)\le M\}/\Phi(t^*) \\ 
     &+ \E_{x \sim N(0,I)}\left(y(x) - \sigma(w^* \cdot x-t^*)\right)^2 (x \cdot v)^2 \Ind\{w\cdot x>t^*\}\Ind\{(x\cdot v)> M\}/\Phi(t^*) \\
     & \le \opt M^2/\Phi(t^*) + 4M^2 \E_{x \sim N(0,I)} (x \cdot v)^2 \Ind\{w\cdot x>t^*\}\Ind\{(x\cdot v)> M\}/\Phi(t^*) \\
     & \le \eps M^2/\Phi(t^*) + 4\eps M^2 \le \sin^2\theta M^2 = \tilde{O}(b^2) \;.
 \end{align*}
\end{proof}

\subsection{Proof of \Cref{lm test}}\label{app test}

In this section, we present the proof of \Cref{lm test} and the corresponding hypothesis selection algorithm. For convenience, we first state a formal version of  \Cref{lm test} as follows.
\begin{lemma}[Hypothesis Selection with Queries]\label{lm test re}
Let $D$ be a distribution over $\R^d \times \R$ and let $D_x$ be the marginal distribution of $D_x$.
    There is an algorithm that, on input a list of hypotheses $h_1,\dots,h_k$ such that for $i \in [k]$, $h_i: \R^d \to \R, \E_{x\sim D} h^2_i(x)$ exists, it makes $\poly(k)$ queries and returns a hypothesis $\hat{h}$ such that 
    $\err(\hat{h}) \le O(\min_{i \in [k]} \E_{(x,y)\sim D}(y-h_i(x))^2)$.
\end{lemma}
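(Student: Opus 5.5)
The plan is a tournament over pairwise comparisons, where each comparison is carried out by an importance-weighted query estimator with bounded variance. For each pair $(i,j)$ set $\Delta_{ij}(x) := h_i(x)-h_j(x)$ and $e_i := \E_{(x,y)\sim D}(y-h_i(x))^2$. Expanding the squares gives
\[
e_i - e_j = \E_x \Delta_{ij}(x)^2 - 2\,\E_{(x,y)}\,\Delta_{ij}(x)\,(y-h_j(x)) .
\]
Call the first term $N_{ij} := \E_x \Delta_{ij}^2$. It depends only on the known hypotheses and the known marginal $D_x$, so it can be computed or estimated to any accuracy without using labels. The label-dependent term is $C_{ij} := \E\,\Delta_{ij}(y-h_j)$. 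Everything is then measured relative to the scale $N_{ij}$.

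To estimate $C_{ij}$ we sample $x$ from the reweighted density $q_{ij}(x) \propto \Delta_{ij}(x)^2 D_x(x)$, query $y(x)$, and output
\[
Z = N_{ij}\,\frac{y(x)-h_j(x)}{\Delta_{ij}(x)} .
\]
This estimator is unbiased, $\E Z = C_{ij}$. Its second moment is
\[
\E Z^2 = N_{ij}\,\E_{D}(y-h_j)^2 = N_{ij}\,e_j .
\]
Taking a median of means over $O(\log(k/\delta))$ groups of constant size, we obtain with high probability
\[
|\hat C_{ij} - C_{ij}| \le \eta\,\sqrt{N_{ij}\,e_j}
\]
for a small constant $\eta$. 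Only $\tilde O(1)$ queries are needed per pair, regardless of how large the variance of $y$ is under the original $D_x$.

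The pairwise test declares that $i$ beats $j$ when $\hat e_i - \hat e_j := N_{ij} - 2\hat C_{ij} \le 0$. Write $e_{\min} := \min(e_i,e_j)$.
\begin{itemize}
\item Suppose $N_{ij} \ge K e_{\min}$ for a large constant $K$. Then $\|h_i-h_j\|_2 \ge \sqrt{N_{ij}}$, so by the triangle inequality the better hypothesis has error at least a constant factor below the worse one. The estimation error $\eta\sqrt{N_{ij} e_j}$ is at most $\eta N_{ij}$ times a constant when $e_j \lesssim N_{ij}$. Otherwise $e_j \gtrsim N_{ij}$ and $j$ is the worse hypothesis, and the error is still dominated by the gap $|e_i-e_j|$. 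This case needs some care: I would symmetrize by also estimating from the $h_i$ side and taking the estimate with the smaller variance proxy. In either case the correct winner is declared.
\item Suppose $N_{ij} < K e_{\min}$. Then $e_i \le 2e_j + 2N_{ij} = O(e_j)$ and symmetrically, so either outcome of the test is acceptable up to constants.
\end{itemize}

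To finish, run all $\binom{k}{2}$ comparisons and output a hypothesis that wins, or ties up to the constant factor, against every other hypothesis. The standard Scheffé-type argument applies: the optimal $i^*$ never loses badly, and any $\hat h$ that beats $i^*$ satisfies $e_{\hat h} \le O(e_{i^*})$. Finally, $\err$ is half of $e$ with respect to the same labels, which gives $\err(\hat h) \le O(\min_i e_i)$. The total query count is $O(k^2\log(k/\delta)) = \poly(k)$.

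I expect the main obstacle to be the case analysis in the pairwise test. There the estimation error scales with $\sqrt{e_j}$, which may be large, while the gap $N_{ij}$ is what is known. Making sure that the noisy side always loses cleanly will likely require the symmetric two-sided estimate, together with choosing $K$ and $\eta$ carefully.
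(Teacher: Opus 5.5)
Your estimator and pairwise test are correct. The estimator is essentially the paper's: the paper also samples from the density proportional to $(h_i-h_j)^2 d(x)$ and divides by $h_i-h_j$, but it centers at $(h_i+h_j)/2$ rather than $h_j$, which shifts $C_{ij}$ by the known amount $N_{ij}/2$.

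The symmetrization you hedge on is unnecessary. Write $a=\sqrt{e_i}$, $b=\sqrt{e_j}$, $n=\sqrt{N_{ij}}$, so $|a-b|\le n\le a+b$, and suppose $n^2\ge K\min(a,b)^2$.
\begin{itemize}
\item If $b\le n$, the gap is $|a-b|(a+b)\ge(1-2/\sqrt K)\,n^2$, while the error on $\hat e_i-\hat e_j$ is at most $2\eta n^2$.
\item If $b>n$, then $\min(a,b)=a<b/\sqrt K$. So $e_j-e_i\ge(1-1/K)\,b^2$, while the error is $2\eta nb<2\eta b^2$.
\end{itemize}
Either way the better hypothesis is declared the winner. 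Hence, with high probability, every comparison satisfies $e_{\mathrm{winner}}\le C\,e_{\mathrm{loser}}$ with $C=2+2K$.

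The gap is the aggregation step. The algorithm cannot identify a hypothesis that ``wins, or ties up to the constant factor, against every other hypothesis.'' Ties are defined through the unknown $e_i$, and the only observable is the binary outcome of each test. A hypothesis winning every comparison need not exist either: already $i^*$ can lose to near-optimal competitors, and near-ties can form cycles. Your justification only covers an $\hat h$ that beats $i^*$. Because the pairwise guarantee loses a factor $C$ per win, linking $\hat h$ to $i^*$ through an arbitrary chain of wins gives only $C^{k}$, not $O(1)$.

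You need a rule that bounds the chain length, for example: output a hypothesis with the most wins. Such a hypothesis is a king of the tournament. If $i^*$ beat $\hat h$ and also every hypothesis that $\hat h$ beats, then $i^*$ would have strictly more wins than $\hat h$, a contradiction. So $\hat h$ beats $i^*$, or $\hat h$ beats some $u$ that beats $i^*$, and in both cases $e_{\hat h}\le C^2 e_{i^*}$. With this fix your proof is complete and uses only $O(k^2\log(k/\delta))$ queries.

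The paper handles the same compounding issue differently:
\begin{itemize}
\item It draws an edge only for a significant win, meaning the estimate exceeds $\|h_i-h_j\|$.
\item It outputs a vertex of a source strongly connected component.
\item It estimates every comparison to relative accuracy $1/k^2$, so the slack accumulated around a cycle of length at most $k$ stays bounded. This is why it spends $\poly(k)$ queries per pair.
\end{itemize}
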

As a subroutine, we will use later, 
we present the following well-known median-of-mean estimator.
\begin{lemma}[Median of Means Estimation]\label{lm variance-sample}
    Let $G \in \R^d$ be a random vector such 
    that for every $i \in [d]$, $\E (G\cdot x_i)^2 \le B^2$. 
    Then there is an estimator that takes 
    $M=O(\log(1/\delta) dm)$, $\iid$ samples from $G$ 
    and computes a vector $g \in \R^d$ such that with probability at least $1-\delta$, $\norm{g - \E G} \le B/\sqrt{m}$.
\end{lemma}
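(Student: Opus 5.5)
The plan is a geometric median-of-means: reduce to a bound on the expected squared error of a single batch mean, then boost the confidence by comparing batch means against each other. Fix the standard basis vectors $x_1,\dots,x_d$. Since variance is at most the second moment, the hypothesis gives $\Var(G\cdot x_i)\le \E(G\cdot x_i)^2\le B^2$ for each $i\in[d]$. Hence, summing over coordinates, the total variance is $\E\norm{G-\E G}^2=\sum_{i=1}^d \Var(G\cdot x_i)\le dB^2$.

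\textbf{Step 1 (single batch).} Take a batch of $n=Cdm$ i.i.d. copies of $G$, for a suitable constant $C$, and let $\bar G$ be their empirical mean. By independence, $\E\norm{\bar G-\E G}^2\le dB^2/n=B^2/(Cm)$. Markov's inequality applied to $\norm{\bar G-\E G}^2$ then gives $\Pr\bigl(\norm{\bar G-\E G}>r\bigr)\le 1/4$, where $r:=B/(3\sqrt m)$, once $C=36$.

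\textbf{Step 2 (boosting).} Split the $M=O(\log(1/\delta)\,dm)$ samples into $k=O(\log(1/\delta))$ disjoint batches and compute batch means $\bar G_1,\dots,\bar G_k$. Call batch $j$ good if $\norm{\bar G_j-\E G}\le r$. The indicators of goodness are independent, each with probability at least $3/4$. By a Chernoff bound, with probability at least $1-\delta$ more than $k/2$ batches are good, provided the constant in $k$ is large enough.

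\textbf{Step 3 (selection rule).} Output $g=\bar G_j$ for any index $j$ such that more than $k/2$ of the means $\bar G_\ell$ satisfy $\norm{\bar G_j-\bar G_\ell}\le 2r$. On the event of Step 2 such a $j$ exists, since any good $j$ is within $2r$ of every good $\ell$ by the triangle inequality. Any selected $j$ shares at least one good $\ell$ with the good majority, because two sets each of size more than $k/2$ intersect. Therefore $\norm{g-\E G}\le \norm{\bar G_j-\bar G_\ell}+\norm{\bar G_\ell-\E G}\le 2r+r=3r=B/\sqrt m$. The total sample count is $kCdm=O(\log(1/\delta)\,dm)$, as claimed.

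\textbf{Main obstacle.} The only delicate point is that the estimate is vector-valued. A coordinatewise median of means would need a union bound over the $d$ coordinates, costing an extra $\log d$ factor; it would also only control each coordinate separately rather than the Euclidean norm directly. The distance-based selection in Step 3 avoids both issues, because Step 1 controls the full Euclidean norm through the total variance $dB^2$.
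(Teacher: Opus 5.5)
Your argument is correct and complete. The total-variance bound $\E\norm{G-\E G}^2\le dB^2$ holds because the $x_i$ form an orthonormal basis, as you assume. Markov on a batch of size $36dm$, a Chernoff bound over $k=O(\log(1/\delta))$ disjoint batches, and the majority-ball selection with the intersection argument then give $\norm{g-\E G}\le 3r=B/\sqrt m$ from $O(\log(1/\delta)\,dm)$ samples.

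The paper gives no proof at all. It invokes the lemma as the ``well-known median-of-mean estimator'' and remarks that only $d=1$ is needed for hypothesis selection, although the $d$-dimensional version is used for the gradient estimates $\hat G_i$. The route its name suggests is the coordinatewise median of means: run the scalar estimator on each coordinate to accuracy $B/\sqrt{dm}$ and sum the squares. That version is simpler and needs no knowledge of $B$. However, the union bound over coordinates forces $k=O(\log(d/\delta))$ batches, hence $O(dm\log(d/\delta))$ samples. The extra $\log d$ is harmless in every use in the paper, since it is absorbed into the $\tilde O(d)$ query bounds. It does not, however, match the lemma as literally stated; your geometric aggregation is what achieves $O(\log(1/\delta)\,dm)$.

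Two small remarks. First, your selection rule uses the threshold $2r$, and hence $B$. This is legitimate because $B$ is a parameter of the lemma. You can also drop it: for each $j$, take the smallest ball around $\bar G_j$ that contains a strict majority of the batch means, and output the $\bar G_j$ whose ball has minimal radius. This gives the same $3r$ guarantee by the same intersection argument. Second, your claim that the coordinatewise approach ``only controls each coordinate separately'' is not a real obstruction, since per-coordinate accuracy $B/\sqrt{dm}$ does yield the Euclidean bound. The only genuine cost of that approach is the $\log d$ factor.
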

In this hypothesis selection problem, we only need to use the 
version for $d=1$. Roughly speaking, if a random variable has a 
bounded variance, then very few samples suffice 
to estimate its mean. We present the following algorithm.

\begin{algorithm}
		\caption{\textsc{HypothesisSelection}(Select a good hypothesis from a list of hypothesis)}\label{alg test}
		\begin{algorithmic} [1]
\State\textbf{Input:} $h_1,\dots,h_k$ such that for $i \in [k]$, $h_i: \R^d \to \R,$, $D_x$ a marginal distribution over $\R^d$
\State\textbf{Output:} $\hat{h}: \R^d \to \R$, such that $\err(\hat{h}) \le O(\min_{i \in [k]} \E_{(x,y)\sim D}(y-h_i(x))^2)$ with non-trivial probability.
\State Let $d(x)$ be the density function of $D_x$ at $x$
\State Create an empty graph $G$ with a set of node $[k]$
\For{ each $(i,j), i\neq j \in [k]$}
\State For each pair of $(i,j)$ create a hypothesis $g_{ij}(x) = \frac{h_i(x)-h_j(x)}{\norm{h_i-h_j}}$
\State Denote by $D_{ij}$ the distribution over $x$ with density proportional to $g^2_{ij}(x)d(x)$
\State Use the median of mean method to estimate $\E_{x\sim D_{ij}}(y(x)-\frac{h_i+h_j}{2})g_{ij}(x)/g^2_{ij}(x)$ with $\poly(k)$ samples
\State If the estimated result is more than $\norm{h_i-h_j}$, draw an edge from $i$ to $j$
\EndFor
\State Return any $h_i$ such that $i$ is in a source strongly connected component of $G$
\end{algorithmic}
\end{algorithm}

\begin{proof}[Proof of \Cref{lm test}]
    We first observe that
    \begin{align*}
        \err(h_i) - \err(h_j) = 2\E_{x\sim D_x}y(h_j-h_i) + \norm{h_i}^2-\norm{h_j}^2 = 2\E_{x\sim D_x}(y-\frac{h_i+h_j}{2})(h_j-h_i),
    \end{align*}
    which means to compare the error of $h_i,h_j$, it is sufficient to check the sign of $\E_{x\sim D_x}(y-\frac{h_i+h_j}{2})(h_j-h_i)$.
    And this is equivalent to check the sign of $\E_{x\sim D_x}(y-\frac{h_i+h_j}{2})g_{ij}$. We remark that by our construction, $\norm{g_{ij}}_2=1$.
    Notice that 
    \begin{align*}
        \E_{x\sim D_{ij}}(y-\frac{h_i+h_j}{2})g_{ij}(x)/g^2_{ij}(x) = \E_{x\sim D_x}g_{ij}(x)\frac{(y-\frac{h_i+h_j}{2})}{g^2_{ij}(x)}g^2_{ij}(x) = \E_{x\sim D_x}(y-\frac{h_i+h_j}{2})g_{ij}(x).
    \end{align*}
Furthermore, consider the variance of the above quantity, we have 
\begin{align*}
    \E_{x\sim D_{ij}}(y-\frac{h_i+h_j}{2})^2g^2_{ij}(x)/g^4_{ij}(x) = \E_{x\sim D_x}(y-\frac{h_i+h_j}{2})^2\frac{g^2_{ij}(x)}{g^4_{ij}(x)}g^2_{ij}(x) = \norm{(y-\frac{h_i+h_j}{2})}_2^2.
\end{align*}
By \Cref{lm variance-sample}, we know that with $\poly(k)$ samples we are able to estimate $\E_{x\sim D_x}(y-\frac{h_i+h_j}{2})g_{ij}(x)$, with error $\norm{(y-\frac{h_i+h_j}{2})}/\poly(k)$.  
We consider a fixed pair of $(h_i,h_j)$. Assume $\norm{(y-\frac{h_i+h_j}{2})} \ge \norm{h_i-h_j}$, then we have 
\begin{align*}
    \norm{h_i-y} \le \norm{(y-\frac{h_i+h_j}{2})} + \frac{\norm{(h_i-h_j)}}{2} \le \frac{3}{4}(\norm{h_i-y}+\norm{h_j-y}),
\end{align*}
which implies $\norm{h_i-y} \le 3\norm{h_j-y}$. By symmetry, we have $\norm{h_j-y} \le 3\norm{h_i-y}$. This implies if $\norm{h_i-y}^2 \ge 10\norm{h_j-y}^2$, then we must have $\norm{(y-\frac{h_i+h_j}{2})} \le \norm{h_i-h_j}$. We next prove that if $\norm{h_i-y}^2 \ge 10\norm{h_j-y}^2$, then there must be an edge from $j$ to $i$ but no edge from $i$ to $j$. Recall that 
\begin{align*}
    &\E_{x\sim D_x}(y-\frac{h_i+h_j}{2})(h_j-h_i)  = \E_{x\sim D_x}y(h_j-h_i) - \frac{\norm{h_j}^2-\norm{h_i}^2}{2}\\
    =& \frac{1}{2}(\norm{h_i-y}^2-\norm{h_j-y}^2+\norm{h_j}^2-\norm{h_i}^2)- \frac{\norm{h_j}^2-\norm{h_i}^2}{2}\ge \frac{9}{2}\norm{h_j-y}^2
\end{align*}
Since 
\begin{align*}
    \norm{h_i-h_j} \ge \norm{h_i-y}-\norm{h_j-y} \ge (\sqrt{10}-1) \norm{h_j-y},
\end{align*}
we have $\norm{h_j-y} \le \norm{h_i-h_j}/2$.
This gives 
\begin{align*}
    \E_{x\sim D_x}(y-\frac{h_i+h_j}{2})(h_j-h_i) \ge \frac{9}{8}\norm{h_i-h_j}^2.
\end{align*}
Thus, when $\norm{h_i-y}^2 \ge 10 \norm{h_j-y}^2$, we have $\E_{x\sim D_x}(y-\frac{h_i+h_j}{2})g_{ji} \ge 9\norm{h_i-h_j}/8$ and the estimation of this is larger than $\norm{h_i-h_j}$ with high probability. Thus, there must be an edge from $j$ to $i$ and no edge from $i$ to $j$.

On the other hand, if $\norm{(y-\frac{h_i+h_j}{2})} \ge k^2 \norm{h_i-h_j}$, then we have 
\begin{align*}
    \norm{h_i-y} \le \norm{(y-\frac{h_i+h_j}{2})} + \frac{\norm{(h_i-h_j)}}{2} \le \left(\frac{1}{2}+ \frac{1}{4k^2}\right)
    (\norm{h_i-y}+\norm{h_j-y}),
\end{align*}
which implies $\norm{h_i-y} \le (1+1/2k^2)\norm{h_j-y}$. By symmetry, we have $\norm{h_j-y} \ge (1-1/2k^2)\norm{h_i-y}$. If $\norm{h_i-y}^2 \le \norm{h_j-y}^2$, then we have $\E yg_{ji} \le 0$. Since we have estimated $\E yg_{ji}$ up to error $\norm{y-\frac{h_i+h_{i+1}}{2}}/k^2$, this implies unless $\norm{h_i-y}$ and $\norm{h_j-y}$ are within a factor of $1\pm 1/2k^2$ of each other, there will be no edge from $j$ to $i$. Thus, if there is an edge from $j$ to $i$, then it must be the case that $\norm{h_i-y} \ge \norm{h_j-y}(1-1/2k^2)$.

Now, we consider any source strongly connected $S$ in $G$. For every pair of $(i,j)$ in $S$, there is a cycle $C$ that contains $(i,j)$. Without loss of generality, we write 
$C=(1 \to 2,\dots,m \to 1)$, where $m \le k$. We will show that every $h_i, i \in C$ has similar error. We prove this via contradiction. Assuming there is a pair of $(i,j)$ in the cycle such that $\norm{y-h_i}^2 \ge 25 \norm{y-h_i}^2$, then there must be an edge from $i$ to $j$. Without loss of generality, we assume $i=m$ and $j=1$, otherwise we can prove the same statement over a smaller cycle. Since for every $i \le m-1$, there is an edge from $i$ to $i+1$, this implies  
\begin{align*}
    \norm{h_{m-1}-y} \ge \left(1-\frac{1}{2k^2}\right)^k \norm{h_{1}-y} \ge \norm{h_{1}-y}/2 \ge \frac{5}{2} \norm{h_m-y}, 
\end{align*}
which implies that there should not be an edge from $m-1$ to $m$. Thus, for each pair of $(i,j)$ in a cycle $C$, $\norm{h_i-y}^2 \le 25 \norm{h_j-y}^2$. 
This implies that any source strongly connected component $S$ cannot contain a vertex $j$ such that $\norm{y-h_j}^2 \ge 25 \min_{i \in [k]} \E_{(x,y)\sim D}(y-h_i(x))^2$. To see why this is true, we consider two cases. Let $i^*$ be the vertex such that $\norm{h_{i^*}-y} = \min_{i \in [k]} \E_{(x,y)\sim D}(y-h_i(x))^2$. If $S$ contains $i^*$, since $S$ is a strongly connected component, each pair of $(i,i^*)$ is contained within some cycle and their error must be within a factor of $25.$ On the other hand, if $i^* \not\in S$ and $S$ contains some $j$ such that $\norm{y-h_j}^2 \ge 25 \min_{i \in [k]} \E_{(x,y)\sim D}(y-h_i(x))^2$, which implies that there must be an edge from $i^*$ to $j$ and $S$ cannot be a source strongly connected component. Notice that every time we make a comparison between $h_i,h_j$, we only use $\poly(k)$ samples, thus the total number of queries we make is $\poly(k)$.

Since for every pair of nodes $(i,j)$, there is a cycle such that $(i,j)$ are both in the cycle. In this case, $\norm{y-h_i}$ and $\norm{h_j-y}$ are within a factor. Let $i^*$ be the index such that $\norm{h_{i^*}-y}$ is minimized. We claim that $C$ does not contain some $j$ such that $\norm{h_{j}-y}^2\ge 10\norm{h_{i^*}-y}^2$. If so, then there must be an edge from $i^*$ to $j$. Since $C$ is a source strongly connected component, then $i^* \in C$, which implies that $(i,i^*)$ must be in the same cycle, which gives a contradiction.
\end{proof}

\subsection{Proof of \Cref{th spherical}}\label{app spherical}
Finally, we give a query-optimal learning algorithm that solves the general ReLU regression problem for the special case where $\norm{W^*}=1$ and $t^*>0$ is given.

\begin{theorem}[Query Learning for Spherical ReLU] \label{th spherical}
   Consider the problem of agnostic PAC learning GLM with membership queries under the Gaussian distribution. Suppose the optimal activation function $\sigma = \sigma_{t^*}^*$ is of the form $\sigma(w^*\cdot x -t^*)$, where $\sigma$ is the ReLU function, $w^* \in \s^{d-1}$ and $t^* \in \R$ is known,    
    there is an algorithm such that for every labeling function $y(x)$ and for every $\eps,\delta \in (0,1)$, it makes $M=\Tilde{O}_\delta(\min\{1/p, 1/\eps\} + d\cdot\polylog(1/\eps))$ memberships queries, runs in $\poly(d,M)$ time,
    where $p = \Phi(t^*)$ is the bias of the optimal activation function $\sigma^*$, 
    and outputs an $\hat{h} = \sigma(\hat{w}\cdot x - t^*),$ $\hat{w} \in \s^{d-1}$, such that with probability 
    at least $1-\delta$, $\err(\hat{h}) \le O(\opt)+\eps$.
\end{theorem}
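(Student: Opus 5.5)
We will assemble the spherical-case algorithm from the lemmas of \Cref{sec sphere}, splitting first on the size of the bias. If $V(t^*) \le C\eps$ for the large constant $C$ of \Cref{lm reduction}, then the zero hypothesis already has error at most $2\opt + V(t^*) = O(\opt+\eps)$. Since we also need a ReLU of the prescribed form, we return $\sigma(w\cdot x - t^*)$ for an arbitrary unit vector $w$. Its noiseless error is at most $2V(t^*)$, so its total error is $O(\opt+\eps)$, and this case uses no queries.

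Otherwise $V(t^*) > C\eps$, so $\Phi(t^*) \gtrsim (t^*)^2\eps$ and $t^* = O(\sqrt{\log(1/\eps)})$. In particular $1/p = \tilde{O}(1/\eps)$, which explains the $\min\{1/p,1/\eps\}$ term. The algorithm then has three phases.
\begin{enumerate}
\item \textbf{Initialization.} By \Cref{lm reduction}, the truncated labels $\bar{y}(x)=\Ind\{y(x)>c/t^*\}$ are consistent with $\sign(w^*\cdot x-t^*)$ except on a $\Phi(t^*)/C'$ fraction of points. We run the procedure of \Cref{lm halfspace} with $t=t^*$ (which is known exactly) $\polylog(1/\eps)\log(1/\delta)$ times. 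With high probability at least one run returns $w^{(0)}$ with $\sin(\theta_0/2)\le \min\{1/t^*,1/2\}$, and the total cost is $\tilde{O}(1/p + d\polylog(1/\eps))$ queries.
\item \textbf{Refinement.} From each candidate we run the projected update \eqref{eq update}, replacing the true gradient by the rejection-sampled vector $G$ of \Cref{lm initialization variance}.
\item \textbf{Selection.} We pick a final hypothesis from the list with \Cref{lm test}, at $\polylog(1/\eps)$ queries.
\end{enumerate}

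For the refinement phase we argue along a good run as follows. The angle only decreases, so $\sin(\theta_i/2)\le 1/t^*$ holds throughout. \Cref{lm initialization} then gives $\norm{T_{\sqrt{\cos\theta_i}}\sigma'}_2^2 \ge \norm{\sigma'}_2^2/50 = \Phi(t^*)/50$, so \Cref{lm stopping condition} applies with $\alpha = 50$. By \Cref{lm initialization variance}(1), $\E G^*$ points along $u$ (up to the sign convention of the update) with length $b\norm{T_{\sqrt a}\sigma'}_2^2/\Phi(t^*) \in [b/50, b]$. As long as $\sin^2\theta_i\,\Phi(t^*) \ge C''\eps$, item (2) bounds the noise by $\sqrt{\eps/\Phi(t^*)} \le b/C'''$, and item (3) bounds the variance by $\tilde{O}(b^2)$. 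By \Cref{lm variance-sample}, $\tilde{O}(d)$ queries then give $\hat g$ with $\norm{\hat g - \E G}\le bc/40$ for $c=1/50$. The hypotheses of \Cref{lm gradient descent} hold with a constant $c$, so each round shrinks $\sin(\theta_i/2)$ by a constant factor.

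After $T=O(\log(1/\eps))$ rounds, $\sin^2\theta_T\,\Phi(t^*) = O(\eps)$. \Cref{lm noiseless} then gives noiseless error $O(\sin^2\theta_T\norm{\sigma'}_2^2)=O(\eps)$, so $\err(w^{(T)}) \le 2\opt + O(\eps)$. This phase costs $d\polylog(1/\eps)$ queries per candidate, and a union bound over rounds with median-of-means confidence $\delta/\polylog$ controls the failure probability.

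The main obstacle is that the stopping rule is not observable: once $\sin^2\theta\,\Phi(t^*)$ falls below $\eps$, item (3) of \Cref{lm initialization variance} no longer certifies small variance, and the estimates could push $w$ off course. To handle this, I would run exactly $T$ rounds. Rounds taken after the error has already reached $O(\eps)$ may misbehave, so rather than rely on $w^{(T)}$ alone I would add every iterate of every run to the candidate list, which has only $\polylog(1/\eps)$ entries, and pass the whole list to \Cref{lm test}. The selection procedure returns a hypothesis whose error is within a constant factor of the best candidate, which is $O(\opt+\eps)$. A secondary care point is that \Cref{lm test} as stated uses $\poly(k)$ queries with constant success probability, so it must be boosted by repetition to confidence $1-\delta$ while keeping the total at $\polylog(1/\eps)$ queries.
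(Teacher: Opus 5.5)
Your proposal is correct and follows essentially the same route as the paper: the $V(t^*)\le C\eps$ shortcut (the paper's check $\Phi(t^*)/(t^*)^2\le O(\eps)$), and initialization by feeding the truncated labels of \Cref{lm reduction} into \Cref{lm halfspace} with repetitions. It then runs the rejection-sampled projected gradient, analyzed via \Cref{lm initialization}, \Cref{lm initialization variance}, \Cref{lm stopping condition}, \Cref{lm gradient descent} and \Cref{lm variance-sample}, and finishes with selection by \Cref{lm test}. The one place you go beyond the paper is adding every iterate of every run to the candidate list, which handles the unobservable stopping time that the paper's proof skips by tracking iterates only until $\theta_i\le\phi^*$.
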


\begin{algorithm}[h]
		\caption{\textsc{SphericalLearning}(Learn $w^*$ over the unit sphere)}\label{alg spherical refine}
		\begin{algorithmic} [1]
\State\textbf{Input:} error parameter $\epsilon \in (0,1)$, confidence parameter $\delta \in (0,1)$ 
\State\textbf{Output:} $\hat{h}: \R^d \to \R$, such that $\err(\hat{h}) \le O(\eps)$ with non-trivial probability.
\State Call \textsc{Initialization$(t^*)$} to get $w^{(0)}$ and return $\hat{h}(x) = 0$ if no $w^{(0)}$ is returned.
\For{$i=0,\dots,T-1$}
\For{$j=1,\dots,\Tilde{O}(d)$}
\State Generate $z^{(j)}>0$ with probability $\phi(t^*+z^{(i)})/\Phi(t^*)$ and $u^{(j)} \perp w^{(i)} \sim N(0,I)$
\State Query $y(x^{(j)})$, where $z^{(j)} w^{(i)} + u^{(j)}$
\EndFor
\State Estimate $\E\frac{1}{m} \sum_{j=1}^m \left(\sigma(w^{(i)} \cdot x^{(j)}-t^*) - y(x^{(j)})\right) u^{(j)}$ via median of mean and get $g^{(i)}$
\State $w^{(i+1)} = \proj_{\s^{d-1}} \left( w^{(i)} + \mu g^{(i)} \right)$
\EndFor
\State\Return $w^{(T)}$

\Procedure{Initialization}{Find a warm start $w^{(0)}$ given $t^*$}
    \State \textbf{Input: $t^*>0$ }
    \State \textbf{Output:} $w^{(0)} \in \s^{d-1}$ such that $\theta(w^{(0)},w^*) \le 1/(2t^*)$ or assert $\err(0) \le O(\eps)$.
    \If{$\Phi(t^*)/(t^*)^2 \le O(\epsilon)$}
        \State Assert $\err(0) \le O(\eps)$.
    \Else \State Run the initialization algorithm for query learning halfspaces (Algorithm 2/Algorithm 5 in \cite{diakonikolas2024active}) by simulating binary membership query with $\Bar{y}: = \Ind\{y>c/(t^*)\}$ for small constant $c>0$ and denote the return of the algorithm by $w^{(0)}$.
    \EndIf
    
\EndProcedure

\end{algorithmic}
\end{algorithm}

\begin{proof}[Proof of \Cref{th spherical}]
In each round of \Cref{alg spherical refine}, we use $\theta_i$ to denote $\theta(w^{(i)},w^*)$ and we denote by $G_i^* \in \R^d$ the random vector be the random vector $\left(\sigma(w^{(i)} \cdot x-t^*) - \sigma(w^* \cdot x-t^*)\right) \proj_{(w^{(i)})^\perp} x$ and $G_i$ be the random vector $\left(\sigma(w^{(i)} \cdot x-t^*) - y(x)\right)\proj_{(w^{(i)})^\perp} x$ , where $x \sim N(0,I) \mid_{\{x \mid w^{(i)} \cdot x> t^*\}}$.

By \Cref{lm reduction} and \Cref{lm halfspace}, we know that the subroutine \textsc{Initialization} takes $\Tilde{O}(1/p+d\log(1/\eps))$ queries and output a unit vector $w^{(0)}$ such that with probability at least $\log(1/\Phi(t^*))$, $\sin(\theta_0/2) \le 1/t^*$. 
In the rest of the proof, we assume \textsc{Initialization} succeeds.
Let $\alpha = 50$.
By \Cref{lm initialization}, let $\alpha = 50$, we know that if $\theta_i\le \theta_0$, then $\norm{T_{\sqrt{\cos \theta_i}} \sigma'}_2^2 \ge \norm{\sigma'}_2^2 / \alpha$. Let $\phi^* \in (0,\pi/2)$ such that $\sin^2 \phi^* = 20\alpha^2 \eps /(\pi \norm{\sigma'}_2^2)$. \Cref{lm stopping condition} implies that if $\theta_i \le \phi^*$, then $\err(w) \le O(\alpha^2 \eps) = O(\eps)$, since $\alpha = O(1)$.

Recall that for activation function $\sigma(z) = ReLU(z - t^*)$, we have $\norm{\sigma'}_2^2 = \Phi(t^*)$. We will show that for if $\theta_i \le \theta_0$ and $\theta_i \ge \phi^*$, then with high probability, $\theta_{i+1} \le (1-1/C) \theta_i$ for some constant $C>1$. Write $w^* = a w^{(i)} + b u$, where $u \in \s^{d-1}, a,b \ge 0, a^2+b^2=1$.
By \Cref{lm initialization variance}, we known that 
\begin{align*}
   & \E G_i^* = \proj_{(w^{(i)})^\perp} \nabla_w \ell(w^{(i)})/\Phi(t^*) =  b \norm{T_{\sqrt{a}}\sigma'}_2^2 u/\Phi(t^*) \\
   & \norm{\left( \E G^* - \E G \right)} = \norm{\proj_{w^\perp} \left( \nabla_w\err(w) - \nabla_w \ell(w) \right)}/\Phi(t^*)  \le \sqrt{\eps}\norm{\sigma'}_2/\Phi(t^*)
\end{align*}
By \Cref{lm stopping condition}, we know that if $\phi^*<\theta_i\le \theta_0$, then $\norm{\left( \E G^* - \E G \right)} \le \norm{\E G_i}/20$. This implies that 
\begin{align*}
     \E G_i \cdot u  = \E G_i \cdot u = \E G^*_i \cdot u + \E (G_i - G^*_i) \cdot u \ge \frac{19}{20} \norm{\E G_i} \ge 19 b \alpha^{-1} /20. 
\end{align*}
Furthermore, since $\norm{\E G} \le \frac{21}{20} \norm{\E G_i},$ \Cref{lm gradient descent}, we know that by estimating $G$ upto error $\norm{\E G}/40$, then $\theta_{i+1} \le (1-1/C) \theta_i$ for some large constant $C>0$.
By \Cref{lm initialization variance}, we know that $\E (G \cdot v)^2 \le O(b^2 \log(1/\eps)), \forall v \in \s^{d-1}$. By \Cref{lm variance-sample}, we know that in each round of update, we only need $\Tilde{O}(d)$ queries. Since we only need $\Tilde{O}(\log(1/\eps))$ rounds to make $\theta_i \le \theta^*$, the query complexity of \Cref{alg spherical refine} is $\Tilde{O}(1/\Phi(t^*)+d \poly\log(1/\eps))$.
\end{proof}

\section{Omitted Proofs from \Cref{sec regression}}\label{app regression}

In this section, we provide the missing details for solving the general ReLU regression problem.

\subsection{Proof of \Cref{lm Ini}}

We start with the initialization algorithm. We provide the proof 
of \Cref{lm Ini} and its restatement \Cref{lm Ini re} as follows. 
The key point of the lemma is that if we have a 
reasonable initial knowledge about $r^*,t^*$, 
we are still able to do the initialization.

\begin{lemma}[Initialization with Raw Knowledge]\label{lm Ini re}
    Let $\sigma$ be the ReLU activation function. Let $h^* = r^*\sigma(\bar{w^*}\cdot x - \bar{t}^*)$ be the optimal hypothesis. Suppose that $(r^*)^2V(\bar{t}^*) \ge \Omega(\eps)$, there is an algorithm such that
    given parameter $r,t>0$ such that $r \le r^* \le 2r$ and $\abs{t-\bar{t}^*} \le 1/\log(R^2/\eps)$, it makes $M=\Tilde{O}(1/p+d\log(R^2/\eps))$, runs in $\poly(d,M)$ time, and with probability at least $1/\log(1/p)$,
outputs some $w^{(0)} \in \s^{d-1}$ such that $\sin (\theta( w^{(0)}, w^*)/2) \le \min\{1/\bar{t}^*,1/2\}$.
\end{lemma}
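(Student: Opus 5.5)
The plan is to reduce to the spherical case exactly as in \Cref{lm reduction} and \Cref{lm halfspace}, but with the truncation threshold built from the rough guess $r$ instead of the unknown $r^*$. Since $h^*(x)=r^*\sigma(w^*\cdot x-\bar t^*)$, the rescaled label $y(x)/r^*$ is a labeling for the unit-norm ReLU $\sigma(w^*\cdot x-\bar t^*)$ with error $\opt/(r^*)^2\le \eps/(r^*)^2$. The hypothesis $(r^*)^2V(\bar t^*)\ge\Omega(\eps)$ becomes $V(\bar t^*)\ge C\eps/(r^*)^2$, which is exactly the assumption of \Cref{lm reduction} at error level $\eps'=\eps/(r^*)^2$. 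The only change is that we cannot divide by $r^*$. We therefore define the truncated label $\bar y(x)=\Ind\{y(x)>c\,r/(t+1)\}$, using the known $r$ and the known approximate threshold $t$.

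The main step is a robust version of \Cref{lm reduction}: show that $\Pr_x(\bar y(x)\neq\sign(w^*\cdot x-\bar t^*))\le\Phi(\bar t^*)/C'$. I will repeat the three-region partition. Set $I_2=\{w^*\cdot x<\bar t^*\}$, $J=\{\bar t^*<w^*\cdot x<\bar t^*+2c'/(\bar t^*+1)\}$, and let $I_1$ be the rest. The effective threshold is $\tau=c\,r/(r^*(t+1))$ in the normalized labels. Because $r\le r^*\le 2r$ and $|t-\bar t^*|\le 1/\log(R^2/\eps)\le 1$, we get $\tau\in[c/(2(\bar t^*+2)),\,c/(\bar t^*)]$, i.e., $\tau=\Theta(c/(\bar t^*+1))$. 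Choosing the width of $J$ as $2\tau$ (or a constant multiple of it) has two consequences. On $I_1\cup I_2$, flipping $\bar y$ requires normalized noise at least $\tau$, so Markov's inequality gives a flip probability of $O((\bar t^*+1)^2\eps'/c^2)\le\Phi(\bar t^*)/C_1$ via $V(\bar t^*)=\Theta(\Phi(\bar t^*)/(\bar t^*)^2)$. Meanwhile $\Pr(J)\le O(c)\Phi(\bar t^*)$, by the same density bound as before. Both constants are absorbed by taking $c$ small and $C$ large.

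Next I invoke \Cref{lm halfspace} with the target halfspace $\sign(w^*\cdot x-\bar t^*)$, simulating binary queries by $\bar y$ (one real query each) and passing the given $t$ as the threshold estimate. Its hypotheses hold: the noise rate is at most $\Phi(\bar t^*)/C'$, and $|t-\bar t^*|\le 1/\log(R^2/\eps)$, where $\log(1/\eps)$ in that lemma becomes $\log(R^2/\eps)$ under the rescaling. We also need $\bar t^*\le O(\sqrt{\log(R^2/\eps)})$. This follows from $(r^*)^2V(\bar t^*)\ge\Omega(\eps)$ together with $r^*\le R$ and $V(\bar t)\le e^{-\bar t^2/2}$. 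The lemma then uses $\tilde O(1/\Phi(t)+d\log(R^2/\eps))$ queries, succeeds with probability $1/\log(1/\Phi(t))$, and returns $w^{(0)}$ with $\sin(\theta/2)\le\min\{1/t,1/2\}$. Finally, $\Phi(t)=\Theta(\Phi(\bar t^*))=\Theta(p)$ because $|t-\bar t^*|\cdot\bar t^*=O(1)$, which gives the $\tilde O(1/p)$ query count. The same estimate yields $1/t\le(1+o(1))/\bar t^*$, and we absorb this constant by applying the halfspace lemma with a slightly smaller target angle (e.g.\ threshold $t+1$), or by noting that \Cref{lm initialization} tolerates a constant-factor slack.

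I expect the main obstacle to be the robust reduction. The truncation level must be fixed using only $(r,t)$, yet it has to be simultaneously far enough from $0$ that noise cannot cheaply flip labels below $\bar t^*$, and small enough that the ambiguous slab $J$ has mass $\ll\Phi(\bar t^*)$. This works because both $r/r^*$ and $(t+1)/(\bar t^*+1)$ are within constant factors. I would verify carefully that these constant factors enter only the choice of $c$ and $C$, not the exponent.
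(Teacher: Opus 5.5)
Your proposal is correct and follows the paper's proof essentially step for step: you truncate the labels at a level built from the known $(r,t)$, bound label flips outside the slab $J$ just above $\bar t^*$ by the second-moment (Markov) argument and the mass of $J$ by the Gaussian density, and then feed the resulting noisy halfspace to \Cref{lm halfspace}; using $t+1$ in the truncation level just replaces the paper's separate treatment of the case $\bar t^*<1$. One small correction: to close the $1/t$ versus $1/\bar t^*$ gap, passing $t+1$ to \Cref{lm halfspace} would violate its requirement that the threshold estimate be within $1/\log(R^2/\eps)$ of $\bar t^*$, whereas passing $t+1/\log(R^2/\eps)\ge \bar t^*$ works because it stays within $O(1/\log(R^2/\eps))$ of $\bar t^*$ (the paper's proof does not address this gap at all).
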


\begin{proof}[Proof of \Cref{lm Ini}]
We consider the truncated label 
$\bar{y}(x) = \Ind \{y(x)>cr/t \}$, for a suitably small constant 
$c>0$. We will show that the truncated label $\bar{y}(x)$ can be 
seen as generated from the halfspace $h^*(x) = \sign(w^* \cdot x - \bar{t}^*)$ corrupted with adversarial label noise with level at 
most $\Phi(\bar{t}^*)/C$ for some large enough constant $C$. By 
the assumption that $(r^*)^2V(\bar{t}^*) \ge \Omega(\eps)$, we 
have $\bar{t}^* \le O(\sqrt{\log(R^2/\eps)})$, 
otherwise, $h(x) \equiv 0$ has error $O(\eps)$. On the other 
hand, we can assume $\bar{t}^* \ge 1$, otherwise, estimating 
$\E_{x \sim N(0,I)} y(x) x$ with constant error is enough to get 
a $w^{(0)}$ such that $\sin\theta(w^{(0)},w^*) \le 1/C$. With 
these assumptions, we use 
a similar argument as we did for the proof of \Cref{lm reduction} 
but with a more careful analysis.

We partition $\R^d$ into three regions $I_1:=\{x \mid w^* \cdot x> \bar{t}^* +2c/t\}, I_2:=\{x \mid w^* \cdot x< \bar{t}^*\}$ and $J:=\{x \mid \bar{t}^*< w^* \cdot x< \bar{t}^*+2c/t\}$. Since $\opt=\E_{x \sim N(0,I)} \left( r^*\sigma(w^*\cdot x -\bar{t}^*)- y\right)^2/2 < \eps$, we have 
    \begin{align*}
        \eps & \ge  \E_{x\sim N(0,I)} \left( r^*\sigma(w^*\cdot x -\bar{t}^*) \right)^2/2 \\
        & \ge \E_{x\sim N(0,I)} \Ind\{x \in I_1 \cup I_2\} \left( r^*\sigma(w^*\cdot x -\bar{t}^*) - y \right)^2/2 \\
        & \ge \E_{x\sim N(0,I)} \left(\Ind\{x \in I_1, y(x)< cr/t \} + \Ind\{x \in I_2, y(x)>cr/t\}\right) \left( r^*\sigma(w^*\cdot x -\bar{t}^*)-y \right)^2/2 \\
        & \ge \frac{c^2r^2}{2t^2} \Pr_{x \sim N(0,I)}\left( x \in I_1 \cup I_2, y(x) \neq \bar{y}(x) \right) \;.
    \end{align*}
Since $\E_{x \sim N(0,I)} (r^*)^2\sigma(w^*\cdot x - \bar{t}^*)^2 \ge C\eps $, we obtain that
\begin{align*}
    \Pr_{x \sim N(0,I)}\left( x \in I_1 \cup I_2, y(x) \neq \bar{y}(x) \right) \le \frac{2t^2}{r^2c^2} \eps \le \frac{2t^2 \E_{x \sim N(0,I)} (r^*)^2 \sigma(w^*\cdot x - \bar{t}^*)^2}{Cc^2} \le \Phi(\bar{t}^*)/C_1,
\end{align*}
for some large enough $C_1>0$. Here, we use the fact that $r \le r^* \le 2r$, $\bar{t}^* \le O(\sqrt{\log(R^2/\eps)})$ and $\abs{t-\bar{t}^*} \le 1/\log(R/\eps)$.

On the other hand, 
\begin{align*}
    \Pr_{x \sim N(0,I)}\left(x \in J, y(x) \neq \bar{y}(x)\right) \le \Pr_{x \sim N(0,I)}\left(x \in J\right) \le \frac{2c}{t} \frac{1}{\sqrt{2\pi}}\exp\left(-\frac{(\bar{t}^*)^2}{2}\right) \le 10c \Phi(\bar{t}^*).
\end{align*}
Thus, 
\begin{align*}
    \Pr_{x \sim N(0,I)} \left(\bar{y}(x) \neq \sign(w^*\cdot x - \bar{t}^*) \right) & = \Pr_{x \sim N(0,I)}\left(x \in J, y(x) \neq \bar{y}(x)\right) + \Pr_{x \sim N(0,I)}\left( x \in I_1 \cup I_2, y(x) \neq \bar{y}(x) \right)\\
    & \le \Phi(\bar{t}^*)/C_1 + 10c \Phi(\bar{t}^*) = \Phi(\bar{t}^*)/C''
\end{align*}
for some large enough $C''>0$.
By \Cref{lm halfspace}, we are able to efficiently obtain a direction $w^{0}$ that satisfies the statement of \Cref{lm Ini}.
\end{proof}

\subsection{Proof of \Cref{lm realizable error}}\label{app realizable}

We next present the proof of \Cref{lm realizable error} and its restatement \Cref{lm realizable error re} to decompose the noiseless error for the general ReLU regression problem.

\begin{lemma}\label{lm realizable error re}
Consider the problem of agnostic ReLU regression with queries. Let $r>0,w \in \s^{d-1}, t>0$ and $\theta(w,\bar{w^*}) = \theta$. If $t^* \le O(\sqrt{\log(R^2/\eps)})$ and $\abs{t-t^*} \le 1/\log(R^2/\eps)$, then 
    \begin{align*}
        \ell(r,w,t) \le (r^*)^2\int_0^\theta \sin s \norm{T_{\sqrt{\cos s}} \sigma'(z-\bar{t}^*)}_2^2 ds + \E_{z\sim N(0,I)}\left(\sigma(rz - t) - \sigma(r^*z - t^*)\right)^2.
    \end{align*}
\end{lemma}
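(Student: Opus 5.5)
The plan is to compare $h=\sigma(rw\cdot x-t)$ with $h^*=\sigma(r^*w^*\cdot x-t^*)$ through the intermediate hypothesis $h'(x):=\sigma(r^*w\cdot x-t^*)=r^*\sigma(w\cdot x-\bar t^*)$. This hypothesis has the correct scale and threshold but the current direction $w$. Set $A:=h-h'$ and $B:=h'-h^*$, so that $\ell(r,w,t)=\tfrac12\E(A+B)^2$. Write $z=w\cdot x$ and $w^*=aw+bu$ with $s=u\cdot x$, as in \Cref{lm noiseless}.

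\textbf{The two target terms.} Because $z\sim N(0,1)$, the first term is a one-dimensional quantity:
\[
\E A^2=\E_{z\sim N(0,1)}\left(\sigma(rz-t)-\sigma(r^*z-t^*)\right)^2 .
\]
The second term is a spherical noiseless error for the activation $\sigma_{\bar t^*}(z):=\sigma(z-\bar t^*)$. By \Cref{lm noiseless},
\[
\tfrac12\E B^2=(r^*)^2\int_0^\theta \sin s\,\norm{T_{\sqrt{\cos s}}\sigma'(z-\bar t^*)}_2^2\,ds .
\]
These two quantities are exactly the right-hand side of the lemma. The crude estimate $\tfrac12(A+B)^2\le A^2+B^2$ would lose a factor $2$ on the angle integral, so I will not use it. Instead I will handle the cross term exactly. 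Since $A$ depends on $x$ only through $z$, conditioning on $z$ gives $\E[AB]=\E_z A(z)c(z)$, where
\[
c(z):=\E[B\mid z]=r^*\left(\sigma_{\bar t^*}(z)-T_a\sigma_{\bar t^*}(z)\right).
\]
Expanding the square, the claim is then equivalent to $\E_z[A(z)c(z)]\le\tfrac12\E_z A(z)^2$.

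\textbf{The main obstacle: the cross term.} Cauchy--Schwarz alone does not suffice. A short computation with $A=\lambda c$, $\lambda\in(0,2)$, shows that the inequality can only hold because of the specific shapes of $A$ and $c$. So the structure of the ReLU must be used.

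\emph{Smallness of $c$.} By \Cref{fact ou}, the identity $\sigma-T_a\sigma=-\int_a^1\frac{1}{\rho}LT_\rho\sigma\,d\rho$ holds. For $a=\cos\theta$ close to $1$ this shows $c$ is of second order, namely $\norm{c}_2^2=O\big((r^*)^2(1-a)^2\big)=O\big((r^*)^2\theta^4\big)$. By contrast, the angle integral is of order $(r^*)^2\theta^2\Phi(\bar t^*)$.

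\emph{Shape of $c$.} Because $\sigma_{\bar t^*}$ is convex, Jensen gives $T_a\sigma_{\bar t^*}(z)\ge\sigma_{\bar t^*}(az)$. Hence $c$ is nonpositive away from a window near $z\approx\bar t^*$.

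\emph{Shape of $A$.} The function $A$ is a difference of two ReLUs whose kinks are at $t/r$ and $\bar t^*$. Under the hypotheses $t^*\le O(\sqrt{\log(R^2/\eps)})$ and $\abs{t-t^*}\le 1/\log(R^2/\eps)$, these two kinks are close to each other.

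\textbf{How I would close the argument.} I would split the $z$-line at the two kinks and bound $\E[Ac]$ region by region. On the region where $A$ and $c$ have opposite signs there is nothing to prove. On the remaining region, which is a window of width $O(1/\log(R^2/\eps))$ around $\bar t^*$, I would apply weighted Young, $Ac\le\tfrac12A^2+\tfrac12c^2$. The $c^2$ mass on that window is then of lower order than the angle integral.

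\textbf{What remains uncertain.} This route proves the lemma with coefficient $1$ on $\E A^2$ and coefficient $1+o(1)$ on the angle integral. To obtain coefficient exactly $1$, the window contribution $\tfrac12\E[c^2\Ind_{\mathrm{window}}]$ must be absorbed by the slack in $\tfrac12\E A^2$. This is the step I am least sure of. If it fails, I would check whether the sign analysis can be sharpened to show $\E[Ac]\le 0$ outright when the kinks are close.
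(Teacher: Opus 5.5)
Your reduction is correct. With $A=h-h'$, $B=h'-h^*$ and $c(z)=\E[B\mid z]=r^*\bigl(\sigma_{\bar t^*}(z)-T_a\sigma_{\bar t^*}(z)\bigr)$, the lemma is equivalent to $\E_z[A(z)c(z)]\le\frac12\E_z A(z)^2$. You are also right that the crude bound $\frac12(A+B)^2\le A^2+B^2$ only gives coefficient $2$ on the angle integral.

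\textbf{The gap.} The cross-term inequality is false, so no sharper sign analysis can close it. Take $t=t^*$, $r=r^*+\eta$ with $\eta>0$, and any $\theta\in(0,\pi/2)$; the hypotheses hold trivially. Since $\abs{A(z)}\le\eta\abs{z}$, we get $\frac12\E A^2\le\eta^2/2$. Now run the Ornstein--Uhlenbeck computation from the proof of \Cref{lm noiseless} with $A$ as the first factor:
\[
\E[AB]=\E_z[A(z)c(z)]=r^*\int_a^1\E_z\bigl[A'(z)\,T_s\sigma_{\bar t^*}'(z)\bigr]\,ds .
\]
Here $A'(z)=r\Ind\{t^*/r<z\le\bar t^*\}+\eta\Ind\{z>\bar t^*\}\ge\eta\,\sigma_{\bar t^*}'(z)$ and $T_s\sigma_{\bar t^*}'\ge0$. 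Let $I(\theta)$ be the angle integral in the statement. Then $\E[AB]\ge\eta r^*\int_a^1\norm{T_{\sqrt s}\sigma_{\bar t^*}'}_2^2\,ds=\eta r^*I(\theta)$, and therefore
\[
\ell(r,w,t)-\E A^2-(r^*)^2I(\theta)=\E[AB]-\tfrac12\E A^2\ge\eta r^*I(\theta)-\eta^2/2>0
\]
for $0<\eta<2r^*I(\theta)$. As a check, in the homogeneous limit $t=t^*\to0$ the difference is exactly $r^*\eta\bigl(\frac12-k(\theta)\bigr)-\eta^2/4$, with $k(\theta)=\frac{1}{2\pi}\bigl(\sin\theta+(\pi-\theta)\cos\theta\bigr)$. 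For instance, $r^*=1$, $\theta=0.5$, $\eta=0.1$ gives $0.0627$ on the left against $0.0597$ on the right.

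The same computation rules out your fallback $\E[Ac]\le0$. It also shows your sign picture of $c$ is wrong. Because $T_a$ preserves Gaussian means, $\E_z c=0$, so $c$ must change sign. In fact $c(z)\approx r^*(1-a)z>0$ for $z$ well above $\bar t^*$. Jensen only gives the upper bound $c\le r^*\bigl(\sigma_{\bar t^*}(z)-\sigma_{\bar t^*}(az)\bigr)$, which is nonnegative there. So for $r>r^*$, $A$ and $c$ share a sign on the whole upper tail, not just on a window near $\bar t^*$.

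\textbf{What can be proved.} The true statement carries a constant, and that is all the paper's proof actually gives. The paper takes exactly the crude route you rejected. It reaches coefficient $1$ only by writing $\E(h'-h^*)^2=(r^*)^2\int_0^\theta\cdots$. But \Cref{lm noiseless}, whose $\ell$ includes the factor $\frac12$, gives $2(r^*)^2\int_0^\theta\cdots$. So the provable statement is
\[
\ell(r,w,t)\le\E A^2+2(r^*)^2I(\theta).
\]
Your Young step, applied on the whole line, gives the correct and slightly sharper bound $\ell\le\E A^2+(r^*)^2I(\theta)+\frac12\norm{c}_2^2$. This is never worse, since $\norm{c}_2^2\le\E B^2=2(r^*)^2I(\theta)$. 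Neither bound uses the hypotheses on $t^*$ and $t$. The lemma is only used up to constant factors downstream, so the factor $2$ is harmless. Prove that version rather than trying to sharpen the cross term.
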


\begin{proof}[Proof of \Cref{lm realizable error}]
    To simplify the notation, we denote by $h' = h(r^*,w,t^*)$. Notice that 
    \begin{align*}
        \ell(r,\bar{w},t) = \frac{1}{2} \E_{x \sim N(0,I)} \left(h - h ^*\right)^2 = \frac{1}{2} \E_{x \sim N(0,I)} \left(h - h' + h' -h ^*\right)^2 \le \E_{x \sim N(0,I)} \left(h - h' \right)^2 + \left( h' -h ^*\right)^2
    \end{align*}
On the one hand, we have 
\begin{align*}
    \E_{x \sim N(0,I)}\left(h - h ^*\right)^2 = \E_{x \sim N(0,I)}\left(\sigma(r w \cdot x - t) - \sigma(r^* w \cdot x - t^*)\right)^2 = \E_{z \sim N(0,1)}\left(\sigma(r z - t) - \sigma(r^* z - t^*)\right)^2
\end{align*}
On the other hand, we have
\begin{align*}
    \E_{x \sim N(0,I)}\left(h - h ^*\right)^2 & = \E_{x \sim N(0,I)}\left(\sigma(r^* w \cdot x - t^*) - \sigma(r^* w^* \cdot x - t^*)\right)^2\\
    &= (r^*)^2\E_{x \sim N(0,I)}\left(\sigma( w \cdot x - \bar{t}^*) - \sigma( w^* \cdot x - \bar{t}^*)\right)^2 \\
    & = (r^*)^2\int_0^\theta \sin s \norm{T_{\sqrt{\cos s}} \sigma'(z-\bar{t}^*)}_2^2 ds.
\end{align*}
Here, the last equation follows \Cref{lm noiseless}. This implies
\begin{align*}
        \ell(r,w,t) \le (r^*)^2\int_0^\theta \sin s \norm{T_{\sqrt{\cos s}} \sigma'(z-\bar{t}^*)}_2^2 ds + \E_{z\sim N(0,I)}\left(\sigma(rz - t) - \sigma(r^*z - t^*)\right)^2.
    \end{align*}
\end{proof}

\subsection{Overview of Query Learning Algorithm}\label{app overview}
In this section, we provide the pseudocode of the algorithm corresponding to \Cref{th query learning}. We present a detailed version of \Cref{th query learning} below.

\begin{theorem}\label{th query learning re}
     Consider the problem of agnostic general ReLU regression with queries under the Gaussian distribution.  
    There is an algorithm such that for every labeling function $y(x)$ and for every $\eps,\delta \in (0,1)$, it makes $M=\Tilde{O}_\delta(\min\{1/p, R^2/\eps\} + d\cdot\polylog(R^2/\eps))$ queries, runs in $\poly(d,M)$ time,
    where $p = \Phi(\bar{t}^*)$ is the bias of the optimal activation function, 
    and outputs an $\hat{h}$ such that with high probability 
    at least $1-\delta$, $\err(\hat{h}) \le O(\opt)+\eps$.
\end{theorem}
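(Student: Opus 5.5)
The plan is to assemble the main algorithm in three stages (guess, warm-start, refine) and then select among the resulting candidates with \Cref{lm test}.

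\textbf{Trivial case.} First compare the zero hypothesis against the others. If $(r^*)^2V(\bar t^*)\le O(\eps)$, then $h\equiv 0$ already has error $O(\opt)+\eps$. This is exactly the regime where $p$ is so small that $\min\{1/p,R^2/\eps\}=R^2/\eps$ governs the budget. So we include $h\equiv 0$ in the final candidate list and otherwise assume $(r^*)^2V(\bar t^*)=\Omega(\eps)$. By Komatsu's inequality (\Cref{fact bias}) and $V(t)\approx 2\Phi(t)/t^2$, this forces $\bar t^*\le O(\sqrt{\log(R^2/\eps)})$ and $1/p\le \tilde O(R^2/\eps)$.

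\textbf{Guessing and initialization.} Enumerate the two-level non-uniform grid described in \Cref{sec regression}. It has only $\polylog(R^2/\eps)$ points $(r,\bar t)$, and some point satisfies $r\le r^*\le 2r$, $|r-r^*|\le r^*/\polylog$, and $|\bar t-\bar t^*|\le 1/\log(R^2/\eps)$.

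For each grid point, run the initialization of \Cref{lm Ini}. The cost is $\tilde O(1/\Phi(\bar t)+d\log(R^2/\eps))$ queries per run. Because $\bar t$ is close to $\bar t^*$ whenever it matters, $\Phi(\bar t)=\Theta(p)$. To boost the success probability $1/\log(1/p)$, repeat each run $\polylog$ times.

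\textbf{Query budget for initialization.} Naively running every grid point would cost $\polylog\cdot 1/\Phi(\bar t)$, which for small $\bar t$ values is cheap. For large $\bar t$, however, this could exceed $1/p$. To control this, process the thresholds in increasing order of $\bar t$ and cap the per-call budget at $\tilde O(\min\{1/p',R^2/\eps\})$, where $p'=\Phi(\bar t)$. Any grid point with $\Phi(\bar t)\ll p$ is either unnecessary or gets aborted. This keeps the total at $\tilde O(\min\{1/p,R^2/\eps\})+d\polylog$.

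\textbf{Refinement.} For each warm start, run \Cref{alg query learning 1}. Its analysis is a potential argument over $T=O(\log(R^2/\eps))$ rounds, maintaining the invariants $(r^*)^2\sin^2\theta_i\le B_i^2$ and $Z(r_i,t_i)\le B_i^2\polylog$. Each round proceeds as follows:

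1. Bound the error of $(r_i,t_i)$ against $B_i^2$ by gradient steps on $(\E U_i,\E F_i)$ using $\polylog$ queries. The relevant facts are that these expectations approximate $\nabla Z$ up to $O(r^*\sin\theta_i)$ plus noise $\sqrt\eps/\Phi(\bar t_i)$.
2. Apply \Cref{lm accurate update} and \Cref{lm angle variance}. With the invariant, the variance is $\tilde O(B_i^2)$, so $\tilde O(d)$ queries estimate $\E G_i$ to accuracy $b\,r^*\|T_a\sigma'\|^2/(40\Phi)$.
3. Apply \Cref{lm gradient descent} with the constant-ratio signal from \Cref{lm initialization}, which contracts $\sin(\theta_i/2)$ by a constant factor. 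This continues unless, by \Cref{lm stopping condition}, the angular part of \Cref{lm realizable error} is already $O(\eps)$.

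After $T$ rounds, $(r_T,t_T)$ lies within squared distance $O(\eps\polylog/\Phi(\bar t^*))$ of $(r^*,t^*)$. Sampling a random grid point in that ball of resolution $1/\polylog$ hits a good pair with probability $1/\polylog$, so repeat this $\polylog$ times.

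\textbf{Selection.} All runs together produce $k=\polylog(R^2/\eps)\cdot\log(1/\delta)$ candidates. Apply \Cref{lm test}, using $\poly(k)=\polylog$ queries, to output a hypothesis with error $O(\opt)+\eps$. The total query count is $\tilde O_\delta(\min\{1/p,R^2/\eps\}+d\polylog(R^2/\eps))$, and the running time is $\poly(d,M)$.

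\textbf{Main obstacle.} I expect the hardest step to be the entanglement in refinement. Showing that the $(r,t)$-update keeps $Z(r_i,t_i)$ within $\polylog\cdot B_i^2$, despite the $O(r^*\sin\theta_i)$ bias from angular mismatch and the ill-conditioning of $Z$ at large $\bar t^*$, is what keeps the variance of $G_i$ at $\tilde O(B_i^2)$. I would try to handle it by proving a two-sided invariant inductively, decreasing $B_i$ by $(1-\rho)$ only after both updates succeed.
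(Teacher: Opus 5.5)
Your pipeline (grid guess, warm start via \Cref{lm Ini}, alternating $(r,t)$ and angle updates under a shrinking potential $B_i$, a random local grid around $(r_T,t_T)$, then \Cref{lm test}) is the paper's, but the angle update as you argue it does not go through. Your invariant $(r^*)^2\sin^2\theta_i\le B_i^2$ is only an upper bound. So the $\tilde O(B_i^2)$ variance from \Cref{lm angle variance} lets $\tilde O(d)$ queries estimate $\E G_i$ only to error about $B_i/\polylog(R^2/\eps)$, not to error $O(r^*b)$. Suppose $r^*\sin\theta_i$ is well below $B_i$; this can happen already at the warm start, and whenever the angle contracts faster than $B_i$ shrinks. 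Then the hypothesis $\norm{g-\hat g}\le bc/40$ of \Cref{lm gradient descent} fails, while the angular error can still be far above $\eps$. Neither contraction nor \Cref{lm stopping condition} applies, and the angle may actually grow. You cannot simply sample more, since the required number of samples scales like $(B_i/(r^*b))^2$ with the unknown angle.

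The missing idea is the paper's \Cref{lm angle progress}. Keep the angle below $\beta B_i$ with $\beta=1/\polylog(R^2/\eps)$, estimate $\E G_i$ only to accuracy $\beta B_i/1000$ (cost $\tilde O(d/\beta^2)$), and split into two cases:
\begin{itemize}
\item If $r^*\sin\theta_i$ is within a constant factor of $\beta B_i$, that accuracy is small relative to the signal, and \Cref{lm gradient descent} contracts the angle.
\item Otherwise, one step of small size $\mu$ moves $w$ by only $O(\mu\norm{\hat G_i})$, and the gap between $r^*\sin\theta_i$ and $\beta B_{i+1}$ absorbs this.
\end{itemize}
The angle therefore need not decrease every round; it only has to stay under the shrinking bound. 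Your $(r,t)$ invariant should also carry the factor $\Phi(\bar t^*)$, i.e.\ $W(r_i,t_i)\le B_i^2\Phi(\bar t^*)\polylog(R^2/\eps)$. Without it, the first term in \Cref{lm angle variance} is not $\tilde O(B_i^2)$.

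Separately, your budget-control paragraph does not establish the $\min\{1/p,R^2/\eps\}$ term. The algorithm does not know $p$. A per-call cap of $\tilde O(\min\{1/\Phi(\bar t),R^2/\eps\})$ still lets every threshold $\bar t>\bar t^*$ spend about $1/\Phi(\bar t)\gg 1/p$ queries. Nothing in your scheme lets you abort or skip such thresholds, because \Cref{lm test} only ranks candidates against one another and certifies no absolute error. As described, the top of the grid costs order $R^2/\eps$ (up to polylog factors) whatever $p$ is.

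The same problem affects your trivial case. $(r^*)^2V(\bar t^*)\le O(\eps)$ does not make $p$ small: take $r^*\le\sqrt{\eps}$ and $\bar t^*=0$. There the target budget is $\tilde O(d\,\polylog(R^2/\eps))$, but your enumeration still pays for every threshold. The paper's proof is silent on this as well; it charges \Cref{lm Ini} only at the correct grid point. So you need a genuine pruning or early-stopping mechanism whose cost is tied to $1/p$, not the cap as stated.
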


We remark that the dependence on $R^2/\eps$ is due to the natural scaling of the squared $\ell_2$ loss. If we want to learn $R\sigma(w^*\cdot x-t^*)$ up to error $\eps$, this is equivalent to learning $\sigma(w^* \cdot x -t^*)$ to error $\eps/R^2$.

Given a ReLU activation, $h(r_i,w^{(i)},t_i)$, 
define random vectors 
$$G^*_i:= \left( h(r_i,w^{(i)},t_i)(x) - h^*(x) \right)\proj_{(w^{(i)})^\perp}(x)$$ 
and its noisy version 
$$G_i:= \left( h(r_i,w^{(i)},t_i)(x) - y(x) \right)\proj_{(w^{(i)})^\perp}(x) \;,$$ 
where $x \sim N(0,I) \mid_{w^{(i)} \cdot x > \bar{t}_i}$.
Define random variables as follows
\begin{align*}
    U_i^*:= (h(r_i,w^{(i)},t_i) - h^*)(w^{(i)}\cdot x), F_i^*:= -(h(r_i,w^{(i)},t_i) - h^*),
\end{align*}
and denote by $U_i,F_i$, their noisy version, namely
\begin{align*}
    U_i:= (h(r_i,w^{(i)},t_i) - y)(w^{(i)}\cdot x), F_i:= -(h(r_i,w^{(i)},t_i) - y) \;.
\end{align*}

\begin{algorithm}
		\caption{\textsc{QueryLearning}(Learn optimal ReLU with a warm start)}\label{alg query learning}
		\begin{algorithmic} [1]
\State\textbf{Input:} $w^{(0)} \in \s^{d-1}:$ unit vector such that $\theta_0 \le 1/\polylog(R^2/\eps)$. $r_0>0:$ such that $\abs{r^*-r_0}\le r^*/\polylog(R^2/\eps)$, $t_0:\abs{t_0-t^*}\le \polylog(R^2/\eps)$.
\State\textbf{Output:} $\hat{h}: \R^d \to \R$, such that $\err(\hat{h}) \le O(\eps)$ with non-trivial probability.
\State $B_0: = r_0^2/\polylog(R^2/\eps)$
\For{$i=0,\dots,T-1$}
\State Generate $\polylog(R^2/\eps)$ samples $x^{(j)} \sim N(0,I) \mid \{ \bar{w^{(i)}} \cdot x > t_i\}$. Query $y(x^{(j)})$ and use them to get an estimate $\hat{g}_i$ for $(\E U_i,\E F_i)$.
\If{$\norm{\hat{g}_i} \ge B_i\polylog(R^2/\eps)$}
\State Set $(r_{i0},t_{i0}) = (r_i,t_i)$ and update $(r_{ij},t_{ij}) = (r_{i(j-1)},t_{i(j-1)})- (\hat{g_{ij}})/\polylog(R^2/\eps)$ until $\norm{\hat{g}_{ij}} \le B_i\polylog(R^2/\eps)$
\EndIf
\State $(r_{i+1},t_{i+1}) \gets (r_{ij},t_{ij})$

\For{$j=1,\dots,\Tilde{O}(d)$}
\State Generate $x^{(j)} \sim N(0,I) \mid \{ w^{(i+1)} \cdot x > t_{i+1}\}$ and query $y(x^{(j)})$
\EndFor
\State Estimate $\E G_i$ via median of mean and get $\hat{G}_{i}$
\State $w^{(i+1)} = \proj_{\s^{d-1}} \left( w^{(i)} - \mu \hat{G}_{i} \right)$
\State $B_{i+1} = (1-\rho) B_i$
\EndFor
\State $\hat{w} = w^{(T)}$
\State Build a unit grid of size $1/\polylog(R^2/\eps)$ over the ball centered at $(r_T,t_T)$ and randomly select a pair $(\hat{r},\hat{t})$ from the grid.
\State\Return $h(\hat{r},\hat{w},\hat{t})$

\end{algorithmic}
\end{algorithm}

We now proceed with an intuitive explanation of 
\Cref{alg query learning} and its analysis. The detailed analysis is carried out in the rest of the section. 
We are given a tuple of initial parameters 
$(r_0,w^{(0)},t_0)$ such that each of them are close to 
the true parameters up to a $1 \pm 1/\polylog(R^2/\eps)$ 
factor. We remark that by \Cref{lm Ini}, we are only able 
to get some $w^{(0)}$ such that 
$\theta_0 \le 1/\bar{t}^*$, 
but as we will show in \Cref{app angle update}, since the other parameters are close enough, 
by updating $w^{(0)}$ for $\log\log (R^2/\eps)$ rounds, 
we are able to achieve this guarantee. 
We will maintain an upper bound $B_i^2$ for 
$(r^*)^2\sin^2\theta$ and reduce this upper bound stably 
in each round of the algorithm. In our analysis, we will 
show that as long as the error from $(r_i,t_i)$ is within 
a $\polylog(R^2/\eps)$ factor of $B_i^2$, we are able to 
make roughly $d\polylog(R^2/\eps)$ queries to safely 
improve $w^{(i)}$ and decrease the angle $\theta_i$ by a 
small constant factor.

Regarding the parameters $(r_i,t_i)$, since they are close 
to $(r^*,t^*)$, the task for optimizing them can be 
approximately seen as optimizing the following quadratic function 
$$Z(r,t):= \E_{x \sim N(0,I)}\left((rz-t)-(r^*z-t^*)\right)^2\Ind(z>\bar{t}^*) \;.$$ 
Such a quantity nearly characterizes the contribution of $(r_i,t_i)$ 
to the noiseless error as well as the contribution of $(r_i,t_i)$ 
to the variance of the gradient we use for updating $w^{(i)}$.
As we will show in \Cref{app tr update}, 
if $B_i^2\Phi(\bar{t}^*) \le Z(r_i,t_i)/\polylog(R^2/\eps)$, 
i.e., $\theta_i$ has already been updated in a reasonable range, 
then $(\E U_i, \E F_i)$ is very close to the gradient of $Z(r_i,t_i)$. 
Thus, we are able to tell whether $Z(r_i,t_i)$ is desirable 
by checking the norm of $(\E U_i, \E F_i)$ 
due to the nature of quadratic minimization. 
When $Z(r_i,t_i)$ is large, we use a standard gradient descent to update these 
parameters such that $Z(r_i,t_i)$ is controlled by $B_{i+1}$ 
within a polylogorithmic factor. This allows us to safely update $w^{(i)}$ 
via gradient descent. When $Z(r_i,t_i)$ is already close to $B_{i+1}$, 
we do not update it.
We remark that due to the noise and the ill-condition of $Z$ for large $t^*$, 
the step size we choose is $1/\polylog(R^2/\eps)$ instead of a constant.
After at most $\polylog(R^2/\eps)$ rounds, 
$w^{(T)}$ is updated to a desirable accuracy. 
However, due to the presence of noise, we are only able to guarantee that 
$(r_T,t_T)$ is $O(\eps\polylog(R^2/\eps)/\Phi(\bar{t^*}))$ 
close to $(r^*,t^*)$ in terms of squared norm. Fortunately, 
there are only two parameters we need to worry about 
and they are already very close to $(r^*,t^*)$; 
randomly selecting a pair of parameters from their neighborhoods 
gives us a good hypothesis with enough probability.

\subsection{Omitted Details Regarding Angle Update}\label{app angle update}

In this section, we provide the full details on the subroutine for updating the 
direction $w^{(i)}$. Specifically, we will consider the following random vectors 
in $\R^d$. Given a ReLU activation, $h(r_i,w^{(i)},t_i)$, 
define random vectors 
$G^*_i:= \left( h(r_i,w^{(i)},t_i)(x) - h^*(x) \right)\proj_{(w^{(i)})^\perp}(x)$ 
and its noisy version 
$G_i:= \left( h(r_i,w^{(i)},t_i)(x) - y(x) \right)\proj_{(w^{(i)})^\perp}(x)$, 
where $x \sim N(0,I) \mid_{w^{(i)} \cdot x > \bar{t}_i}$.

\subsubsection{Proof of \Cref{lm accurate update}}

We first give an evaluation of the mean and the noise analysis of $G$. We provide the proof of \Cref{lm accurate update} and its restatment \Cref{lm accurate update re}.

\begin{lemma}\label{lm accurate update re}
Consider the problem of agnostic ReLU regression with queries. Let $h(r_i,\bar{w}^{(i)},t_i)$.
Write $w^* = a w^{(i)} + bu$, where $a,b>0, a^2+b^2 = 1$, $u \in \s^{d-1}, u \perp \bar{w}^{(i)}$. Then the following statements hold.
\begin{enumerate}
    \item If $\abs{\bar{t}_i-\bar{t}^*} \le 1/\log(R^2/\eps)$ and $b \le 1/\bar{t_i}$ , then 
    \[\E G^*_i = -\alpha br^* \norm{T_a \sigma'(z-\bar{t}_i)}^2 u / \Phi(\bar{t}_i)\]
where $1/2< \alpha < 2$.
    \item  $\abs{\left( \E G^* - \E G \right) \cdot v} \le \sqrt{\eps}\norm{\sigma'(z-t_i)}_2/\Phi(\bar{t}_i)$
    , $\forall v \in \s^{d-1}, v \perp w^{(i)}$.
    \end{enumerate}
\end{lemma}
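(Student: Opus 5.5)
The approach is to compute $\E G^*_i$ exactly by conditioning on the two independent Gaussian coordinates $z=w^{(i)}\cdot x$ and $s=u\cdot x$. The resulting expression is a Gaussian probability with threshold $\bar t^*$ in the second factor, which I then compare to the same probability with threshold $\bar t_i$. Item (2) is a Cauchy–Schwarz argument identical to the one in \Cref{lm noise control} and \Cref{lm initialization variance}.

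\textbf{Step 1 (reducing $\E G^*_i$ to one coordinate).} I write $\E G^*_i=\E_{x\sim N(0,I)}\Ind\{z>\bar t_i\}\bigl(\sigma(r_iz-t_i)-r^*\sigma(w^*\cdot x-\bar t^*)\bigr)\proj_{(w^{(i)})^\perp}x/\Phi(\bar t_i)$.
\begin{itemize}
\item The term involving $h(r_i,w^{(i)},t_i)$ depends only on $z$. The vector $\proj_{(w^{(i)})^\perp}x$ is independent of $z$ and has mean zero, so this term vanishes. This is why the error in $(r_i,t_i)$ does not affect the update direction.
\item For the $h^*$ term, $w^*\cdot x=az+bs$ and $h^*$ depends only on $(z,s)$. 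Any component of $\proj_{(w^{(i)})^\perp}x$ orthogonal to $u$ is independent of $(z,s)$ and has mean zero, so only the $u$-component survives.
\item Applying Stein's lemma (\Cref{fact stein}) in $s$ gives
\[
\E G^*_i=-\frac{r^*b}{\Phi(\bar t_i)}\,\Pr_{z,s}\bigl(z>\bar t_i,\ az+bs>\bar t^*\bigr)\,u .
\]
\end{itemize}
By \Cref{fact ou}, as in the proof of \Cref{lm initialization}, the quantity $\E_z\sigma'(z-\bar t_i)T_a\sigma'(z-\bar t_i)$ equals the same probability with $\bar t^*$ replaced by $\bar t_i$. The paper's norm notation $\norm{T_a\sigma'(z-\bar t_i)}^2$ is read as this correlation. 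Hence
\[
\alpha=\frac{P(\bar t^*)}{P(\bar t_i)},\qquad P(\tau):=\Pr\bigl(z>\bar t_i,\ az+bs>\tau\bigr).
\]

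\textbf{Step 2 (bounding $\alpha$; the main obstacle).} I need $\alpha\in(1/2,2)$ when $|\bar t^*-\bar t_i|\le 1/\log(R^2/\eps)$. I plan to control the logarithmic derivative of $P$.
\begin{itemize}
\item Since $az+bs=w^*\cdot x$ is standard normal, $-P'(\tau)=\psi(\tau)\Pr(z>\bar t_i\mid w^*\cdot x=\tau)\le\psi(\tau)$.
\item For the lower bound on $P(\tau)$: conditional on $w^*\cdot x=\tau'$, we have $z\sim N(a\tau',b^2)$. For $\tau'\ge\tau$ near $\bar t_i$ and $b\le 1/\bar t_i$, the mean $a\tau'$ sits at most $O(b^2\tau')+O(1/\log)=O(1/\bar t_i)$ below $\bar t_i$ (up to the small shift). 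The standard deviation is $b$, so the conditional probability is at least a constant, exactly as in \Cref{lm initialization}. Integrating gives $P(\tau)\ge c\,\Phi(\tau)$.
\item Therefore $|(\log P)'(\tau)|\le \psi(\tau)/(c\Phi(\tau))=O(\tau+1)=O(\sqrt{\log(R^2/\eps)})$, using $\bar t^*\le O(\sqrt{\log(R^2/\eps)})$ (otherwise $h\equiv 0$ is already good).
\item Integrating over an interval of length at most $1/\log(R^2/\eps)$ gives $|\log\alpha|=O(1/\sqrt{\log(R^2/\eps)})$, which is small, so $\alpha\in(1/2,2)$.
\end{itemize}
The delicate part is the uniform constant lower bound on the conditional probability for all $\tau$ in the interval. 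I expect this to need the same shifting argument as \Cref{lm initialization}, with slack absorbing the $1/\log$ perturbation of the threshold.

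\textbf{Step 3 (item (2)).} For $v\perp w^{(i)}$ I write
\[
(\E G^*_i-\E G_i)\cdot v=\E_{x\sim N(0,I)}\Ind\{z>\bar t_i\}\bigl(y-h^*(x)\bigr)(v\cdot x)/\Phi(\bar t_i).
\]
By Cauchy–Schwarz this is at most $\sqrt{2\opt}\,\sqrt{\E[\Ind\{z>\bar t_i\}(v\cdot x)^2]}/\Phi(\bar t_i)$. Since $v\cdot x$ is independent of $z$, the second factor equals $\norm{\sigma'(z-\bar t_i)}_2$, which yields the claimed bound up to the absolute constant absorbed in $\opt\le\eps$.
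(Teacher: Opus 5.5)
Your Steps 1 and 3 are the paper's proof. Stein's lemma gives $\E G_i^*=-r^*b\,P(\bar t^*)\,u/\Phi(\bar t_i)$ with $P(\tau)=\Pr(z>\bar t_i,\ az+bs>\tau)$, and the $(r_i,t_i)$-term vanishes for the reason you give. Your reading of $\norm{T_a\sigma'(z-\bar t_i)}^2$ as $P(\bar t_i)$ is what the paper computes, and item (2) is the same Cauchy--Schwarz bound.

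The difference is in Step 2. The paper argues additively. It bounds $\abs{P(\bar t^*)-P(\bar t_i)}$ by $\Pr\bigl(w^*\cdot x\in[\min\{\bar t_i,\bar t^*\},\max\{\bar t_i,\bar t^*\}]\bigr)=\abs{\Phi(\bar t^*)-\Phi(\bar t_i)}$, which is $o(\Phi(\bar t_i))$. It then takes $P(\bar t_i)\ge\Phi(\bar t_i)/50$ directly from \Cref{lm initialization}, where both thresholds equal $\bar t_i$ and $b\le 1/\bar t_i$. Hence $\alpha=1+o(1)$. Integrating your own bound $-P'(\tau)\le\psi(\tau)$ gives exactly this additive estimate. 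Your log-derivative version additionally needs $P(\tau)\ge c\,\Phi(\tau)$ uniformly on the whole interval between the thresholds, and that is the source of the ``delicate'' part.

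Your justification of that uniform bound fails in one case. Suppose $\bar t^*<\bar t_i$ and $b\ll 1/\log(R^2/\eps)$, and take $\tau'\in[\tau,\bar t_i)$. The conditional mean $a\tau'$ sits below $\bar t_i$ by $(1-a)\tau'+(\bar t_i-\tau')$. The first part is harmless, since it is $O(b)$ in units of the standard deviation $b$. The second part can be as large as $\abs{\bar t_i-\bar t^*}$, which need not be $O(b)$. So the $z$-score $(\bar t_i-a\tau')/b$ is unbounded, and $\Pr(z>\bar t_i\mid w^*\cdot x=\tau')$ is not bounded below. A shift of $O(1/\bar t_i)$ is only harmless when it is $O(b)$.

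The inequality $P(\tau)\ge c\,\Phi(\tau)$ is still true, and the patch is short:
\begin{itemize}
\item For $\tau\ge\bar t_i$ your pointwise argument works. Every $\tau'\ge\tau$ satisfies $(\bar t_i-a\tau')/b\le(1-a)\bar t_i/b\le b\bar t_i\le 1$.
\item For $\tau\le\bar t_i$ use monotonicity: $P(\tau)\ge P(\bar t_i)\ge c\,\Phi(\bar t_i)\ge c'\,\Phi(\tau)$.
\end{itemize}
With this fix your route is correct. Like the paper's, it relies on the standing assumption $\bar t^*\le O(\sqrt{\log(R^2/\eps)})$. The paper's additive split is simpler because it needs the lower bound only at the single matched threshold $\tau=\bar t_i$, where it is already \Cref{lm initialization}.
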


\begin{proof}[Proof of \Cref{lm accurate update}]
    We first consider the mean of $G^*_i$. 
    By \Cref{fact ou} and \Cref{fact stein}, we have
        \begin{align*}
        \E G^*_i & = \E_{x \sim N(0,I) \mid_{\{w^{(i)} \cdot x > \bar{t_i}\}}}\left(h(r_i,w^{(i)},t_i)(x) - h^*(x)\right) \proj_{w^\perp} x \\
        & = \E_{x \sim N(0,I)} \Ind\{w^{(i)} \cdot x > \bar{t}_i\} \left(h(r_i,w^{(i)},t_i)(x) - h^*(x)\right) \proj_{(w^{(i)})^\perp} x / \Phi(\bar{t}_i) \\
        & = - \proj_{(w^{(i)})^\perp} \E_{x \sim N(0,I)} \Ind\{w^{(i)} \cdot x > \bar{t}_i\} h^*(x)  x / \Phi(\bar{t}_i) \\
        & = -  \E_{x \sim N(0,I)} \Ind\{w^{(i)} \cdot x > \bar{t}_i\} \Ind\{w^{*} \cdot x > \bar{t}^*\}br^* u / \Phi(\bar{t}_i) \\
        &= -  \E_{x \sim N(0,I)} \Ind\{w^{(i)} \cdot x > \bar{t}_i\} \Ind\{w^{*} \cdot x > \bar{t}_i\}br^* u / \Phi(t_i)\\
        &+  \E_{x \sim N(0,I)} \Ind\{w^{(i)} \cdot x > \bar{t}_i\}(\Ind\{w^{*} \cdot x > \bar{t}_i\}-\Ind\{w^{*} \cdot x > t^*\})br^* u / \Phi(\bar{t}_i) \\
        & = -br^* \norm{T_a \sigma'(z-\bar{t}_i)}_2^2 u / \Phi(\bar{t}_i) +  \E_{x \sim N(0,I)} \Ind\{w^{(i)} \cdot x > \bar{t}_i\}(\Ind\{w^{*} \cdot x > \bar{t}_i\}-\Ind\{w^{*} \cdot x > t^*\})br^* u / \Phi(\bar{t}_i)
    \end{align*}
By \Cref{lm initialization}, we know that $\norm{T_a \sigma'(z-\bar{t}_i)}^2 \ge \Omega(1)\Phi(\bar{t}_i)$. Furthermore, we have
\begin{align*}
\abs{ \E_{x\sim N(0,I)} \Ind\{\bar{w}^{(i)} \cdot x > \bar{t}_i\}(\Ind\{\bar{w}^{*} \cdot x > \bar{t}_i\}-\Ind\{\bar{w}^{*} \cdot x > \bar{t}^*\}) } \le \abs{\E_{x\sim N(0,I)} (\Ind\{\bar{w}^{*} \cdot x > \bar{t}_i\}-\Ind\{\bar{w}^{*} \cdot x > \bar{t}^*\})} =o ( \Phi(\bar{t}_i)).
\end{align*}
We conclude that $\E G^*_i = -\alpha br^* \norm{T_a \sigma'(z-\bar{t}_i)}^2 u / \Phi(\bar{t}_i)$
where $1/2< \alpha < 2$.

We next analyze the noise term. For every $v \in \s^{d-1}$ and $v \perp w^{(i)}$, by Holder's inequality, we have
\begin{align*}
      \abs{\E (G_i^* -G_i) \cdot v}   & = \abs{ \E_{x \sim N(0,I) \mid_{\{w^{(i)} \cdot x > \bar{t}_i\}}}\left(y(x) - h^*(x)\right) (x \cdot v))} \\
        & = \abs{ \E_{x \sim N(0,I)} \Ind\{w^{(i)} \cdot x > \bar{t}_i\} \left(y(x) - h^*(x)\right) (x \cdot v))}/\Phi(\bar{t}_i) \\
        & \le \sqrt{\E_{x \sim N(0,I)}(y(x) - h^*(x))^2 }\sqrt{\E_{x \sim N(0,I)} \Ind\{w^{(i)} \cdot x > \bar{t}_i\}(x \cdot v)^2}\\
        & \le \sqrt{\eps}\norm{\sigma'(z-\bar{t}_i)}_2/\Phi(\bar{t}_i).
    \end{align*}
\end{proof}

\subsubsection{Proof of \Cref{lm angle variance}}\label{app angle variance}
We next provide the evaluation for the variance of $G$. We provide the proof of \Cref{lm angle variance} and its restatement \Cref{lm angle variance re}.

\begin{lemma}\label{lm angle variance re}
     Let $h(r_i,\bar{w}^{(i)},t_i)$ be a ReLU activation.
Write $w^* = a w^{(i)} + bu$, where $a,b>0, a^2+b^2 = 1$, $u \in \s^{d-1}, u \perp \bar{w}^{(i)}$.
     If $\abs{\bar{t}_i-\bar{t}^*} \le 1/\log(R^2/\eps)$, $C\eps/((r^*)^2\Phi(t^*)) \le b^2 \le 1/\bar{t}^2_i$,
     then 
     \begin{align*}
         \E  (G_i \cdot v)^2 \le \Tilde{O}\left( \E_{z\sim N(0,I)}\left( \sigma(r_iz-t_i)- \sigma(r^*z-t^*)\right)^2/\Phi(\bar{t}_i) +(r^*)^2b^2\right),
     \end{align*}
     for every $v \in \s^{d-1}$, $v \perp w^{(i)}$.
\end{lemma}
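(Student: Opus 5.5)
We follow the template of the third item of \Cref{lm initialization variance}, inserting the intermediate hypothesis $h' := h(r^*,w^{(i)},t^*)$, i.e.\ the one with the correct scale and threshold but the current direction. For $v\perp w^{(i)}$ we write
\begin{align*}
G_i\cdot v = \bigl(h(r_i,w^{(i)},t_i)-h'\bigr)(x\cdot v) + \bigl(h'-h^*\bigr)(x\cdot v) + \bigl(h^*-y\bigr)(x\cdot v),
\end{align*}
so that $\E (G_i\cdot v)^2$ is at most three times the sum of the three second moments, with $x\sim N(0,I)\mid_{\{w^{(i)}\cdot x>\bar t_i\}}$. Each second moment is then rewritten as an unconditional Gaussian expectation multiplied by $\Ind\{w^{(i)}\cdot x>\bar t_i\}/\Phi(\bar t_i)$.

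\emph{First term.} Since $h(r_i,w^{(i)},t_i)-h'$ depends only on $z=w^{(i)}\cdot x$, and $s=v\cdot x$ is an independent $N(0,1)$ variable, the term factors as
\begin{align*}
\E_z\bigl[\Ind\{z>\bar t_i\}(\sigma(r_iz-t_i)-\sigma(r^*z-t^*))^2\bigr]\cdot\E s^2/\Phi(\bar t_i),
\end{align*}
which is at most $\E_z(\sigma(r_iz-t_i)-\sigma(r^*z-t^*))^2/\Phi(\bar t_i)$. This is exactly the first term of the claimed bound, with no logarithmic loss.

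\emph{Second term.} Here $h'-h^* = r^*\bigl(\sigma(w^{(i)}\cdot x-\bar t^*)-\sigma(w^*\cdot x-\bar t^*)\bigr)$, and by 1-Lipschitzness its square is at most $(r^*)^2((w^{(i)}-w^*)\cdot x)^2$. Writing $w^*=aw^{(i)}+bu$ gives $((w^{(i)}-w^*)\cdot x)^2\le 2(1-a)^2z^2+2b^2(u\cdot x)^2$. The $b^2$ part contributes $b^2\,\E[(u\cdot x)^2(v\cdot x)^2]=O(b^2)$, since $u,v\perp w^{(i)}$ and so the pair is independent of $z$. The $(1-a)^2$ part contributes $O(b^4)\,\E[z^2\Ind\{z>\bar t_i\}]/\Phi(\bar t_i)=O(b^4\bar t_i^2)$, using the Gaussian integral fact $\E z^2\Ind(z>t)=(1+o(1))t^2\Phi(t)$. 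The hypothesis $b\le 1/\bar t_i$ then gives $O(b^2)$. Altogether this term is $O((r^*)^2b^2)$.

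\emph{Third term.} This is the noise term, and I expect it to be the main obstacle: $(h^*-y)^2$ is only controlled in $L_1$ by $2\opt\le 2\eps$, while the weight $(x\cdot v)^2/\Phi(\bar t_i)$ is unbounded. Following \Cref{lm initialization variance}, we truncate labels at level $M=\tilde O(r^*\sqrt{\log(R^2/\eps)})$ at a cost of $O(\eps)$ in error, so we may assume $|y|,|h^*|\le M$ after also restricting to $|x\cdot v|\le M'=O(\sqrt{\log(R^2/\eps)})$. On $\{|x\cdot v|\le M'\}$ the term is at most $2\eps M'^2/\Phi(\bar t_i)$. On the complement it is at most
\begin{align*}
4M^2\,\E\bigl[(x\cdot v)^2\Ind\{|x\cdot v|>M'\}\bigr]/\Phi(\bar t_i)\cdot\Pr(z>\bar t_i),
\end{align*}
using independence of $z$ and $x\cdot v$, and choosing $M'$ makes this $O(\eps)$. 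Thus the third term is $\tilde O(\eps/\Phi(\bar t_i))$. Finally, $|\bar t_i-\bar t^*|\le 1/\log(R^2/\eps)$ with $\bar t^*=O(\sqrt{\log(R^2/\eps)})$ gives $\Phi(\bar t_i)=\Theta(\Phi(\bar t^*))$. The hypothesis $b^2\ge C\eps/((r^*)^2\Phi(\bar t^*))$ then converts the third term into $\tilde O((r^*)^2b^2)$, completing the bound.
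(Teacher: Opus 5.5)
Your proposal is correct and follows essentially the paper's proof: you route the decomposition through the intermediate hypothesis $h(r^*,w^{(i)},t^*)$, bound the direction term by Lipschitzness using $1-a=O(b^2)$ and $b\le 1/\bar{t}_i$, and truncate labels in the noise term before converting it to $\tilde{O}((r^*)^2b^2)$ via the lower bound on $b^2$ (your explicit check that $\Phi(\bar{t}_i)=\Theta(\Phi(\bar{t}^*))$ is a detail the paper skips). One shortcut you share with the paper is the pointwise bound $(h^*-y)^2\le 4M^2$ on $\{|x\cdot v|>M'\}$. This needs $|h^*|\le M$, which does not hold automatically since $h^*$ is unbounded. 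You can repair it by using $(y-h^*)^2\le 2M^2+2(h^*)^2$ there, with $(h^*)^2\le 2(r^*)^2(z^2+(u\cdot x)^2)$, and noting that the resulting weighted Gaussian tail over $\{|x\cdot v|>M'\}$ is negligible for $M'=\Theta(\sqrt{\log(R^2/\eps)})$.
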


\begin{proof}[Proof of \Cref{lm angle variance}]
For every $v \in \s^{d-1}$, $v \perp w^{(i)}$ we have
    \begin{align*}
     \E  (G_i \cdot v)^2  \le  2\E  ((G_i-G_i^*) \cdot v)^2 +2\E  (G_i^* \cdot v)^2. 
 \end{align*}
We bound the two terms separately.
\begin{align*}
    \E  (G^*_i \cdot v)^2 = & \E_{x \sim N(0,I) \mid_{\{w^{(i)} \cdot x > t_i\}}}\left(h(r_i,w^{(i)},t_i) - h^*\right)^2 (x \cdot v)^2 \\
    & = \E_{x \sim N(0,I)} \Ind\{w^{(i)} \cdot x > \bar{t}_i\}\left(h(r_i,w^{(i)},t_i) - h^*\right)^2 (x \cdot v)^2/\Phi(\bar{t}_i) \\
\end{align*}
We next expand 
$\left(h(r_i,w^{(i)},t_i) - h^*\right)^2$ as follows: 
\begin{align*}
    &\left(h(r_i,w^{(i)},t_i) - h^*\right)^2\\
    &= \left(h(r_i,w^{(i)},t_i) - h(r^*,w^{(i)},t^*) + h(r^*,w^{(i)},t^*) - h(r^*,w^*,t^*) \right)^2 \\
    & \le 2 \left(h(r_i,w^{(i)},t_i) - h(r^*,w^{(i)},t^*)\right)^2 + 2\left(h(r^*,w^{(i)},t^*) - h(r^*,w^{(i)},t^*)\right)^2.
\end{align*}
  For the first term, we have 
 \begin{align*}
   &  \E_{x \sim N(0,I)} \Ind\{w^{(i)} \cdot x > \bar{t}_i\}\left(h(r_i,w^{(i)},t_i) - h(r^*,w^{(i)},t^*)\right)^2 (x \cdot v)^2/\Phi(\bar{t}_i)  \\
   \le  &\E_{z\sim N(0,I)} \Ind\{w^{(i)} \cdot x > \bar{t}_i\} \left( \sigma(r_iz-t_i)- \sigma(r^*z-t^*)\right)^2/\Phi(\bar{t}_i)
 \end{align*}

For the second term,  notice that 
\begin{align*}
    \left(\sigma(w^{(i)}\cdot x - \bar{t}^*)- \sigma(w^*\cdot x - \bar{t}^*)\right)^2 \le \left(((1-a)w^{(i)} -bu) \cdot x\right)^2 \le 2(1-a)^2 (w^{(i)}\cdot x)^2 + 2b^2(u \cdot x)^2.
\end{align*}
Since $(1-a) = O(b^2)$ and $b = O(\bar{t}_i)$, 
this gives 
\begin{align*}
   & \E_{x \sim N(0,I)} \Ind\{w^{(i)} \cdot x > \bar{t}_i\}\left(\sigma(w^{(i)}\cdot x - \bar{t}^*)- \sigma(w^*\cdot x - \bar{t}^*)\right)^2(x \cdot v)^2/\Phi(\bar{t}_i)\\
\le & \E_{x \sim N(0,I)} \Ind\{w^{(i)} \cdot x > \bar{t}_i\}\left(2(1-a)^2 (w^{(i)}\cdot x)^2 + 2b^2(u \cdot x)^2\right)^2(x \cdot v)^2/\Phi(\bar{t}_i)\\
 \le & O(b^4 \bar{t}_i^2 + b^2) = O(b^2)\\
\end{align*}
This gives 
\begin{align*}
    \E  (G^*_i \cdot v)^2 \le O\left(\E_{z\sim N(0,I)}\left( \sigma(r_iz-t_i)- \sigma(r^*z-t^*)\right)^2/\Phi(\bar{t}_i) +(r^*)^2b^2 \right).
\end{align*}

 Next, we bound the variance of the noisy term.
 \begin{align*}
     \E \left((G^* -G) \cdot v\right)^2   & = \abs{ \E_{x \sim N(0,I) \mid_{\{w^{(i)} \cdot x > \bar{t}_i\}}}\left(y(x) - h^*(x)\right)^2 (x \cdot v)^2)} \\
     & = \E_{x \sim N(0,I)}\left(y(x) - h^*(x)\right)^2 (x \cdot v)^2 \Ind\{w^{(i)} \cdot x > \bar{t}_i\}/\Phi(\bar{t}_i) \;.
 \end{align*}
 
 Let $M>0$ be a threshold such that $\E_{z\sim N(0,1)} z^2 \Ind\{\abs{z}>M\} \le \eps/R^2$. Notice that if we set $y' = \sign(y) \min\{\abs{y},r_i M\}$, then we will only introduce at most $\eps$ error since $\abs{r_i - r^*} \le r^*/\log(R/\eps)$. So, we can without loss of generality, assume $\abs{y} \le M$. In particular, for ReLU activation, $M \le O(\sqrt{\log(R^2/\eps)}).$ Based on this, we obtain that 
 \begin{align*}
     \E \left((G_i^* -G_i) \cdot v\right)^2 & = \E_{x \sim N(0,I)}\left(y(x) - h^*(x)\right)^2 (x \cdot v)^2 \Ind\{w^{(i)} \cdot x > \bar{t}_i\}\Ind\{(x\cdot v)\le M\}/\Phi(\bar{t}_i) \\ 
     &+ \E_{x \sim N(0,I)}\left(y(x) - h^*(x)\right)^2 (x \cdot v)^2 \Ind\{w^{(i)} \cdot x > \bar{t}_i\}\Ind\{(x\cdot v)> M\}/\Phi(\bar{t}_i) \\
     & \le \opt M^2/\Phi(t^*) + 4M^2 \E_{x \sim N(0,I)} (x \cdot v)^2 \Ind\{w^{(i)} \cdot x > \bar{t}_i\}\Ind\{(x\cdot v)> M\}/\Phi(\bar{t}_i) \\
     & \le \eps M^2/\Phi(\bar{t}_i) + 4\eps M^2/\Phi(\bar{t}_i) \le \sin^2\theta M^2 = O((r^*)^2b^2 \log(R^2/\eps))
 \end{align*}
Thus, we conclude that 
\begin{align*}
     \E  (G_i \cdot v)^2 \le \Tilde{O}\left( \E_{z\sim N(0,I)}\left( \sigma(r_iz-t_i)- \sigma(r^*z-t^*)\right)^2/\Phi(\bar{t}_i) +(r^*)^2b^2\right).
\end{align*}
\end{proof}

\subsubsection{Progress on Angle Update}\label{app angle progress}
In this section, we analyze the progress made over the update of 
$w^{(i)}$. We present the following lemma for measuring the progress on 
$w^{(i)}$.

\begin{lemma}\label{lm angle progress}
    Consider the problem of agnostic ReLU regression with queries. Let $h(r_i,w^{(i)},t_i)$ be a ReLU activation function. Write $w^* = a w^{(i)} + bu$, where $a,b>0, a^2+b^2 = 1$, $u \in \s^{d-1}, u \perp w^{(i)}$. Suppose $\abs{\bar{t}_i-\bar{t}^*} \le 1/\log(R^2/\eps)$, $b \le 1/\bar{t_i}$ and $\abs{r_i-r^*} \le r^*/\polylog(R^2/\eps)$. Let $B_i>0$ be a parameter that satisfies the following property: 
    \begin{enumerate}
        \item $\E_{z\sim N(0,I)}\left( \sigma(r_iz-t_i)- \sigma(r^*z-t^*)\right)^2 \le B_i^2 \Phi(\bar{t}^*)$,
        \item $B_i^2 \Phi(\bar{t}^*) \ge \Omega(\eps)$.
    \end{enumerate}
     Define $B_{i+1} = (1-1/C)B_i$ for some universal constant $C>0$. If $(r^*)^2\sin^2\theta_i \le B_i^2 \beta^2 $, $0<\beta<1/10$, then the update $w^{(i+1)} = \proj_{\s^{d-1}}(w^{(i)} - \mu_i\hat{G}_i)$ satisfies $(r^*)^2\sin^2\theta_{i+1} \le B_{i+1}^2 \beta^2$, where $\mu_i$ is a small constant and $\hat{G_i}$ is an estimation of $\E G_i$ with $\tilde{O}(d/\beta^2)$ samples.
\end{lemma}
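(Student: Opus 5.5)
We split into two cases according to whether the current angle is already well inside the potential, namely $(r^*)^2\sin^2\theta_i \le B_{i+1}^2\beta^2/4$ or not. In the first case it suffices to show that one update step moves $w^{(i)}$ by at most a small multiple of $\beta B_i/r^*$. In the second case we show that the angle contracts by a constant factor via \Cref{lm gradient descent}, so that $\sin\theta_{i+1}\le(1-1/C')\sin\theta_i$ with $C'$ chosen smaller than $C$. Then $(r^*)^2\sin^2\theta_{i+1}\le (1-1/C')^2 B_i^2\beta^2\le B_{i+1}^2\beta^2$, provided the constant $C$ in $B_{i+1}=(1-1/C)B_i$ is taken large relative to the contraction constant.

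For the contraction case we instantiate \Cref{lm gradient descent} with $G=-\Phi(\bar t_i)G_i/(r^*\Phi(\bar t_i))$, rescaled so that $c=\Theta(1)$.

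\textbf{Signal.} By item (1) of \Cref{lm accurate update}, together with \Cref{lm initialization} (applicable since $b\le 1/\bar t_i$), the mean $-\E G_i^*$ points along $u$ with length $\Theta(b r^*)$, since $\norm{T_a\sigma'(z-\bar t_i)}_2^2=\Theta(\Phi(\bar t_i))$.

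\textbf{Noise.} By item (2) of \Cref{lm accurate update}, the noise is at most $\sqrt{\eps}/\sqrt{\Phi(\bar t_i)}$. In this case $(r^*)^2b^2\ge B_{i+1}^2\beta^2/4\gtrsim \beta^2\eps/\Phi(\bar t^*)$, using $B_i^2\Phi(\bar t^*)\ge\Omega(\eps)$. As long as the hidden constant in condition 2 is large compared with $1/\beta^2$, the noise is at most $r^*b/100$. This is where the stopping/threshold logic of \Cref{lm stopping condition} enters, and I will absorb the $\beta$-dependence into the constant of condition 2.

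\textbf{Variance.} By \Cref{lm angle variance} (its hypothesis $b^2\ge C\eps/((r^*)^2\Phi)$ holds for the same reason), every direction $v\perp w^{(i)}$ has second moment at most $\tilde O(B_i^2+(r^*)^2b^2)$, using condition 1. Since $(r^*)^2b^2\ge \Omega(\beta^2B_i^2)$, this is $\tilde O((r^*)^2b^2/\beta^2)$. \Cref{lm variance-sample} with $m=\tilde O(1/\beta^2)$ per coordinate, that is $\tilde O(d/\beta^2)$ samples in total, then gives $\norm{\hat G_i-\E G_i}\le r^*b/400$. This verifies the three hypotheses $g\cdot u\ge cb/10$, $\norm g\le cb$, and $\norm{g-\hat g}\le cb/40$, and the lemma yields the contraction of $\sin(\theta/2)$. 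Since $\theta$ is small, this transfers to $\sin\theta$ up to $1+o(1)$ factors.

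For the small-angle case, the same variance bound gives $\norm{\hat G_i}\le \norm{\E G_i}+\tilde O(B_i/\beta)\cdot\beta/\mathrm{polylog}$. More carefully, $\norm{\E G_i}\le O(r^*b)+\sqrt{\eps/\Phi}\le O(\beta B_i)$, and the estimation error is $O(\beta B_i)$ with the chosen sample size. A step of size $\mu_i$ times a vector of norm $O(\beta B_i)$, after rescaling by $r^*$, changes $r^*\sin\theta$ by at most $O(\mu_i\beta B_i)$. With $\mu_i$ a small constant this keeps $r^*\sin\theta_{i+1}\le \beta B_{i+1}$, since we started from at most $\beta B_{i+1}/2$.

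The main obstacle is the variance step. The $(r,t)$-error contributes $B_i^2$ to the second moment, while the signal is only $\beta B_i$ or larger. This mismatch is exactly why $\tilde O(d/\beta^2)$ samples are needed, and it must be checked uniformly over the borderline regime where $r^*b\approx\beta B_i$. Constants have to be fixed in the order: contraction constant, then $C$, then $\mu_i$.
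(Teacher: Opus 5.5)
Your proposal is correct and follows the same two-case argument as the paper. When $r^*\sin\theta_i$ is comparable to $\beta B_i$, you get a constant-factor contraction from \Cref{lm gradient descent}, with signal and noise from \Cref{lm accurate update} and variance from \Cref{lm angle variance} plus median-of-means. Otherwise, a single step changes $r^*\sin\theta$ by only a small constant times $\beta B_i$, so it cannot exceed $\beta B_{i+1}$.

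Two of your choices are things the paper's proof also needs but leaves implicit. The first is the $1/r^*$ rescaling of the step. The second is absorbing $1/\beta^2$ into the constant of condition~2, which is needed so that $\sqrt{\eps/\Phi(\bar t_i)}\lesssim\beta B_i$.

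One bookkeeping fix: choose $\mu_i$ first, then the contraction constant, then $C$. The small-angle requirement on $\mu_i$ does not depend on $C$ (once $C\ge 2$), while the contraction rate does depend on $\mu_i$, so your stated order is circular.
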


\begin{proof}[Proof of \Cref{lm angle progress}]
Write $\bar{w}^* = a \bar{w}^{(i)} + bu$, where $a,b>0, a^2+b^2 = 1$, $u \in \s^{d-1}, u \perp \bar{w}^{(i)}$. 
Since $\sin \theta_{i} \le 1/\bar{t_i}$, $\abs{\bar{t}_i-\bar{t}^*} \le 1/\log(R^2/\eps)$ and $\abs{r_i-r^*} \le r^*/\polylog(R^2/\eps)$,
by \Cref{lm angle variance} and \Cref{lm variance-sample}, we know that with $\tilde{O}(d)$ samples of $G_i$, we have $\norm{\hat{G}_i - \E G_i} \le \beta B_i/1000$. We consider two cases for 
$(r^*)^2\sin^2\theta_{i}$.
In the first case, we assume that $ \beta^2 B^2_{i}/2 \le (r^*)^2\sin^2\theta_{i} \le \beta^2B^2_{i}$. By \Cref{lm accurate update}, we know that $\norm{\hat{G}_i - \E G_i} \le \beta\norm{\E G_i}/C_1$, for some large enough constant $C_1>0$. By \Cref{lm gradient descent}, we know that there is a small constant $c_2>0$ such that $\sin \theta_{i+1} \le (1-c_2) \sin\theta_i$, which implies that $(r^*)^2\sin^2\theta_{i+1} \le \beta^2 B_{i+1}^2$. 

In the second case, we have $(r^*)^2\sin^2\theta_{i} \le \beta^2B^2_{i}/2$. Notice that 
    \begin{align*}
2(\sin(\frac{\theta_{i+1}}{2}) - \sin(\frac{\theta_i}{2})) = \norm{\bar{w}^{(i+1)}-\bar{w}^*} - \norm{\bar{w}^{(i)}-\bar{w}^*} \le \norm{\bar{w}^{(i+1)}-\bar{w}^*} \le \norm{\bar{w}^{(i)}+\mu_i\hat{G_i} - \bar{w}^{(i)}} = \norm{\mu_i\hat{G_i}}.
\end{align*}
Then we have 
\begin{align*}
    \beta B_{i+1} - r^* \sin(\frac{\theta_{i+1}}{2}) & = \beta B_{i+1} - r^* \sin(\frac{\theta_{i}}{2}) - r^*(\sin(\frac{\theta_{i+1}}{2})-\sin(\frac{\theta_{i}}{2})) \\
    & \ge B_{i+1}\beta - \beta B_i/2 - r^* \norm{\mu_i \hat{G_i}} >0 \;.
\end{align*}
Thus, in both cases, we have $(r^*)^2\sin^2\theta_{i+1} \le \beta^2 B_{i+1}^2$.
\end{proof}

We discuss the implication of \Cref{lm angle progress} 
before providing the details for updating the parameters $(r,t)$. 
In our application, we will choose $\beta=1/\polylog(R^2/\eps)$.
As we discussed in \Cref{sec regression}, we have initial parameters $r_0,t_0$ such that $\abs{r_0 - r^*} \le r^*/\polylog(R^2/\eps)$ and $\abs{\bar{t}_0-\bar{t}^*} \le 1/\polylog(R^2/\eps)$. This implies that 
\begin{align*}
&\E_{z\sim N(0,I)}\left( \sigma(r_0z-t_0)- \sigma(r^*z-t^*)\right)^2 \\
\le & 2 \E_{z\sim N(0,I)}\left( (r_0-r^*) \sigma(z-\bar{t}_0)\right)^2 + \E_{z\sim N(0,I)} (r^*)^2(\sigma(z-\bar{t}_i)-\sigma(z-\bar{t}^*))^2     \\
\le &O((r_0-r^*)^2V(\bar{t}^*)+(r^*)^2(\bar{t}_0-\bar{t}^*)^2\Phi(\bar{t}^*))  \le (r^*)^2\Phi(\bar{t^*})/\polylog(R^2/\eps).
\end{align*}
If $\bar{t}^*$ is large enough such that 
for $\theta\ge 1/\polylog(R^2/\eps)$, $(r^*)^2\sin\theta^2\Phi(\bar{t}^*) \le C\eps$, 
then by \Cref{lm angle progress}, starting from $w^{(0)}$ 
and updating $w^{(0)}$ at most $\log\log(R^2/\eps)$ steps, 
we obtain a hypothesis with error $O(\eps)$. 
Thus, in the rest of the section, we will assume that 
$w^{(0)}$ satisfies $\sin\theta_0 \le 1/\polylog(R^2/\eps)$.

\subsection{Omitted Details in $(r,t)$ Updates}\label{app tr update}
Here we provide the details for updating the parameters $(r,t)$.
For a given ReLU activation $h(r_i,w^{(i)},t_i)$, 
we define the following random variables: 
\begin{align*}
    U_i^*:= (h(r_i,w^{(i)},t_i) - h^*)(w^{(i)}\cdot x), F_i^*:= -(h(r_i,w^{(i)},t_i) - h^*),
\end{align*}
and denote by $U_i,F_i$, their noisy version, namely
\begin{align*}
    U_i:= (h(r_i,w^{(i)},t_i) - y)(w^{(i)}\cdot x), F_i:= -(h(r_i,w^{(i)},t_i) - y) \;.
\end{align*}
Here, $x \sim N(0,I) \mid_{w^{(i)} \cdot x > \bar{t}_i}$.

When the direction $w$ is fixed, we will also use $U(r,t),F(r,t)$ defined as follows for convenience:
\begin{align*}
    U(r,t):= (h(r,w,t) - y)(w\cdot x), F(r,t):= -(h(r,w,t) - y).
\end{align*}
We give quantitative characterizations for these random variables. We write $w^* = a w^{(i)} + bu$, where $a,b>0, a^2+b^2 = 1$, $u \in \s^{d-1}, u \perp w^{(i)}$. We furthermore assume that the parameters $h(r_i,w^{(i)},t_i)$ are close to $(r^*,w^*,t^*)$. That is to say $\theta_i:=\theta(w^*,w^{(i)}) \le O(1/(\bar{t}^*)^2)$, $\abs{r_i-r^*} \le r^*/\polylog(R^2/\eps)$, $\abs{\bar{t}_i-\bar{t}^*} \le 1/\polylog(R^2/\eps)$.

\paragraph{Evaluation of $\E U^*_i$ and $ \E F^*_i$}

    We first evaluate the expectation of $U^*$. To do this, we first expand $h(r_i,w^{(i)},t_i) - h^*$ as follows
    \begin{align*}
        h(r_i,w^{(i)},t_i) - h^* = h(r_i,w^{(i)},t_i) - h(r^*,w^{(i)},t^*) + h(r^*,w^{(i)},t^*)- h^*.
    \end{align*}
Since
    \begin{align*}
    \E U^*_i & = \E_{x \sim N(0,I)}(h(r_i,w^{(i)},t_i) - h^*)(w^{(i)}\cdot x)\Ind\{w^{(i)}\cdot x >\bar{t}_i\}/\Phi(\bar{t_i}) \\
    = &\E_{x \sim N(0,I)}(h(r_i,w^{(i)},t_i) - h(r^*,w^{(i)},t^*) + h(r^*,w^{(i)},t^*)- h^*)(w^{(i)}\cdot x)\Ind\{w^{(i)}\cdot x >\bar{t}_i\}/\Phi(\bar{t_i}),
    \end{align*}
we evaluate the two components separately.
For the first term, we have 
\begin{align*}
    &\E_{x \sim N(0,I)}(h(r_i,w^{(i)},t_i) - h(r^*,w^{(i)},t^*)(w^{(i)}\cdot x)\Ind\{w^{(i)}\cdot x >\bar{t}_i\}/\Phi(\bar{t_i})\\
    = &\E_{z\sim N(0,I)} \left(\sigma(r_i z - t_i)- \sigma(r^* z - t^*)\right)z\Ind\{z >\bar{t}_i\}/\Phi(\bar{t}_i)
\end{align*}

We next consider the second term. 
By \Cref{fact ou}, we have 
\begin{align*}
   & \E_{x \sim N(0,I)}(h(r^*,w^{(i)},t^*) - h^*)(w^{(i)}\cdot x)\Ind\{w^{(i)}\cdot x >\bar{t}_i\}/\Phi(\bar{t_i})  \\
    =&\left( r^*/\Phi(\bar{t}^*) \right) \E_{z \sim N(0,1)} \left(\sigma(z-\bar{t}^*) - T_a\sigma(z-\bar{t}^*) \right) z \Ind \{z>\bar{t}_i\} \\
    = &\left( r^*/\Phi(\bar{t}^*) \right) \E_{z \sim N(0,1)} \left(\int_a^1 \frac{d T_s \sigma(z-\bar{t}^*)}{ds} ds \right) \sigma'(z-\bar{t}_i) z = \left( r^*/\Phi(\bar{t}^*) \right) \int_a^1 \E_{z \sim N(0,1)}  \frac{d T_s \sigma(z-\bar{t}^*)}{ds}  \sigma'(z-\bar{t}_i) z ds \\
    = & \left( r^*/\Phi(\bar{t}^*) \right) \int_a^1 \E_{z \sim N(0,1)}  \frac{1}{s} L T_s \sigma(z-\bar{t}^*) (z\sigma'(z-\bar{t}_i))  ds = \left( r^*/\Phi(\bar{t}^*) \right) \int_a^1 \E_{z \sim N(0,1)}  \frac{1}{s} (L T_s \sigma(z-\bar{t}^*))' (z\sigma'(z-\bar{t}_i))'  ds \\
    = & \left( r^*/\Phi(\bar{t}^*) \right) \int_a^1 \E_{z \sim N(0,1)}   T_s \sigma'(z-\bar{t}^*) \left(\sigma'(z-\bar{t}_i)+z\delta(z-\bar{t}_i)\right)  ds \\
    = & \left( r^*/\Phi(\bar{t}^*) \right) \int_0^\theta \sin s \E_{z \sim N(0,1)}   T_{\cos s} \sigma'(z-\bar{t}^*) \left(\sigma'(z-\bar{t}_i)+z\delta(z-\bar{t}_i)\right)  ds.
\end{align*}
We notice that 
\begin{align*}
    & \E_{z \sim N(0,1)}   T_{\cos s} \sigma'(z-\bar{t}^*) \sigma'(z-\bar{t}_i) \\
    = &\E_{z \sim N(0,1)}   T_{\cos s} \sigma'(z-\bar{t}^*) \sigma'(z-\bar{t}^*) - \E_{z \sim N(0,1)}   T_{\cos s} \sigma'(z-\bar{t}^*) (\sigma'(z-\bar{t}^*)-\sigma'(z-\bar{t}_i)) \\
    = & \E_{z \sim N(0,1)}   \norm{T_{\sqrt{\cos s}}\sigma'(z-\bar{t}^*)}_2^2 - \E_{z \sim N(0,1)}   T_{\cos s} \sigma'(z-\bar{t}^*) (\sigma'(z-\bar{t}^*)-\sigma'(z-\bar{t}_i)) = \Theta(\norm{T_{\sqrt{\cos s}}\sigma'(z-\bar{t}^*)}_2^2),
\end{align*}
when $\sin s < 1/\bar{t}^*$ and $\abs{\bar{t}_i-\bar{t}^*}\le O(\log(R/\eps))$.
This implies 
\begin{align*}
    \left( r^*/\Phi(\bar{t}^*) \right) \int_0^\theta \sin s \E_{z \sim N(0,1)}   T_{\cos s} \sigma'(z-\bar{t}^*) \left(\sigma'(z-\bar{t}_i))\right)  ds = \Theta((r^*/\Phi(\bar{t}^*)) b^2 \norm{T_{\sqrt{\cos \theta}}\sigma'(z-\bar{t}^*)}_2^2).
\end{align*}
On the other hand,
\begin{align*}
    \E_{z \sim N(0,1)}   T_{\cos s} \sigma'(z-\bar{t}^*) z\delta(z-\bar{t}_i) = T_{\cos s} \sigma'(\bar{t}_i-\bar{t}^*) t \psi(t) = \Pr_{\beta \sim N(0,1)}(\cos s \bar{t}_i+\sin s\beta-\bar{t}^*>0)\bar{t}_i\psi(\bar{t}_i).
\end{align*}
This implies that 
\begin{align*}
     &\left( r^*/\Phi(\bar{t}^*) \right) \int_0^\theta \sin s \E_{z \sim N(0,1)}   T_{\cos s} \sigma'(z-\bar{t}^*)z\delta(z-\bar{t}_i)  ds \\
      = & r^*  \int_0^\theta \sin s \Pr_{\beta \sim N(0,1)}(\cos s \bar{t}_i+\sin s\beta-\bar{t}^*>0)\bar{t}_i\psi(\bar{t}_i) ds \\
      \le & O(r^*b^2\bar{t}_i\psi(\bar{t}_i)/\Phi(\bar{t}_i)) 
      = O(r^*b).
\end{align*}
Putting everything together, we get
\begin{align*}
    \E U_i^* = \E_{z\sim N(0,I)} \left(\sigma(r_i z - t_i)- \sigma(r^* z - t^*)\right)z\Ind\{z >\bar{t}_i\}/\Phi(\bar{t}_i) + cr^*b
\end{align*}
for some suitable constant $c>0$.

We next evaluate $\E F^*_i$ in a similar way.
\begin{align*}
    \E F^*_i & = \E_{x \sim N(0,I)}(h(r_i,w^{(i)},t_i) - h^*)\Ind\{w^{(i)}\cdot x >\bar{t}_i\}/\Phi(\bar{t_i}) \\
    = &\E_{x \sim N(0,I)}(h(r_i,w^{(i)},t_i) - h(r^*,w^{(i)},t^*) + h(r^*,w^{(i)},t^*)- h^*)\Ind\{w^{(i)}\cdot x >\bar{t}_i\}/\Phi(\bar{t_i}) \\
    =& \E_{z\sim N(0,I)} \left(\sigma(r_i z - t_i)- \sigma(r^* z - t^*)\right)/\Phi(\bar{t}_i) +\E_{x \sim N(0,I)}(h(r^*,w^{(i)},t^*) - h^*)\Ind\{w^{(i)}\cdot x >\bar{t}_i\}/\Phi(\bar{t_i})
\end{align*}
For the second component in $\E F_i^*$, we have
\begin{align*}
   & \E_{x \sim N(0,I)}(h(r^*,w^{(i)},t^*) - h^*)\Ind\{w^{(i)}\cdot x >\bar{t}_i\}/\Phi(\bar{t_i})  \\
    =&\left( r^*/\Phi(\bar{t}^*) \right) \E_{z \sim N(0,1)} \left(\sigma(z-\bar{t}^*) - T_a\sigma(z-\bar{t}^*) \right)  \Ind \{z>\bar{t}_i\} \\
    = &\left( r^*/\Phi(\bar{t}^*) \right) \E_{z \sim N(0,1)} \left(\int_a^1 \frac{d T_s \sigma(z-\bar{t}^*)}{ds} ds \right) \sigma'(z-\bar{t}_i)  = \left( r^*/\Phi(\bar{t}^*) \right) \int_a^1 \E_{z \sim N(0,1)}  \frac{d T_s \sigma(z-\bar{t}^*)}{ds}  \sigma'(z-\bar{t}_i)  ds \\
    = & \left( r^*/\Phi(\bar{t}^*) \right) \int_a^1 \E_{z \sim N(0,1)}  \frac{1}{s} L T_s \sigma(z-\bar{t}^*) (\sigma'(z-\bar{t}_i))  ds = \left( r^*/\Phi(\bar{t}^*) \right) \int_a^1 \E_{z \sim N(0,1)}  \frac{1}{s} (L T_s \sigma(z-\bar{t}^*))' (\sigma'(z-\bar{t}_i))'  ds \\
    = & \left( r^*/\Phi(\bar{t}^*) \right) \int_a^1 \E_{z \sim N(0,1)}   T_s \sigma'(z-\bar{t}^*) \left(\delta(z-\bar{t}_i)\right)  ds \\
    = & \left( r^*/\Phi(\bar{t}^*) \right) \int_0^\theta \sin s \E_{z \sim N(0,1)}   T_{\cos s} \sigma'(z-\bar{t}^*) \left(\delta(z-\bar{t}_i)\right)  ds \le O( \left( r^*/\Phi(\bar{t}^*) \right)b^2\psi(\bar{t}_i)) \le O (r^*b) \;.
\end{align*}
This gives $\E F_i^* = \E_{z\sim N(0,I)} \left(\sigma(r_i z - t_i)- \sigma(r^* z - t^*)\right)\Ind\{w^{(i)}\cdot x >\bar{t}_i\}/\Phi(\bar{t}_i) + cbr^*$ for some suitably small constant $c$.

As a summary, we have the following:  
\begin{proposition}\label{prop mean uf}
    There exist small constants $c_1,c_2>0$ such that,
    \begin{align*}
  &  \E U_i^* = \E_{z\sim N(0,I)} \left(\sigma(r_i z - t_i)- \sigma(r^* z - t^*)\right)z\Ind\{z >\bar{t}_i\}/\Phi(\bar{t}_i) + c_1 r^*b \\
   & \E F_i^* = \E_{z\sim N(0,I)} \left(\sigma(r_i z - t_i)- \sigma(r^* z - t^*)\right)\Ind\{w^{(i)}\cdot x >\bar{t}_i\}/\Phi(\bar{t}_i) + cbr^* .
\end{align*}
\end{proposition}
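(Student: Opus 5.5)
The plan is to split $h(r_i,w^{(i)},t_i)-h^*$ by inserting the intermediate hypothesis $h(r^*,w^{(i)},t^*)$. This separates the mean of each statistic into a one-dimensional $(r,t)$-mismatch term and an angular-mismatch term. Since $x\sim N(0,I)\mid_{w^{(i)}\cdot x>\bar t_i}$, we can rewrite the conditional expectation as $\E_{x\sim N(0,I)}[\,\cdot\,\Ind\{w^{(i)}\cdot x>\bar t_i\}]/\Phi(\bar t_i)$.

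\textbf{First term.} Both $h(r_i,w^{(i)},t_i)$ and $h(r^*,w^{(i)},t^*)$ depend on $x$ only through $z=w^{(i)}\cdot x$. Integrating out the orthogonal directions therefore gives exactly
\[
\E_z\bigl(\sigma(r_iz-t_i)-\sigma(r^*z-t^*)\bigr)\,z\,\Ind\{z>\bar t_i\}/\Phi(\bar t_i)
\]
for $U_i^*$, and the same expression without the factor $z$ for $F_i^*$. These are precisely the leading terms in \Cref{prop mean uf}.

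\textbf{Second term.} Write $w^*=aw^{(i)}+bu$ and condition on $z$. The orthogonal coordinate $s=u\cdot x$ is independent of $z$, so
\[
\E_s\,\sigma(a z+bs-\bar t^*)=T_a\sigma(z-\bar t^*).
\]
The angular contribution becomes $r^*\E_z\bigl(\sigma(z-\bar t^*)-T_a\sigma(z-\bar t^*)\bigr)g(z)/\Phi(\bar t_i)$, where $g(z)=z\sigma'(z-\bar t_i)$ for $U$ and $g(z)=\sigma'(z-\bar t_i)$ for $F$. I would then follow the computation in \Cref{lm noiseless}:
\begin{itemize}
\item write $\sigma-T_a\sigma=\int_a^1 \frac{d}{ds}T_s\sigma\,ds$;
\item use \Cref{fact ou} to turn $\frac{d}{ds}T_s\sigma$ into $\frac1s LT_s\sigma$;
\item integrate by parts via the third item of \Cref{fact ou}, so that only $T_s\sigma'(z-\bar t^*)\,g'(z)$ remains.
\end{itemize}
In the distributional sense, $g'$ equals $\sigma'(z-\bar t_i)+z\delta(z-\bar t_i)$ for $U$ and $\delta(z-\bar t_i)$ for $F$. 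Changing variables $s=\cos\phi$ turns the integral into $\int_0^{\theta}\sin\phi\,(\cdots)\,d\phi$.

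\textbf{Bounding the pieces.}
\begin{itemize}
\item The $\sigma'$ piece is at most $\norm{T_{\sqrt{\cos\phi}}\sigma'}_2^2\le\Phi(\bar t^*)$, which after integration gives $O(r^*b^2\Phi(\bar t^*)/\Phi(\bar t_i))=O(r^*b^2)$. Here $\Phi(\bar t^*)/\Phi(\bar t_i)=\Theta(1)$ because $|\bar t_i-\bar t^*|\le 1/\polylog$ and $\bar t^*=O(\sqrt{\log(R^2/\eps)})$.
\item The delta pieces evaluate to $\Pr(\cos\phi\,\bar t_i+\sin\phi\,\beta>\bar t^*)$ times $\bar t_i\psi(\bar t_i)$, respectively $\psi(\bar t_i)$. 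Integrating $\sin\phi$ over $[0,\theta]$ gives $O(b^2)$. Then Komatsu's inequality (\Cref{fact bias}), i.e.\ $\psi(\bar t)/\Phi(\bar t)=\Theta(\bar t+1)$, bounds the $U$ piece by $O(r^*b^2\bar t_i^2)$ and the $F$ piece by $O(r^*b^2\bar t_i)$.
\item The assumption $\theta_i\le O(1/(\bar t^*)^2)$ gives $b\bar t_i^2=O(1)$, so both pieces are $O(r^*b)$.
\end{itemize}
Combining everything, each angular term has the form $c\,r^*b$ with $|c|$ bounded by a universal constant, which is the claim.

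\textbf{Main obstacle.} The delicate step is justifying the integration by parts with the non-smooth test function $z\sigma'(z-\bar t_i)$. I would handle it by mollifying $\Ind\{z>\bar t_i\}$ and passing to the limit, using dominated convergence since $T_s\sigma'$ is bounded. A second point needing care is the ratio $\Phi(\bar t^*)/\Phi(\bar t_i)$: the perturbation $|\bar t_i-\bar t^*|\bar t^*\ll1$ must be small enough that this ratio stays $\Theta(1)$.
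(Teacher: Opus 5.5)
Your proposal is correct and follows the paper's argument essentially step for step: insert the intermediate hypothesis $h(r^*,w^{(i)},t^*)$, reduce the angular part to $r^*\E_z\bigl(\sigma(z-\bar t^*)-T_a\sigma(z-\bar t^*)\bigr)g(z)/\Phi(\bar t_i)$, integrate by parts through the Ornstein--Uhlenbeck generator to get a $\sigma'$ piece and a $\delta(z-\bar t_i)$ piece, and bound each by $O(r^*b)$. You are slightly more careful than the paper in two places: you normalize by $\Phi(\bar t_i)$ rather than $\Phi(\bar t^*)$, and you invoke explicitly the standing assumption $\theta_i\le O(1/(\bar t^*)^2)$, which the paper uses silently when it writes $O(r^*b^2\bar t_i\psi(\bar t_i)/\Phi(\bar t_i))=O(r^*b)$; your one harmless slip is that the $\sigma'$ piece carries the indicator at $\bar t_i$, so you should bound it by $\Phi(\bar t_i)$ rather than by $\norm{T_{\sqrt{\cos\phi}}\sigma'}_2^2$.
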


\paragraph{Evaluation for Noise Terms}
We next evaluate the noise terms. 
For the noise term in $U_i$, we have 
\begin{align*}
    \abs{\E (U_i-U_i^*)} & = \abs{ \E_{x \sim N(0,I)}\left(y - h^*\right)(w^{(i)}\cdot x) \Ind(w^{(i)}\cdot x>\bar{t}_i)/\Phi(\bar{t}_i)} \\
    & \le \sqrt{\E_{x \sim N(0,I)}(y(x) - h^*(x))^2 }\sqrt{\E_{x \sim N(0,I)} \Ind\{w^{(i)} \cdot x > \bar{t}_i\}(x \cdot w^{(i)})^2} /\Phi(\bar{t}_i)\\
    &\le \abs{\sqrt{\eps}\sqrt{\E_{z\sim N(0,1)} z^2 \Ind(z>\bar{t}_i)}}/\Phi(\bar{t}_i) = \sqrt{\eps}\sqrt{\bar{t}_i\psi(\bar{t}_i)+\Phi(\bar{t}_i)}/\Phi(\bar{t}_i).
    \end{align*}
Here, in the last equation, we use the fact that $\E_{z\sim N(0,1)} z^2 \Ind(z>\bar{t}_i) = \bar{t}_i\psi(\bar{t}_i)+\Phi(\bar{t}_i)$.

Similarly, for the noise term in $F$, we have 
\begin{align*}
    \abs{\E (F_i-F_i^*)} & = \abs{ \E_{x \sim N(0,I)}\left(y - h^*\right) \Ind(w^{(i)}\cdot x>\bar{t}_i)/\Phi(\bar{t}_i)} \\
    & \le \sqrt{\E_{x \sim N(0,I)}(y(x) - h^*(x))^2 }\sqrt{\E_{x \sim N(0,I)} \Ind\{w^{(i)} \cdot x > \bar{t}_i\}} /\Phi(\bar{t}_i)\\
    &\le \abs{\sqrt{\eps}\sqrt{\E_{z\sim N(0,1)} \Ind(z>\bar{t}_i)}}/\Phi(\bar{t}_i) = \sqrt{\eps}\sqrt{\Phi(\bar{t}_i)}/\Phi(\bar{t}_i).
\end{align*}

As a summary, we have established the following: 
\begin{proposition}\label{prop }
    \begin{align*}
        \abs{\E (U_i-U_i^*)} \le \sqrt{\eps}\sqrt{\bar{t}_i\psi(\bar{t}_i)+\Phi(\bar{t}_i)}/\Phi(\bar{t}_i) \\
        \abs{\E (F_i-F_i^*)} \le \sqrt{\eps}\sqrt{\Phi(\bar{t}_i)}/\Phi(\bar{t}_i).
    \end{align*}
\end{proposition}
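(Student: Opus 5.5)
The two bounds in the proposition come from the same Cauchy--Schwarz argument used in \Cref{lm noise control} and item (2) of \Cref{lm accurate update}. I would write each noise term as an unconditional Gaussian expectation and separate the label noise from the weight.

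First, the difference of the noisy and noiseless quantities involves only $y$ and $h^*$. Since $h(r_i,w^{(i)},t_i)$ cancels, $U_i-U_i^* = (h^*-y)(w^{(i)}\cdot x)$ and $F_i-F_i^* = (y-h^*)$, with $x\sim N(0,I)\mid_{\{w^{(i)}\cdot x>\bar t_i\}}$. Converting the conditional expectation to an unconditional one gives
\[
\E(U_i-U_i^*) = \E_{x\sim N(0,I)}(h^*-y)(w^{(i)}\cdot x)\Ind\{w^{(i)}\cdot x>\bar t_i\}/\Phi(\bar t_i),
\]
and the analogous expression holds for $F$ without the factor $w^{(i)}\cdot x$.

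Second, I would apply Cauchy--Schwarz under $N(0,I)$. This splits each expectation into $\sqrt{\E(y-h^*)^2}$ times $\sqrt{\E (w^{(i)}\cdot x)^2\Ind\{w^{(i)}\cdot x>\bar t_i\}}$ (respectively $\sqrt{\E \Ind\{w^{(i)}\cdot x>\bar t_i\}}$). The first factor is $\sqrt{2\opt}$, and it is $O(\sqrt{\eps})$ under the standing assumption $\opt\le\eps$. The factor $2$ is absorbed either by the paper's normalization of $\err$ or into constants, as in the earlier lemmas.

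Third, the remaining factor reduces to a one-dimensional computation with $z=w^{(i)}\cdot x\sim N(0,1)$. I would use the Gaussian integral facts $\E z^2\Ind(z>\bar t_i)=\Phi(\bar t_i)+\bar t_i\psi(\bar t_i)$ and $\E\Ind(z>\bar t_i)=\Phi(\bar t_i)$. Dividing by $\Phi(\bar t_i)$ then gives exactly the two stated bounds.

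There is no real obstacle here. The only care needed is to check that the sign conventions for $U$ and $F$ make the hypothesis term cancel, and to track the constant from the $1/2$ normalization of $\err$.
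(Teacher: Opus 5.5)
Your proposal is correct and follows the paper's argument exactly: the hypothesis term cancels, Cauchy--Schwarz under $N(0,I)$ is applied with the indicator kept on the weight factor, and the one-dimensional moments $\E z^2\Ind(z>\bar{t}_i)=\Phi(\bar{t}_i)+\bar{t}_i\psi(\bar{t}_i)$ and $\E\Ind(z>\bar{t}_i)=\Phi(\bar{t}_i)$ give the two bounds. Your remark about the constant is also accurate: since $\err(h)=\E(h-y)^2/2$, one gets $\sqrt{\E(y-h^*)^2}=\sqrt{2\opt}\le\sqrt{2\eps}$, whereas the paper silently writes $\sqrt{\eps}$ at that step, so the stated bounds hold only up to a harmless $\sqrt{2}$ factor.
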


We remark that when $\bar{t}_i$ is close to $\bar{t}^*$, 
the noise rate of $U_i$ is larger than that of $F_i$ 
by a factor of $\bar{t}^*$. This is one of the central reasons 
why we need a small step size to update $(r,t)$, 
and can only guarantee  that $(r_T,t_T)$ is $O(\eps\polylog(R^2/\eps)/\Phi(\bar{t^*}))$ 
close to $(r^*,t^*)$ with gradient descent only.

\paragraph{Evaluation for Variance}
Finally, we examine the variance of $U_i$ and $F_i$. 
We start with $U_i$.  
Notice that $\E U_i^2 \le 2\E(U_i^*)^2 + 2\E(U_i^*-U_i)^2$.

Let $M>0$ be a threshold such that 
$\E_{z\sim N(0,1)} z^2 \Ind\{\abs{z}>M\} \le \eps/R$. 
Notice that if we set $y' = \sign(y) \min\{\abs{y},r_i M\}$, 
then we will only introduce at most $\eps$ error, 
since $\abs{r_i - r^*} \le r^*/\log(R/\eps)$. 
So, we can, without loss of generality, 
assume that $\abs{y} \le M$. In particular, 
for the ReLU activation, $M \le O(\sqrt{\log(R/\eps)}).$ 
Based on this, we obtain that: 
 \begin{align*}
     \E(U_i^*-U_i)^2 & = \E_{x \sim N(0,I)}\left(y(x) - h^*(x)\right)^2 (x \cdot w^{(i)})^2 \Ind\{w^{(i)} \cdot x > \bar{t}_i\}\Ind\{(x\cdot w^{(i)})\le M\}/\Phi(\bar{t}_i) \\ 
     &+ \E_{x \sim N(0,I)}\left(y(x) - h^*(x)\right)^2 (x \cdot w^{(i)})^2 \Ind\{w^{(i)} \cdot x > \bar{t}_i\}\Ind\{(x\cdot w^{(i)})> M\}/\Phi(\bar{t}_i) \\
     & \le \eps M^2/\Phi(t) + 4M^2 \E_{x \sim N(0,I)} (x \cdot w)^2 \Ind\{w \cdot x > t\}\Ind\{(x\cdot w)> M\}/\Phi(t_i) \\
     & \le \eps M^2/\Phi(\bar{t}_i) + 4\eps M^2/\Phi(\bar{t}_i) = \Tilde{O}(\eps/\Phi(\bar{t}_i)).
 \end{align*}
We next consider the variance of $\E (U^*_i)^2$.
We have 
\begin{align*}
    \E (U^*_i)^2 &\le 2 \E_{x \sim N(0,I)}(h(r_i,w^{(i)},t_i) - h(r^*,w^{(i)},t^*))^2(w^{(i)}\cdot x)^2\Ind\{w^{(i)}\cdot x >\bar{t}_i\}/\Phi(\bar{t_i}) \\
    &+  2\E_{x \sim N(0,I)}(h(r^*,w^{(i)},t^*) - h^*)^2(w^{(i)}\cdot x)^2\Ind\{w^{(i)}\cdot x >\bar{t}_i\}/\Phi(\bar{t_i})
\end{align*}
We separately evaluate the two terms above. First, we have
\begin{align*}
    &\E_{x \sim N(0,I)}(h(r_i,w^{(i)},t_i) - h(r^*,w^{(i)},t^*))^2(w^{(i)}\cdot x)^2\Ind\{w^{(i)}\cdot x >\bar{t}_i\}/\Phi(\bar{t_i})\\
    = &\E_{z \sim N(0,1)}(r_i\sigma(z-\bar{t}_i) -r^*\sigma(z-t^*))^2 z^2\Ind\{z >\bar{t}_i\}/\Phi(\bar{t_i})
\end{align*}
For the second term, recall that we decompose $w^* = a w^{(i)} + bu$. Notice that
\begin{align*}
  \left(  \sigma(w^{(i)}\cdot x-\bar{t}^*)-\sigma(w^*\cdot x-\bar{t}^*)\right)^2& \le \left( (w^{(i)} -w^*) \cdot x\right)^2 = \left( (1-a)(w^{(i)}\cdot x) +  b(u\cdot x))\right)^2\\
  &\le 2(1-a)^2(w^{(i)}\cdot x)^2 +2b^2(u\cdot x)^2
\end{align*}
This implies 
\begin{align*}
    &\E_{x \sim N(0,I)}(h(r^*,w^{(i)},t^*) - h^*)^2(w^{(i)}\cdot x)^2\Ind\{w^{(i)}\cdot x >\bar{t}_i\}/\Phi(\bar{t_i})\\ 
    = &\E_{x \sim N(0,I)}\left( 2(1-a)^2(w^{(i)}\cdot x)^2 +2b^2(u\cdot x)^2\right)
    (w^{(i)}\cdot x)^2\Ind\{w^{(i)}\cdot x >\bar{t}_i\}(r^*)^2/\Phi(\bar{t_i}) \\
    = &\E_{z,s \sim N(0,1)}\left( 2(1-a)^2(z)^2 +2b^2(s)^2\right)
    (z)^2\Ind\{z >\bar{t}_i\}(r^*)^2/\Phi(\bar{t_i}) \\
    \le & O(\E_{z \sim N(0,1)}b^4z^4\Ind\{z >\bar{t}_i\}(r^*)^2/\Phi(\bar{t_i}) + \E_{z \sim N(0,1)}b^2z^2\Ind\{z >\bar{t}_i\}(r^*)^2/\Phi(\bar{t_i})) \;. \\
\end{align*}

We evaluate $F$ in a similar way. First, we have 
\begin{align*}
    \E(F_i^*-F_i)^2 \le \eps/\Phi(\bar{t}_i).
\end{align*}
Next, we have 
\begin{align*}
    \E (F^*_i)^2 &\le 2 \E_{x \sim N(0,I)}(h(r_i,w^{(i)},t_i) - h(r^*,w^{(i)},t^*))^2\Ind\{w^{(i)}\cdot x >\bar{t}_i\}/\Phi(\bar{t_i}) \\
    &+  2\E_{x \sim N(0,I)}(h(r^*,w^{(i)},t^*) - h^*)^2\Ind\{w^{(i)}\cdot x >\bar{t}_i\}/\Phi(\bar{t_i})
\end{align*}
The first term can be evaluated as 
\begin{align*}
    &\E_{x \sim N(0,I)}(h(r_i,w^{(i)},t_i) - h(r^*,w^{(i)},t^*))^2\Ind\{w^{(i)}\cdot x >\bar{t}_i\}/\Phi(\bar{t_i})\\
    = &\E_{z \sim N(0,1)}(r_i\sigma(z-\bar{t}_i) -r^*\sigma(z-t^*))^2 \Ind\{z >\bar{t}_i\}/\Phi(\bar{t_i})
\end{align*}
For the second term, we have
\begin{align*}
    &\E_{x \sim N(0,I)}(h(r^*,w^{(i)},t^*) - h^*)^2\Ind\{w^{(i)}\cdot x >\bar{t}_i\}/\Phi(\bar{t_i})\\ 
    = &\E_{x \sim N(0,I)}\left( 2(1-a)^2(w^{(i)}\cdot x)^2 +2b^2(u\cdot x)^2\right)
    \Ind\{w^{(i)}\cdot x >\bar{t}_i\}(r^*)^2/\Phi(\bar{t_i}) \\
    = &\E_{z,s \sim N(0,1)}\left( 2(1-a)^2(z)^2 +2b^2(s)^2\right)
    (z)^2\Ind\{z >\bar{t}_i\}(r^*)^2/\Phi(\bar{t_i}) \\
    \le & O(\E_{z \sim N(0,1)}b^4z^2\Ind\{z >\bar{t}_i\}(r^*)^2/\Phi(\bar{t_i}) + \E_{z \sim N(0,1)}b^2\Ind\{z >\bar{t}_i\}(r^*)^2/\Phi(\bar{t_i})) \;. 
\end{align*}
As a summary, we have established: 
\begin{proposition}\label{prop variance}
    \begin{align*}
        \E U_i^2 \le \tilde{O}\left(\E_{z \sim N(0,1)}(r_i\sigma(z-\bar{t}_i) -r^*\sigma(z-t^*))^2 z^2\Ind\{z >\bar{t}_i\}/\Phi(\bar{t_i}) + b^2(r^*)^2 + \eps/\Phi(\Bar{t_i})\right) \\
        \E F_i^2 \le \tilde{O}\left(\E_{z \sim N(0,1)}(r_i\sigma(z-\bar{t}_i) -r^*\sigma(z-t^*))^2 \Ind\{z >\bar{t}_i\}/\Phi(\bar{t_i}) + b^2(r^*)^2 + \eps/\Phi(\Bar{t_i})\right).
    \end{align*}
\end{proposition}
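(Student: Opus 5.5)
We split each of $U_i,F_i$ into its noiseless part and a noise part, and bound the second moments separately using $(\alpha+\beta)^2\le 2\alpha^2+2\beta^2$:
\[
\E U_i^2\le 2\E (U_i^*)^2+2\E(U_i-U_i^*)^2,\qquad \E F_i^2\le 2\E (F_i^*)^2+2\E(F_i-F_i^*)^2 .
\]
Throughout we work under the standing assumptions of this subsection: $\theta_i\le O(1/(\bar t^*)^2)$, $|r_i-r^*|\le r^*/\polylog(R^2/\eps)$ and $|\bar t_i-\bar t^*|\le 1/\polylog(R^2/\eps)$. We also use $\bar t^*\le O(\sqrt{\log(R^2/\eps)})$, since otherwise $h\equiv 0$ already has error $O(\eps)$. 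Together these give $\Phi(\bar t_i)=\Theta(\Phi(\bar t^*))$. We will repeatedly use the one-dimensional truncated moments
\[
\E_z z^{2k}\Ind\{z>\bar t_i\}/\Phi(\bar t_i)=O(\bar t_i^{2k}+1),
\]
which follow from the Gaussian-integral facts in \Cref{app background}. Since $\bar t_i$ is at most $\polylog$, these factors are absorbed into the $\tilde O$.

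\textbf{Noise part.} As in the proof of \Cref{lm angle variance}, we first clip labels to $|y|\le r_iM$ with $M=O(\sqrt{\log(R^2/\eps)})$. This changes the error by at most $O(\eps)$ because $r_i\approx r^*$. We then split the expectation according to whether $w^{(i)}\cdot x\le M$ or $w^{(i)}\cdot x>M$.
\begin{itemize}
\item On $\{w^{(i)}\cdot x\le M\}$, the weight $(w^{(i)}\cdot x)^2$ is at most $M^2$, so the contribution is at most $M^2\opt/\Phi(\bar t_i)$.
\item On the tail $\{w^{(i)}\cdot x>M\}$, the residual $(y-h^*)^2$ is $O(M^2)$, and the Gaussian tail mass $\E z^2\Ind\{z>M\}$ is at most $\eps/R^2$ by the choice of $M$.
\end{itemize}
Together this gives $\E(U_i-U_i^*)^2\le\tilde O(\eps/\Phi(\bar t_i))$. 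For $F_i$ no weight is present, and the same computation gives $\E(F_i-F_i^*)^2\le \eps/\Phi(\bar t_i)$ directly.

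\textbf{Noiseless part.} We insert the intermediate hypothesis $h(r^*,w^{(i)},t^*)$ and write
\[
h(r_i,w^{(i)},t_i)-h^* = \bigl(h(r_i,w^{(i)},t_i)-h(r^*,w^{(i)},t^*)\bigr)+\bigl(h(r^*,w^{(i)},t^*)-h^*\bigr).
\]
The first bracket depends on $x$ only through $z=w^{(i)}\cdot x$. Its contribution is therefore exactly the one-dimensional quantity
\[
\E_z(r_i\sigma(z-\bar t_i)-r^*\sigma(z-\bar t^*))^2z^2\Ind\{z>\bar t_i\}/\Phi(\bar t_i),
\]
and the same expression without $z^2$ for $F_i$. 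This is the first term in each bound.

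For the second bracket we use that the ReLU is $1$-Lipschitz. Writing $w^*=aw^{(i)}+bu$,
\[
(h(r^*,w^{(i)},t^*)-h^*)^2\le (r^*)^2\bigl((1-a)z+bs\bigr)^2\le 2(r^*)^2\bigl((1-a)^2z^2+b^2s^2\bigr),
\]
where $s=u\cdot x$ is independent of $z$, and $1-a=O(b^2)$. Multiplying by $z^2\Ind\{z>\bar t_i\}/\Phi(\bar t_i)$ and taking expectations gives
\[
O\bigl((r^*)^2(b^4\bar t_i^4+b^2\bar t_i^2)\bigr)=\tilde O\bigl((r^*)^2b^2\bigr),
\]
using $b\le 1/\bar t_i$ and the polylog bound on $\bar t_i$. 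The analogous computation for $F_i$ gives $O((r^*)^2(b^4\bar t_i^2+b^2))$. Summing the three contributions yields both inequalities of the proposition.

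\textbf{Main obstacle.} The delicate step is the noise term for $U_i$. The weight $(w^{(i)}\cdot x)^2$ is unbounded, and the adversary may concentrate its $L_2$ noise where $z$ is large. A plain Cauchy--Schwarz bound would require a fourth moment of the noise, which we do not control. The label clipping combined with the split at level $M$ is what bounds this term at only a $\log$-factor loss. One point to check there is that clipping at $r_iM$ rather than $r^*M$ is harmless, because $r_i/r^*=1\pm 1/\polylog$.
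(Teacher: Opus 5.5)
Your proposal is correct and follows the paper's proof essentially step for step. It uses the same three ingredients:
\begin{enumerate}
\item the split $\E U_i^2\le 2\E(U_i^*)^2+2\E(U_i-U_i^*)^2$;
\item for the noise term, label clipping at $r_iM$ followed by a split on $w^{(i)}\cdot x\le M$ versus $w^{(i)}\cdot x>M$;
\item for the noiseless term, insertion of $h(r^*,w^{(i)},t^*)$ together with the Lipschitz bound $2(r^*)^2\bigl((1-a)^2z^2+b^2s^2\bigr)$.
\end{enumerate}

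One step needs tightening, and the paper's own write-up has the same gap. On the tail $\{w^{(i)}\cdot x>M\}$, the residual $(y-h^*)^2$ is not pointwise $O(M^2)$, because $h^*\approx r^*(z-\bar t^*)$ is unbounded there. The fix is to bound $(y-h^*)^2\le 2y^2+2(h^*)^2$, then use $(h^*)^2\le 2(r^*)^2(z^2+s^2)$ and $\E z^4\Ind\{z>M\}\le (M^2+3)\,\E z^2\Ind\{z>M\}$. This yields the same $\tilde O(\eps/\Phi(\bar t_i))$ bound.
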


\subsubsection{Progress on $(r,t)$ Updates}\label{app tr progress}
In this section, we describe how to update the parameters $(r,t)$ 
so that we are able to make the term 
$\E_{z\sim N(0,I)}\left( \sigma(r_iz-t_i)- \sigma(r^*z-t^*)\right)^2$ stably drop every time we choose to update the parameters. 
For convenience, in this section, 
we define the following $2$-dimensional vector: 
\begin{align*}
    g^{(i)} & =\left(\E_{z\sim N(0,I)} \left((r_i z - t_i)- (r^* z - t^*)\right)z\Ind\{z >\bar{t}^*\},-\E_{z\sim N(0,I)} \left((r_i z - t_i)- (r^* z - t^*)\right)\Ind\{z >\bar{t}^*\}\right)^\top \\
    & = \left( (r_i-r^*)\delta(\bar{t}^*) - (t_i-t^*)\psi(\bar{t}^*), -(r_i-r^*)\psi(\bar{t}^*) + (t_i-t^*)\Phi(\bar{t}^*) \right)^\top
\end{align*}
and function
\begin{align*}
   & Z(r,t):= \E_{z \sim N(0,1)}\left((rz-t)-(r^*z-t^*)\right)^2\Ind(z>\bar{t}^*)\\
   = & (r-r^*)^2\delta(\bar{t}^*)+(t-t^*)\Phi(\bar{t}^*)-2(r-r^*)(t-t^*)\psi(\bar{t}^*) \\
    & W(r,t) := \E_{z \sim N(0,1)}\left(\sigma(rz-t)-\sigma(r^*z-t^*)\right)^2 \;.
\end{align*}
To simplify the notation, we define 
\begin{align*}
    Q:= 
    \begin{pmatrix}
        \delta(\bar{t}^*) & -\psi(\bar{t}^*) \\
        -\psi(\bar{t}^*) & \Phi(\bar{t}^*)
    \end{pmatrix} \;,
\end{align*}
and define $\Delta(r,t) = ((r-r^*),(t-t^*))^\top$. 
By definition, $Z(r,t) = \Delta^\top Q \Delta$. 
In particular, the largest eigenvalue of $Q$ is 
$O((\bar{t}^*)^2 \Phi(\bar{t}^*))$, 
while the smallest eigenvalue of $Q$ is $\Omega(\Phi(\bar{t}^*))/(\bar{t}^*)^4)$. 
Thus, the ratio of the largest and smallest eigenvalues of $Q$ 
is at most $\log^3(R^2/\eps)$. Furthermore, we want to mention that, 
when $\bar{t}^*$ becomes large, $Q$ becomes ill-conditioned; 
this is the reason why in \Cref{alg query learning}, 
we update $(r_i,t_i)$ with a much smaller rate 
of $1/\polylog(R^2/\eps)$ and the gradient descent stage 
can only guarantee that $(r_T,t_T)$ is 
$\eps\polylog(R^2/\eps)/\Phi(\bar{t^*})$ 
close to the optimal parameters $(r^*,t^*)$.

We remark that $g^{(i)}$ is exactly the gradient of $Z$ 
at point $(r_i,t_i)$. The motivation of using this notation 
is that when $(r,t)$ is close to $(r^*,t^*)$, 
$Z(r,t)$ and $W(r,t)$, the quantities we want to control, 
are different by a negligible factor. 
We give the following proposition.
\begin{proposition}\label{prop WF}
    Suppose $\bar{t}^* \le \sqrt{\log(R^2/\eps)}$ and $r^* \le R$, given a pair of parameter $(r,t)$ such that $\abs{r-r^*} \le r^*/\polylog(R^2/\eps)$ and $\abs{\bar{t}-\bar{t}^*} \le 1/\polylog(R^2/\eps)$, we have 
    \begin{align*}
        \abs{W(r,t) - Z(r,t)} = o(Z(r,t)).
    \end{align*}
\end{proposition}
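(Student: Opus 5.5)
The plan is to split $W(r,t)$ according to where the two ReLUs are active, and compare each piece with $Z(r,t)=\Delta^\top Q\Delta$. Write $\bar t=t/r$ and $\bar t^*=t^*/r^*$. For $z>\max\{\bar t,\bar t^*\}$ both ReLUs are linear, so the integrand of $W$ equals $((rz-t)-(r^*z-t^*))^2$. This agrees with the integrand of $Z$ except on the strip between $\bar t^*$ and $\max\{\bar t,\bar t^*\}$. On the strip $\min\{\bar t,\bar t^*\}<z<\max\{\bar t,\bar t^*\}$ exactly one ReLU is active, and for $z<\min$ both vanish. Hence
\begin{align*}
W(r,t)-Z(r,t) = \E_{z}\Big[\big(\sigma(rz-t)-\sigma(r^*z-t^*)\big)^2-\big((rz-t)-(r^*z-t^*)\big)^2\Ind(z>\bar t^*)\Big]\Ind\big(z\in[\min\{\bar t,\bar t^*\},\max\{\bar t,\bar t^*\}]\big).
\end{align*}
So the difference is supported on an interval of length $\eta:=\abs{\bar t-\bar t^*}\le 1/\polylog(R^2/\eps)$.

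The next step bounds this strip contribution. On the strip the active ReLU has value at most $r\eta$ or $r^*\eta$, and the linear difference $(r-r^*)z-(t-t^*)$ is also $O(r^*\eta)$ there, because it vanishes-to-first-order structure: at $z=\bar t^*$ it equals $(r-r^*)\bar t^*-(t-t^*)=-r(\bar t-\bar t^*)$. So the integrand is $O((r^*)^2\eta^2)$ on an interval of Gaussian mass about $\eta\,\psi(\bar t^*)\approx \eta\,\bar t^*\Phi(\bar t^*)$. This gives
\[
\abs{W-Z}\le O\big((r^*)^2\eta^3\,\bar t^*\,\Phi(\bar t^*)\big).
\]

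The main obstacle is then a matching lower bound $Z(r,t)\gtrsim (r^*)^2\eta^2\Phi(\bar t^*)/\polylog$, so that the ratio is $O(\eta\,\polylog)=o(1)$. I would write $\Delta$ in the basis adapted to $Q$. The quantity $(r-r^*)\bar t^*-(t-t^*)=-r\eta$ is the value at $z=\bar t^*$ of the linear function $\ell(z)=(r-r^*)z-(t-t^*)$. By the explicit formula $Z=\E\,\ell(z)^2\Ind(z>\bar t^*)$, and since on $z\in[\bar t^*,\bar t^*+c/\bar t^*]$ (mass $\Omega(\Phi(\bar t^*))$) a linear function cannot be small throughout unless both its value at $\bar t^*$ and its slope are small, I would lower bound $Z\ge \Omega(\Phi(\bar t^*))\cdot \ell(\bar t^*)^2/\polylog$. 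A convenient way to see this is that $\min_{s}\E[(a+s(z-\bar t^*))^2\mid z>\bar t^*]\ge a^2/\mathrm{poly}(\bar t^*)$, which is again the smallest-eigenvalue bound $\Omega(\Phi(\bar t^*)/(\bar t^*)^4)$ on $Q$ after a change of basis. Since $\bar t^*\le\sqrt{\log(R^2/\eps)}$, the $\bar t^*$ factors are polylogarithmic. Combining the two bounds gives $\abs{W-Z}/Z\le O(\eta\,\polylog(\bar t^*))$, and choosing the polylog in the hypothesis $\eta\le 1/\polylog(R^2/\eps)$ large enough makes this $o(1)$.

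One degenerate case needs care: if $\eta=0$, the strip is empty and $W=Z$ exactly. The hypothesis on $r$ is used only to guarantee $r\asymp r^*$ in the strip estimate.
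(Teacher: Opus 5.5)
Your proof is correct and is essentially the paper's argument. Both localize $W-Z$ to the strip between $\bar t$ and $\bar t^*$, use that this strip has Gaussian mass $O(\abs{\bar t-\bar t^*}(1+\bar t^*)\Phi(\bar t^*))$ (write $1+\bar t^*$ rather than $\bar t^*$ so small thresholds are covered), and bound $Z$ from below via the nondegeneracy of $Q$, e.g.\ $Z\ge\lambda_{\min}(Q)\|\Delta\|^2\ge\lambda_{\min}(Q)\,\ell(\bar t^*)^2/(1+(\bar t^*)^2)$ with $\lambda_{\min}(Q)=\Omega(\Phi(\bar t^*)/(\bar t^*)^4)$. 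Tracking everything through $\ell(\bar t^*)=-r(\bar t-\bar t^*)$ is slightly cleaner bookkeeping than the paper's bound of the strip integrand by $\|\Delta\|^2$, because it shows explicitly why both the ReLU term and the linear term on the strip are $O((r^*)^2(\bar t-\bar t^*)^2)$.
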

\begin{proof}
Since $W(r,t)$ and $Z(r,t)$ are only different at region $\{z \mid \abs{\Ind(z>\bar{t}_i-\Ind(z>\bar{t}^*)}>0\} $, when $(r,t)$ are close to $(r^*,t^*)$,
we have 
\begin{align*}
  \abs{  W(r,t) - Z(r,t)} &\le \left((r-r^*)^2+(t-t^*)^2\right)\Pr_{z}(z \in \{z \mid \abs{\Ind(z>\bar{t}_i-\Ind(z>\bar{t}^*)}>0\}) \\
  &\le \left((r-r^*)^2+(t-t^*)^2\right)\Phi(\bar{t}^*)/\polylog(R^2/\eps) \le Z(r,t)/\polylog(R^2/\eps).
\end{align*}
Here, in the last inequality, we use the fact that the smallest eigenvalue of $Q$ is $\Omega(\Phi(\bar{t^*})/(\bar{t^*}^4))$ and thus $Z(r,t) \ge \Omega(\Delta(r,t)^2/\polylog(R^2/\eps))$
\end{proof}
Such a proposition can also be generalized to the gradient of $F$ and $W$. We next give the following proposition.
\begin{proposition}\label{prop gradient}
    Suppose $\bar{t}^* \le \sqrt{\log(R^2/\eps)}$ and $r^* \le R$, given a pair of parameter $(r,t)$ such that $\abs{r-r^*} \le r^*/\polylog(R^2/\eps)$ and $\abs{\bar{t}-\bar{t}^*} \le 1/\polylog(R^2/\eps)$, we have 
    \begin{align*}
        \norm{\nabla Z(r,t) -\nabla W(r,t) } \le o(\norm{\Delta(r,t)}\Phi(\bar{t})).
    \end{align*}
\end{proposition}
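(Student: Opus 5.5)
We compute both gradients explicitly in the one-dimensional variable $z$ and compare them term by term. This mirrors the proof of \Cref{prop WF}. Write $W(r,t)=\E_{z}\left(\sigma(rz-t)-\sigma(r^*z-t^*)\right)^2$. Differentiating under the expectation, using $\partial_r\sigma(rz-t)=z\Ind(z>\bar t)$ and $\partial_t\sigma(rz-t)=-\Ind(z>\bar t)$, gives
\begin{align*}
\nabla W(r,t) = 2\,\E_z\left(\sigma(rz-t)-\sigma(r^*z-t^*)\right)\Ind(z>\bar t)\,(z,-1)^\top .
\end{align*}
Similarly, $\nabla Z(r,t) = 2\,\E_z\left((rz-t)-(r^*z-t^*)\right)\Ind(z>\bar t^*)\,(z,-1)^\top = 2Q\Delta(r,t)$. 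The plan is to split the domain of $z$ into the region $\{z>\max(\bar t,\bar t^*)\}$ and the thin slab $S$ between $\bar t$ and $\bar t^*$.

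On $\{z>\max(\bar t,\bar t^*)\}$, both ReLUs are active and both indicators equal $1$, so the integrands of $\nabla W$ and $\nabla Z$ coincide. The difference is therefore supported on $S=\{\min(\bar t,\bar t^*)<z\le\max(\bar t,\bar t^*)\}$. On $S$, the integrand of $\nabla Z$ is $((r-r^*)z-(t-t^*))(z,-1)$, whose magnitude is at most $O(\norm{\Delta}(1+\bar t^*)^2)$. The integrand of $\nabla W$ is $\sigma(rz-t)-\sigma(r^*z-t^*)$ times $(z,-1)$ or zero. One of the two ReLUs vanishes on $S$, and the other is at most $r^*|\bar t-\bar t^*|$ or $r|\bar t-\bar t^*|$. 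Since $|\bar t-\bar t^*| \le |t-t^*|/r + t^*|r-r^*|/(r r^*)$, this quantity is also $O(\norm{\Delta}(1+\bar t^*))$. Hence
\begin{align*}
\norm{\nabla Z-\nabla W} \le O\big(\norm{\Delta}(1+\bar t^*)^2\big)\Pr_z(z\in S).
\end{align*}

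It remains to bound $\Pr_z(z\in S)$. Since $|\bar t-\bar t^*|\le 1/\polylog(R^2/\eps)$ and the Gaussian density on $S$ is $O(\psi(\bar t^*))$ (for $|\bar t-\bar t^*|\le 1/\bar t^*$ the density changes by at most a constant factor), we get $\Pr(S)\le |\bar t-\bar t^*|\,O(\psi(\bar t^*)) = O\big(|\bar t-\bar t^*|(1+\bar t^*)\Phi(\bar t^*)\big)$, using $\psi(t)=\Theta((1+t)\Phi(t))$ from the Gaussian facts. Combined with the previous display, this gives $\norm{\nabla Z-\nabla W}\le O\big((1+\bar t^*)^3|\bar t-\bar t^*|\big)\norm{\Delta}\Phi(\bar t^*)$. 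Because $\bar t^*\le\sqrt{\log(R^2/\eps)}$, we can choose the polylog in the hypothesis $|\bar t-\bar t^*|\le 1/\polylog(R^2/\eps)$ large enough to absorb $(1+\bar t^*)^3$, which makes the prefactor $o(1)$. Finally, $\Phi(\bar t)=\Theta(\Phi(\bar t^*))$ by the same density comparison, which yields the stated bound $o(\norm{\Delta}\Phi(\bar t))$.

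The main obstacle is the bookkeeping for the ReLU difference on $S$: its size must be bounded by $\norm{\Delta}$, not merely by $|\bar t-\bar t^*|$ times $r^*$, uniformly in $r^*\le R$. I would handle this by writing $r^*(\bar t-\bar t^*)$ in terms of $(r-r^*,t-t^*)$ as above, using $r\ge r^*/2$. Note that when $r^*$ is large, $|\bar t-\bar t^*|$ itself is $O(\norm{\Delta}/r^*)$, so the product stays controlled by $\norm{\Delta}$. If that bound had an extra factor of $1/r^*$ or $\bar t^*$, it would be absorbed by the polylog in the same way.
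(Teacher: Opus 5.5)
Your proposal is correct and follows the same route as the paper. Off the slab $S$ between $\bar t$ and $\bar t^*$ the two gradient integrands coincide, so $\norm{\nabla Z-\nabla W}$ is bounded by the integrand size on $S$ ($\norm{\Delta}$ times a polynomial in $\bar t^*$) times $\Pr(S)\le \Phi(\bar t)/\polylog(R^2/\eps)$, with the polylog in the hypothesis absorbing the powers of $\bar t^*$.

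Your bookkeeping is more careful than the paper's one-line estimate: the identity $r(\bar t-\bar t^*)=(t-t^*)-\bar t^*(r-r^*)$, the $(1+\bar t^*)^2$ integrand bound, and the explicit density comparison for $\Pr(S)$. The only slip is your closing aside. A stray $1/r^*$ factor could not be absorbed by a polylog, since there is no lower bound on $r^*$, but your argument never produces one.
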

\begin{proof}[Proof of \Cref{prop gradient}]
    We notice that 
    \begin{align*}
        \nabla_r W(r,t) & = 2 \E_{z\sim N(0,1)} \left(\sigma(rz-t)-\sigma(r^*z-t^*)\right)z\Ind(z>\bar{t}) \\
        \nabla_t W(r,t) & = -2 \E_{z\sim N(0,1)} \left(\sigma(rz-t)-\sigma(r^*z-t^*)\right)\Ind(z>\bar{t})
    \end{align*}
Since $W(r,t)$ and $Z(r,t)$ are only different at region $\{z \mid \abs{\Ind(z>\bar{t}_i-\Ind(z>\bar{t}^*)}>0\} $, when $(r,t)$ are close to $(r^*,t^*)$, when 
\begin{align*}
    \norm{\nabla Z(r,t) - \nabla W(r,t)}^2 & \le ((r-r^*)^2+(t-t^*)^2)\bar{t}^2\Pr_{z}(z \in \{z \mid \abs{\Ind(z>\bar{t}_i-\Ind(z>\bar{t}^*)}>0\})^2 \\
    & \le \Delta^2\Phi(\bar{t})^2/\polylog(R^2/\eps)\;.
\end{align*}    
\end{proof}

Before formally presenting the analysis for updating the 
parameters $(r,t)$, we present the following standard 
gradient descent analysis for analyzing quadratic minimization.

\begin{proposition}\label{prop gdq}
    Let $Q \in \R^{2 \times 2}$ be a positive definite matrix and let $L,\mu$ be the largest eigenvalue and smallest eigenvalue of $Q$. For $x \in \R^2$, define $F(x)=x^\top Q x$. For $0<c<(\mu/L)/C$, for a large constant $C$ , given $g' \in \R^2$ such that $\norm{g'-\nabla F} \le c\norm{\nabla F}$, let $0<\eta \le O(1/L)$, then the update $x' = x - \mu g'$ satisfies $F(x') \le (1-O(\mu\eta))F(x)$ and $\norm{x'} \le (1-O(\eta\mu)) \norm{x}$. 
\end{proposition}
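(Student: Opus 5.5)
The plan is a direct one-step descent computation. Throughout I read the update as $x' = x - \eta g'$; the $\mu$ in the displayed update appears to be a typo for the step size $\eta$. Write $\nabla F(x) = 2Qx$ and $g' = 2Qx + e$ with $\norm{e}\le c\norm{2Qx}$. Three spectral facts do all the work:
\[
\mu\norm{x}^2 \le x^\top Q x = F(x), \qquad \norm{Qx}^2 = x^\top Q^2 x \le L\, x^\top Q x, \qquad \norm{Qx}^2 \ge \mu\, x^\top Q x .
\]
The second and third follow by diagonalizing $Q$.

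\textbf{Decrease of $F$.} Expand the quadratic exactly:
\[
F(x') = F(x) - \eta\,\nabla F(x)^\top g' + \eta^2 g'^\top Q g' .
\]
For the linear term, Cauchy--Schwarz gives $\nabla F^\top g' \ge (1-c)\norm{\nabla F}^2$. For the quadratic term, $g'^\top Q g' \le L\norm{g'}^2 \le L(1+c)^2\norm{\nabla F}^2$. Since $c\le 1/C$ is a small constant, taking $\eta \le 1/(4L)$ gives $F(x') \le F(x) - \tfrac{\eta}{2}\norm{\nabla F}^2$. Then $\norm{\nabla F}^2 = 4\norm{Qx}^2 \ge 4\mu F(x)$, so $F(x') \le (1-2\eta\mu)F(x)$. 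This part uses only that $c$ is a small constant.

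\textbf{Contraction of $\norm{x}$.} Expand
\[
\norm{x'}^2 = \norm{x}^2 - 2\eta\, x^\top g' + \eta^2\norm{g'}^2 .
\]
This is where the condition $c<(\mu/L)/C$ is needed. The cross term splits as $x^\top g' = 2F(x) + x^\top e$ with
\[
\abs{x^\top e} \le 2cL\norm{x}^2 \le \frac{2cL}{\mu}F(x) \le \frac{2}{C}F(x).
\]
Hence $x^\top g' \ge \tfrac32 F(x)$ for $C$ large.

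The main pitfall is the $\eta^2$ term. Bounding it crudely by $4(1+c)^2L^2\norm{x}^2$ would force $\eta\lesssim \mu/L^2$. Instead I would bound it via
\[
\norm{g'}^2 \le 4(1+c)^2\norm{Qx}^2 \le 5L\,F(x),
\]
which keeps it comparable to $F$. Then $\norm{x'}^2 \le \norm{x}^2 - 3\eta F + 5\eta^2 L F \le \norm{x}^2 - \eta F(x)$ once $\eta\le 1/(5L)$. Combined with $F(x)\ge \mu\norm{x}^2$ this gives $\norm{x'}^2 \le (1-\eta\mu)\norm{x}^2$, hence $\norm{x'}\le(1-\eta\mu/2)\norm{x}$.

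Both claims therefore hold with $\eta \le 1/(5L)$ and absolute constants inside the $O(\cdot)$.
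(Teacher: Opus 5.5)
Your proof is correct, and your reading of the update as $x'=x-\eta g'$ matches what the paper's own proof uses.

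\textbf{Decrease of $F$.} Here you follow the paper's route, the descent inequality combined with the Polyak--Lojasiewicz bound $\norm{\nabla F}^2\gtrsim\mu F(x)$. Your exact expansion $F(x')=F(x)-\eta\nabla F^\top g'+\eta^2 g'^\top Q g'$ is that same argument for a quadratic. It also tracks the factor $2$ from $\nabla F=2Qx$ correctly, which the paper glosses.

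\textbf{Contraction of $\norm{x}$.} Here the routes differ. The paper argues to first order via the triangle inequality:
\[
\norm{x-\eta g'}\le\norm{(I-2\eta Q)x}+\eta\norm{g'-\nabla F}\le(1-2\eta\mu)\norm{x}+2c\eta L\norm{x}.
\]
It uses that the exact gradient step is a spectral contraction once $\eta\lesssim 1/L$, and then $c<\mu/(CL)$ lets the contraction rate $\mu$ absorb $cL$. Because the error enters only linearly there, the $\eta^2\norm{g'}^2$ term you flag as the main pitfall never arises.

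Your squared-norm expansion has to confront that term. You handle it correctly with $\norm{g'}^2\lesssim L\,F(x)$, and you invoke $c<\mu/(CL)$ at the analogous spot, in the bound $\abs{x^\top e}\le 2cL\norm{x}^2\le(2/C)F(x)$.

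\textbf{Comparison.} The paper's version is shorter and isolates the role of the small-$c$ condition: it keeps the perturbation below the contraction rate. Yours is fully explicit in its constants and ties the contraction of $\norm{x}$ directly to the size of $F(x)$, at the cost of extra care with the second-order term.
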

\begin{proof}[Proof of \Cref{prop gdq}]
The proof of \Cref{prop gdq} follows the standard gradient descent analysis for $L$ smooth functions that satisfy the 
Polyak-Lojasiewicz condition with parameter $\mu$. We have 
\begin{align*}
   & \norm{\nabla F}^2 \ge 2\mu F(x) \\
   & F(x') \le F(x) - \eta \nabla F(x) \cdot g' + L\eta^2\norm{g'}^2/2 \;.
\end{align*}
Since 
\begin{align*}
\nabla F(x)\cdot g' \ge \|\nabla F(x)\|^2 - \|\nabla F(x)\|\cdot \|g'-\nabla F(x)\| \geq (1 - c) \|\nabla F(x)\|^2,
\end{align*}
and 
\begin{align*}
    \norm{g'} \le (1+c) \|\nabla F(x)\|,
\end{align*}
we have 
\begin{align*}
    F(x') & \leq F(x) - \eta (1 - c) \|\nabla F(x)\|^2 + \frac{L \eta^2}{2} (1 + c)^2 \|\nabla F(x)\|^2 \\
    & \le \left[ 1 - 2\mu \left( \eta(1 - c) - \frac{L \eta^2}{2} (1 + c)^2 \right) \right] F(x).
\end{align*}
This implies that by choosing $\eta = O(1/L)$, we have $F(x') \le (1-O(\mu\eta)) F(x)$.

We next show that the update also contracts the parameter distance.
\begin{align*}
    \norm{x'} &= \norm{x-\eta g'} = \norm{x -\eta \nabla F(x) + \eta (\nabla F(x)-g')} \le \norm{x -\eta \nabla F(x)} + \eta \norm{\nabla F(x)-g'} \\
    & \le \norm{x -\eta F(x)} + \eta c \norm{F(x)} \le (1-\eta \mu) \norm{x} + c\eta L \norm{x} \le (1-O(\eta\mu)) \norm{x}.
\end{align*}
Here, the last inequality follows the fact that $c \le (\mu/L)/C$.
\end{proof}

In our setting, we have that the parameters $\mu,L$ for $Q$ 
are $\Omega(\Phi(\bar{t}^*)/(\bar{t}^*)^4)$ and $O((\bar{t}^*)^2\Phi(\bar{t}^*))$. 
Thus, after zooming in to the region $\Ind(z>\bar{t})$, 
the rescaled parameters become 
$\mu = \Omega(1/(\bar{t}^*)^4)$ and $L= O((\bar{t}^*)^2)$.
By choosing the step size $\mu=1/\polylog(R^2/\eps)$, 
we are able to drop $Z(r,t)$, and thus $Z(r,t)$, 
by a factor of $1-1/\polylog(R^2/\eps)$ in each round of update. Furthermore, after the update $(r,t)$, 
is still close to $(r^*,t^*)$.
So far, we have already stated all ingredients 
for analyzing the update for parameters $(r,t)$. 
We present the following lemma to conclude this section.

\begin{lemma}\label{lm tr progress}
Consider the problem of agnostic ReLU regression with queries. 
Let $h(r_i,w^{(i)},t_i)$ be a ReLU activation function. 
Write $w^* = a w^{(i)} + bu$, where $a,b>0, a^2+b^2 = 1$, 
$u \in \s^{d-1}, u \perp w^{(i)}$. Suppose that 
$\abs{\bar{t}_i-\bar{t}^*} \le 1/\polylog(R^2/\eps)$, 
$b \le 1/\bar{t_i}$ and $\abs{r_i-r^*} \le r^*/\polylog(R^2/\eps)$. 
Let $B_i>0$ and define $B_{i+1}=(1-1/C)B_i$ 
for some universal constant $C$. Suppose $B_i,B_{i+1}$ satisfy 
the following properties: 
\begin{enumerate}[leftmargin=*]
    \item $(r^*)^2\sin^2\theta_i\le B_{i+1}^2$,
    \item $B_i^2 \Phi(\bar{t}^*) \ge \Omega(\eps)$,
    \item $W(r_i,t_i)\le B_i^2\Phi(\bar{t}^*)\polylog(R^2/\eps)$.
\end{enumerate}
If $W(r_i,t_i) \ge  B_{i+1}^2\Phi(\bar{t}^*)\polylog(R^2/\eps)$, then 
there is an algorithm that makes $\polylog(R^2/\eps)$ queries, 
runs in $\polylog(R^2/\eps)$ time, 
and outputs a pair of $(r_{i+1},t_{i+1})$ 
such that 
$W(r_{i+1},t_{i+1}) \le  B_{i+1}^2\Phi(\bar{t}^*)\polylog(R^2/\eps)$. 
\end{lemma}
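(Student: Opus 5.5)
The plan is to run the inner gradient-descent loop of \Cref{alg query learning} on $(r,t)$ with $w^{(i)}$ held fixed. At each inner step $j$ we estimate $(\E U(r_{ij},t_{ij}),\E F(r_{ij},t_{ij}))$ from $\polylog(R^2/\eps)$ queries drawn from $N(0,I)\mid\{w^{(i)}\cdot x>\bar t_{ij}\}$, and we stop once the estimate has norm at most $B_i\polylog(R^2/\eps)$. The argument reduces the problem to noisy gradient descent on the quadratic $Z(r,t)=\Delta^\top Q\Delta$ and then invokes \Cref{prop gdq}.

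\textbf{Step 1 (the statistic is close to $\nabla Z$ after rescaling).} By \Cref{prop mean uf}, $(\E U^*,\E F^*)$ equals $\nabla W(r,t)/(2\Phi(\bar t_i))$ up to an additive $O(r^*b)$. By \Cref{prop gradient}, $\nabla W$ differs from $\nabla Z=2Q\Delta$ by $o(\norm{\Delta}\Phi(\bar t))$. By \Cref{prop }, the noise contributes at most $\sqrt{\eps}\,\bar t_i/\sqrt{\Phi(\bar t_i)}$.

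We use the hypotheses to compare these terms with $\norm{\nabla Z}/\Phi(\bar t^*)$:
\begin{itemize}
\item hypothesis 1 gives $r^*b\le B_{i+1}$;
\item hypothesis 2 gives $\sqrt{\eps/\Phi}\le O(B_i)$;
\item the assumption $W\ge B_{i+1}^2\Phi\,\polylog$, together with \Cref{prop WF}, gives $Z\ge B_{i+1}^2\Phi\,\polylog$.
\end{itemize}
Since $\norm{Q\Delta}^2\ge \mu_{\min}(Q)\,Z$ with $\mu_{\min}=\Omega(\Phi/\polylog)$, we get $\norm{\nabla Z}/\Phi\ge B_{i+1}\polylog'$. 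Choosing the polylog in the threshold large enough, all error terms are therefore at most a $c\le(\mu/L)/C$ fraction of $\norm{\nabla Z}/\Phi$, where $\mu/L\ge 1/\polylog$.

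For sampling, \Cref{prop variance} bounds the variance of $U,F$ by $\tilde O(W/\Phi+B_i^2)$, using hypothesis 3 and $W\le Z(1+o(1))$ with extra factors of $\bar t^2\le\log$. Median-of-means (\Cref{lm variance-sample}) with $\polylog$ samples then achieves accuracy $c\norm{\nabla Z}/\Phi$.

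\textbf{Step 2 (descent and invariants).} Working with the rescaled quadratic $Z/\Phi$, whose eigenvalues are $\mu=\Omega(1/\bar t^4)$ and $L=O(\bar t^2)$, \Cref{prop gdq} with step $\eta=1/\polylog$ gives $Z_{j+1}\le(1-1/\polylog)Z_j$ and $\norm{\Delta_{j+1}}\le\norm{\Delta_j}$. The contraction of $\norm{\Delta}$ keeps $|r-r^*|\le r^*/\polylog$ and $|\bar t-\bar t^*|\le 1/\polylog$ throughout, so Steps 1 and 2 continue to apply at every iterate.

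\textbf{Step 3 (stopping).} Since $Z_0\le W_0(1+o(1))\le B_i^2\Phi\,\polylog$ and the target is $B_{i+1}^2\Phi\,\polylog$, the loop stops after $O(\polylog\cdot\log\polylog)=\polylog$ iterations, for $\polylog^2$ queries in total. At termination the estimate has norm at most $B_i\polylog$. By Step 1 this implies $\norm{\nabla Z}/\Phi\lesssim B_i\polylog$. Then $Z\le\norm{\nabla Z}^2/\mu_{\min}\le B_{i+1}^2\Phi\,\polylog$, using $B_i=\Theta(B_{i+1})$ and absorbing constants into the polylog, and \Cref{prop WF} converts this back to $W$.

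\textbf{Main obstacle.} The hardest part is the polylog bookkeeping. The stopping threshold must be large enough that the noise and angle error terms are below a $c\approx\mu/L$ fraction of the gradient whenever we continue. It must also be small enough that stopping certifies $W\le B_{i+1}^2\Phi\,\polylog$, which loses a factor of $\mu_{\min}^{-1}$. I would fix exponents explicitly: threshold $B_i\log^{k}$, and target $\log^{2k+4}$ in the $W$ bound, so that the condition number factor $\log^3$ is absorbed.
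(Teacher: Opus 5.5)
Your proposal is correct and follows the same route as the paper's proof. Both reduce the $(r,t)$ update to noisy gradient descent on the quadratic $Z=\Delta^\top Q\Delta$: \Cref{prop WF}, \Cref{prop gradient}, \Cref{prop mean uf} and the noise bounds (via hypotheses 1 and 2) show that the estimate of $(\E U,\E F)$ tracks $\nabla Z/\Phi(\bar t^*)$ up to small relative error, \Cref{prop variance} with median-of-means gives the $\polylog(R^2/\eps)$ sample size, and \Cref{prop gdq} gives a $(1-1/\polylog(R^2/\eps))$ contraction per step; your explicit treatment of the stopping rule (certifying $Z\le\norm{\nabla Z}^2/(4\mu_{\min})$ at termination) and of the polylog exponents just makes precise what the paper leaves implicit.
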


\begin{proof}[Proof of \Cref{lm tr progress}]
For a parameter $(r,t)$, 
if $W(r,t) \ge B_{i+1}^2\Phi(\bar{t}^*)\polylog(R^2/\eps)$, 
by \Cref{prop WF}, it must be the case that 
$Z(r,t) \ge B_{i+1}^2\Phi(\bar{t}^*)\polylog(R^2/\eps)$, 
which implies that $\norm{\nabla Z(r,t)}^2_2\ge B_{i+1}^2\Phi(\bar{t}^*)\polylog(R^2/\eps)$. 
By \Cref{prop gradient}, this implies that 
$\norm{\nabla W(r,t)}^2 \ge B_{i+1}^2\Phi(\bar{t}^*)\polylog(R^2/\eps)$. 
On the other hand, by \Cref{prop mean uf}, we know that when 
$(r^*)^2\sin^2\theta_i\le B_{i+1}^2$, 
$$\norm{(\E U(r,t),\E F(r,t)) - \nabla W(r,t)} \le o(\norm{\nabla W(r,t)})\;.$$ 
This implies that $\norm{(\E U(r,t),\E F(r,t))- \nabla Z(r,t)/\Phi(\bar{t}^*)} \le o(\norm{\nabla Z(r,t)/\Phi(\bar{t}^*)})$. 
By \Cref{prop variance}, with $\polylog(R^2/\eps)$ queries, 
we are able to estimate $(\E U(r,t),\E F(r,t))$ with error 
$ \norm{(\E U(r,t),\E F(r,t))}/\polylog(R^2/\eps)$;  
and thus this is different from $$\nabla Z(r,t)/\Phi(\bar{t}^*)$$ 
by 
$\norm{\nabla Z(r,t)/\Phi(\bar{t}^*)}/\polylog(R^2/\eps)$. 
By \Cref{prop gdq}, we know that by running gradient descent 
with step size $O(1/\polylog(R^2/\eps))$, 
we are able to drop $Z(r,t)$ be a factor 
of $(1-1/\polylog(R^2/\eps))$ in each round; 
and thus after $\polylog(R^2/\eps)$ rounds of iterations, 
we have 
$$Z(r_{i+1},t_{i+1}) \le B_{i+1}^2\Phi(\bar{t}^*)\polylog(R^2/\eps) \;.$$ 
Therefore, by \Cref{prop WF}, 
we have $W(r_{i+1},t_{i+1}) \le  B_{i+1}^2\Phi(\bar{t}^*)\polylog(R^2/\eps)$.
\end{proof}

\subsection{Proof of \Cref{th query learning}}\label{app query learning}
In this section, we give the proof of \Cref{th query learning}. 

\begin{proof}[Proof of \Cref{th query learning}]    
We first notice that $p< \eps/R^2$, 
otherwise the $0$ function has error $O(\eps)$. 
Thus, by \Cref{lm Ini}, with $\min\{1/p,R^2/\eps\}$ queries, 
we obtain a warm start $w^{(0)}$ such that 
$\theta_0 \le O(1/\bar{t}^*)$. Furthermore, 
by the construction of the grid $(r,t)$, 
if we randomly chose a parameter $(r_0,t_0)$ from the grid, 
with a non-trivial probability$(1/\polylog(R^2/\eps))$, 
we have $\abs{r_0-r^*} \le r^*/\polylog(R^2/\eps)$ 
and $ \abs{\bar{t}_0-\bar{t}^*} \le 1/\polylog(R^2/\eps)$. 

With the pair of $(r_0,w^{(0)},t_0)$, 
we run \Cref{alg query learning}. 
We prove by induction that in each round of the update 
$W(r_i,t_i) \le B_i^2\Phi(\bar{t}^*)\polylog(R^2/\eps)$ 
and $(r^*)^2\sin^2\theta_i \le B_{i+1}^2$. 
Notice that for $i=0$ this holds automatically 
by the property of $(r_0,w^{(0)}),t_0$ as a warm start. 

Now suppose that this holds in the $i$th round. 
We will show this holds for the $i+1$th round. 
By induction, we have $(r^*)^2\sin^2\theta_{i+1} \le B_{i+1}^2$. 
By \Cref{prop mean uf} and \Cref{prop gradient}, 
we must have 
$\norm{(\E U(r_i,t_i),\E F(r_i,t_i))- \nabla F(r_i,t_i)/\Phi(\bar{t}^*)} \le o(\norm{\nabla Z(r,t)/\Phi(\bar{t}^*)})$. 
By \Cref{prop variance}, with $\polylog(R^2/\eps)$ samples, 
we are able to estimate $(\E U(r_i,t_i),\E F(r_i,t_i))$ 
up to error $\norm{(\E U(r_i,t_i),\E F(r_i,t_i))}/\polylog(R^2/\eps)$. 
Since $Z(r,t)/\Phi(\bar{t}^*) \ge \norm{\nabla Z/\Phi(\bar{t}^*)}^2/(\bar{t}^*)^4$, 
this implies if $W(r_{i},t_i)>B_{i+1}^2\Phi(\bar{t}^*)\polylog(R^2/\eps)$, 
we are able to verify this by looking at the length of the gradient; 
and thus by \Cref{lm tr progress}, 
after making $\polylog(R^2/\eps)$ queries, 
we get $W(r_{i+1},t_{i+1}) \le B_{i+2}^2\Phi(\bar{t}^*)\polylog(R^2/\eps)$. 
After this by \Cref{lm angle progress}, 
by making $\tilde{O}(d)$ queries, 
we have $(r^*)^2\theta_{i+1}^2 \le B_{i+2}^2$. 

Now after $T=O(\log(R^2/\eps))$ rounds, we have 
$(r^*)^2\theta_{T}^2\Phi(\bar{t}^*) \le O(\eps)$, while $W(r_T,t_T) \le O(\eps \polylog(R^2/\eps)$. 
By \Cref{prop WF}, this implies 
$\norm{\Delta(r_T,t_T)}^2\Phi(\bar{t}^*) \le Z(r_T,t_T)\polylog(R^2/\eps) \le O(\eps \polylog(R^2/\eps))$. 
Notice that if a pair of $(r,t)$ satisfies 
$\norm{\Delta(r,t)}^2 \le \eps/ (\Phi(\Bar{t}_i)\polylog(R^2/\eps))$, 
then $W(r,t) \le O(\eps)$. This implies that if we consider the ball 
$B \subseteq \R^2$ centered at $(r_T,t_T)$ 
with radius $O(\sqrt{\eps\polylog(R^2/\eps)/\Phi(\bar{t}^*)})$, 
then $(r^*,t^*) \in B$. In particular, if we grid $B$ with a net $N$ 
of size $\polylog(R^2/\eps)$, there must be some $(r',t')$ 
that is  $\sqrt{\eps/ (\Phi(\Bar{t}_i)\polylog(R^2/\eps))}$ close 
to $(r',t')$. By uniformly sampling from the grid, 
with probability at least $1/\polylog(R^2/\eps)$, 
we get such $(r',t')$ with $W(r',t') \le O(\eps)$. 
By \Cref{lm realizable error}, we know that $h(r',w^{(T)},t')$ 
has noiseless error at most $O(\eps)$, 
and thus $\err(h(r',w^{(T)},t')) = O(\eps)$. 
Furthermore, the total number of queries we use is $\Tilde{O}_\delta(\min\{1/p, R^2/\eps\} + d\cdot\polylog(R^2/\eps))$. 
We remark that the current algorithm has a probability of success of $1/\polylog(R^2/\eps)$. Thus, 
by running it $\polylog(R^2/\eps)$ times, 
we get a list of $\polylog(R^2/\eps)$ hypothesis, 
one of which has error $O(\eps)$. 
By doing a hypothesis selection procedure using \Cref{lm test}, 
we are able to select a hypothesis with $O(\eps)$ error 
with high probability, which will only cost 
another $\polylog(R^2/\eps)$ queries.
\end{proof}

\section{Omitted Proofs from \Cref{sec lower}}\label{app lb}

\subsection{Proof of \Cref{th lb information}}

For convenience, we restate the theorem below. 

\begin{theorem}\label{th lb information re}
   Consider the problem of agnostic ReLU regression with queries with a restriction that the optimal ReLU satisfies $\norm{W^*}\le 1$ and has bias at least $p$. Any learning algorithm that outputs a hypothesis with error less than $O(p/\log^2(p))$ with probability $1/3$, must make at least $\Tilde{\Omega}(1/p^{1-o(1)} +d)$ queries. Furthermore, this holds even if $\opt \le 2^{-\Omega(d^{1/4})}p$.
\end{theorem}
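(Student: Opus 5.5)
The plan is to prove the two terms of the lower bound separately, each by a Bayesian argument with $w^*$ drawn uniformly from $\s^{d-1}$. By Yao's principle it suffices to lower bound the failure probability of deterministic adaptive algorithms. The final bound is the maximum of the two, which is $\tilde{\Omega}(1/p^{1-o(1)}+d)$ up to constants.

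\textbf{The $\Omega(d)$ term.} Fix $t^*$ so that $\Phi(t^*)=p$, and consider the realizable labels $y(x)=\sigma(w^*\cdot x-t^*)$, so $\opt=0$. Suppose the algorithm has made $r\le d/10$ queries, spanning a subspace $L$ that is determined adaptively. Conditioned on all responses, the posterior of $w^*$ is invariant under rotations fixing $L$. Hence the component $\proj_{L^\perp}w^*$ is uniform on a sphere of radius $\sqrt{1-\norm{w^*_L}^2}$ in $L^\perp$.

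The key step is that, for any adaptive strategy, $\E\norm{w^*_L}^2\le r/d$. I would show this by induction over queries: each new query direction, orthogonalized against $L$, has conditional expected squared correlation at most $1/\dim(L^\perp)$ with the uniformly random residual direction. So with constant probability $\norm{w^*_L}^2\le 1/2$.

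Given this, any output $\hat h$ is a function of $L$ and the responses only. Its expected error against the random residual must then be at least a constant fraction of $\E\,\Var_{w^*}[\sigma(w^*\cdot x-t^*)\mid w^*_L]$. This quantity is $\Omega(V(t^*))=\Omega(p/\log p)$, using $V(t)\approx 2\Phi(t)/t^2$ with $t^2\approx 2\log(1/p)$. To bound it, I would split $x$ into its components in $L$ and $L^\perp$. The $L^\perp$ component contributes independent Gaussian noise of variance at least $1/2$ in the argument. This smooths the ReLU by the operator $T_\rho$ with $\rho^2\le 1/2$, and the remaining variance is of order $V(t^*)$, which exceeds the $O(p/\log^2 p)$ target.

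\textbf{The $1/p^{1-o(1)}$ term.} I would distinguish $h\equiv 0$ from a planted ReLU $\sigma(w^*\cdot x-t^*)$ with bias $p$. In the planted case the adversary returns label $0$ at every $x$ with $\norm{x}\ge \sqrt{d}+d^{1/4}$, or more generally outside a shell. The cost of this corruption is $\E[\sigma^2\Ind\{\norm{x}\text{ large}\}]\le 2^{-\Omega(d^{1/2})}$, which is at most $2^{-\Omega(d^{1/4})}p$ for $p$ not too small relative to $d$.

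In the remaining region, $\norm{x}\le\sqrt d(1+d^{-1/4})$, a nonzero label requires $w^*\cdot x/\norm{x}\ge t^*/\norm{x}$. Over uniform $w^*$ this is a spherical cap event whose probability is about $\exp(-(t^*)^2(1-O(d^{-1/4}))/2)=p^{1-o_d(1)}$. While every answer is $0$, the algorithm's query sequence is fixed. Therefore, by a union bound, after $q$ queries the probability of ever seeing a nonzero label is at most $q\,p^{1-o(1)}$. Unless $q\ge \Omega(1/p^{1-o(1)})$, the transcripts under the two hypotheses coincide with constant probability.

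The two hypotheses are $\Omega(V(t^*))$ apart in $L_2$, so no single output $\hat h$ can achieve error $O(p/\log^2 p)$ under both.

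\textbf{Main obstacle.} The main technical obstacle is calibrating the corruption so that $\opt\le 2^{-\Omega(d^{1/4})}p$ while the cap probability is still only $p^{1-o(1)}$. This requires choosing the shell radius trade-off carefully and making sure the relation between $p$ and $d$ is handled, possibly by restricting to $\log(1/p)\le d^{1/4}$.
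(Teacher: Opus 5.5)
Your proof is correct. The $1/p^{1-o(1)}$ half is essentially the paper's argument:
\begin{itemize}
\item zero out labels outside a ball of radius slightly above $\sqrt d$ (the paper uses $\norm{x}^2>d+d^{5/8}$, you use $\norm{x}\ge\sqrt d+d^{1/4}$);
\item observe that inside the ball a query can be positive only on a spherical cap of $w^*$-measure $p^{1-o(1)}$;
\item union-bound over the query sequence, which is fixed as long as every answer is $0$.
\end{itemize}
One small point you share with the paper: the null instance $h\equiv 0$ has optimal ReLU of bias $0$, so strictly it lies outside the class. Comparing two independent planted directions $w,w'$ fixes this. With probability $1-2q\,p^{1-o(1)}$ both transcripts are the same all-zero transcript. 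Moreover $\norm{f(w\cdot x)-f(w'\cdot x)}_2^2\ge 2(1-\abs{w\cdot w'})\Var f=\Omega(V(t^*))$ by the estimate below.

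The $\Omega(d)$ half is where you take a genuinely different route. The paper switches to a homogeneous ReLU ($t^*=0$, bias $1/2$, which ``bias at least $p$'' permits) and reveals $w^*\cdot x$ for free. It then gets an $\Omega(1)$ error bound, for which the residual-variance step is immediate. It also treats $\E\norm{\proj_L w^*}^2=r/d$ as if $L$ were a fixed subspace.

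You instead keep $\Phi(t^*)=p$ throughout. You handle adaptivity properly: writing $a_j=\E\norm{w^*_{L_j}}^2$, your step gives $1-a_{j+1}\ge(1-a_j)\bigl(1-\tfrac{1}{d-j}\bigr)$, hence $a_r\le r/d$. You must then show the residual variance is $\Omega(V(t^*))$, which is still a $\log(1/p)$ factor above the $p/\log^2(1/p)$ target. This buys a stronger conclusion: both terms hold for a single class with bias exactly $p$, and the adaptivity step is rigorous. The price is a smoothing estimate that you only sketch.

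To make that estimate precise, let $f=\sigma(\cdot-t^*)$ and let $w,w'$ be independent posterior draws sharing $w^*_L$. Then
\begin{itemize}
\item $\E_x\Var_w f(w\cdot x)=\norm{f}_2^2-\E_{w,w'}\langle f,T_{w\cdot w'}f\rangle$;
\item the Hermite expansion gives $\langle f,T_c f\rangle\le(\E f)^2+\abs{c}\,\Var f$ for $\abs{c}\le 1$;
\item $\E\abs{w\cdot w'}\le\norm{w^*_L}^2+O(d^{-1/2})$;
\item $\Var f\ge(1-\Phi(t^*))V(t^*)\ge V(t^*)/2$, by Cauchy--Schwarz on $\E[f\,\Ind\{z>t^*\}]$.
\end{itemize}
Together these give $\E_x\Var_w\ge(\tfrac12-O(d^{-1/2}))\,V(t^*)/2$ whenever $\norm{w^*_L}^2\le 1/2$. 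This formulation also sidesteps the fact that, for a fixed $x$, $w^*_{L^\perp}\cdot x_{L^\perp}$ is only approximately Gaussian, which your ``independent Gaussian noise of variance $1/2$'' phrasing glosses over.
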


\begin{proof}
We break down the proof into two parts. 
First, we show a lower bound of $\Tilde{\Omega}(1/p)$. 
Consider two hypotheses $h_1(x) = \sigma(w^* \cdot x -t^*)$, 
where $w^*$ is drawn uniformly from $\s^{d-1}$ and $h_2(x) = 0$. 
Notice that 
\begin{align*}
\E_{x\sim N(0,I)}(h_1(x)-h_2(x))^2 = V(t^*)= (t^*)^2 \Phi(t^*) - t^*\psi(t^*) \ge \Phi(t^*)/(t^*)^2 = \Tilde{\Omega}(p).
\end{align*}
Thus, any learning algorithm that can learn a hypothesis 
with error $\Tilde{O}(p)$ can distinguish whether 
the target hypothesis is $h_1$ or $h_2$. 
We construct adversarial label noise as follows, 
when $h_1$ is the underlying hypothesis. 
For every example such that $\norm{x}^2>d+\Delta$, 
where $\Delta=d^{\alpha}$, $0<\alpha<1$, 
the adversary changes its label by $y(x) = 0$. 
We first show that the noise level is small. We have 
\begin{align*}
\E_{x\sim N(0,I)} (h_1(x)-y)^2 & \le \E_{x\sim N(0,I)} h^2_1(x)\Ind\{w^*\cdot x>t^*, \norm{x}^2>d+\Delta\} \\
& \le \Tilde{O}(p\exp(-\Omega(d^{2\alpha-1})),
\end{align*}
where in the last inequality, we use the tail bound 
for $\chi^2$-distribution. 
Here, we choose $\alpha=5/8$ to make $2\alpha-1=1/4$.

This implies that by querying examples with norm larger than 
$\sqrt{d+\Delta}$, a learner will get no information. 
Now we consider a deterministic learner that makes $r$ queries 
over the ball with radius $\sqrt{d+\Delta}$. 
Since $w^*$ is drawn uniformly from the unit sphere, 
we know that for each realization of $w^*$, 
only examples in $R:=\{x \mid w^* \cdot x > t^*\}$ 
have non-zero labels. This implies that the number 
of queries that fall into this region is 
\begin{align*}
    r \Pr_{x\sim B^d(\sqrt{d+\Delta})}(x_1 \ge t^*) \le \tilde{O}(r \frac{1}{t^*}\exp(-\frac{(t^*)^2}{2(1+d^{\alpha-1})}) = \Tilde{O}(rp^{1-o(1)}),
\end{align*}
for $\alpha \in (0,1)$.
Thus, unless $r \ge \Omega(1/p^{1-o(1)})$, 
no query will have a non-zero response, 
and thus it is impossible to distinguish 
whether the ground truth is $h_1$ or $h_2$.

We next establish the lower bound on $d$. 
We show that this even holds for learning 
a homogeneous ReLU with error $\Omega(1)$ 
in the realizable setting. 
We consider an even simpler model in the realizable setting, 
where $w^* \in \s^{d-1}$ and $t^*=0$. Furthermore, 
for every query $x$ made by the learner, 
we additionally provide the information $w^* \cdot x$. 
Denote by $L$ the subspace spanned by $x^{(1)},\dots,x^{(r)}$, 
the queries made by the learner. We assume that 
$r \le d/\log(d)$.
Denote by $w^*_L = \proj_L (w^*)$. 
Since $w^*\sim \s^{d-1}$, given $w_L^*$, 
we know that the orthogonal component $w_{L^\perp}^*$ 
is a random vector drawn from the ball in $L^\perp$ 
with radius $\sqrt{1-\norm{w^*_L}^2}$. 
Let $D$ be the distribution of $w_{L^\perp}^*$. 
Now for any fixed hypothesis $h$, 
we consider the expected error of $h$ 
when $w_{L^\perp}^* \sim D$. We have 
\begin{align*}
    \E_{w_{L^\perp}^*\sim D} \E_{x \sim N(0,I)} (h(x) - \sigma(w^*_L \cdot x + w_{L^\perp}^*))^2 &=  \E_{x \sim N(0,I)}\E_{w_{L^\perp}^*\sim D} (h(x) - \sigma(w^*_L \cdot x + w_{L^\perp}^*))^2 \\
    & \ge \E_{x \sim N(0,I)} \Var_{w_{L^\perp}^*\sim D}(\sigma(w^*_L \cdot x + w_{L^\perp}^*\cdot x)) \\
    & = \E_{x \sim N(0,I)} \Var_{w_{L^\perp}^*\sim D}(\sigma(w^*_L \cdot x_L + w_{L^\perp}^*\cdot x_{L^\perp})) \;.
\end{align*}
To simplify the notation, we denote by $T=w^*_L\cdot x_L$ 
the threshold of the ReLU, 
$\sigma(w^*_L \cdot x_L + w_{L^\perp}^*\cdot x_{L^\perp})$. 
Notice that since $r\le d/\log(d)$, with probability at least $2/3$, 
$\norm{w^*_L} \le 1/\log(d)$ and $\abs{T} \le 1/\log(1/d)$. Denote by 
$D'$ the uniform distribution over the ball in $L^\perp$ 
with radius $r=\norm{\sqrt{1-\norm{w^*_L}^2}}\norm{x_{L^\perp}}$. 
For every fixed $x_{L^\perp}$, we have 
\begin{align*}
    \Var_{w_{L^\perp}^*\sim D}(\sigma(T + w_{L^\perp}^*\cdot x_{L^\perp})) = \Var_{x\sim D'}(\sigma(T + e_1\cdot x)),
\end{align*}
where $e_1$ is the first standard basis vector in $L^\perp$. 
Since $L$ has dimension $d(1-O(1/\log(d)))$, 
we know that when $\norm{x_{L^\perp}} \ge \Omega(\sqrt{d})$, 
and thus $r>\Omega(\sqrt{d})$, 
$ \Var_{x\sim D'}(\sigma(T + e_1\cdot x)) \ge \Omega(1)$. 
Since $x_{L^\perp}$ and $x_L$ are independent, 
we know that with probability at least $2/3$, 
$\norm{x_{L^\perp}} \ge \Omega(\sqrt{d})$. 
Thus, for every learning algorithm, 
if it makes fewer than $d/\log(d)$ queries,
with probability at least $2/3$, the expected error 
of the output hypothesis is at least 
$\E_{w_{L^\perp}^*\sim D} \E_{x \sim N(0,I)} (h(x) - \sigma(w^*_L \cdot x + w_{L^\perp}^*))^2 \ge \Omega(1)$.
\end{proof}

\subsection{Proof of \Cref{th lb compute}}\label{app lb compute}

We restate the theorem below. 

\begin{theorem}\label{th lb compute re}
For any active learning algorithm $\A$, there is an activation function $h^*$ that labels $S$ with bias $p$ such that if $\A$ makes less than $\Tilde{O}(d/(p\log(m)))$ label queries over $S$, a set of $m$ \iid points drawn from $N(0,I)$, then with probability at least $2/3$ the hypothesis $\hat{h}$ output by $\A$ has error more than $\Tilde{O}(p)$ with respect to $h^*$.
\end{theorem}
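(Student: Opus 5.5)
We follow the outline sketched in \Cref{sec lower}. By Yao's minimax principle it suffices to fix a deterministic active learner $\A$ and draw $w^*$ uniformly from $\s^{d-1}$. We take the realizable target $h^*(x)=\sigma(w^*\cdot x-t^*)$, with $t^*$ chosen so that $\Phi(t^*)=p$.

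\textbf{Step 1: reduce learning to finding positive labels.} We claim that any hypothesis with error $\tilde{O}(p)$ can be produced only if $\A$ has observed at least $k$ examples with non-zero label, where $k=d(1-1/\log d)$. The argument mirrors the $\Omega(d)$ part of the proof of \Cref{th lb information}. Queries returning label $0$ reveal only that $w^*\cdot x\le t^*$, which is a very weak constraint on $w^*$. Suppose fewer than $k$ positives are seen, and let $L$ be the span of the positive queries. Then $w^*_{L^\perp}$ keeps norm $\Omega(1)$ and is nearly uniform in $L^\perp$. Conditioned on this information, the variance of $h^*(x)$ on the set $\{w^*\cdot x>t^*\}$ is therefore $\Omega(V(t^*))=\tilde\Omega(p)$. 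So with constant probability the output has error $\tilde\Omega(p)$. Making this conditioning rigorous, including the effect of the zero-label responses, requires care, but it is standard.

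\textbf{Step 2: near-orthogonality of the pool.} For $m\le 2^{o(d)}$, with probability $1-o(1)$ over $S$, every point has norm $\sqrt d(1\pm o(1))$. Moreover, any $k$ points of $S$ have normalized Gram matrix close to the identity in the following sense: each point's component orthogonal to the span of the others has norm $\sqrt{d-k}\,(1-o(1))$, up to a $\sqrt{\log m}$ factor. We prove this by a union bound over $\binom{m}{k}$ subsets together with Gaussian concentration for projections onto fixed subspaces. The loss in this step is what produces the $\log m$ in the bound.

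\textbf{Step 3: probability that a fixed tuple is realized.} Fix a transcript tree of $\A$ with $r$ queries and a tuple of $k$ indices $i_1<\dots<i_k$. Build the orthonormal basis $b_j$ of \Cref{sec lower} by Gram--Schmidt on $x^{(i_1)},\dots,x^{(i_k)}$, and encode the responses as $v=(w^*\cdot b_1,\dots,w^*\cdot b_k)$. The responses $y^{i_j}=w^*\cdot x^{(i_j)}-t^*>0$ determine $v$ sequentially, and conversely the tree structure lets us decode $v$ back into the whole realization. Hence the probability of this tuple is exactly $\Pr(\proj_{L_0}w^*\in S_w)$ for a fixed coordinate subspace $L_0$.

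We bound this probability by a superset condition. Positivity gives $w^*\cdot x^{(i_j)}>t^*$, and by Step 2 the orthogonal components have comparable norms $\approx\sqrt d$. It follows that each coordinate satisfies $v_j\gtrsim t^*/\sqrt d$ after accounting for the contributions of earlier coordinates, since those are controlled by near-orthogonality. The density of the first $k$ coordinates of a uniform unit vector is, up to a factor $e^{O(k)}$, a Gaussian with variance $1/d$ per coordinate. Integrating coordinate by coordinate, each constraint costs about $\Phi(t^*(1-o(1)))\le O(p\log(1/p))$, so the total is $O(p\log(1/p))^k$. The main obstacle is exactly this step: the sequential cross terms $\sum_{l<j}v_l(b_l\cdot x^{(i_j)})$ must not shift the thresholds. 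We plan to handle them by showing the cross terms are $O(\sqrt{\log m/d})$ relative to $t^*$ and absorbing them into the $p^{1-o(1)}$, $\log$-factor slack.

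\textbf{Step 4: union bound.} A union bound over the $\binom{r}{k}\le(er/k)^k$ tuples gives failure probability at most $\bigl(er\,O(p\log(1/p))/k\bigr)^k$. This is $o(1)$ whenever $r\le c\,k/(p\log(1/p))$. Tracking the $\log m$ loss from Step 2, this yields $r=\tilde\Omega(d/(p\log m))$. Combining with Step 1, with probability at least $2/3$ the learner's output has error $\tilde\Omega(p)$.
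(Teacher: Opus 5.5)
\textbf{Step 1 is a genuine gap.} You try to prove the hard, information-theoretic implication: a learner with error $\tilde O(p)$ must already have observed $k=d(1-1/\log d)$ positive labels. Neither supporting claim holds.

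The device from the $\Omega(d)$ part of \Cref{th lb information} (revealing $w^*\cdot x$ for every query) works there only because $r\le d/\log d$. Here $r\gg d$, so that device would reveal $w^*$ outright. You are left with the posterior of $w^*_{L^\perp}$ under $\Theta(r)$ adaptively placed constraints $w^*\cdot x\le t^*$. These are not ``very weak'': a zero answer at a point whose posterior probability of being positive is a constant carries a constant amount of information. Near-uniformity of $w^*_{L^\perp}$ given them is the entire difficulty, not a standard fact.

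Moreover, $\norm{w^*_{L^\perp}}=\Omega(1)$ is false. With $q=(w^*\cdot x^{(i_1)},\dots,w^*\cdot x^{(i_k)})$ one has $\norm{w^*_L}^2=q^\top(AA^\top)^{-1}q\ge\norm{q}^2/\norm{AA^\top}_2$. For positives found by querying random pool points, $\norm{AA^\top}_2\le\norm{q}^2+O(d)$. Hence $\norm{w^*_{L^\perp}}^2=O(d/(k(t^*)^2))=o(1)$ as soon as $k\gg d/(t^*)^2$.

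The missing idea is the paper's constructive reduction (\Cref{lm reduction lb}). A hypothesis $h$ with error $\tilde O(p)$ has $\Pr(h(x)>1/(2t^*))=\Omega(p)$, and at least half of that region has $h^*(x)>0$. So after running the learner on half of the pool, $O(d)$ further label queries on the other half find $d$ positives. It then suffices to lower-bound the number of labels needed to find positives, and no posterior analysis is required.

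\textbf{Your $k$ is also far too large for Steps 2--3.} A union bound over all $\binom{m}{k}$ subsets costs $e^{\Theta(k\log m)}$, which Gaussian concentration cannot beat when $k\approx d$. The paper takes $k=O(d/(\log m\,(t^*)^4))$ precisely to get $\norm{AA^\top-dI}_2\le d/(t^*)^2$ for every $k$-tuple (\Cref{lm high probability}).

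The cross terms are not $O(\sqrt{\log m/d})\,t^*$ either. The learner knows $w_{L_{j-1}}$, with $\norm{w_{L_{j-1}}}^2\gtrsim j(t^*)^2/d$. It can query the pool point maximizing $w_{L_{j-1}}\cdot x$, which is about $t^*\sqrt{2j\log m/d}$ and exceeds $t^*$ once $j\gtrsim d/\log m$.

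Concretely, consider this strategy: query random pool points until about $d/\log m$ positives appear, then query the unqueried points with the largest $w_L\cdot x$ (each is positive with probability at least $1/2$). It finds $d$ positives with roughly $d/(p\log m)+O(d)$ labels. For $m=2^{\sqrt d}$ (which your Step 2 permits) and, say, $p=1/d$, this is about $\sqrt d/p+O(d)$. That contradicts your per-tuple bound $(O(p\log(1/p)))^k$ with $k\approx d$, which would force $\tilde\Omega(d/p)$ labels. So the $\log m$ in the theorem comes from being forced to take $k=\Theta(d/(\log m\,(t^*)^4))$, not from a loss inside Step 2.

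With that $k$ you can also drop the coordinate-by-coordinate integration. The paper uses the single global bound $\norm{w_L}^2\ge k(t^*)^2/\norm{AA^\top}_2\ge k(t^*)^2/(d(1+O(1/(t^*)^2)))$, together with the $\mathrm{Beta}(k/2,(d-k)/2)$ tail of $\norm{\proj_{L_0}w^*}^2$, to get $(O(p\log(1/p)))^k$ per tuple. After that, your Step 4 union bound yields $r=\tilde\Omega(d/(p\log m))$.
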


To begin with, we establish the following lemma that reduces the 
learning problem to a slightly easier problem of finding examples with 
non-zero labels from a pool of unlabeled examples.

\begin{lemma}\label{lm reduction lb}
Suppose there is an active learning algorithm that can make $r$ label queries 
over a pool $S$ of $m \ge \poly(d/p)$ examples drawn from $N(0,I)$ 
and learn any ReLU activation function $h^*(x) = \sigma(w^*\cdot x-t^*)$  with bias $p$ up to error $\Tilde{O}(p)$ with probability at least $2/3$. 
Then there is an algorithm such that given a pool of $2m$ random examples $S$ 
drawn from the standard Gaussian distribution with hidden labels 
by some ReLU activation function $h^*(x) = \sigma(w^*\cdot x-t^*)$ with bias $p$, 
it makes $r+O(d)$ queries and finds $d$ examples with non-zero labels. 
from $S$ with probability $1/2$.
\end{lemma}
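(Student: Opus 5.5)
The reduction will use the pool itself as a fresh test set. Split the pool of $2m$ examples into two halves $S_1,S_2$, each of size $m$. First run the given active learner $\A$ on $S_1$, spending $r$ label queries. With probability at least $2/3$ it returns $\hat h$ with $\err(\hat h)\le \Tilde{O}(p)$, that is, $\E_{x}(\hat h(x)-h^*(x))^2\le \Tilde{O}(p)$.

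Next, use $\hat h$ as a filter on $S_2$. We will query only points $x\in S_2$ with $\hat h(x)>\tau$, for a threshold $\tau$ chosen on the scale of the typical positive value of $h^*$, namely $\tau\approx c/t^*$ as in \Cref{lm reduction}. The constant in the $\Tilde{O}(p)$ error guarantee is taken small enough relative to $V(t^*)\approx \Phi(t^*)/(t^*)^2$ to make the following two claims hold.

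The first claim concerns false positives. A point with $\hat h(x)>\tau$ but $h^*(x)=0$ contributes at least $\tau^2$ to the squared error. By Markov, the set $\{\hat h>\tau,\ h^*=0\}$ therefore has Gaussian measure at most $\err(\hat h)/\tau^2\le p/C$.

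The second claim concerns true positives. Restrict to the region $\{w^*\cdot x>t^*+2c/t^*\}$, which carries a constant fraction of the mass $p$. On this region, missing the filter requires error at least $\tau^2$. So $\{\hat h>\tau,\ h^*>0\}$ has measure at least $p/2$, provided the error constant is small.

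Together these give $\Pr_x(h^*(x)>0\mid \hat h(x)>\tau)\ge 1/2$ or so. This is the same truncation argument as in the proof of \Cref{lm reduction}.

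Since $S_2$ is independent of $\hat h$, its points are i.i.d. $N(0,I)$ given $\hat h$. The number of filtered points in $S_2$ is then about $m\cdot\Omega(p)$. Because $m\ge\poly(d/p)$, a Chernoff bound shows this is at least $Cd$ with high probability. Query the filtered points of $S_2$ one by one, up to $O(d)$ of them. Each is non-zero independently with probability at least $1/2$ conditioned on $\hat h$, so another Chernoff bound shows that $O(d)$ queries yield $d$ non-zero labels with probability $1-e^{-\Omega(d)}$.

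Adding the failure probabilities gives overall success at least $2/3-o(1)\ge 1/2$, with total query count $r+O(d)$.

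The main obstacle is getting the constants in the filter step to line up. The error $\Tilde{O}(p)$ must be small compared with $\tau^2 p$, where $\tau^2\approx 1/(t^*)^2\approx 1/\log(1/p)$. I would handle this by reading the hidden $\Tilde{O}$ in the hypothesis as $p/(C\log^2(1/p))$ for a large constant $C$; otherwise the false-positive mass could swamp the true positives. A secondary point is that a non-zero label requires $h^*(x)>0$ exactly, not merely above $\tau$. This is harmless, because the counting above already concerns the true sign.
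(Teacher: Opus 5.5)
Your proposal is correct and takes essentially the same route as the paper. The paper also runs the learner on half the pool and filters by $\hat h(x)>\Theta(1/t^*)$. It uses the squared-error bound twice: once, via points where $h^*$ exceeds the threshold by $\Theta(1/t^*)$, to show the filter has $\Omega(p)$ mass, and once to show false positives are a minority. It then queries $O(d)$ filtered points. Two details you make explicit that the paper leaves implicit are using the held-out half $S_2$ to obtain conditional independence given $\hat h$, and reading the $\Tilde{O}(p)$ error as at most $c\,p/\log(1/p)$ for a small constant $c$.
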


\begin{proof}[Proof of \Cref{lm reduction lb}]
Let $\A$ be such a learning algorithm. We select a random set of $m$ examples $S_1$ and give it to $\A$. We know that with probability $1/2$, we learn a hypothesis $h$ such that $\E_{x\sim N(0,I)}\left(h(x)-h^*(x)\right)^2 \le \Tilde{O}(p)$. We first show that if $x\sim N(0,I)$, then with probability at least $\Omega(p)$, $h(x)>1/(2t)$. This is because otherwise
\begin{align*}
    &\E_{x\sim N(0,I)} \left(h(x)-h^*(x)\right)^2\Ind\{h(x)<1/(2t),h^*(x)>1/t\}\\
    \ge& \Omega(1/t^2) \Pr_{x \sim N(0,I)}\left(h(x)<1/(2t),h^*(x)>1/t\right) = \Tilde{\Omega}(p).
\end{align*}
On the other hand, if $x\sim N(0,I)$ and $h(x)>(1/2t)$, then with probability at least $1/2$, $h^*(x)>0$. Suppose this is not correct, we have 
\begin{align*}
   & \E_{x\sim N(0,I)} \left(h(x)-h^*(x)\right)^2\Ind\{h(x)>1/(2t),h^*(x)\le 0\} \\
   &\ge \Omega(1/t^2) \Pr_{x \sim N(0,I)}\left(h(x)>1/(2t),h^*(x)\le 0\right) = \Tilde{\Omega}(p).
\end{align*}
 Since $m$ is at least $\poly(d,1/p)$, we know that with enough high probability, at least $\Omega(d)$ examples will satisfy $h(x)>1/(2t)$ and at least a constant fraction of these examples will satisfy $h^*(x)>0$. Thus, given such a $h$ with probability at least $3/4$, we can find $d$  examples with non-zero label in $S$ by randomly querying $O(d)$ examples with prediction $h(x)>1/(2t)$.
\end{proof}

Based on this, we can give the proof of \Cref{th lb compute}.

\begin{proof}[Proof of \Cref{th lb compute}]
Consider the problem where an algorithm wants to find $k$ examples 
with non-zero labels from $m$ unlabeled examples by making $r$ queries. 
By \Cref{lm reduction lb}, it is sufficient for us to prove the hardness 
of such a problem by finding suitable parameters $k,r$.

Consider any deterministic learning algorithm $\A$. 
Given a pool of $m$ unlabeled examples, 
we describe $\A$ in the following way. 
For every $w^*$, the implementation of $\A$ can be described 
as a path $P = \left((x^{(1)},y^{(1)}),\dots,(x^{(r)},y^{(r)})\right)$, 
with length at most $r$. In particular, a path, along which $\A$ 
successfully finds $k$ examples with non-zero, can be uniquely 
represented as the indices $i_1<\dots<i_k$ and the 
corresponding label $(y^{(i_1)},\dots,y^{(i_k)})$. 
Next, we consider a fixed tuple of $k$ indices $i_1,\dots,i_k$ 
and denote by $S_c$ the set of all 
$(x^{(i_1)},y^{(i_1)}),\dots,(x^{(i_k)},y^{(i_k)})$ 
that make algorithm $\A$ succeed at positions $i_1<\dots<i_k$. 
By choosing $w^* \sim \s^{d-1}$ uniformly, 
we will show that the probability of $S_c$ is very small.
Notice that for any $(x^{(i_1)},y^{(i_1)}),\dots,(x^{(i_k)},y^{(i_k)})\in S_c$, 
it corresponds to a unique event $(x^{(i_1)},q^{(i_1)}),\dots,(x^{(i_k)},q^{(i_k)})$, 
where $q^{(i_j)} = w^* \cdot x^{(i_j)}$ 
for $j \in [k]$, 
since $y_{ij}>0$ and $q_{ij} = t^* + y_{ij}$. 
We denote the marginal density on this event as $f((x^{(i_1)},q^{(i_1)}),\dots,(x^{(i_k)},q^{(i_k)}))$. 
Notice that $\Pr(S_c) = \int_{q_{i_1},\cdot,q_{i_k}>t^*}f((x^{(i_1)},q^{(i_1)}),\dots,(x^{(i_k)},q^{(i_k)}))$. 
So, it is sufficient to upper bound this integral.

Denote by $L$ the subspace spanned by $\{x^{(i_1)},\dots,x^{(i_k)}\}$. 
We consider a basis $\{b_1,\dots,b_k\}$ of $L$ as follows. 
$b_1 = x^{(i_1)}/\norm{x^{(i_1)}}$ and 
$b_j = \proj_{L_{j-1}} x^{(i_j)}/\norm{\proj_{L_{j-1}} x^{(i_j)}}$, 
where $L_{j-1} = span\{x^{(i_1)},\dots,x^{(i_{j-1})}\}$. 
We know that there is a unique $w_L:=\proj_L(w^*)$ 
such that $w_L\cdot x^{(i_j)} = q^{(i_j)}$ for $j \in [k]$. 
This implies that events 
$(x^{(i_1)},y^{(i_1)}),\dots,(x^{(i_k)},y^{(i_k)})$ 
can be precisely described as a $k$-dimensional vector 
$v(w_L): = (w_L\cdot b_1,\dots,w_L\cdot b_k)$.
Denote by $L_0 = span(e_1,\dots,e_k)$. 
We have
\begin{align*}
    f((x^{(i_1)},q^{(i_1)}),\dots,(x^{(i_k)},q^{(i_k)})) =p(\proj_{L}(w^*) = w_L)=p(\proj_{L_0}(w^*) = v(w_L)) \;,,
\end{align*}
where $p(\proj_{L}(w^*) =  w_L)$ is the density function 
of the event that $\proj_{L}(w^*) =  w_L$, $w^* \sim \s^{d-1}$ 
Consider the set $S_w$ of all possible $v(w_L)$ 
such that $w_L$ corresponds to a realization $((x^{(i_1)},q^{(i_1)}),\dots,(x^{(i_k)},q^{(i_k)}))$. 
We notice that, given any $v(w_L) \in S_w$, 
we can uniquely recover the corresponding 
$(x^{(i_1)},q^{(i_1)}),\dots,(x^{(i_k)},q^{(i_k)}) \in S_c$ 
via $\A$. This implies
\begin{align*}
    \Pr\left(S_c\right)= \int_{q_{i_1},\cdot,q_{i_k}>t^*}f((x^{(i_1)},q^{(i_1)}),\dots,(x^{(i_k)},q^{(i_k)})) = \int_{S_w} p(\proj_{L_0}(w^*) = v(w_L)) = \Pr_{w^*}(S_w). 
\end{align*}
To upper bound $\Pr(S_w)$, it is sufficient to upper bound 
the probability of a super-set of $S_w$. 
Since $w^*\cdot x^{(i_j)} = q^{(i_j)}>t^*$ for $j \in [k]$, 
we know that 
\begin{align*}
    v:= (w^*\cdot x^{(i_1)},\dots, w^*\cdot x^{(i_k)})^\top
\end{align*}
satisfies 
$\norm{v}^2 = \sum_{j\in [k]}(q^{(i_j)})^2 \ge  k(t^*)^2$. 
This implies that, for every realization, 
the square of the norm of the projection of $w^*$ 
onto the subspace, $\norm{w_L}^2$, is 
\begin{align*}
   B&:= (w^*)^{\top}A^{\top}(AA^{\top})^{-1} Aw^*= v^{\top} (AA^{\top})^{-1} v \ge \norm{v}^2/\norm{AA^{\top}}_2 \\
   &\ge k(t^*)^2/\norm{AA^{\top}}_2 \;,
\end{align*}
where $A \in \R^{k \times d}$ is the matrix with row vectors $x^{(i_1)},\dots,x^{(i_k)}$.

We next make use of the following structural lemma from \cite{diakonikolas2024active}.
\begin{lemma}\label{lm high probability}
    Let $S \subseteq \R^d$ be a set of $m$ examples drawn \iid from $N(0,I)$. 
    Let $t^*>C>0$ for a sufficiently large constant $C$ and $k=O(d/\log(m)(t^*)^4)$. 
    Then, with probability at least $2/3$, for every $k$-tuple of examples $\{x_1,\dots,x_k\} \subseteq S$, $\norm{AA^{\top}-dI}_2 \le d/(t^*)^2$, 
    where $A \in \R^{k \times d}$ be a matrix with row vectors $x_1,\dots,x_k$.
\end{lemma}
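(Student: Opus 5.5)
The plan is to bound $\norm{AA^\top - dI}_2$ for one fixed $k$-tuple using sharp non-asymptotic singular value bounds for Gaussian matrices, and then take a union bound over all $\binom{m}{k}\le m^k$ tuples. A cruder entrywise approach would not suffice. Concentrating $\norm{x_i}^2$ and $x_i\cdot x_j$ separately and applying Gershgorin only gives a deviation of order $k\sqrt{d\log m}$. That forces $k\lesssim \sqrt{d/\log m}/(t^*)^2$, which is roughly the square root of the claimed range $k=O(d/(\log(m)(t^*)^4))$. So we must exploit the joint spectral concentration of the whole $k\times d$ matrix.

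\textbf{Step 1 (one fixed tuple).} Fix indices $i_1<\dots<i_k$, so that $A\in\R^{k\times d}$ has i.i.d.\ $N(0,1)$ entries. By the Davidson--Szarek (Gordon) inequality, for every $s>0$, with probability at least $1-2e^{-s^2/2}$ every singular value of $A$ lies in $[\sqrt d-\sqrt k-s,\ \sqrt d+\sqrt k+s]$. The eigenvalues of $AA^\top$ are the squared singular values. Hence on this event
\[
\norm{AA^\top-dI}_2\le 2\sqrt d(\sqrt k+s)+(\sqrt k+s)^2 .
\]

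\textbf{Step 2 (union bound).} Choose $s=\sqrt{2k\log m+2\log 6}$. Then $2e^{-s^2/2}\binom{m}{k}\le 2e^{-s^2/2}m^k\le 1/3$, so with probability at least $2/3$ the bound of Step 1 holds simultaneously for every $k$-tuple in $S$. Since $s=O(\sqrt{k\log m})$, this gives
\[
\norm{AA^\top-dI}_2\le O\big(\sqrt{dk\log m}\big)+O(k\log m)
\]
uniformly over all tuples.

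\textbf{Step 3 (choice of $k$).} Take $k=c\,d/(\log(m)(t^*)^4)$ with $c$ a small constant.
\begin{itemize}
\item The first term is $O(\sqrt{c}\,d/(t^*)^2)$.
\item The second term is $O(c\,d/(t^*)^4)$, which is at most $O(c\,d/(t^*)^2)$ because $t^*>C$.
\end{itemize}
Choosing $c$ small enough relative to the hidden constants makes the total at most $d/(t^*)^2$, as claimed. If the formula for $k$ falls below $1$, the claim is vacuous or trivial, so only the regime $k\ge 1$ needs care.

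\textbf{Main obstacle.} The only delicate point is getting the right order in Step 1. The spectral deviation must scale like $\sqrt{dk\log m}$ rather than $k\sqrt{d\log m}$, so Gershgorin-type arguments will not do and a genuine operator-norm concentration inequality is needed, as above. Once that is in place, the union bound over $m^k$ subsets costs only an extra $\sqrt{k\log m}$ in $s$. The remaining work is bookkeeping of constants. If one wants to avoid citing Davidson--Szarek, an $\epsilon$-net argument over the unit sphere of $\R^k$ combined with $\chi^2_d$ concentration of $\norm{A^\top u}^2$ gives the same bound up to constants.
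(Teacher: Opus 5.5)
Your proof is correct. The paper does not prove this lemma itself: it imports it from \cite{diakonikolas2024active} and uses it only to get $\norm{AA^\top}_2\le d(1+O(1/(t^*)^2))$ in the proof of \Cref{th lb compute}. So your argument is a self-contained route rather than a reproduction of the paper's.

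The steps all check out. You apply Davidson--Szarek to a fixed $k\times d$ Gaussian block, then take a union bound over $\binom{m}{k}\le m^k$ subsets with $s=\sqrt{2k\log m+2\log 6}$. The bound $2\sqrt d(\sqrt k+s)+(\sqrt k+s)^2$ remains valid even when $\sqrt k+s>\sqrt d$, since it then exceeds $d\ge d-\sigma_{\min}^2$. Absorbing the additive $\sqrt{2\log 6}$ into $O(\sqrt{k\log m})$ is legitimate once $k\ge 1$ and $m\ge 2$.

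Compared with the bare citation, your derivation makes the statement's form transparent. The admissible range $k\lesssim d/(\log(m)(t^*)^4)$ is exactly where the $\sqrt{dk\log m}$ fluctuation, forced by the $m^k$-fold union bound, meets the target accuracy $d/(t^*)^2$. This also settles the ambiguous parenthesization of $k$ in the statement. It shows that nothing beyond $t^*\ge 1$ is needed, and it explains why entrywise or Gershgorin control would lose a square root in $k$. The citation keeps the lower-bound section short but leaves that dependence unexplained.
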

That is to say, we are able to bound $\norm{AA^{\top}}_2$ 
by $d(1+O(1/(t^*)^2))$ and thus every vector $v(w_L)$ has squared norm at least $(k(t^*)^2/d(1+O(1/(t^*)^2)))$.
Since $w^*$ is uniformly chosen 
from the unit sphere, by Lemma B.1 in \cite{kontonis2024gain}, 
the square norm of $w^*$ projected onto a fixed $k$-dimensional 
subspace is a random variable drawn 
from a Beta distribution $B(\frac{k}{2},\frac{d-k}{2})$.
By Lemma 2.2 in \cite{dasgupta2003elementary}, we have
\begin{align*}
    \Pr\left(\norm{w^*_{L_0}}^2 \ge k(t^*)^2/d(1+O(1/(t^*)^2))) \right) & \le \exp\left(-\frac{k}{2}(\frac{d(t^*)^2}{d(1+O(1/(t^*)^2)))}-1-\log(\frac{d (t^*)^2}{d(1+O(1/(t^*)^2)))}))\right) \\
    & =\left( \sqrt{\frac{(t^*)^2d}{d(1+O(1/(t^*)^2)))}} \exp(-\frac{1}{2}(\frac{d(t^*)^2}{d(1+O(1/(t^*)^2)))}-1))\right)^k \\
    & \le \left(O( (t^*) \exp(-\frac{(t^*)^2}{2}(1-O(1/(t^*)^2)) ) \right)^k \\
    & = \left(O((t^*)^2 \frac{1}{t^*} \exp(-\frac{(t^*)^2}{2}))\right)^k
    \le \left(O(p\log(1/p))\right)^k.
\end{align*} 
This implies that by choosing $k=O(d/\log(m)(t^*)^4)$, 
for every $k$ tuple of indices, the probability of success 
for $\A$ is at most $\left(O(p\log(1/p))\right)^k$. 
Since there are at most
$\binom{r}{k}$ such tuples, by the union bound,
\begin{align*}
    \binom{r}{k}\alpha^k \le \left(\frac{er}{k} O(p\log(1/p))\right)^k \le 2/3 \;,
\end{align*}
if $r \le O(k/p\log(1/p)) = O(d/(p\log(m)\polylog(1/p))$.
This implies that if we make less than 
$r \le O(k/p\log(1/p)) = O(d/(p\log(m)\polylog(1/p))$ queries, 
then it is hard to find $k=O(d/\log(m)(t^*)^4)$.    
\end{proof}

\end{document}